\newcommand{\RR}{{\mathbb R}}
\def\1{\bm{1}}
\def\rvv{{\mathbf{v}}}
\def\rvx{{\mathbf{x}}}
\def\rvy{{\mathbf{y}}}
\def\rvz{{\mathbf{z}}}
\def\vzero{{\bm{0}}}
\def\vone{{\bm{1}}}
\def\vmu{{\bm{\mu}}}
\def\vtheta{{\bm{\theta}}}
\def\vb{{\bm{b}}}
\def\ve{{\bm{e}}}
\def\vs{{\bm{s}}}
\def\vv{{\bm{v}}}
\def\vx{{\bm{x}}}
\def\vy{{\bm{y}}}
\def\vz{{\bm{z}}}
\def\vmu{{\bm{\mu}}}
\def\mB{{\bm{B}}}
\def\mI{{\bm{I}}}
\DeclareMathAlphabet{\mathsfit}{\encodingdefault}{\sfdefault}{m}{sl}
\SetMathAlphabet{\mathsfit}{bold}{\encodingdefault}{\sfdefault}{bx}{n}
\def\gA{{\mathcal{A}}}
\def\gB{{\mathcal{B}}}
\def\gF{{\mathcal{F}}}
\def\gM{{\mathcal{M}}}
\def\gP{{\mathcal{P}}}
\def\gT{{\mathcal{T}}}
\def\gX{{\mathcal{X}}}
\newcommand{\grad}{\ensuremath{\nabla}}
\newcommand{\E}{\mathbb{E}}
\newcommand{\R}{\mathbb{R}}
\newcommand{\KL}[2]{\mathrm{KL}\left(#1 \big\| #2\right)}
\newcommand{\FI}[2]{\mathrm{FI}\left(#1 \big\| #2\right)}
\newcommand{\TVD}[2]{\mathrm{TV}\left(#1, #2\right)}
\newcommand{\der}{\mathrm{d}}
\DeclareMathOperator{\Tr}{Tr}
\def\thickhline{%
  \noalign{\ifnum0=`}\fi\hrule \@height \thickarrayrulewidth \futurelet
   \reserved@a\@xthickhline}
\def\@xthickhline{\ifx\reserved@a\thickhline
               \vskip\doublerulesep
               \vskip-\thickarrayrulewidth
             \fi
      \ifnum0=`{\fi}}
\newlength{\thickarrayrulewidth}
\newtheorem{lemma}{Lemma}[section]
\newtheorem{theorem}[lemma]{Theorem}
\newtheorem{corollary}[lemma]{Corollary}
\newtheorem{remark}{Remark}
\newtheorem{definition}{Definition}
\algnewcommand{\IfThenElse}[3]{
\State \algorithmicif\ #1\ \algorithmicthen\ #2\ \algorithmicelse\ #3}
\algnewcommand{\IfThen}[2]{
\State \algorithmicif\ #1\ \algorithmicthen\ #2}
\newcommand{\rbkwx}{\rvx^{\gets}}
\newcommand{\bkwp}{p^{\gets}}
\newcommand*\samethanks[1][\value{footnote}]{\footnotemark[#1]}
\title{Reverse Transition Kernel: A Flexible Framework to Accelerate Diffusion Inference}
\author[$\dagger$]{\normalsize Xunpeng Huang\thanks{Mail to \href{xhuangck@connect.ust.hk}{xhuangck@connect.ust.hk}, \href{dzou@cs.hku.hk} {dzou@cs.hku.hk}}}
\author[$\S$]{Difan Zou\samethanks}
\author[$\dagger$]{Hanze Dong}
\author[$\S$]{Yi Zhang}
\author[$\P$]{Yian Ma}
\author[$\ddag$]{Tong Zhang}
\affil[$\dagger$]{Hong Kong University of Science and Technology}
\affil[$\S$]{The University of Hong Kong}
\affil[$\P$]{University of California San Diego}
\affil[$\ddag$]{University of Illinois Urbana-Champaign}
\begin{document}

\date{}
\maketitle

\begin{abstract}
    To generate data from trained diffusion models, most inference algorithms, such as DDPM \citep{ho2020denoising}, DDIM \citep{song2020denoising}, and other variants, rely on discretizing the reverse SDEs or their equivalent ODEs. In this paper, we view such approaches as decomposing the entire denoising diffusion process into several segments, each corresponding to a reverse transition kernel (RTK) sampling subproblem. Specifically, DDPM uses a Gaussian approximation for the RTK, resulting in low per-subproblem complexity but requiring a large number of segments (i.e., subproblems), which is conjectured to be inefficient. To address this, we develop a general RTK framework that enables a more balanced subproblem decomposition, resulting in $\tilde O(1)$ subproblems, each with strongly log-concave targets. We then propose leveraging two fast sampling algorithms, the Metropolis-Adjusted Langevin Algorithm (MALA) and Underdamped Langevin Dynamics (ULD), for solving these strongly log-concave subproblems. This gives rise to the RTK-MALA and RTK-ULD algorithms for diffusion inference. In theory, we further develop the convergence guarantees for  RTK-MALA and RTK-ULD in total variation (TV) distance: RTK-ULD can achieve $\epsilon$ target error within $\tilde{\mathcal O}(d^{1/2}\epsilon^{-1})$ under mild conditions, and  RTK-MALA enjoys a $\mathcal{O}(d^{2}\log(d/\epsilon))$   convergence rate under slightly stricter conditions. These theoretical results surpass the state-of-the-art convergence rates for diffusion inference and are well supported by numerical experiments.

\end{abstract}

\section{Introduction}

Generative models have become a core task in modern machine learning, where the neural networks are employed to learn the underlying distribution from training examples and generate new data points.
Among various generative models, denoising diffusions have produced state-of-the-art performance in many domains, including image and text generation~\citep{dhariwal2021diffusion, austin2021structured,ramesh2022hierarchical,saharia2022photorealistic}, text-to-speech synthesis~\citep{popov2021grad}, and scientific tasks~\citep{trippe2023diffusion, watson2023novo, boffi2023probability}.
The fundamental idea involves incrementally adding noise and gradually transform the data distribution to a prior distribution that is easier to sample from, e.g., Gaussian distribution.
Then, diffusion models parameterize and learn the score of the noised distributions to progressively denoise samples from priors and recover the data distribution~\citep {vincent2011connection, song2019generative}.

Under this paradigm, generating data in denoising diffusion models involves solving a series of sampling subproblems, i.e., generating samples from the distribution after one-step denoising. 
To this end, DDPM~\citep{ho2020denoising}, one of the most popular sampling methods in diffusion models, has been developed for this purpose. 
DDPM uses Gaussian processes with carefully designed mean and covariance to approximate the solutions to these sampling subproblems. 
By sequentially stacking a series of Gaussian processes, DDPM successfully generates high-quality samples that follow the data distribution. The empirical success of DDPM has immediately triggered various follow-up work \citep{song2020score,lu2022dpm}, aiming to accelerate the inference process and improve the generation quality. Alongside rapid empirical research on diffusion models and DDPM-like sampling algorithms, theoretical studies have emerged to analyze the convergence and sampling error of DDPM. 
In particular,~\cite{lee2022convergence,li2023towards,chen2022sampling, chen2023improved, benton2024nearly,chen2024probability}  have established polynomial convergence bounds, in terms of dimension $d$ and target sampling error $\epsilon$, for the generation process under various assumptions.
A typical bound under minimal data assumptions on the score of the data distribution is provided by~\cite{chen2022sampling,benton2024nearly}, which establishes an $\tilde{\mathcal{O}}(d\epsilon^{-2})$ score estimation guarantees to sample from data distribution within $\epsilon$-sampling error in the Total Variation (TV) distance. 



In essence, the denoising diffusion process can be approached through various decompositions of sampling subproblems, where the overall complexity depends on the number of these subproblems multiplied by the complexity of solving each one.  Within this framework, DDPM can be regarded as a specific solver for the denoising diffusion process that heavily prioritizes the simplicity of subproblems over their quantity. In particular, it adopts simple one-step Gaussian approximations for the subproblems, with $\mathcal{O}(1)$ computation complexity, but needs to deal with a relatively large number---approximately $O(d\epsilon^{-2})$---of target subproblems to ensure the cumulative sampling error is bounded by $\epsilon$ in TV distance. This imbalance raises the question of whether the DDPM-like approaches stand as the most efficient algorithm, considering the extensive potential subproblem decompositions of the denoising diffusion process. 
We therefore aim to: 



\begin{quote}
\emph{accelerate the inference of diffusion models via a more balanced subproblem decomposition in the denoising process.}
\end{quote}

In this work, we propose a novel framework called reverse transition kernel (RTK) to achieve exactly that. Our approach considers a generalized subproblem decomposition of the denoising process, where the difficulty of each sampling subproblem and the total number of subproblems are determined by the step size parameter $\eta$. 
Unlike DDPM, which requires setting $\eta = \epsilon^2$, resulting in approximately  $\tilde {\mathcal O}(1/\eta) = \tilde{\mathcal O}(1/\epsilon^2)$ subproblems, our framework allows $\eta$ to be feasible in a broader range. Furthermore, we demonstrate that a more balanced subproblem decomposition can be attained by carefully selecting $\eta = \Theta(1)$ as a constant, resulting in approximately $\tilde{\mathcal{O}}(1)$ sampling subproblems, with each target distribution being strongly log-concave. This nice property further enables us to efficiently solve the sampling subproblems using well-established acceleration techniques, such as Metropolis Hasting step and underdamped discretization,
without encountering many subproblems. Consequently, our proposed framework facilitates the design of provably faster algorithms than DDPM for performing diffusion inference. Our contributions can be summarized as follows.


\begin{itemize}[leftmargin=*]
    \item  We propose a flexible framework that enhances the efficiency of diffusion inference by balancing the quantity and hardness of RTK sampling subproblems used to segment the entire denoising diffusion process. Specifically, we demonstrate that with a carefully designed decomposition, the number of sampling subproblems can be reduced to approximately $\tilde{\mathcal{O}}(1)$, while ensuring that all RTK targets exhibit strong log-concavity. This capability allows us to seamlessly integrate a range of well-established sampling acceleration techniques, thereby enabling highly efficient algorithms for diffusion inference.
    \item Building upon the developed framework, we implement the RTK using the Metropolis-Adjusted Langevin Algorithm (MALA), making it the first attempt to adapt this highly accurate sampler for diffusion inference. Under slightly stricter assumptions on the estimation errors of the energy difference and score function, we demonstrate that RTK-MALA can achieve linear convergence with respect to the sampling error $\epsilon$, specifically $\mathcal{O}(\log(1/\epsilon))$, which significantly outperforms the $\tilde{\mathcal{O}}(1/\epsilon^2)$ convergence rate of DDPM \cite{chen2022sampling,benton2024nearly}. Additionally, we consider the practical diffusion model where only the score function is accessible and develop a score-only RTK-MALA algorithm. We further prove that the score-only RTK-MALA algorithm can achieve an error $\epsilon$ with a complexity of $\tilde{\mathcal{O}}(\epsilon^{-2/(u-1)} \cdot 2^u)$, where $u$ can be an arbitrarily large constant, provided the energy function satisfies the $u$-th order smoothness condition.
    
    
    \item We further implement Underdamped Langevin Dynamics (ULD) within the RTK framework. The resulting RTK-ULD algorithm achieves a state-of-the-art complexity of  \(\tilde{\mathcal{O}}(d^{1/2}\epsilon^{-1})\) for both $d$ and $\epsilon$ dependence under minimal data assumptions, i.e., Lipschitz condition for the ground truth score function. Compared with the $\tilde{\mathcal{O}}(d\epsilon^{-2})$ complexity guarantee for DDPM, it improves the complexity with an $\tilde{\mathcal{O}}(d^{1/2}\epsilon^{-1})$ factor. This result also matches the state-of-the-art convergence rate of the ODE-based methods \citep{chen2024probability}, though those methods require Lipschitz conditions for both the ground truth score function and the score neural network. 

\end{itemize}
\section{Preliminaries}
\label{sec:pre}

In this section, we first introduce the notations used in subsequent sections. 
Then, we present several distinct Markov processes to demonstrate the procedures for adding noise to existing data and generating new data. 
Besides, we specify the assumptions required for the target distribution in our algorithms and analysis.

\noindent\textbf{Notations.}
We say a complexity $h\colon \R\rightarrow\R$ to be  $h(n)=\mathcal{O}(n^k)$ or $h(n)=\tilde{\mathcal{O}}(n^k)$ if the complexity satisfies $h(n)\le c\cdot n^{k}$ or $h(n)\le c\cdot n^{k}[\log (n)]^{k^\prime}$ for absolute contant $c, k$ and $k^\prime$.
We use the lowercase bold symbol $\rvx$ to denote a random vector, and the lowercase italicized bold symbol $\vx$ represents a fixed vector. 
The standard Euclidean norm is denoted by $\|\cdot\|$.
The data distribution is presented as $p_*\propto\exp(-f_*)$.
Besides, we define two Markov processes $\R^d$, i.e.,
\begin{equation*}
    \left\{\rvx_t\right\}_{t\in[0,T]},\quad  \left\{\rbkwx_{k\eta}\right\}_{k\in\{0,1,\ldots, K\}},\quad \mathrm{where}\quad T=K\eta.
\end{equation*}
In the above notations, $T$ presents the mixing time required for the data distribution to converge to specific priors, $K$ denotes the iteration number of the generation process, and $\eta$ signifies the corresponding step size. 
Further details of the two processes are provided below.

\noindent\textbf{Adding noise to data with the forward process.}
The first Markov process $\{\rvx_t\}$ corresponds to generating progressively noised data from $p_*$.
In most denoising diffusion models, $\{\rvx_t\}$ is an Ornstein–Uhlenbeck (OU) process shown as follows
\begin{equation}
    \label{sde:ou}
    \der \rvx_t = -\rvx_t \der t + \sqrt{2}\der \mB_t \quad \mathrm{where}\quad \rvx_0\sim p_*\propto \exp(-f_*).
\end{equation}
If we denote underlying distribution of $\rvx_t$ as $p_t\propto \exp(-f_t)$ meaning $f_0 = f_*$,
the forward OU process provides an analytic form of the transition kernel, i.e.,
\begin{equation}
    \label{def:forward_trans_ker}
    p_{t^\prime|t}(\vx^\prime|\vx) = \frac{p_{t^\prime, t}(\vx^\prime, \vx)}{p_t(\vx)} =   \left(2\pi \left(1-e^{-2(t^\prime-t)}\right)\right)^{-d/2}
     \cdot \exp \left[\frac{-\left\|\vx^\prime -e^{-(t^\prime-t)}\vx \right\|^2}{2\left(1-e^{-2(t^\prime-t)}\right)}\right]
\end{equation}
for any $t^\prime\ge t$, where $p_{t^\prime,t}$ denotes the joint distribution of $(\rvx_{t^\prime}, \rvx_t)$.
According to the Fokker-Planck equation, we know the stationary distribution for SDE.~\eqref{sde:ou} is the standard Gaussian distribution.

\noindent\textbf{Denoising generation with a reverse SDE.}
Various theoretical works~\citep{lee2022convergence,li2023towards,chen2022sampling, chen2023improved, benton2024nearly} based on DDPM~\citep{ho2020denoising} consider the generation process of diffusion models as the reverse process of SDE.~\eqref{sde:ou} denoted as $\{\rbkwx_t\}$.
According to the Doob’s $h$-Transform, the reverse SDE, i.e., $\{\rbkwx_t\}$, follows from
\begin{equation}
    \label{def:rev_ou_sde}
    \der \rbkwx_t = \left(\rbkwx_t + 2\grad\ln p_{T-t}(\rbkwx_t)\right)\der t + \sqrt{2}\der\mB_t,
\end{equation}
whose underlying distribution $p^\gets_{t}$ satisfies $p_{T-t} = p^\gets_{t}$.
Similar to the definition of transition kernel shown in Eq.~\ref{def:forward_trans_ker}, we define $p^\gets_{t^\prime|t}(\vx^\prime|\vx) = p^\gets_{t^\prime, t}(\vx^\prime,\vx)/p^\gets_t(\vx)$ for any $t^\prime\ge t\ge 0$ and name it as reverse transition kernel (RTK).

To implement SDE.~\eqref{def:rev_ou_sde}, diffusion models approximate the score function $\grad\ln p_{t}$ with a parameterized neural network model,  denoted by $\vs_{\vtheta,t}$, where $\vtheta$ denotes the network parameters.
Then, SDE.~\eqref{def:rev_ou_sde} can be practically implemented by
\begin{equation}
    \label{def:rev_ou_sde_prac}
    \der \overline{\rvx}_t = \left(\overline{\rvx}_t + 2 \vs_{\theta, T-k\eta}(\overline{\rvx}_{k\eta})\right)\der t + \sqrt{2}\der\mB_t\quad \mathrm{for}\quad t\in[k\eta, (k+1)\eta)
\end{equation}
with a standard Gaussian initialization, $\overline{\rvx}_0 \sim \mathcal{N}(\vzero,\mI)$.
 Eq.~\eqref{def:rev_ou_sde_prac} has the following closed solution
\begin{equation}
    \label{def:rev_ou_sde_prac_closed}
    \overline{\rvx}_{(k+1)\eta} = e^{\eta}\cdot \overline{\rvx}_{k\eta} - 2(1-e^{\eta})\vs_{\theta, T-k\eta}(\overline{\rvx}_{k\eta}) + \sqrt{e^{2\eta}-1}\cdot\xi\quad \mathrm{where}\quad \xi\sim\mathcal{N}(\vzero,\mI).
\end{equation}
which is exactly the DDPM algorithm.

\noindent\textbf{DDPM approximately samples the reverse transition kernel.}
 DDPM can also be viewed as an approximated sampler for RTK, i.e., $p^\gets_{t^\prime|t}(\vx^\prime|\vx)$ for some $t'>t$. In particular, the update of DDPM at the iteration $k$ applies the Gaussian process 
\begin{equation}
    \label{def:ddpm_trans_ker}
    \overline{p}_{(k+1)\eta|k\eta}(\cdot|\vx)  = \mathcal{N}\left(e^{\eta}\vx - 2(1-e^{\eta})\vs_{\theta,T-k\eta}(\vx), (e^{2\eta}-1)\cdot\mI\right).
\end{equation}
to approximate the distribution $p^\gets_{(k+1)\eta|k\eta}(\cdot|\vx)$ \citep{ho2020denoising}. Specifically, by the chain rule of KL divergence, the gap between the data distribution $p_*$ and the generated distribution  $\overline{p}_T$ satisfies
\begin{equation}\label{eq:chainrule_KL}
    \small
    \KL{\overline{p}_T}{p_*} \le \KL{\overline{\rvx}_0}{p^\gets_0} + \sum_{k=0}^{K-1} \E_{\overline{\rvx}\sim \overline{p}_{k\eta}}\left[\KL{\overline{p}_{(k+1)\eta|k\eta}(\cdot|\overline{\rvx})}{p^\gets_{(k+1)\eta|k\eta}(\cdot|\overline{\rvx})}\right],
\end{equation}
where $K = T/\eta$ is the total number of iterations. For DDPM, to guarantee a  small sampling error, we need to use a small step size $\eta$ to ensure that $\overline{p}_{(k+1)\eta|k\eta}$ is sufficiently close to $p^\gets_{(k+1)\eta|k\eta}$. Then, the required iteration numbers $K=T/\eta$ will be large and dominate the computational complexity. In \citet{chen2022sampling,chen2022improved}, it was shown that one needs to set $\eta = \tilde{\mathcal{O}}(\epsilon^2)$ to achieve $\epsilon$ sampling error in TV distance (assuming no score estimation error) and the total complexity is $K=\tilde{\mathcal{O}}(1/\epsilon^2)$.

\noindent \textbf{Intuition for General Reverse Transition Kernel.}
As previously mentioned, DDPM approximately solves RTK sampling subproblems using a small step size $\eta$. While this allows for efficient one-step implementation, it necessitates a large number of RTK sampling problems. This naturally creates a trade-off between the quantity of RTK sampling problems and the complexity of solving them. To address this, one can consider a larger step size $\eta$, which results in a relatively more challenging RTK sampling target $p^\gets_{(k+1)\eta|k\eta}$ and a reduced number of sampling problems ($K=T/\eta$). By examining a general choice for the step size $\eta$, the generation process of diffusion models can be depicted through a comprehensive framework of reverse transition kernels, which will be explored in depth in the following section. This framework enables the design of various decompositions for RTK sampling problems and algorithms to solve them, resulting in an extensive family of generation algorithms for diffusion models (that encompasses DDPM). Consequently, this also offers the potential to develop faster algorithms with lower computational complexities, e.g., applying fast sampling algorithms for sampling the RTK, i.e., $p^\gets_{(k+1)\eta|k\eta}$ with a reasonably large $\eta$.

\noindent \textbf{General Assumptions.} Similar to the analysis of DDPM~\citep{chen2022sampling,chen2023improved}, we make the following  assumptions on the data distribution $p_*$ that will be utilized in the theory.

\begin{enumerate}[label=\textbf{[A{\arabic*}]}]
    \item \label{ass:lips_score} For all $t\ge 0$, the score $\grad \ln p_{t}$ is $L$-Lipschitz.
    \item \label{ass:second_moment} The second moment of the target distribution $p_*$ is upper bounded, i.e., $\mathbb{E}_{p_*}\left[\left\|\cdot\right\|^{2}\right]=m_2^2$.
\end{enumerate}
Assumption~\ref{ass:lips_score} is standard one in diffusion literature and has been used in many prior works~\citep{block2020generative, chen2022improved, lee2022convergence, chen2024probability}.
Moreover, we do not require the isoperimetric conditions, e.g., the establishment of the log-Sobolev inequality and the Poincar\'e inequality for the data distribution $p_*$ as~\cite{lee2022convergence}, and the convex conditions for the energy function $f_*$ as~\cite{block2020generative}.
Therefore, our assumptions cover a wide range of highly non-log-concave data distributions.
We emphasize that Assumption~\ref{ass:lips_score} may be relaxed only to assume the target distribution is smooth rather than the entire OU process, based on the technique in \cite{chen2023improved} (see rough calculations in their Lemmas 12 and 14).
We do not include this additional relaxation in this paper to clarify our analysis.
Assumption~\ref{ass:second_moment} is one of the weakest assumptions being adopted for the analysis of posterior sampling.

\section{General Framework of Reverse Transition Kernel}
\label{sec:framework}
This section introduces the general framework of Reverse Transition Kernel (RTK).
As mentioned in the previous section, the framework is built upon the general setup of segmentation: each segment has length $\eta$; within each segment, we generate samples according to the RTK target distributions. Then, the entire generation process in diffusion models can be considered as the combination of a series of sampling subproblems. In particular, the inference process via RTK is displayed in Alg. \ref{alg:rtk}. 


\begin{algorithm}
    \caption{\sc Inference with Reverse Transition Kernel (RTK)}
    \label{alg:rtk}
    \begin{algorithmic}[1]
            \State {\bfseries Input:} Initial particle $\hat{\rvx}_0$ sampled from the standard Gaussian distribution, Iteration number $K$, Step size $\eta$, required convergence accuracy $\epsilon$;
            \For{$k=0$ to $K-1$}
                \State \label{step:inner_sampler} Draw sample $\hat{\rvx}_{(k+1)\eta}$ with MCMCs from $\hat{p}_{(k+1)\eta|k\eta}(\cdot|\hat{\rvx}_{k\eta})$ which is closed to the ground-truth reverse transition kernel, i.e.,
                \begin{equation}
                    \label{def:inner_target_dis}
                    p^\gets_{(k+1)\eta|k\eta}(\vz|\hat{\rvx}_{k\eta})\propto \exp\left(-g(\vz)\right)\coloneqq \exp\left( -f_{(K-k-1)\eta}(\vz)-\frac{\left\|\hat{\rvx}_{k\eta} - \vz\cdot e^{-\eta}\right\|^2}{2(1-e^{-2\eta})}\right).
                \end{equation}
            \EndFor
            \State {\bfseries return} $\hat{\rvx}_{K}$.
    \end{algorithmic}
\end{algorithm}

\noindent\textbf{The implementation of RTK framework.\quad}
We begin with a new Markov process $\{\hat{\rvx}_{k\eta}\}_{k=0,1,\ldots,K}$ satisfying $K\eta = T$, where the number of segments $K$, segment length $\eta$, and length of the entire process $T$ correspond to the definition in Section~\ref{sec:pre}.
Consider the Markov process $\{\hat{\rvx}_{k\eta}\}$ as the generation process of diffusion models with underlying distributions $\{\hat{p}_{k\eta}\}$, we require $\hat{p}_0 = \mathcal{N}(\vzero,\mI)$ and $\hat{p}_{K\eta}\approx p_*$, which is similar to the Markov process $\{\rbkwx_{k\eta}\}$.
In order to make the underlying distribution of output particles close to the data distribution, we can generate $\hat{\rvx}_{k\eta}$ with Alg.~\ref{alg:rtk}, which is equivalent to the following steps:
\begin{itemize}[leftmargin=*]
    \item Initialize $\hat{\rvx}_0$ with an easy-to-sample distribution, e.g., $\mathcal{N}(\vzero,\mI)$, which is closed to $p_{K\eta}$.
    \item Update particles by drawing samples from $\hat{p}_{(k+1)\eta|k\eta}(\cdot|\hat{\rvx}_{k\eta})$, which satisfies $$\hat{p}_{(k+1)\eta|k\eta}(\cdot|\hat{\rvx}_{k\eta})\approx p^\gets_{(k+1)\eta|k\eta}(\cdot|\hat{\rvx}_{k\eta}).$$ 
\end{itemize}
Under these conditions, if $\hat{p}_{k\eta}(\vz)\approx p_{(K-k)\eta}(\vz)$ , then we have
\begin{equation*}
    \hat{p}_{(k+1)\eta}(\vz) = \left<\hat{p}_{(k+1)\eta|k\eta}(\vz|\cdot), \hat{p}_{k\eta}(\cdot)\right> \approx \left<p^\gets_{(k+1)\eta|k\eta}(\vz|\cdot), p^\gets_{k\eta}(\cdot)\right> = p_{(k+1)\eta}(\vz)
\end{equation*}
for any $k\in\{0,1,\ldots, K\}$.
This means we can implement the generation of diffusion models by solving a series of sampling subproblems with target distributions $p^\gets_{(k+1)\eta|k\eta}(\cdot|\hat{\rvx}_{k\eta})$.

\noindent \textbf{The closed form of reverse transition kernels.\quad}
To implement Alg.~\ref{alg:rtk}, the most critical problem is determining the analytic form of RTK $p^\gets_{t^\prime|t}(\vx^\prime|\vx)$ for and $t^\prime\ge t\ge 0$ which is shown in the following lemma whose proof is deferred to Appendix~\ref{sec:appendix_rtk_framework}.
\begin{lemma}
    \label{lem:rev_trans_ker_form}
    Suppose a Markov process $\{\rvx_t\}$ with SDE.~\ref{sde:ou}, then for any $t^\prime > t$, we have
    \begin{equation*}
        p^\gets_{T-t|T-t^\prime}(\vx|\vx^\prime) = p_{t|t^\prime}(\vx|\vx^\prime) \propto \exp\left(-f_t(\vx)-\frac{\left\|\vx^\prime - \vx\cdot e^{-(t^\prime-t)}\right\|^2}{2(1-e^{-2(t^\prime-t)})}\right).
    \end{equation*}
\end{lemma}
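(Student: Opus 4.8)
The plan is to split the claim into its two natural components and prove each in turn: (i) the \emph{time-reversal identity} $p^\gets_{T-t|T-t'}(\vx|\vx') = p_{t|t'}(\vx|\vx')$, and (ii) the \emph{closed form} $p_{t|t'}(\vx|\vx')\propto\exp\bigl(-f_t(\vx)-\|\vx'-\vx e^{-(t'-t)}\|^2/(2(1-e^{-2(t'-t)}))\bigr)$; chaining the two gives the lemma.

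For (i), I would invoke that the reverse SDE~\eqref{def:rev_ou_sde} is constructed, via Doob's $h$-transform / the classical time-reversal theorem for diffusions, precisely so that the process $\{\rbkwx_s\}_{s\in[0,T]}$ has the same law as the time-reversed forward process $\{\rvx_{T-s}\}_{s\in[0,T]}$; this is already used in Section~\ref{sec:pre} when asserting $p_{T-s}=p^\gets_s$, and it gives equality of \emph{all} finite-dimensional marginals, not merely the one-time ones. Evaluating the two-time marginal at reverse-times $s=T-t'<s'=T-t$ (the inequality uses $t<t'$) yields $(\rbkwx_{T-t'},\rbkwx_{T-t})\overset{d}{=}(\rvx_{t'},\rvx_t)$; dividing the joint density by the marginal density of $\rbkwx_{T-t'}$ (equivalently of $\rvx_{t'}$) turns this into the conditional-density identity $p^\gets_{T-t|T-t'}(\vx|\vx')=p_{t|t'}(\vx|\vx')$. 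Equivalently, one may just quote that Doob's $h$-transform outputs the reverse transition density directly as $p_{t'|t}(\vx'|\vx)\,p_t(\vx)/p_{t'}(\vx')$, which is itself Bayes' rule applied to the forward process.

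For (ii), I would use Bayes' rule on the forward process: $p_{t,t'}(\vx,\vx')=p_{t'|t}(\vx'|\vx)\,p_t(\vx)$ and $p_{t|t'}(\vx|\vx')=p_{t,t'}(\vx,\vx')/p_{t'}(\vx')$, so that for fixed $\vx'$, viewed as a function of $\vx$, we have $p_{t|t'}(\vx|\vx')\propto p_{t'|t}(\vx'|\vx)\,p_t(\vx)$, the factor $p_{t'}(\vx')$ being an $\vx$-independent constant. Then I substitute $p_t\propto\exp(-f_t)$ (the convention fixed in Section~\ref{sec:pre}) together with the explicit Gaussian forward kernel~\eqref{def:forward_trans_ker}, whose normalizing prefactor $(2\pi(1-e^{-2(t'-t)}))^{-d/2}$ depends on neither $\vx$ nor $\vx'$ and is therefore absorbed into the proportionality constant; multiplying the two exponential factors produces the stated formula.

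The only non-elementary ingredient — hence the main thing to be careful about — is the time-reversal claim in step (i): that SDE~\eqref{def:rev_ou_sde} realizes the law of the time-reversed forward process, so that the two-time joints coincide. This rests on the standard time-reversal theorem for diffusions and needs only mild regularity (finite second moment and a well-behaved score, guaranteed by Assumptions~\ref{ass:lips_score}--\ref{ass:second_moment}), and it is already taken for granted earlier in the paper. If one prefers to sidestep it entirely, the $h$-transform route gives the reverse kernel as $p_{t'|t}(\vx'|\vx)\,p_t(\vx)/p_{t'}(\vx')$ directly, after which the lemma is literally one application of Bayes' rule plus a one-line Gaussian manipulation. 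The remaining care is purely bookkeeping: $t<t'$ corresponds to $T-t>T-t'$, so $p_{t|t'}$ really is the ``backward'' conditional of the forward process, which is exactly what the reverse process computes, and Bayes' rule must be applied in that direction.
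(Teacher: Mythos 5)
Your proposal is correct. It rests on the same two ingredients as the paper's argument (the time-reversal identity from Doob's $h$-transform, and Bayes' rule on the forward OU process together with the explicit Gaussian kernel in Eq.~\eqref{def:forward_trans_ker}), but it organizes them differently. You \emph{derive} the kernel directly: $p_{t|t'}(\vx|\vx') = p_{t'|t}(\vx'|\vx)\,p_t(\vx)/p_{t'}(\vx')\propto p_{t'|t}(\vx'|\vx)\,p_t(\vx)$ as a function of $\vx$, then substitute $p_t\propto e^{-f_t}$ and the Gaussian kernel and absorb all $\vx$-independent factors. The paper instead \emph{verifies} the candidate form: it posits the stated expression for $p_{t|t'}$ and checks that it satisfies the marginalization identity $p_t(\vx)=\int p_{t|t'}(\vx|\vx')\,p_{t'}(\vx')\,\der\vx'$ by an explicit (somewhat lengthy) interchange of integrals. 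Your route is shorter and logically tighter --- the paper's consistency check is a necessary condition that does not by itself uniquely characterize the conditional density, whereas the direct Bayes computation does --- and you also make explicit the reduction of $p^\gets_{T-t|T-t'}=p_{t|t'}$ to equality of two-time marginals under time reversal, a step the paper leaves implicit (it is folded into the preliminaries' discussion of the reverse SDE). The paper's verification, on the other hand, has the minor virtue of double-checking the normalization bookkeeping end-to-end. Either argument is acceptable; yours matches what the lemma is actually asserting more transparently.
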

The first critical property shown in this Lemma is that RTK $p_{t|t^\prime}$ is a perturbation of $p_t$ with a $l_2$ regularization.
This means if the score of $p_t$, i.e., $\grad f_t$, can be well-estimated, the score of RTK, i.e., $\grad \log p_{t|t^\prime}$ can also be approximated with high accuracy.
Moreover, in the diffusion model, $\grad f_t=\nabla \log p_t$ is exactly the score function at time $t$, which is approximated by the score network function $\vs_{\theta,t}(\vx)$, then
\begin{equation*}
    -\grad\log p_{t|t^\prime}(\vx|\vx^\prime) = \grad f_t(\vx) + \frac{e^{-2(t^\prime - t)}\vx - e^{-(t^\prime - t)}\vx^\prime}{1-e^{-2(t^\prime - t)}}\approx \vs_{\theta, t}(\vx) + \frac{e^{-2(t^\prime - t)}\vx - e^{-(t^\prime - t)}\vx^\prime}{1-e^{-2(t^\prime - t)}},
\end{equation*}
which can be directly calculated with a single query of $\vs_{\theta,t}(\vx)$.
The second critical property of RTK is that we can control the spectral information of its score by tuning the gap between $t^\prime$ and $t$.
Specifically, considering the target distribution, i.e., $p_{(K-k-1)\eta|(K-k)\eta}$ for the $k$-th transition, the Hessian matrix of its energy function satisfies
\begin{equation*}
    - \grad^2 \log p_{(K-k-1)\eta|(K-k)\eta} = \grad^2 f_{(K-k-1)\eta}(\vx) +  \frac{e^{-2\eta}}{1-e^{-2\eta}}\cdot \mI.
\end{equation*}
According to Assumption~\ref{ass:lips_score}, the Hessian $\grad^2 f_{(K-k-1)\eta}(\vx)=-\grad^2 \log p_{(K-k-1)\eta}$ can be lower bounded by $-L\mI$, which implies that RTK $p_{(K-k-1)\eta|(K-k)\eta}$ will be $L$-strongly log-concave and $3L$-smooth when the step size is set $\eta= 1/2\cdot \log (1+1/2L)$.
This further implies that the targets of all subsampling problems in Alg. \ref{alg:rtk} will be strongly log-concave, which can be sampled very efficiently by various posterior sampling algorithms.

\noindent\textbf{Sufficient conditions for the convergence.\quad}
According to Pinsker's inequality and Eq. \eqref{eq:chainrule_KL}, the we can obtain the following lemma that establishes the general error decomposition for Alg.\ref{alg:rtk}.
\begin{lemma}
    \label{lem:num_diff_balance_kl}
    For Alg~\ref{alg:rtk}, we have
    \begin{align*}
        \TVD{\hat{p}_{K\eta}}{p_*}\le&  \sqrt{(1+L^2)d + \left\|\grad f_*(\vzero)\right\|^2} \cdot \exp(-K\eta) \\
        &+ \sqrt{\frac{1}{2}\sum_{k=0}^{K-1} \E_{\hat{\rvx}\sim \hat{p}_{k\eta}}\left[\KL{\hat{p}_{(k+1)\eta|k\eta}(\cdot|\hat{\rvx})}{\bkwp_{(k+1)\eta|k\eta}(\cdot|\hat{\rvx})}\right]} 
    \end{align*}
    for any $K\in \mathbb{N}_+$ and $\eta\in\R_+$.
\end{lemma}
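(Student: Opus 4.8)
The plan is to combine a triangle inequality for TV distance with the chain rule for KL divergence, and then separately control the "initialization" term by exploiting the exponential contraction of the OU forward process. First, introduce the distribution $p^\gets_{K\eta}=p_0=p_*$ (the true reverse process run for the full time $T=K\eta$) as an intermediate between $\hat p_{K\eta}$ and $p_*$. Since $p^\gets_{K\eta}=p_*$ exactly, the decomposition reduces to bounding $\TVD{\hat p_{K\eta}}{p^\gets_{K\eta}}$. Here I would mimic the derivation of Eq.~\eqref{eq:chainrule_KL}: by the data-processing/chain-rule argument for KL along the Markov factorizations of $\hat p$ and $p^\gets$,
\begin{equation*}
    \KL{\hat p_{K\eta}}{p^\gets_{K\eta}} \le \KL{\hat{\rvx}_0}{p^\gets_0} + \sum_{k=0}^{K-1}\E_{\hat{\rvx}\sim\hat p_{k\eta}}\left[\KL{\hat p_{(k+1)\eta|k\eta}(\cdot|\hat{\rvx})}{p^\gets_{(k+1)\eta|k\eta}(\cdot|\hat{\rvx})}\right],
\end{equation*}
and then Pinsker's inequality $\TVD{\cdot}{\cdot}\le\sqrt{\tfrac12\KL{\cdot}{\cdot}}$ together with the subadditivity $\sqrt{a+b}\le\sqrt a+\sqrt b$ splits this into the two summands appearing in the claim. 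The per-step KL terms are exactly the second summand, so it remains only to show $\sqrt{\tfrac12\KL{\hat{\rvx}_0}{p^\gets_0}}\le\sqrt{(1+L^2)d+\|\grad f_*(\vzero)\|^2}\cdot\exp(-K\eta)$.

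For the initialization term, recall $\hat{\rvx}_0\sim\mathcal N(\vzero,\mI)$ and $p^\gets_0=p_{T}=p_{K\eta}$, the law of the OU process \eqref{sde:ou} at time $T$. The standard approach is to bound $\KL{\mathcal N(\vzero,\mI)}{p_{K\eta}}$ by the convergence rate of the OU semigroup toward its stationary measure $\mathcal N(\vzero,\mI)$. One clean route: since the forward OU channel contracts KL (or, more precisely, since running the forward process is a Markov kernel and the stationary distribution is $\mathcal N(\vzero,\mI)$), one has $\KL{p_{K\eta}}{\mathcal N(\vzero,\mI)}\le e^{-2K\eta}\KL{p_*}{\mathcal N(\vzero,\mI)}$ by the exponential decay of relative entropy along OU; but we need the KL with the arguments in the other order, so instead I would use a direct estimate. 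Writing $p_{K\eta}$ as the density of $e^{-T}\rvx_0+\sqrt{1-e^{-2T}}\,\rvz$ with $\rvx_0\sim p_*$, $\rvz\sim\mathcal N(\vzero,\mI)$ independent, one controls $\KL{\mathcal N(\vzero,\mI)}{p_{K\eta}}$ in terms of $e^{-2T}$ times a quantity involving the second moment $m_2^2$ and the score at the origin; Assumption~\ref{ass:lips_score} ($L$-Lipschitz score) lets one bound $\|\grad f_{K\eta}\|$ linearly, and Assumption~\ref{ass:second_moment} controls the moment. The factor $(1+L^2)d+\|\grad f_*(\vzero)\|^2$ is precisely what emerges from such a Lipschitz-plus-second-moment bound on the KL divergence between a smooth distribution and the standard Gaussian after the OU channel; this kind of estimate is essentially Lemma-style bounds in \citet{chen2022sampling}.

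The main obstacle is getting the initialization bound into exactly the stated closed form with the $\exp(-K\eta)$ (rather than $\exp(-2K\eta)$) rate and the specific constant $(1+L^2)d+\|\grad f_*(\vzero)\|^2$: one must be careful about which direction of KL is being used, whether one picks up a square root of the entropy-decay estimate (which converts $e^{-2K\eta}$ to $e^{-K\eta}$ once Pinsker is applied and the square root of the sum is taken), and how the Lipschitz constant $L$ and the moment $m_2$ feed into the constant. I expect that the cleanest argument runs the exponential-convergence estimate for the OU process at the level of a squared-distance or Fisher-information functional, then converts once via Pinsker/Talagrand-type inequalities, so that the $\exp(-K\eta)$ appears naturally outside a square root. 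The per-step KL sum requires no further work beyond the chain-rule bookkeeping, since Alg.~\ref{alg:rtk} is defined exactly so that $\hat p_{(k+1)\eta|k\eta}$ targets $p^\gets_{(k+1)\eta|k\eta}$ and Lemma~\ref{lem:rev_trans_ker_form} identifies the latter explicitly.
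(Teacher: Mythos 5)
Your main decomposition is exactly the paper's: apply the chain rule for KL (the paper's Lemma~\ref{lem:kl_chain_rule}) recursively to get
$\KL{\hat p_{K\eta}}{\bkwp_{K\eta}}\le\KL{\hat p_0}{\bkwp_0}+\sum_k\E[\KL{\hat p_{(k+1)\eta|k\eta}}{\bkwp_{(k+1)\eta|k\eta}}]$, then Pinsker and $\sqrt{a+b}\le\sqrt a+\sqrt b$. The only place you diverge is the initialization term, and there you talk yourself out of the route the paper actually takes. The paper bounds $\sqrt{\tfrac12\KL{\hat p_0}{\bkwp_0}}=\sqrt{\tfrac12\KL{p_\infty}{p_{K\eta}}}$ by invoking the exponential decay of relative entropy along the OU flow (Lemma~\ref{thm4_vempala2019rapid}, with LSI constant $1$ for the stationary Gaussian), giving a factor $e^{-2K\eta}$ inside the square root --- which is precisely how the $e^{-K\eta}$ rate appears outside it, as you correctly anticipated --- and then bounds the time-zero KL by Lemma~\ref{lem:init_error_bound}, whose Fisher-information computation under Assumption~\ref{ass:lips_score} produces the constant $(1+2L^2)d+2\|\grad f_*(\vzero)\|^2$, i.e.\ $\sqrt{(1+L^2)d+\|\grad f_*(\vzero)\|^2}$ after the $\tfrac12$ and the square root. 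Your proposed ``direct estimate'' via the Gaussian-smoothing representation of $p_{K\eta}$ is therefore an unnecessary detour that you in any case do not carry out, so as written your proof of the first term is incomplete. The KL-direction mismatch you flag (the decomposition produces $\KL{\mathcal N}{p_{K\eta}}$ while the decay estimate naturally controls $\KL{p_{K\eta}}{\mathcal N}$) is a genuine subtlety, but it is one the paper itself silently elides; raising it is fair, but abandoning the decay-plus-LSI argument because of it, without supplying a worked alternative, leaves your write-up short of the stated bound.
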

It is worth noting that the choice of $\eta$ represents a trade-off between the number of subproblems divided throughout the entire process and the difficulty of solving these subproblems. By considering the choice $\eta= 1/2\cdot \log (1+1/2L)$, we can observe two points: (1) the sampling subproblems in Alg.~\ref{alg:rtk} tend to be simple, as all RTK targets, presented in Lemma~\ref{lem:rev_trans_ker_form}, can be provably strongly log-concave; (2) the total number of subproblems is $K = T/\eta = \tilde{\mathcal O}(1)$, which is not large. Conversely, when considering a larger $\eta$ that satisfies $\eta\gg \log (1+1/L)$, the RTK target will no longer be guaranteed to be log-concave, resulting in high computational complexity, potentially even exponential in $d$, when solving the corresponding sampling subproblems. On the other hand, if a much smaller step size $\eta = o(1)$ is considered, the target distribution of the sampling subproblems can be easily solved, even with a one-step Gaussian process. However, this will increase the total number of sampling subproblems, potentially leading to higher computational complexity.

Therefore, we will consider the setup $\eta = 1/2\cdot \log(1+1/2L)$ in the remaining part of this paper. Now, the remaining task, which will be discussed in the next section, would be designing and analyzing the sampling algorithms for implementing all iterations of Alg.~\ref{alg:rtk}, i.e., solving the subproblems of RTK.



\section{Implementation of RTK inner loops}

In this section, we outline the implementation of Step~\ref{step:inner_sampler} in the RTK algorithm, which aims to solve the sampling subproblems with strong log-concave targets, i.e., $p^\gets_{(k+1)\eta|k\eta}(\cdot|\hat{\rvx}_{k\eta})\propto \exp(-g)$.
Specifically, we employ two MCMC algorithms, i.e., the Metropolis-adjusted Langevin algorithm (MALA) and underdamped Langevin dynamics (ULD).
For each algorithm, we will first introduce the detailed implementation, combined with some explanation about notations and settings to describe the inner sampling process.
After that, we will provide general convergence results and discuss them in several theoretical or practical settings.
Besides, we will also compare our complexity results with the previous ones when achieving the convergence of TV distance to show that the RTK framework indeed obtains a better complexity by balancing the number and complexity of sampling subproblems.

\begin{algorithm}[t]
    \caption{\sc MALA/Projected MALA for RTK Inference}
    \label{alg:inner_mala}
    \begin{algorithmic}[1]
            \State {\bfseries Input:}  Returned particle of the previous iteration $\vx_0$, current iteration number $k$, inner iteration number $S$, inner step size $\tau$, required convergence accuracy $\epsilon$;
            \State Draw the initial particle $\rvz_0$ from 
            \begin{equation*}
                \frac{\mu_0(\der \vz)}{\der \vz} \propto \exp\left(-L\|\vz\|^2 - \frac{\left\|\vx_0 - e^{-\eta}\vz\right\|^2}{2(1-e^{-2\eta})}\right).
            \end{equation*}
            \For{$s=0$ to $S-1$}
                \State \label{step:ula} Draw a sample $\tilde{\vz}_s$ from the Gaussian distribution $\mathcal{N}(\vz_s - \tau\cdot s_{\theta}(\vz_s) ,2\tau\mI)$;
                \If{$\vz_{s+1}\not\in\mathcal{B}(\vz_s, r)\cap\mathcal{B}(\vzero, R)$} \label{step:inball}
                    \State {$\vz_{s+1} = \vz_s$}; \Comment{This condition step is only activated for Projected MALA.}
                    \State \textbf{continue};
                \EndIf
                \State Calculate the accept rate as
                \begin{equation*}
                    \begin{aligned}
                        &\quad \quad a(\tilde{\vz}_s - (\vz_s-\tau\cdot s_\theta(\vz_s)), \vz_s)  \\ =&\min\left\{1, \exp\left(r_g(\vz_s,\tilde{\vz}_s) 
                        +  \frac{\left\|\tilde{\vz}_s - \vz_s +\tau\cdot s_\theta(\vz_s)\right\|^2 - \left\|\vz_s-\tilde{\vz}_s+\tau\cdot s_\theta(\tilde{\vz}_s)\right\|^2}{4\tau}\right)\right\};
                    \end{aligned}
                \end{equation*}
                \State Update the particle $\vz_{s+1} = \tilde{\vz}_s$ with probability $a$, otherwise $\vz_{s+1}=\vz_s$.
            \EndFor
            \State {\bfseries return} $\rvz_{S}$;
    \end{algorithmic}
\end{algorithm}

\noindent\textbf{RTK-MALA.\quad}
Alg.~\ref{alg:inner_mala} presents a solution employing MALA for the inner loop. When it is used to solve the $k$-th sampling subproblem of Alg.~\ref{alg:rtk}, $\vx_0$ is equal to $\hat{\rvx}_{k\eta}$ defined in Section~\ref{sec:framework} and used to initialize particles iterating in Alg.~\ref{alg:inner_mala}.
In Alg.~\ref{alg:inner_mala}, we consider the process $\{\rvz_s\}_{s=0}^S$ whose underlying distribution is denoted as $\{\mu_s\}_{s=0}^{S}$.
Although we expect $\mu_S$ to be close to the target distribution $p^\gets_{(k+1)\eta|k\eta}(\cdot|\vx_0)$, in real practice, the output particles $\rvz_S$ can only approximately follow $p^\gets_{(k+1)\eta|k\eta}(\cdot|\vx_0)$ due to inevitable errors.
Therefore, these errors should be explained in order to conduct a meaningful complexity analysis of the implementable algorithm.
Specifically, Alg.~\ref{alg:inner_mala} introduces two intrinsic errors:
\begin{enumerate}[label=\textbf{[E{\arabic*}]}]
    \item \label{e1}Estimation error of the score function: we assume a score estimator, e.g., a well-trained diffusion model, $s_{\theta}$, which can approximate the score function with an $\epsilon_{\mathrm{score}}$ error, i.e., $\left\|s_{\theta,t}(\vz) - \grad \log p_t(\vz)\right\|\leq \epsilon_{\mathrm{score}}$  for all $\vz\in\RR^d$ and $t\in[0,T]$.
    \item \label{e2}Estimation error of the energy function difference:  we assume an energy difference estimator $r$ which can approximate energy difference with an $\epsilon_{\mathrm{energy}}$ error, i.e.,
    $\left| r_t(\vz^\prime, \vz) + \log p_t(\vz^\prime) - \log p_t(\vz) \right|\leq \epsilon_{\mathrm{energy}}$ for all $\vz,\vz'\in\RR^d$.
\end{enumerate}

Under these settings, we provide a general convergence theorem for Alg.~\ref{alg:inner_mala}.
To clearly convey the convergence properties, we only show an informal version.
\begin{theorem}[Informal version of Theorem~\ref{thm:nn_estimate_complexity_gene}]
    \label{thm:nn_estimate_complexity_gene_informal}
    Under Assumption~\ref{ass:lips_score}--\ref{ass:second_moment}, for Alg.~\ref{alg:rtk}, we choose 
    \begin{equation*}
        \eta= \frac{1}{2}\log \frac{2L+1}{2L} \quad \mathrm{and}\quad K = 4L\cdot \log \frac{(1+L^2)d+\left\|\grad f_*(\vzero)\right\|^2}{\epsilon^2}
    \end{equation*}
    and implement Step 3 of Alg.~\ref{alg:rtk} with Alg.~\ref{alg:inner_mala}.
    Suppose the score~\ref{e1}, energy~\ref{e2} estimation errors and the inner step size $\tau$ satisfy
    \begin{equation*}
        \epsilon_{\mathrm{score}} = \mathcal{O}(\rho d^{-1/2}),\quad  \epsilon_{\mathrm{energy}} = \mathcal{O}(\rho \tau^{1/2}),\quad \mathrm{and}\quad \tau = \tilde{\mathcal{O}}\left(L^{-2}\cdot (d+m_2^2+Z^2)^{-1}\right),
    \end{equation*}
    and the hyperparameters, i.e., $R$ and $r$, are chosen properly. 
    We have
    \begin{equation}
        \label{tv:con_rate}
        \small
        \TVD{\hat{p}_{K\eta}}{p_*}\le \tilde{\mathcal{O}}(\epsilon) + \exp\left(\mathcal{O}(L(d+m_2^2))\right)\cdot \left(1- \frac{\rho^2}{4} \cdot \tau\right)^S  + \tilde{\mathcal{O}}\left(\frac{Ld^{1/2}\epsilon_{\mathrm{score}}}{\rho}\right) + \tilde{\mathcal{O}}\left(\frac{L\epsilon_{\mathrm{energy}}}{\rho\tau^{1/2}}\right)
    \end{equation}
    where $\rho$ is the Cheeger constant of a truncated inner target distribution $\exp(-g(\vz))\vone[\vz\in\mathcal{B}(\vzero,R)]$ and $Z$ denotes the maximal $l_2$ norm of particles appearing in outer loops (Alg.~\ref{alg:rtk}).
\end{theorem}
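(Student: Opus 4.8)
The plan is to wrap the outer error decomposition of Lemma~\ref{lem:num_diff_balance_kl} around a convergence analysis of the inexact, projected MALA that implements Step~\ref{step:inner_sampler}. With the prescribed $K$, the first term of Lemma~\ref{lem:num_diff_balance_kl} is at most $\tilde{\mathcal{O}}(\epsilon)$, so the task reduces to bounding, for each outer index $k$, the one-step error $\E_{\hat{\rvx}\sim\hat{p}_{k\eta}}\big[\KL{\hat{p}_{(k+1)\eta|k\eta}(\cdot|\hat{\rvx})}{p^\gets_{(k+1)\eta|k\eta}(\cdot|\hat{\rvx})}\big]$, i.e.\ the error produced by Alg.~\ref{alg:inner_mala} on one RTK subproblem. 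By Lemma~\ref{lem:rev_trans_ker_form} and the Hessian identity $-\grad^2\log p^\gets_{(k+1)\eta|k\eta}=\grad^2 f_{(K-k-1)\eta}+\tfrac{e^{-2\eta}}{1-e^{-2\eta}}\mI$, the choice $\eta=\tfrac12\log\tfrac{2L+1}{2L}$ makes each target $\pi\propto e^{-g}$ exactly $L$-strongly log-concave and $3L$-smooth; this is the structural property on which the whole inner analysis rests.

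Fixing $k$ and writing $\pi^R\propto e^{-g}\vone[\vz\in\mathcal{B}(\vzero,R)]$ for the restriction of $\pi$ to the working ball and $\mu_S$ for the law of the output $\rvz_S$, I would split the one-step error into three contributions. \emph{(i) Truncation:} since $\pi$ is $L$-strongly log-concave with mode of norm $\mathcal{O}(Z+\sqrt{d/L})$ (the $Z$-dependence entering through the conditioning particle $\vx_0$), sub-Gaussian concentration gives $\TVD{\pi}{\pi^R}=\pi(\mathcal{B}(\vzero,R)^c)\le\tilde{\mathcal{O}}(\epsilon)$ for $R$ chosen as in the formal statement, and $\pi^R$ (restriction to a convex set) is again $L$-strongly log-concave, with Cheeger constant $\rho\gtrsim\sqrt L$ (the informal statement keeps $\rho$ symbolic). \emph{(ii) Idealized mixing:} the explicit Gaussian $\mu_0$ is $M$-warm with respect to $\pi^R$ with $\log M=\mathcal{O}(L(d+m_2^2))$ --- this uses $L$-smoothness of $f_{(K-k-1)\eta}$ and the second moment of $p_*$ propagated along the OU flow (note that the quadratic term of $g$ cancels in $\mu_0/\pi^R$, so no $Z$ enters here) --- and with $\tau=\tilde{\mathcal{O}}(L^{-2}(d+m_2^2+Z^2)^{-1})$ the $\mathcal{N}(\cdot,2\tau\mI)$ proposal stays inside both projection balls with $\Omega(1)$ probability and the exact Metropolis filter accepts with probability $\Omega(1)$ on $\mathcal{B}(\vzero,R)$, so the conductance/isoperimetry argument (the per-step ball $\mathcal{B}(\vz_s,r)$ supplying one-step overlap) yields $\chi^2$-decay, hence KL-decay via $\KL{\cdot}{\cdot}\le\chi^2(\cdot\|\cdot)$, at rate $(1-\rho^2\tau/4)^{S}$; after absorbing the $\sqrt{K}$ coming from Lemma~\ref{lem:num_diff_balance_kl} this contributes $\exp(\mathcal{O}(L(d+m_2^2)))(1-\rho^2\tau/4)^{S}$. \emph{(iii) Inexactness:} the realized kernel differs from the idealized projected-MALA kernel only because the drift uses $s_\theta$ in place of $\grad g$ (error $\le\epsilon_{\mathrm{score}}$, shifting the proposal mean by $\tau\epsilon_{\mathrm{score}}$, i.e.\ $\mathcal{O}(\sqrt\tau\,\epsilon_{\mathrm{score}})$ in TV per step) and the acceptance ratio uses $r_g$ and $s_\theta$ in place of the exact energy gap (error $\le\epsilon_{\mathrm{energy}}$, i.e.\ $\mathcal{O}(\epsilon_{\mathrm{energy}})$ in the accept probability per step). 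Accumulating these per-step perturbations over the $\tilde{\mathcal{O}}(1/(\rho^2\tau))$ steps needed to reach the $\tilde{\mathcal{O}}(\epsilon)$ floor, via a kernel-perturbation/triangle estimate that exploits the contractivity of the idealized kernel, and then over the $K=\tilde{\mathcal{O}}(L)$ outer steps, produces the residual terms $\tilde{\mathcal{O}}(Ld^{1/2}\epsilon_{\mathrm{score}}/\rho)$ and $\tilde{\mathcal{O}}(L\epsilon_{\mathrm{energy}}/(\rho\tau^{1/2}))$; the hypotheses $\epsilon_{\mathrm{score}}=\mathcal{O}(\rho d^{-1/2})$ and $\epsilon_{\mathrm{energy}}=\mathcal{O}(\rho\tau^{1/2})$ are precisely what push these two terms below $\tilde{\mathcal{O}}(\epsilon)$.

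Collecting (i)--(iii), summing over $k$, and feeding the result into Lemma~\ref{lem:num_diff_balance_kl} together with the $\tilde{\mathcal{O}}(\epsilon)$ initialization term then reproduces~\eqref{tv:con_rate}. I expect the main obstacle to be piece (iii): the inexact chain is reversible with respect to no tractable distribution, so the conductance machinery cannot be applied to it directly and it must instead be compared to the idealized projected MALA one step at a time via a coupling/kernel-perturbation estimate; making that quantitative requires a uniform lower bound on the \emph{exact} acceptance probability over $\mathcal{B}(\vzero,R)$ --- which is what dictates the step-size ceiling $\tau=\tilde{\mathcal{O}}(L^{-2}(d+m_2^2+Z^2)^{-1})$ and the presence of both projection balls --- together with careful bookkeeping so that the $\asymp 1/(\rho^2\tau)$ accumulated per-step errors come out with precisely the claimed $\epsilon_{\mathrm{score}},\epsilon_{\mathrm{energy}}$ dependence. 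A secondary but nontrivial step is the explicit warm-start bound $\log M=\mathcal{O}(L(d+m_2^2))$ and the lower bound on $\rho$, both of which rely on the moment and gradient-growth estimates on $f_t$ implied by Assumptions~\ref{ass:lips_score}--\ref{ass:second_moment}.
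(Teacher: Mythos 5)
Your overall architecture matches the paper's: an outer chain-rule decomposition, truncation of the RTK target to a ball, a conductance/warm-start analysis of an idealized projected MALA, and a perturbation argument for the inexact drift and acceptance ratio. The warm-start bound $\log M=\mathcal{O}(L(d+m_2^2))$ (via cancellation of the quadratic term) and the conductance scaling $\phi\asymp\rho\sqrt{\tau}$ are exactly what the paper proves. Two steps, however, would fail as you describe them.

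First, you hang everything on the KL decomposition of Lemma~\ref{lem:num_diff_balance_kl}, but the paper deliberately uses the TV decomposition (Lemma~\ref{lem:num_diff_balance_tv} via Corollary~\ref{lem:tv_inner_conv}) for MALA and reserves the KL version for ULD. The conductance machinery delivers a TV bound on the output of Alg.~\ref{alg:inner_mala}, and the per-subproblem error must be assembled from three pieces --- truncation, mixing of the ideal projected chain, and kernel inexactness --- by a triangle inequality. KL admits no triangle inequality, and worse, the route through the truncated target $\pi^R$ is blocked outright since $\KL{\pi}{\pi^R}=\infty$ (the projected chain's output is supported in the ball while $\pi$ is not). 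Switching to the TV chain rule repairs this; it is what the paper does, and it is also why the paper's remark notes that demanding the per-step error in KL rather than TV would cost an extra factor of $L$.

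Second, the bookkeeping in your step (iii) does not produce the claimed residuals. An additive per-step TV perturbation of order $\sqrt{\tau}\,\epsilon_{\mathrm{score}}$ (resp.\ $\epsilon_{\mathrm{energy}}$) accumulated over $\asymp 1/(\rho^2\tau)$ steps against the contraction of the ideal kernel gives terms of order $\epsilon_{\mathrm{score}}/(\rho^2\sqrt{\tau})$ and $\epsilon_{\mathrm{energy}}/(\rho^2\tau)$, not $d^{1/2}\epsilon_{\mathrm{score}}/\rho$ and $\epsilon_{\mathrm{energy}}/(\rho\tau^{1/2})$; moreover the ideal lazy chain is not a uniform TV contraction (its decay holds only from a warm start), so the geometric-sum argument is not available in the form you invoke. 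The paper instead establishes a \emph{multiplicative} closeness $(1-\delta')\,\tilde{\gT}_{*,\vz}(\gA)\le\tilde{\gT}_{\vz}(\gA)\le(1+\delta')\,\tilde{\gT}_{*,\vz}(\gA)$ with $\delta'\asymp\epsilon_{\mathrm{score}}\sqrt{\tau d\log(\cdot)}+\epsilon_{\mathrm{energy}}$ (Lemma~\ref{lem:lem62_zou2021faster_gene}) and feeds it into the $s$-conductance convergence bound $\lambda(1-\phi^2/8)^S+16\delta'/\phi$ of Lemma~\ref{lem:lem64_zou2021faster}; dividing $\delta'$ by $\phi\asymp\rho\sqrt{\tau}$ (rather than by $\phi^2$) is precisely what yields the stated $\epsilon_{\mathrm{score}}$ and $\epsilon_{\mathrm{energy}}$ dependence in Eq.~\eqref{tv:con_rate}. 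Without this multiplicative-closeness step your plan cannot reach the claimed bound.
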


It should be noted that the choice of $\eta$ choice ensures the $L$ strong log-concavity of target distribution $\exp(-g(\vz))$, which means its Cheeger constant is also $L$.
Although the Cheeger constant $\rho$ in the second term of Eq.~\ref{tv:con_rate} corresponding to truncated $\exp(-g(\vz))$ should also be near $L$ intuitively, current techniques can only provide a loose lower bound at an $\mathcal{O}(\sqrt{L/d})$-level (proven in Corollary~\ref{cor:cheeger_truncation}).
While in both cases above, the Cheeger constant is independent with $\epsilon$.
Combining this fact with an $\epsilon$-independent choice of inner step sizes $\tau$, the second term of Eq.~\ref{tv:con_rate} will converge linearly with respect to $\epsilon$.
As for the diameter $Z$ of particles used to upper bound $\tau$, though it may be unbounded in the standard implementation of Alg.~\ref{alg:inner_mala}, Lemma~\ref{lem:particles_z_bound} can upper bound it with $\tilde{\mathcal{O}}\left(L^{3/2}(d+m_2^2) \rho^{-1}\right)$ under the projected version of Alg.~\ref{alg:inner_mala}.

Additionally, to require the final sampling error to satisfy $\TVD{\hat{p}_{K\eta}}{p_*}\le \tilde{\mathcal{O}}(\epsilon)$, Eq.~\ref{tv:con_rate} shows that the score and energy difference estimation errors should be $\epsilon$-dependent and sufficiently small, where $\epsilon_{\mathrm{score}}$ corresponding to the training loss can be well-controlled.
However, obtaining a highly accurate energy difference estimation (requiring a small $\epsilon_{\mathrm{energy}}$) is hard with only diffusion models.
To solve this problem, we can introduce a neural network energy estimator similar to~\cite{xu2024provably} to construct $r(\vz^\prime, \vz,t)$, which induces the following complexity describing the calls of the score estimation.
\begin{algorithm}[t]
    \caption{\sc ULD for RTK Inference}
    \label{alg:inner_uld}
    \begin{algorithmic}[1]
            \State {\bfseries Input:}  Returned particle of the previous iteration $\vx_0$, current iteration number $k$, inner iteration number $S$, inner step size $\tau$, velocity diffusion coefficient $\gamma$, required convergence accuracy $\epsilon$;
            \State Initialize the particle and velocity pair, i.e., $(\hat{\vz}_0, \hat{\vv}_0)$ with a Gaussian type product measure, i.e.,  $\mathcal{N}(\vzero, e^{2\eta}-1)\otimes \mathcal{N}(\vzero,\mI)$;
            \For{$t=s$ to $S-1$}
            \State Draw noise sample pair $(\xi^z_s,\xi^v_s)$ from a Gaussian type distribution.
            \State $\hat{\vz}_{s+1} = \hat{\vz}_{s} + \gamma^{-1} (1-e^{-\gamma \tau}) \hat{\vv}_{s} - \gamma^{-1} (\tau -\gamma^{-1} (1-e^{-\gamma \tau}) )s_\theta(\hat{\vz}_s) + \xi_s^z$
            \State $\hat{\vv}_{s+1} = e^{-\gamma\tau }\hat{\vv}_s - \gamma^{-1} (1-e^{-\gamma \tau}) 
 s_\theta(\hat{\vz}_s) + \xi^v_t$
            \EndFor
            \State {\bfseries return} $\vz_{S}$;
    \end{algorithmic}
\end{algorithm}

\begin{corollary}[Informal version of Corollary~\ref{cor:complexity_19}]
    \label{cor:complexity_19_informal}
    Suppose the estimation errors of score and energy difference satisfy
    \begin{equation*}
        \epsilon_{\mathrm{score}}\le \frac{\rho\epsilon}{Ld^{1/2}}\quad \mathrm{and}\quad \epsilon_{\mathrm{energy}}\le \frac{\rho\epsilon}{L^2\cdot (d^{1/2}+m_2+Z)},
    \end{equation*}
    If we implement Alg.~\ref{alg:rtk} with the projected version of Alg.~\ref{alg:inner_mala} with the same hyperparameter settings as Theorem~\ref{thm:nn_estimate_complexity_gene_informal}, it has $\TVD{\hat{p}_{K\eta}}{p_*}\le \tilde{\mathcal{O}}(\epsilon)$
    with an $\mathcal{O}(L^4 \rho^{-2}\cdot \left(d+m_2^2\right)^2 Z^2\cdot \log (d/\epsilon) )$ complexity.
\end{corollary}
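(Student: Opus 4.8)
The plan is to read Corollary~\ref{cor:complexity_19_informal} off the general bound \eqref{tv:con_rate} of Theorem~\ref{thm:nn_estimate_complexity_gene_informal}: once the two estimation errors are taken small enough and the inner iteration count $S$ is chosen large enough, each of the four terms on the right-hand side of \eqref{tv:con_rate} becomes $\tilde{\mathcal{O}}(\epsilon)$, and the claimed complexity is just the product of the outer and inner iteration counts. First I would fix all hyperparameters exactly as in Theorem~\ref{thm:nn_estimate_complexity_gene_informal}: $\eta=\tfrac12\log\tfrac{2L+1}{2L}$, $K=\tilde{\mathcal{O}}(L\log(d/\epsilon))$, the inner step size $\tau=\tilde{\mathcal{O}}\big(L^{-2}(d+m_2^2+Z^2)^{-1}\big)$, and the radii $R,r$ as prescribed there. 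I would then verify that the strengthened hypotheses $\epsilon_{\mathrm{score}}\le\rho\epsilon/(Ld^{1/2})$ and $\epsilon_{\mathrm{energy}}\le\rho\epsilon/(L^2(d^{1/2}+m_2+Z))$ imply the qualitative conditions $\epsilon_{\mathrm{score}}=\mathcal{O}(\rho d^{-1/2})$ and $\epsilon_{\mathrm{energy}}=\mathcal{O}(\rho\tau^{1/2})$ needed to invoke the theorem; this is immediate in the regime $\epsilon\lesssim L$ together with $\tau^{-1/2}=\tilde{\mathcal{O}}(L(d^{1/2}+m_2+Z))$. Granting the theorem, substituting the assumed bounds into the third and fourth terms of \eqref{tv:con_rate} yields $\tilde{\mathcal{O}}(Ld^{1/2}\epsilon_{\mathrm{score}}/\rho)=\tilde{\mathcal{O}}(\epsilon)$ and $\tilde{\mathcal{O}}(L\epsilon_{\mathrm{energy}}/(\rho\tau^{1/2}))=\tilde{\mathcal{O}}(\epsilon)$, while the first term is already $\tilde{\mathcal{O}}(\epsilon)$ by the choice of $K$.

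It remains to handle the middle term $\exp(\mathcal{O}(L(d+m_2^2)))\cdot(1-\rho^2\tau/4)^S$. Using $\log(1-x)\le-x$, forcing this term below $\epsilon$ amounts to $S\ge\tfrac{4}{\rho^2\tau}\big(\mathcal{O}(L(d+m_2^2))+\log(1/\epsilon)\big)$; since $\rho$ and $\tau$ do not depend on $\epsilon$, this is a legitimate choice, giving $S=\tilde{\mathcal{O}}\big(\rho^{-2}\tau^{-1}L(d+m_2^2)\big)$ after absorbing $\log(1/\epsilon)$ into $\tilde{\mathcal{O}}$. Plugging in $\tau^{-1}=\tilde{\mathcal{O}}(L^2(d+m_2^2+Z^2))$ gives $S=\tilde{\mathcal{O}}\big(L^3\rho^{-2}(d+m_2^2)(d+m_2^2+Z^2)\big)$, and now all four terms of \eqref{tv:con_rate} are $\tilde{\mathcal{O}}(\epsilon)$, establishing the accuracy claim. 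For the complexity, each inner step of Alg.~\ref{alg:inner_mala} costs $\mathcal{O}(1)$ queries to the score/energy estimators, the initialization $\mu_0$ is a single Gaussian draw (its exponent is quadratic in $\vz$) and hence negligible, and there are $K$ outer iterations, so the total query count is $\mathcal{O}(KS)=\tilde{\mathcal{O}}\big(L^4\rho^{-2}(d+m_2^2)(d+m_2^2+Z^2)\log(d/\epsilon)\big)$, which I would bound by $\mathcal{O}\big(L^4\rho^{-2}(d+m_2^2)^2Z^2\log(d/\epsilon)\big)$ using that $Z\gtrsim\sqrt d$ so that $Z^2$ dominates $d+m_2^2$; optionally, feeding in the bound $Z=\tilde{\mathcal{O}}(L^{3/2}(d+m_2^2)\rho^{-1})$ from Lemma~\ref{lem:particles_z_bound} converts this into a fully explicit bound in $d,m_2,L,\rho,\epsilon$.

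The only conceptually delicate point is the middle term: its prefactor $\exp(\mathcal{O}(L(d+m_2^2)))$ is exponentially large, which at first sight threatens an exponentially long inner loop. The observation that rescues the argument is that this prefactor is \emph{independent of $\epsilon$}, so beating it costs only an additive $\mathcal{O}(L(d+m_2^2))$ in $S$, i.e.\ a fixed polynomial factor, whereas the genuine $\epsilon$-dependence of $S$ is only the $\log(1/\epsilon)$ term---this is precisely what produces the $\log(d/\epsilon)$ rate rather than a polynomial-in-$1/\epsilon$ rate. Everything else (checking that the strengthened error conditions imply those of Theorem~\ref{thm:nn_estimate_complexity_gene_informal}, and propagating the expressions for $\tau$ and $Z$ through the iteration count) is routine bookkeeping.
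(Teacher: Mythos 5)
Your proposal is correct and follows essentially the same route as the paper: the paper's proof of the formal version simply plugs the two assumed error bounds into Theorem~\ref{thm:nn_estimate_complexity_gene} (with $\hat{\tau}=1/2$), whose own derivation chooses the inner iteration count $S$ exactly as you do — so that the $\epsilon$-independent warm-start factor $\lambda=\exp(\mathcal{O}(L(d+m_2^2)))$ contributes only an additive $\mathcal{O}(L(d+m_2^2))$ inside the logarithm, yielding $S=\tilde{\mathcal{O}}(L\rho^{-2}(d+m_2^2)\tau^{-1})$ via Lemma~\ref{lem:inner_complexity} — and then multiplies by $K=\tilde{\mathcal{O}}(L\log(d/\epsilon))$. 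Your identification of the exponential prefactor as the delicate point, and its resolution via $\epsilon$-independence, matches the paper's argument.
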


Considering the loose bound for both $\rho$ and $Z$, the complexity will be at most $\tilde{\mathcal{O}}(L^5(d+m_2^2)^6)$ which is the first linear convergence w.r.t. $\epsilon$ result for the diffusion inference process.

\noindent \textbf{Score-only RTK-MALA.}
However, the parametric energy function may not always exist in real practice. We consider a more practical case where only the score estimation is accessible. In this case, we will make use of estimated score functions to approximate the energy difference, leading to the score-only RTK-MALA algorithm. In particular, recall that the energy difference function takes the following form:
\begin{equation*}
    \small
    g(\vz^\prime) - g(\vz) =  -\log p_{(K-k-1)\eta}(\vz^\prime)+\frac{\left\|\vx_0 - \vz^\prime\cdot e^{-\eta}\right\|^2}{2(1-e^{-2\eta})} +\log p_{(K-k-1)\eta}(\vz) - \frac{\left\|\vx_0 - \vz\cdot e^{-\eta}\right\|^2}{2(1-e^{-2\eta})}.
\end{equation*}
Since the quadratic term can be obtained exactly, we only need to estimate the energy difference. Then let $f(\vz) = -\log p_{(K-k-1)\eta}(\vz)$ and denote $h(t) = f\left( \left(\vz^\prime - \vz\right)\cdot t + \vz\right)$, 
the energy difference $g(\vz^\prime) - g(\vz)$ can be reformulated as
\begin{equation*}
    h(1)-h(0) = \sum_{i=1} \frac{h^{(i)}(0)}{i!}\quad \mathrm{and}\quad h^{(i)}(t) \coloneqq \frac{\der^i h(t)}{(\der t)^i},
\end{equation*}
where we perform the standard Taylor expansion at the point $t=0$. Then, we only need the derives of $h^{i}(0)$, which can be estimated using only the score function. For instance, the $h^{(1)}(t)$ can be estimated with score estimations:
\begin{equation*}
    h^{(1)}(t) = \grad f((\vz^\prime-\vz)\cdot t + \vz)\cdot (\vz^\prime - \vz) \approx \tilde{h}^{(1)}(t) \coloneqq s_{\theta}((\vz^\prime-\vz)\cdot t + \vz) \cdot (\vz^\prime - \vz).
\end{equation*}
Moreover, regarding the high-order derivatives, we can recursively perform the approximation: $\tilde{h}^{(i+1)}(0) = (\tilde{h}^{(i)}(\Delta t) - \tilde{h}^{(i)}(0))/\Delta t $.
Consider performing the approximation up to $u$-order derivatives, we can get the approximation of the energy difference:
\begin{equation*}
    r_{(K-k-1)\eta}(\vz^\prime, \vz) \coloneqq \sum_{i=1}^u\frac{\tilde{h}^{(i)}(0)}{i!}.
\end{equation*}
Then, the following corollary states the complexity of the score-only RTK-MALA algorithm.
\begin{corollary}
    Suppose the estimation errors of the score satisfies $\epsilon_{\mathrm{score}} \ll \rho\epsilon/(Ld^{1/2})$,
    and the log-likelihood function of $p_t$ has a bounded $u$-order derivative, e.g., $\left\|\grad^{(u)} f(\vz)\right\|\le L$,
    we have a non-parametric estimation for log-likelihood to make we have $\TVD{\hat{p}_{K\eta}}{p_*}\le \tilde{\mathcal{O}}(\epsilon)$ with a complexity shown as follows
    \begin{equation*}
        \tilde{\mathcal{O}}\left(L^4 \rho^{-3}\cdot \left(d+m_2^2\right)^2 Z^3\cdot \epsilon^{-2/(u-1)}\cdot 2^{u}\right).
    \end{equation*}
\end{corollary}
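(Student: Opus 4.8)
The plan is to reduce everything to Corollary~\ref{cor:complexity_19_informal}. That result already shows: if Alg.~\ref{alg:rtk} is run with the projected-MALA inner loop (Alg.~\ref{alg:inner_mala}) and the hyperparameters of Theorem~\ref{thm:nn_estimate_complexity_gene_informal}, and if the energy-difference oracle attains accuracy $\epsilon_{\mathrm{energy}}\le\tilde{\mathcal O}\bigl(\rho\epsilon/(L^2(d^{1/2}+m_2+Z))\bigr)$ in the sense of~\ref{e2}, then $\TVD{\hat{p}_{K\eta}}{p_*}\le\tilde{\mathcal O}(\epsilon)$ within $\mathcal O\bigl(L^4\rho^{-2}(d+m_2^2)^2Z^2\log(d/\epsilon)\bigr)$ iterations. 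So the only new work is (i) to verify that the score-only estimator $r_{(K-k-1)\eta}(\vz',\vz)=\sum_{i=1}^{u}\tilde h^{(i)}(0)/i!$ reaches that accuracy \emph{uniformly over the pairs actually encountered} in Alg.~\ref{alg:inner_mala}, and (ii) to count the extra score queries. A useful preliminary observation: the projection step means the acceptance ratio — hence $r_{(K-k-1)\eta}$ — is only ever evaluated at pairs $(\vz_s,\tilde\vz_s)$ with $\|\vz_s-\tilde\vz_s\|\le r$ (the inner-ball radius), and along the run $\|s_\theta(\vz)\|\le\|\grad f_{(K-k-1)\eta}(\vzero)\|+L\|\vz\|+\epsilon_{\mathrm{score}}$ with $\|\vz\|\le R$, so every quantity entering the estimator is polynomially bounded in $d,m_2,Z$.

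For (i), write $g(\vz')-g(\vz)=(\text{exact quadratic part})+\bigl(h(1)-h(0)\bigr)$ with $h(t)=f_{(K-k-1)\eta}\bigl((\vz'-\vz)t+\vz\bigr)$, and decompose
\[
 \Bigl|r_{(K-k-1)\eta}(\vz',\vz)-\bigl(g(\vz')-g(\vz)\bigr)\Bigr|
 \;\le\;\Bigl|\,\textstyle\sum_{i=1}^{u}\tfrac{h^{(i)}(0)}{i!}-\bigl(h(1)-h(0)\bigr)\Bigr|
 \;+\;\textstyle\sum_{i=1}^{u}\tfrac1{i!}\,\bigl|\tilde h^{(i)}(0)-h^{(i)}(0)\bigr|.
\]
The first term is the Taylor remainder; by Taylor's theorem in Lagrange form and the $u$-th order smoothness bound (which I read as $\|\grad^{(i)}f_t\|\le L$ for $1\le i\le u$, Assumption~\ref{ass:lips_score} already supplying $i=2$), it is $\lesssim L\|\vz'-\vz\|^{u}/u!$. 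For the second term, unrolling $\tilde h^{(i+1)}(0)=(\tilde h^{(i)}(\Delta t)-\tilde h^{(i)}(0))/\Delta t$ expresses $\tilde h^{(i)}(0)$ as a signed combination of $\tilde h^{(1)}(0),\tilde h^{(1)}(\Delta t),\dots,\tilde h^{(1)}((i-1)\Delta t)$ whose coefficients have $\ell_1$-mass $2^{i-1}\Delta t^{-(i-1)}$. Comparing to the same combination of the \emph{exact} $h^{(1)}(j\Delta t)$ — a forward finite difference deviating from $h^{(i)}(0)$ by $\mathcal O(i\,\Delta t)\max_{j\le u}|h^{(j)}(0)|$ via the mean-value form of the divided-difference remainder — and using $|\tilde h^{(1)}(t)-h^{(1)}(t)|\le\epsilon_{\mathrm{score}}\|\vz'-\vz\|$, I obtain
\[
 \Bigl|r_{(K-k-1)\eta}(\vz',\vz)-\bigl(g(\vz')-g(\vz)\bigr)\Bigr|
 \;\lesssim\;\frac{L\|\vz'-\vz\|^{u}}{u!}
 \;+\;\Delta t\cdot L\,\|\vz'-\vz\|\,e^{\|\vz'-\vz\|}
 \;+\;\frac{2^{u}\,\epsilon_{\mathrm{score}}\,\|\vz'-\vz\|}{u!\,\Delta t^{\,u-1}}.
\]

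It remains to drive this below $\epsilon_{\mathrm{energy}}=\tilde{\mathcal O}\bigl(\rho\epsilon/(L^2(d^{1/2}+m_2+Z))\bigr)$. The last two terms trade off in $\Delta t$: equating them gives $\Delta t\asymp\bigl(2^{u}\epsilon_{\mathrm{score}}\|\vz'-\vz\|/(u!\,L)\bigr)^{1/u}$ and a common value $\asymp L\,\Delta t$, which — together with the Taylor term — forces both $\Delta t$ and the step $\|\vz'-\vz\|\le r\asymp\sqrt{\tau d}$ to be a fractional power of $\epsilon_{\mathrm{energy}}$, of order $\epsilon^{1/(u-1)}$ up to $\rho,L,d,m_2,Z$ factors; the hypothesis $\epsilon_{\mathrm{score}}\ll\rho\epsilon/(Ld^{1/2})$ is exactly what makes this system feasible. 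Thus $\tau$ (and $r$) must shrink by a factor $\tilde{\mathcal O}\bigl(\epsilon^{2/(u-1)}2^{-u}\bigr)$ relative to Theorem~\ref{thm:nn_estimate_complexity_gene_informal}, which multiplies the inner — hence the total — iteration count of Corollary~\ref{cor:complexity_19_informal} by $\tilde{\mathcal O}\bigl(\epsilon^{-2/(u-1)}2^{u}\bigr)$; each evaluation of $r_{(K-k-1)\eta}$ further costs $\mathcal O(u)$ score queries (one per point $j\Delta t$, with $u$ treated as a constant so polynomial-in-$u$ factors are absorbed). Finally, using $\rho\le1$, $d+m_2^2+Z^2\ge1$, $u\ge2$ to absorb the leftover fractional powers of $\rho^{-1}$ and $(d+m_2^2+Z^2)$ into one extra $\rho^{-1}$ and one extra $Z$ yields the stated $\tilde{\mathcal O}\bigl(L^4\rho^{-3}(d+m_2^2)^2Z^3\epsilon^{-2/(u-1)}2^{u}\bigr)$.

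The main obstacle is precisely this $\Delta t$-balancing: iterating finite differences amplifies the score error by the catastrophic factor $2^{i-1}\Delta t^{-(i-1)}$, pushing $\Delta t$ up, while the divided-difference remainder pushes it down, and one must show that for \emph{every} pair encountered both can simultaneously be forced below the tiny budget $\tilde{\mathcal O}\bigl(\rho\epsilon/(L^2(\cdots))\bigr)$, and then carefully track how the resulting shrinkage of $\tau$ and $r$ propagates through the MALA mixing-time and projection-radius estimates underlying Corollary~\ref{cor:complexity_19_informal}. A secondary point is the intermediate-order bounds $\|\grad^{(i)}f_t\|$ ($2<i<u$) appearing in the divided-difference remainder; I would include these in the stated $u$-th order smoothness assumption rather than attempt to derive them from the $i=u$ case alone.
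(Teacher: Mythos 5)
Your proposal matches the paper's own argument (Corollary~\ref{cor:high_order_non_par_loglikelihood} and Remark~\ref{remark:high_order_appro}): the same reduction to Theorem~\ref{thm:nn_estimate_complexity_gene}, the same Taylor-plus-recursive-finite-difference error decomposition with the $L\|\vz'-\vz\|^{u}$ remainder dominating once $\epsilon_{\mathrm{score}}$ is taken extremely small, and the same tuning $\hat{\tau}\asymp\epsilon^{2/(u-1)}\rho/(d^{1/2}+m_2+Z)$ producing the $\epsilon^{-2/(u-1)}$ and $\rho^{-3}Z^{3}$ factors. The only (immaterial) difference is bookkeeping of the $2^{u}$: the paper charges it to the exponentially growing number of score calls per energy-difference evaluation, whereas you fold it into the step-size shrinkage — both land on the stated bound.
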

This result implies that if the energy function is infinite-order Lipschitz, we can nearly achieve any polynomial order convergence w.r.t. $\epsilon$ with the non-parametric energy difference estimation.

\noindent\textbf{RTK-ULD.}
Alg.~\ref{alg:inner_uld} presents a solution employing ULD for the inner loop, which can accelerate the convergence of the inner loop due to the better discretization of the ULD algorithm. 
When it is used to solve the $k$-th sampling subproblem of Alg.~\ref{alg:rtk}, $\vx_0$ is equal to $\hat{\rvx}_{k\eta}$ defined in Section~\ref{sec:framework} and used to initialize particles iterating in Alg.~\ref{alg:inner_mala}.
Besides, the underlying distribution of noise sample pair is
{\small
\begin{equation*}
    (\xi^z_s,\xi^v_s)\sim\mathcal{N}\left(\vzero,
\left[
\begin{matrix}
\frac{2}{\gamma} \left( \tau - \frac{2}{\gamma} \left(1 - e^{-\gamma\tau} \right)\right) + \frac{1}{2\gamma} \left(1 - e^{-2\gamma\tau}\right) & \frac{1}{\gamma} \left(1 - 2e^{-\gamma\tau} + e^{-2\gamma \tau}\right)  \\
 \frac{1}{\gamma} \left(1 - 2e^{-\gamma\tau} + e^{-2\gamma \tau}\right) & 1-e^{-2\gamma\tau}
\end{matrix}
\right]
\right).
\end{equation*}
}
In Alg.~\ref{alg:inner_uld}, we consider the process $\{(\hat{\rvz}_s, \hat{\rvv}_s)\}_{s=0}^S$ whose underlying distribution is denoted as $\{\hat{\pi}_s\}_{s=0}^{S}$.
We expect the $\vz$-marginal distribution of $\hat{\pi}_S$ to be close to the target distribution presented in Eq.~\ref{def:inner_target_dis}.
Unlike MALA, we only need to consider the error from score estimation in an expectation perspective, which is the same as that shown in~\cite{chen2022sampling}.
\begin{enumerate}[label=\textbf{[E{\arabic*}]}]
\setcounter{enumi}{2}
    \item \label{hat_e1}Estimation error of the score function: we assume a score estimator, e.g., a well-trained diffusion model, $s_{\theta}$, which can approximate the score function with an $\epsilon_{\mathrm{score}}$ error, i.e., $\E_{p_t}\left\|s_{\theta,t}(\vz) - \grad \log p_t(\vz)\right\|^2\leq \epsilon^2_{\mathrm{score}}$  for any $t\in[0,T]$.
\end{enumerate}
Under this condition, the complexity of RTK-ULD to achieve the convergence of TV distance is provided as follows, and the detailed proof is deferred to Theorem~\ref{thm:uld_outer_complexity_gene}.
Besides, we compare our theoretical results with the previous in Table~\ref{tab:comp_old}.
\begin{theorem}
    \label{thm:rtk_uld_complexity}
    Under Assumptions~\ref{ass:lips_score}--\ref{ass:second_moment} and \ref{hat_e1}, for Alg.~\ref{alg:rtk}, we choose 
    \begin{equation*}
        \eta= 1/2\cdot \log[(2L+1)/2L] \quad \mathrm{and}\quad K = 4L\cdot \log [((1+L^2)d+\left\|\grad f_*(\vzero)\right\|^2)^2\cdot\epsilon^{-2}]
    \end{equation*}
    and implement Step 3 of Alg.~\ref{alg:rtk} with projected Alg.~\ref{alg:inner_uld}.
    For the $k$-th run of Alg.~\ref{alg:inner_uld}, we require Gaussian-type initialization and high-accurate score estimation, i.e.,
    \begin{equation*}
        \hat{\pi}_0 = \mathcal{N}(\vzero, e^{2\eta}-1)\otimes \mathcal{N}(\vzero,\mI)\quad \mathrm{and}\quad \epsilon_{\mathrm{score}}=\tilde{\mathcal{O}}(\epsilon/\sqrt{L}).
    \end{equation*}
    If we set the hyperparameters of inner loops as follows. 
    the step size and the iteration number as
    \begin{equation*}
         \begin{aligned}
             \tau & = \tilde{\Theta}\left(\epsilon d^{-1/2} L^{-1/2} \cdot \left(\log \left[\frac{L(d+m_2^2+\|\vx_0\|^2)}{\epsilon^2}\right]\right)^{-1/2}\right)\\
            S &=\tilde{\Theta}\left(\epsilon^{-1}d^{1/2}\cdot \left(\log \left[\frac{L(d+m_2^2+\|\vx_0\|^2)}{\epsilon^2}\right]\right)^{1/2}\right).
         \end{aligned}
     \end{equation*}
    It can achieve $\TVD{\hat{p}_{K\eta}}{p_*}\lesssim \epsilon$
    with an $\tilde{\mathcal{O}}\left(L^2 d^{1/2}\epsilon^{-1} \right)$ gradient complexity.
\end{theorem}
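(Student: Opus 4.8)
The plan is to feed the outputs of the inner ULD loops into the general error decomposition of Lemma~\ref{lem:num_diff_balance_kl}, which reduces the task to two pieces: the forward-OU bias term and the accumulated KL error of the per-segment samplers. For the bias term, $\eta=\tfrac12\log\tfrac{2L+1}{2L}$ gives $e^{-2\eta}=\tfrac{2L}{2L+1}$ and, taking w.l.o.g.\ $L\ge 1$, $2L\log\tfrac{2L+1}{2L}\ge\log 2$, so with the stated $K$ one obtains $K\eta\ge\log\bigl[((1+L^2)d+\|\grad f_*(\vzero)\|^2)\,\epsilon^{-1}\bigr]$ and the first term of Lemma~\ref{lem:num_diff_balance_kl} is at most $\epsilon$. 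It then remains to show $\sum_{k=0}^{K-1}\E_{\hat\rvx\sim\hat p_{k\eta}}\bigl[\KL{\hat p_{(k+1)\eta|k\eta}(\cdot|\hat\rvx)}{\bkwp_{(k+1)\eta|k\eta}(\cdot|\hat\rvx)}\bigr]\lesssim\epsilon^2$, i.e.\ that every conditional KL is $\lesssim\epsilon^2/K$.

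Next I would fix an outer index $k$ and a starting point $\vx_0=\hat\rvx_{k\eta}$ and analyze one run of projected Alg.~\ref{alg:inner_uld}. By the Hessian computation following Lemma~\ref{lem:rev_trans_ker_form}, the choice $\eta=\tfrac12\log\tfrac{2L+1}{2L}$ makes the target $\bkwp_{(k+1)\eta|k\eta}(\cdot|\vx_0)\propto e^{-g}$ satisfy $L\mI\preceq\grad^2 g\preceq 3L\mI$: it is $L$-strongly log-concave and $3L$-smooth, hence of condition number $3$. On such a target I would establish a convergence guarantee for the exponential-integrator ULD scheme run with inexact gradients, built from two ingredients. \emph{(a) Contraction of the continuous dynamics:} with the friction $\gamma$ tuned to the smoothness scale, the underdamped Langevin diffusion with potential $g$ contracts toward its augmented stationary law at a rate set by the $O(1)$ condition number — in $W_2$ by a synchronous coupling in a twisted Euclidean metric, and in TV by a reflection-type coupling — so that a continuous horizon $T_{\mathrm{inner}}=S\tau=\tilde\Theta\bigl(\log[L(d+m_2^2+\|\vx_0\|^2)/\epsilon^2]\bigr)$ drives the error coming from the initialization $\hat\pi_0=\mathcal N(\vzero,e^{2\eta}-1)\otimes\mathcal N(\vzero,\mI)$ (whose squared $W_2$ distance to the target is $\le\mathrm{poly}(L)\,(d+m_2^2+\|\vx_0\|^2)$) below $\epsilon^2/K$. \emph{(b) Discretization and score error:} a Girsanov comparison of the discretized chain $\{(\hat\rvz_s,\hat\rvv_s)\}$ with the continuous diffusion on $[0,T_{\mathrm{inner}}]$, both started at $\hat\pi_0$, bounds the added KL by a time-integral of $\E\|\grad g(\hat\rvz_t)-s_\theta(\hat\rvz_{\mathrm{grid}(t)})\|^2$; splitting this into a discretization part — handled by $3L$-smoothness together with a uniform-in-$s$ second-moment bound on $(\hat\rvz_s,\hat\rvv_s)$, and of order $\tilde{\mathcal{O}}(\mathrm{poly}(L)\,d\,\tau^2\,T_{\mathrm{inner}})$ — and a score part of order $\tilde{\mathcal{O}}(\epsilon_{\mathrm{score}}^2\,T_{\mathrm{inner}})$ via assumption~\ref{hat_e1}, it becomes $\lesssim\epsilon^2/K$ for the stated $\tau=\tilde\Theta(\epsilon d^{-1/2}L^{-1/2}(\log[\cdot])^{-1/2})$ and $\epsilon_{\mathrm{score}}=\tilde{\mathcal{O}}(\epsilon/\sqrt L)$. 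Projecting onto the position marginal by the data-processing inequality and combining (a) and (b) gives each conditional KL of order $\epsilon^2/K$.

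Because $\tau$, $S$, $T_{\mathrm{inner}}$ and the estimates above all involve $\|\vx_0\|=\|\hat\rvx_{k\eta}\|$, which is a priori random and unbounded, the third step controls it along the outer loop: the projection step of projected Alg.~\ref{alg:inner_uld} confines every returned particle to a ball of radius $\tilde{\mathcal{O}}(\mathrm{poly}(L)\sqrt{d+m_2^2})$ — the ULD analogue of Lemma~\ref{lem:particles_z_bound} — so $\|\vx_0\|^2$ enters $\tau$, $S$ and $T_{\mathrm{inner}}$ only through a logarithm, is absorbed into the $\tilde\Theta(\cdot)$ factors, and the per-segment bounds hold uniformly over $\mathrm{supp}(\hat p_{k\eta})$, which is exactly what the expectation in Lemma~\ref{lem:num_diff_balance_kl} needs. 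Assembling everything, Lemma~\ref{lem:num_diff_balance_kl} yields $\TVD{\hat p_{K\eta}}{p_*}\lesssim\epsilon+\sqrt{K\cdot(\epsilon^2/K)}\lesssim\epsilon$, while the total number of score evaluations is $K\cdot S$, which with the stated $K$ and $S$ equals $\tilde{\mathcal{O}}(L^2 d^{1/2}\epsilon^{-1})$.

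The hard part is ingredient (b): obtaining a KL — not merely $W_2$ — discretization bound for the exponential-integrator ULD scheme under only expected-$L^2$-accurate scores (assumption~\ref{hat_e1}) while keeping the sharp $d^{1/2}\epsilon^{-1}$ scaling. This needs a priori second-moment bounds on $(\hat\rvz_s,\hat\rvv_s)$ uniform in the inner index $s$ (via the strong log-concavity of $g$, the one-step dissipativity of the exponential integrator, and a discrete Grönwall estimate), and it needs the contraction of the continuous dynamics in (a) to be expressed in a divergence compatible with the Girsanov/data-processing chain that feeds Lemma~\ref{lem:num_diff_balance_kl}; reconciling the $W_2$-contraction of the continuous underdamped dynamics with the KL quantity demanded by Lemma~\ref{lem:num_diff_balance_kl} is the main technical friction. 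The remaining bookkeeping — the bias term and the particle-norm control — is routine once the uniform moment bound is available.
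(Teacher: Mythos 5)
Your outer architecture matches the paper's: reduce via the KL chain rule (Lemma~\ref{lem:num_diff_balance_kl} / Corollary~\ref{lem:kl_inner_conv}) to a per-segment KL bound of order $\epsilon^2/K$, analyze one inner ULD run by comparing the discretized chain to the continuous diffusion via Girsanov, and multiply $K$ by $S$ for the complexity. Your ingredient (b) is essentially the paper's Lemma~\ref{lem:lem48_altschuler2023faster}. But there is a genuine gap in ingredient (a), and you have in fact named it yourself without closing it: a $W_2$ contraction by synchronous coupling (or a TV contraction by reflection coupling) of the continuous underdamped dynamics does \emph{not} feed into the KL quantity that Lemma~\ref{lem:num_diff_balance_kl} requires, and there is no cheap conversion from $W_2$ to KL for the law of the diffusion at a finite time. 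The paper avoids this entirely by working in KL from the start: it uses the hypocoercivity argument of Lemma~\ref{lem:prop1_ma2021there}, which shows that the Lyapunov functional $\gF(\pi_t,\pi^\gets)=\KL{\pi_t}{\pi^\gets}+\E_{\pi_t}[\|\mathfrak{M}^{1/2}\grad\log(\pi_t/\pi^\gets)\|^2]$ decays at rate $\Theta(\sqrt{L})$ for the $L$-strongly-log-concave, $3L$-smooth target. It then combines this with the decomposition $\KL{\hat\pi_T^{\gM}}{\pi^{\gM}}=\KL{\hat\pi_T^{\gM}}{\pi_T^{\gM}}+\KL{\pi_T^{\gM}}{\pi^{\gM}}+\int\log(\pi_T^{\gM}/\pi^{\gM})\,\der(\hat\pi_T^{\gM}-\pi_T^{\gM})$, controlling the cross term by $\sqrt{\chi^2(\hat\pi_T^{\gM}\|\pi_T^{\gM})\cdot\mathrm{var}_{\pi_T^{\gM}}(\log(\pi_T^{\gM}/\pi^{\gM}))}$, where the variance is bounded through a uniform-in-time LSI constant for $\pi_t^{\gM}$ (Lemma~\ref{lem:prop14_zhang2023improved}) and the relative Fisher information already controlled by the Lyapunov decay. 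Without replacing your coupling argument by something of this kind, the per-segment KL bound is not established, and the whole outer chain collapses.

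A secondary issue: your control of $\|\vx_0\|$ invokes a projection step of ``projected Alg.~\ref{alg:inner_uld},'' but Alg.~\ref{alg:inner_uld} as defined has no projection, so the ULD analogue of Lemma~\ref{lem:particles_z_bound} is not available. The paper instead bounds $\E_{\hat p_{k\eta}}[\|\hat\rvx_{k\eta}\|^2]$ recursively along the outer loop (Lemma~\ref{lem:second_moment_bound}, via Talagrand's inequality and the strong log-concavity of the RTK targets) and then applies Jensen's inequality to move the expectation inside the logarithm; since $\|\vx_0\|$ enters $\tau$ and $S$ only logarithmically, this suffices. Your high-probability confinement could in principle be made to work, but it is not the mechanism the algorithm provides, and as written this step also has a hole.
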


\begin{table*}[t]
    \small
    \centering
    \begin{tabular}{cccc}
    \toprule
     Results & Algorithm & Assumptions &  Complexity \\
     \midrule
     \citet{chen2022sampling} & DDPM (SDE-based) & \ref{ass:lips_score},\ref{ass:second_moment},\ref{hat_e1} &  $\tilde{\mathcal{O}}(L^2d\epsilon^{-2})$\\
     \midrule
     \citet{chen2024probability} & DPOM (ODE-based) & \ref{ass:lips_score},\ref{ass:second_moment},\ref{hat_e1}, and $s_{\theta}$ smoothness &  $\tilde{\mathcal{O}}(L^3d\epsilon^{-2})$\\
     \midrule
     \citet{chen2024probability} & DPUM (ODE-based) & \ref{ass:lips_score},\ref{ass:second_moment},\ref{hat_e1}, and $s_{\theta}$ smoothness  &   $\tilde{\mathcal{\mathcal{O}}}(L^2d^{1/2}\epsilon^{-1})$\\
     \midrule
     \citet{li2023towards} &  ODE-based sampler & \ref{hat_e1} and estimation error of energy Hessian &   $\tilde{\mathcal{O}}(d^3\epsilon^{-1})$\\
     \midrule
     Corollary~\ref{cor:complexity_19_informal} & RTK-MALA & \ref{ass:lips_score},\ref{ass:second_moment},\ref{e1}, and \ref{e2} & \textcolor{red}{$\mathcal{O}(L^4 d^2 \log (d/\epsilon) )$}\\ 
     \midrule
     Theorem~\ref{thm:rtk_uld_complexity} & RTK-ULD (ours) & \ref{ass:lips_score},\ref{ass:second_moment},\ref{hat_e1} & \textcolor{red}{$\tilde{\mathcal{O}}(L^2d^{1/2}\epsilon^{-1})$}\\
     \bottomrule 
    \end{tabular}
    \caption{\small Comparison with prior works for RTK-based methods. The complexity denotes the number of calls for the score estimation to achieve $\TVD{\hat{p}_{K\eta}}{p_*}\le \tilde{\mathcal{O}}(\epsilon)$. $d$ and $\epsilon$ mean the dimension and error tolerance.
    Compared with the state-of-the-art result, RTK-ULD achieves the best dependence for both $d$ and $\epsilon$. Though RTK-MALA requires slightly stricter assumptions and worse dimension dependence, a linear convergence w.r.t. $\epsilon$ makes it suit high-accuracy sampling tasks.} 
    \label{tab:comp_old}
\end{table*}

\section{Conclusion and Limitation}

This paper presents an analysis of a modified version of diffusion models. Instead of focusing on the discretization of the reverse SDE,  we propose a general RTK framework that can produce a large class of algorithms for diffusion inference, which is formulated as solving a sequence of RTK sampling subproblems. Given this framework, we develop two algorithms called RTK-MALA and RTK-ULD, which leverage MALA and ULD to solve the RTK sampling subproblems. We develop theoretical guarantees for these two algorithms under certain conditions on the score estimation, and demonstrate their faster convergence rate than prior works. Numerical experiments support our theory.

We would also like to point out several limitations and future work. One potential limitation of this work is the lack of large-scale experiments. The main focus of this paper is the theoretical understanding and rigorous analysis of the diffusion process. Implementing large-scale experiments requires GPU resources and practitioner support, which can be an interesting direction for future work. Besides, though we provided a score-only RTK-MALA algorithm, the $\tilde{\mathcal{O}}(1/\epsilon)$ convergence rate can only be achieved by the RTK-MALA algorithm (Alg. \ref{alg:inner_mala}). However, this faster algorithm requires a direct approximation of the energy difference, which is not accessible in the existing pretrained diffusion model. Developing practical energy difference approximation algorithms and incorporating them with Alg. \ref{alg:inner_mala} for diffusion inference are also very interesting future directions.

\bibliographystyle{apalike}
\newpage
\bibliography{0_contents/ref}  





\newpage
\appendix

\section{Numerical Experiments}

In this section, we conduct experiments when the target distribution $p_*$ is a Mixture of Gaussian (MoG) and compare RTK-based methods with traditional DDPM.
Specifically, we are considering a forward process from an MoG distribution to a normal distribution in the following
\begin{equation*}        
\mathrm{d} \mathbf{x}_t = -\frac{1}{2}\mathbf{x}_t \mathrm{d} t +  \mathrm{d} \bm{B}_t \quad \text{and} \quad
\mathbf{x}_{0} \sim \frac{1}{K} \sum_{k=1}^K  \mathcal{N}(\vmu_k, \sigma_k^2\cdot \mI),
\end{equation*}

where $K$ is the number of Gaussian components, $\vmu_k$ and $\sigma_k^2$ are the means and variances of the Gaussian components, respectively.
The solution of the SDE follows
\begin{equation*}        
\rvx_t = \mathbf{x}_0e^{-\frac{1}{2} t}+ \sqrt{1-e^{- t}}\cdot \xi \quad \mathrm{where} \quad \xi \sim \mathcal{N}(0, \mI).
\end{equation*}
Since $\mathbf{x}_0$ and $\xi$ are both sampled from Gaussian distributions, their linear combination $\mathbf{x}_t$ also forms a Gaussian distribution, i.e., 
\begin{equation*}
    \mathbf{x}_t \sim \frac{1}{K}\sum_{k=1}^K  \mathcal{N}(\vmu_k e^{-\frac{1}{2}t}, (\sigma_k^2 e^{-t} + 1 -e^{-t})\cdot \mI).
\end{equation*}

Then, we have
\begin{equation*}               
\begin{aligned}            
\nabla  p(\vx_t) & = \frac{1}{K}\sum_{i=1}^{K} \grad_{\vx_t} \left[  \frac{1}{2}            (\frac{1}{\sqrt{2\pi}(\sigma_i^2e^{- t}+ 1- e^{- t}}) \cdot \exp (-\frac{1}{2}(\frac{\vx_t-\vmu_ie^{-\frac{1}{2} t}}{\sigma_i^2e^{- t}+ 1 - e^{- t}})^2)\right]\\        
& = \frac{1}{K}\sum_{i=1}^{K}  p_i(\vx_t) \cdot \grad_{\vx_t} \left[-\frac{1}{2}(\frac{\vx_t-\vmu_ie^{-\frac{1}{2} t}}{\sigma_k^2e^{- t} + 1 - e^{- t}})^2\right]\\            
& = \frac{1}{K}\sum_{i=1}^{K} p_i(\vx_t) \cdot \frac{-(\vx_t-\vmu_ie^{-\frac{1}{2} t})}{\sigma_i^2e^{- t}+1-e^{- t}}.
\end{aligned}    
\end{equation*}

We can also  calculate the score of $\vx_t$, i.e.,
\begin{equation*}               
\begin{aligned} 
\nabla \log p(\vx_t) =\frac{\nabla p(\vx_t)}{p(\vx_t)} = \frac{1/K\cdot \sum_{i=1}^{K}  p_i(\vx_t) \cdot \left(
\frac{-\left(\vx_t-\vmu_ie^{-\frac{1}{2} t} \right)}{\sigma_i^2e^{- t}+1-e^{- t}}\right)}            
{1/K\cdot \sum_{i=1}^{K}  p_i(\vx_t)}.
\end{aligned}    
\end{equation*}

We consider a MoG consisting of 12 Gaussian distributions, each with 10 dimensions, as shown in Fig.~\ref{figure:cluster} (f). The means of the 12 Gaussian distributions are uniformly distributed along the circumference of a circle with a radius of one in the first and second dimensions, while the remaining dimensions are centered at the origin. Each component of the mixture has an equal probability and a variance of 0.007 across all dimensions.

We evaluate Alg.~\ref{alg:rtk} with unadjust Langevin algorithm (ULA), which leads to RTK-ULA, Alg~\ref{alg:inner_mala},~\ref{alg:inner_uld} implementations, and DDPM under the same Number of Function Evaluations (NFE). Specifically, while DDPM models \(\mathbf{x}_{\eta}\) across a sequence of \(\eta\) timesteps spanning from 0 to $T$ in increments of $0.001 \times T$ (i.e., \([0, 0.001T, 0.002T, \ldots, T]\)), we execute Alg.~\ref{alg:rtk}, \ref{alg:inner_mala}, and \ref{alg:inner_uld} at fewer timesteps within \(\mathbf{x}_{[0, 0.2T, 0.4T, 0.6T, 0.8T]}\), and we distribute the NFE uniformly to these timesteps for MCMC. The experiments are taken on a single NVIDIA GeForce RTX 4090 GPU. We evaluate the sampling quality using marginal accuracy, i.e.,
\begin{equation*}               
  \text{Marginal Accuracy}(\hat{p}, p) = 1 - 0.5 \times \frac{1}{d}\sum_{i=1}^{d} TV(\hat{p}_i,p_i),
\end{equation*}
where \( \hat{p}_i(x) \) is the empirical marginal distribution of the \(i\)-th dimension obtained from the sampled data, \( p_i(x) \) is the true marginal distribution of the \(i\)-th dimension, and \( d \) is the total number of dimensions.

\begin{figure}[htb]
\centering{\includegraphics[width=1\textwidth]{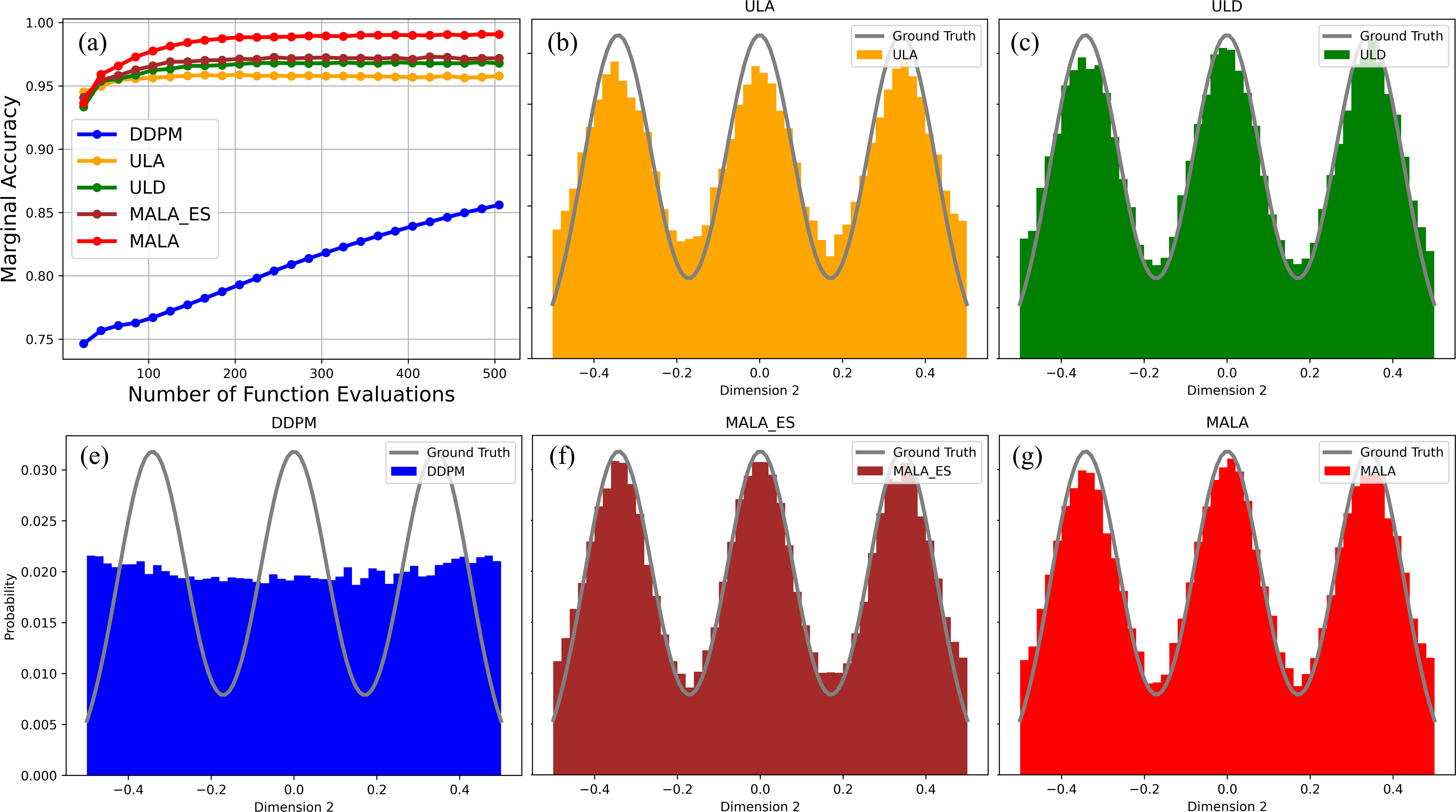}}
\caption{(a) Mariginal accuracy of the sampled MoG by different algorithms along NFE. (b-f) The histograms along a certain direction of sampled MoG by different algorithms. The plots labeled by `ULA', `ULD', `MALA', `MALA\_ES' correspond to RTK-ULA, RTK-ULD, RTK-MALA, score-only RTK-MALA, respectively. The histogram is oriented along the second dimension when the first dimension is constrained within (0.75, 1.25).}
\label{figure:nfe_ma}
\end{figure}

\begin{figure}[htb]
\centering{\includegraphics[width=1\textwidth]{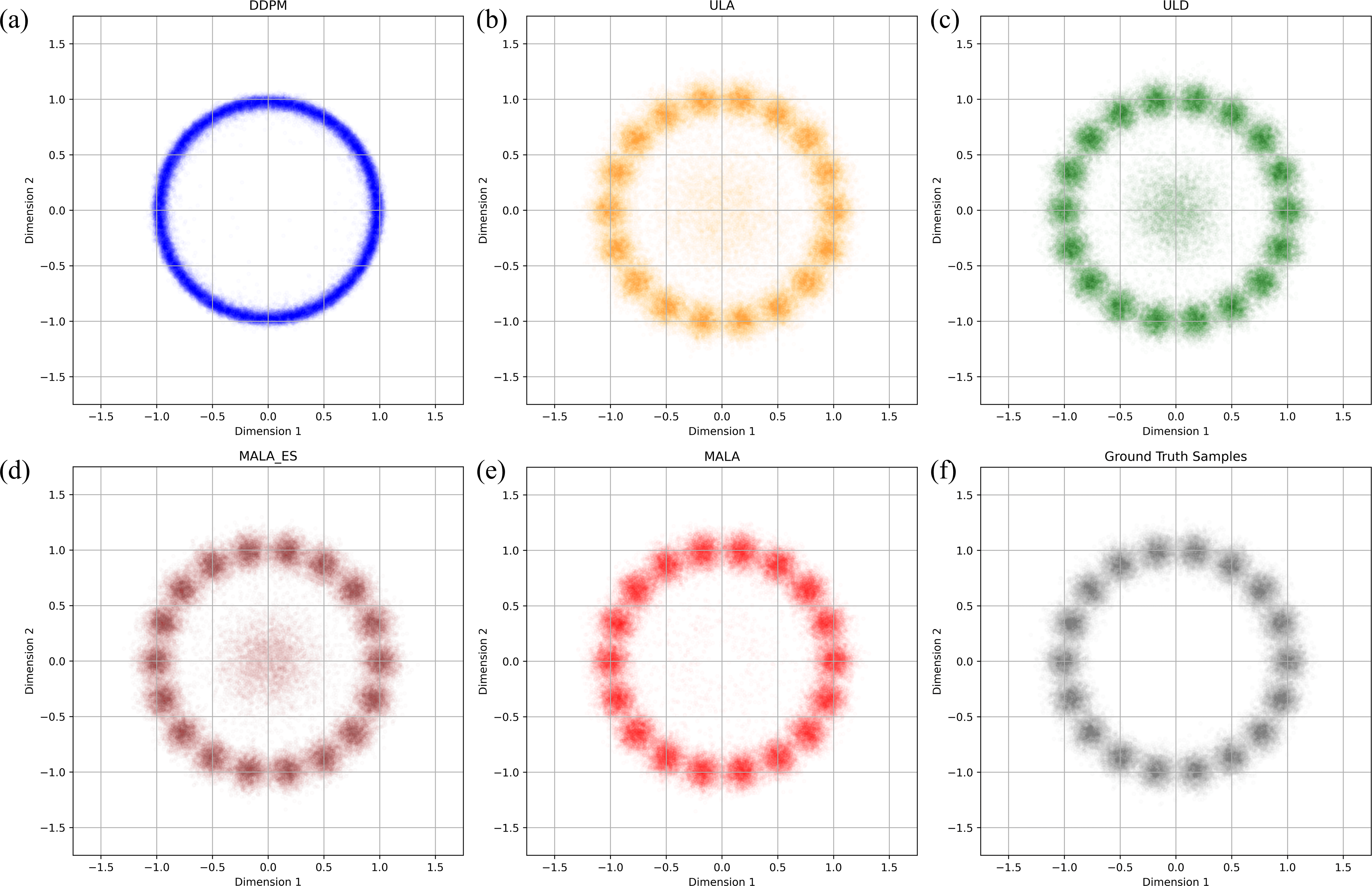}}
\caption{(a-e) Clusters sampled by DDPM, RTK-ULA, RTK-ULD, score-only RTK-MALA, and RTK-MALA, respectively. (f) Clusters sampled by the ground truth distribution. These \( 2D \) clusters represent the projection of the original \( 10D \) data onto the first two dimensions. }
\label{figure:cluster}
\end{figure}

Fig.~\ref{figure:nfe_ma} (a) shows the marginal accuracies of our RTK sampling algorithms and DDPM along NFE. We observe that all algorithms using RTK converge quickly. Among all RTK algorithms, RTK-MALA achieves the highest marginal accuracy. Score-only RTK-MALA is worse than RTK-MALA since the estimated energy contains errors, yet it is still slightly better than RTK-ULD. Along all RTK algorithms, RTK-ULA demonstrates the lowest performance in terms of marginal accuracy, but it still outperforms DDPM with a large margin especially when NFE is small. 

Fig.~\ref{figure:nfe_ma} (b-f) shows the histograms of sampled MoG by DDPM and RTK-based methods. We observe that DDPM cannot reconstruct the local structure of MoG. ULA can roughly reconstruct the MoG structure, but it is still weak in complex regions, specifically around the peaks and valleys. In contrast, RTK-ULD, score-only RTK-MALA, and RTK-MALA can reconstruct more fine-grained structures in complex regions.

Fig.~\ref{figure:cluster} (a-e) shows the clusters sampled by DDPM and RTK-based methods. We observe that DDPM fails to accurately reconstruct the ground truth distribution. In contrast, all methods based on RTK can generate distributions that closely approximate the ground truth. Additionally, RTK-MALA shows superior performance in accurately reconstructing the distribution in regions of low probability.

Overall, these numerical experiments demonstrate the benefit of the RTK framework for developing faster algorithms than DDPM in diffusion inference. Besides, experimental results also well support our theory, showing that RTK-MALA achieves faster convergence than RTK-ULA and RTK-ULD, even with estimated energy difference via score functions.

\section{Inference process with reverse transition kernel framework}
\label{sec:appendix_rtk_framework}

\begin{proof}[Proof of Lemma~\ref{lem:rev_trans_ker_form}]
\label{proof:rtk}
According to Bayes theorem, the following equation should be validated for any $\vx\in \R^d$ and $t^\prime>t$,
\begin{equation}
    \label{tbp:bias}
    p_t(\vx) = \int p_{t|t^\prime}(\vx|\vx^\prime) \cdot p_{t^\prime}(\vx^\prime)\der \vx^\prime.
\end{equation}
To simplify the notation, we suppose the normalizing constant of $p_t$, i.e.,
\begin{equation*}
    Z_t \coloneqq \int \exp(-f_t(\vx)) \der\vx.
\end{equation*}
Besides, the forward OU process, i.e., SDE.~\ref{sde:ou}, has a closed transition kernel, i.e.,
\begin{equation*}
    p_{t^\prime|t}(\vx^\prime|\vx)= \left(2\pi \left(1-e^{-2(t^\prime-t)}\right)\right)^{-d/2}
     \cdot \exp \left[\frac{-\left\|\vx^\prime -e^{-(t^\prime-t)}\vx \right\|^2}{2\left(1-e^{-2(t^\prime-t)}\right)}\right]
\end{equation*}
Then, we have
\begin{equation*}
    \begin{aligned}
        p_{t^\prime}(\vx^\prime) = &\int p_t(\vy) p_{t^\prime|t}(\vx^\prime|\vy)\der \vy\\
        = &\int Z_t^{-1}\cdot \exp(-f_t(\vy))\cdot \left(2\pi \left(1-e^{-2(t^\prime-t)}\right)\right)^{-d/2}
     \cdot \exp \left[\frac{-\left\|\vx^\prime -e^{-(t^\prime-t)}\vy \right\|^2}{2\left(1-e^{-2(t^\prime-t)}\right)}\right] \der\vy.
    \end{aligned}
\end{equation*}
Plugging this equation into Eq.~\ref{tbp:bias}, and we have
\begin{equation*}
    \small
    \begin{aligned}
        \text{RHS of Eq.~\ref{tbp:bias}} = &\int p_{t|t^\prime}(\vx|\vx^\prime)\cdot p_{t^\prime}(\vx^\prime) \der \vx^\prime\\
        = &\int p_{t|t^\prime}(\vx|\vx^\prime) \cdot \int Z_t^{-1}\cdot \exp(-f_t(\vy))\cdot \left(2\pi \left(1-e^{-2(t^\prime-t)}\right)\right)^{-d/2} \cdot \exp \left[\frac{-\left\|\vx^\prime -e^{-(t^\prime-t)}\vy \right\|^2}{2\left(1-e^{-2(t^\prime-t)}\right)}\right] \der\vy \der\vx^\prime.
    \end{aligned}
\end{equation*}
Moreover, when we plug the reverse transition kernel 
\begin{equation*}
    p_{t|t^\prime}(\vx|\vx^\prime) \propto \exp\left(-f_t(\vx)-\frac{\left\|\vx^\prime - \vx\cdot e^{-(t^\prime-t)}\right\|^2}{2(1-e^{-2(t^\prime-t)})}\right)
\end{equation*}
into the previous equation and have
\begin{equation*}
    \begin{aligned}
        \text{RHS of Eq.~\ref{tbp:bias}} 
        = & \int \frac{\exp\left( \textcolor{red}{-f_t(\vx)} \textcolor{blue}{-\frac{\left\|\vx^\prime - \vx\cdot 
        e^{-(t^\prime-t)}\right\|^2}{2(1-e^{-2(t^\prime-t)})}}\right)}
        {\textcolor{green}{\int \exp\left(-f_t(\vx)-\frac{\left\|\vx^\prime - \vx\cdot 
 e^{-(t^\prime-t)}\right\|^2}{2(1-e^{-2(t^\prime-t)})}\right)\der \vx}} \cdot\\
        & \int \textcolor{red}{Z_t^{-1}}\cdot \textcolor{green}{\exp(-f_t(\vy))}\cdot \textcolor{blue}{\left(2\pi \left(1-e^{-2(t^\prime-t)}\right)\right)^{-d/2}} \cdot \textcolor{green}{\exp \left[\frac{-\left\|\vx^\prime -e^{-(t^\prime-t)}\vy \right\|^2}{2\left(1-e^{-2(t^\prime-t)}\right)}\right]} \der\vy \der\vx^\prime\\
        & = \textcolor{red}{Z_t^{-1}\cdot \exp(-f_t(\vx))}\cdot \int \textcolor{blue}{\exp\left(-\frac{\left\|\vx^\prime - \vx\cdot 
        e^{-(t^\prime-t)}\right\|^2}{2(1-e^{-2(t^\prime-t)})}\right) \cdot \left(2\pi \left(1-e^{-2(t^\prime-t)}\right)\right)^{-d/2}} \cdot\\
        &\quad \left[\int \textcolor{green}{\frac{\exp\left(-f_t(\vy) - \frac{\left\|\vx^\prime - e^{-(t^\prime-t)}\cdot \vy\right\|^2}{2(1-e^{-2(t^\prime-t)})}\right)}{\int \exp\left(-f_t(\vx)-\frac{\left\|\vx^\prime - \vx\cdot 
 e^{-(t^\prime-t)}\right\|}{2(1-e^{-2(t^\prime-t)})}\right) \der \vx}}\der \vy\right] \der \vx^\prime\\
 & = \textcolor{red}{p_t(\vx)}=\text{LHS of Eq.~\ref{tbp:bias}}.
    \end{aligned}
\end{equation*}
Hence, the proof is completed.
\end{proof}

\begin{lemma}[Chain rule of TV]
    \label{lem:tv_chain_rule}
    Consider four random variables, $\rvx, \rvz, \tilde{\rvx}, \tilde{\rvz}$, whose underlying distributions are denoted as $p_x, p_z, q_x, q_z$.
    Suppose $p_{x,z}$ and $q_{x,z}$ denotes the densities of joint distributions of $(\rvx,\rvz)$ and $(\tilde{\rvx},\tilde{\rvz})$, which we write in terms of the conditionals and marginals as
    \begin{equation*}
        \begin{aligned}
        &p_{x,z}(\vx,\vz) = p_{x|z}(\vx|\vz)\cdot p_z(\vz)=p_{z|x}(\vz|\vx)\cdot p_{x}(\vx)\\
        &q_{x,z}(\vx,\vz)=q_{x|z}(\vx|\vz)\cdot q_z(\vz) = q_{z|x}(\vz|\vx)\cdot q_x(\vx).
        \end{aligned}
    \end{equation*}
    then we have
    \begin{equation*}
        \begin{aligned}
            \TVD{p_{x,z}}{q_{x,z}} \le  \min & \left\{ \TVD{p_z}{q_z} + \E_{\rvz\sim p_z}\left[\TVD{p_{x|z}(\cdot|\rvz)}{q_{x|z}(\cdot|\rvz)}\right],\right.\\
            &\quad  \left.\TVD{p_x}{q_x}+\E_{\rvx \sim p_x}\left[\TVD{p_{z|x}(\cdot|\rvx)}{q_{z|x}(\cdot|\rvx)}\right]\right\}.
        \end{aligned}
    \end{equation*}
    Besides, we have
    \begin{equation*}
        \TVD{p_x}{q_x}\le \TVD{p_{x,z}}{q_{x,z}}.
    \end{equation*}
\end{lemma}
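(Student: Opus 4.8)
The plan is to argue directly from the $L^1$ representation $\TVD{p}{q} = \frac12\int|p-q|$, combined with the two factorizations of each joint density supplied in the hypothesis. I would first establish the bound that averages over $\rvz\sim p_z$. Starting from the elementary identity $ab-cd = b(a-c) + c(b-d)$, applied with $a = p_{x|z}(\vx|\vz)$, $b = p_z(\vz)$, $c = q_{x|z}(\vx|\vz)$, $d = q_z(\vz)$, I obtain the pointwise decomposition
\begin{equation*}
    p_{x,z}(\vx,\vz) - q_{x,z}(\vx,\vz) = p_z(\vz)\bigl(p_{x|z}(\vx|\vz) - q_{x|z}(\vx|\vz)\bigr) + q_{x|z}(\vx|\vz)\bigl(p_z(\vz) - q_z(\vz)\bigr),
\end{equation*}
so that $|p_{x,z} - q_{x,z}| \le p_z(\vz)\,|p_{x|z}(\vx|\vz) - q_{x|z}(\vx|\vz)| + q_{x|z}(\vx|\vz)\,|p_z(\vz) - q_z(\vz)|$. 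Integrating over $\vx$ and $\vz$ and using $\int p_{x|z}(\cdot|\vz)\,\der\vx = \int q_{x|z}(\cdot|\vz)\,\der\vx = 1$, the first term contributes $2\,\E_{\rvz\sim p_z}[\TVD{p_{x|z}(\cdot|\rvz)}{q_{x|z}(\cdot|\rvz)}]$ and the second contributes $2\,\TVD{p_z}{q_z}$; dividing by $2$ gives the first candidate bound appearing inside the minimum.

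The second candidate bound follows by the identical computation after interchanging the roles of $\rvx$ and $\rvz$: one uses instead the factorizations $p_{x,z} = p_{z|x}(\vz|\vx)p_x(\vx)$ and $q_{x,z} = q_{z|x}(\vz|\vx)q_x(\vx)$, applies the same identity now with $b = p_x(\vx)$ and $d = q_x(\vx)$, and integrates as before. This yields $\TVD{p_{x,z}}{q_{x,z}} \le \TVD{p_x}{q_x} + \E_{\rvx\sim p_x}[\TVD{p_{z|x}(\cdot|\rvx)}{q_{z|x}(\cdot|\rvx)}]$. Since both inequalities hold simultaneously, so does the minimum of their right-hand sides, which is exactly the first display in the lemma.

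For the final inequality $\TVD{p_x}{q_x}\le\TVD{p_{x,z}}{q_{x,z}}$, I would simply marginalize: $p_x(\vx) - q_x(\vx) = \int\bigl(p_{x,z}(\vx,\vz) - q_{x,z}(\vx,\vz)\bigr)\,\der\vz$, hence by the triangle inequality $\int|p_x - q_x|\,\der\vx \le \int\!\!\int|p_{x,z} - q_{x,z}|\,\der\vz\,\der\vx$, and dividing by $2$ completes the proof. I do not anticipate a genuine obstacle here; the only point needing care is choosing the additive split of $ab-cd$ so that the conditional-TV term ends up averaged against the marginal of $p$ rather than that of $q$, which is precisely what makes the bound as stated (rather than its $q$-averaged variant) come out.
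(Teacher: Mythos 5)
Your proposal is correct and follows essentially the same route as the paper: your identity $ab-cd=b(a-c)+c(b-d)$ is exactly the paper's add-and-subtract of $p_z(\vz)q_{x|z}(\vx|\vz)$, after which both arguments integrate, use normalization of the conditional, and apply the triangle inequality identically, and the final marginal inequality is obtained by the same marginalize-then-triangle-inequality step. No gaps.
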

\begin{proof}
    According to the definition of the total variation distance, we have
    \begin{equation*}
        \begin{aligned}
            \TVD{p_{x,z}}{q_{x,z}} = & \frac{1}{2}\int \int \left|p_{x,z}(\vx,\vz) - q_{x,z}(\vx,\vz)\right| \der\vz\der\vx\\
            = &  \frac{1}{2}\int\int \left|p_z(\vz)p_{x|z}(\vx|\vz) - p_z(\vz)q_{x|z}(\vx|\vz)+p_z(\vz)q_{x|z}(\vx|\vz) - q_z(\vz)q_{x|z}(\vx|\vz)\right| \der\vz\der\vx\\
            \le & \frac{1}{2}\int p_z(\vz) \int \left|p_{x|z}(\vx|\vz) - q_{x|z}(\vx|\vz)\right| \der \vx \der\vz + \frac{1}{2}\int \left|p_z(\vz) - q_z(\vz)\right|\int q_{x|z}(\vx|\vz) \der\vx\der \vz\\
            = & \E_{\rvz\sim p_z}\left[\TVD{p_{x|z}(\cdot|\rvz)}{q_{x|z}(\cdot|\rvz)}\right] + \TVD{p_z}{q_z}.
        \end{aligned}
    \end{equation*}
    With a similar technique, we have
    \begin{equation*}
        \TVD{p_{x,z}}{q_{x,z}}\le \TVD{p_x}{q_x}+\E_{\rvx \sim p_x}\left[\TVD{p_{z|x}(\cdot|\rvx)}{q_{z|x}(\cdot|\rvx)}\right].
    \end{equation*}
    Hence, the first inequality of this Lemma is proved.
    Then, for the second inequality, we have
    \begin{equation*}
        \begin{aligned}
            \TVD{p_x}{q_x} = &\frac{1}{2}\int \left|p_x(\vx) - q_x(\vx)\right| \der \vx \\
            = & \frac{1}{2}\int \left|\int p_{x,z}(\vx,\vz)\der \vz - \int q_{x,z}(\vx,\vz) \der \vz\right| \der\vx\\
            \le & \frac{1}{2}\int \int \left|p_{x,z}(\vx,\vz) - q_{x,z}(\vx,\vz)\right| \der\vz\der\vx\ = \TVD{p_{x,z}}{q_{x,z}}.
        \end{aligned}
    \end{equation*}
    Hence, the proof is completed.
\end{proof}

\begin{lemma}
    \label{lem:num_diff_balance_tv}
    For Alg~\ref{alg:rtk}, we have
    \begin{align*}
        \TVD{\hat{p}_{K\eta}}{p_*}\le & \sqrt{(1+L^2)d + \left\|\grad f_*(\vzero)\right\|^2} \cdot \exp(-K\eta) \\
            & + \sum_{k=0}^{K-1} \E_{\hat{\rvx}\sim \hat{p}_{k\eta}}\left[\TVD{\hat{p}_{(k+1)\eta|k\eta}(\cdot|\hat{\rvx})}{\bkwp_{(k+1)\eta|k\eta}(\cdot|\hat{\rvx})}\right]
    \end{align*}
    for any $K\in \mathbb{N}_+$ and $\eta\in\R_+$.
\end{lemma}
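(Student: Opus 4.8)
The plan is to compare the chain produced by Alg.~\ref{alg:rtk}, namely $\{\hat{\rvx}_{k\eta}\}_{k=0}^{K}$, with the \emph{idealized} Markov chain $\{\rbkwx_{k\eta}\}_{k=0}^{K}$ obtained by sampling the reverse SDE~\eqref{def:rev_ou_sde} exactly on the grid: $\rbkwx_0\sim \bkwp_0 = p_T$ and $\rbkwx_{(k+1)\eta}\mid\rbkwx_{k\eta}\sim \bkwp_{(k+1)\eta|k\eta}(\cdot|\rbkwx_{k\eta})$. Since this is just the reverse OU process restricted to the grid, its terminal marginal is $\bkwp_{K\eta} = p_{T-K\eta} = p_0 = p_*$ (using $T=K\eta$), so the quantity to control is $\TVD{\hat{p}_{K\eta}}{\bkwp_{K\eta}}$. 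Both chains are Markov: $\{\hat{\rvx}_{k\eta}\}$ by construction of Alg.~\ref{alg:rtk}, and $\{\rbkwx_{k\eta}\}$ because the reverse SDE obtained from Doob's $h$-transform is a Markov process; I would record these two facts and the identity $\bkwp_{K\eta}=p_*$ at the outset.

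The core of the argument is a telescoping application of the chain rule of total variation (Lemma~\ref{lem:tv_chain_rule}). First, by the marginalization half of that lemma, $\TVD{\hat{p}_{K\eta}}{\bkwp_{K\eta}}$ is bounded by the TV distance between the two trajectory laws on $(\R^d)^{K+1}$. Then I would peel off coordinates one at a time: applying the chain-rule half with $z$ the trajectory up to time $(K-1)\eta$ and $x = \rvx_{K\eta}$, the one-step conditional of each chain depends only on its current state, so the cross term equals (by the tower property, the integrand depending on $\rvx_{(K-1)\eta}$ alone) $\E_{\hat{\rvx}\sim\hat{p}_{(K-1)\eta}}[\TVD{\hat{p}_{K\eta|(K-1)\eta}(\cdot|\hat{\rvx})}{\bkwp_{K\eta|(K-1)\eta}(\cdot|\hat{\rvx})}]$, and the residual trajectory term has one fewer coordinate. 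Iterating down to the base case $\TVD{\hat{p}_0}{\bkwp_0}$ gives
\begin{equation*}
    \TVD{\hat{p}_{K\eta}}{p_*} \le \TVD{\hat{p}_0}{p_T} + \sum_{k=0}^{K-1}\E_{\hat{\rvx}\sim\hat{p}_{k\eta}}\Big[\TVD{\hat{p}_{(k+1)\eta|k\eta}(\cdot|\hat{\rvx})}{\bkwp_{(k+1)\eta|k\eta}(\cdot|\hat{\rvx})}\Big],
\end{equation*}
which is exactly the per-step sum in the statement.

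It then remains to bound the initialization term $\TVD{\hat{p}_0}{p_T}=\TVD{\mathcal{N}(\vzero,\mI)}{p_T}$ by $\sqrt{(1+L^2)d+\|\grad f_*(\vzero)\|^2}\cdot e^{-K\eta}$. This is the standard exponential ergodicity of the OU process toward its Gaussian stationary law: via Pinsker's inequality one passes to $\sqrt{\tfrac12\KL{p_T}{\mathcal{N}(\vzero,\mI)}}$, and the KL-divergence along the OU semigroup contracts like $e^{-2t}$ because the standard Gaussian obeys a log-Sobolev inequality with constant $1$; the surviving prefactor is the initial relative Fisher information $I(p_*\,\|\,\mathcal{N}(\vzero,\mI)) = \E_{p_*}\|\rvx_0 - \grad f_*(\rvx_0)\|^2$, which Assumptions~\ref{ass:lips_score}--\ref{ass:second_moment} — the $L$-Lipschitzness of the score to bound the curvature contribution against $\grad f_*(\vzero)$, together with an integration-by-parts/second-moment estimate — reduce to $\mathcal{O}((1+L^2)d + \|\grad f_*(\vzero)\|^2)$. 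This is the same initialization estimate already invoked for the KL decomposition in Lemma~\ref{lem:num_diff_balance_kl} and in prior DDPM analyses, so I would reuse that bound rather than re-derive it. Combining the two displayed bounds finishes the proof.

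The main obstacle is not the telescoping — that is routine bookkeeping once the Markov property is in hand — but the initialization estimate: extracting the clean prefactor $(1+L^2)d + \|\grad f_*(\vzero)\|^2$ from the OU mixing bound under only Assumptions~\ref{ass:lips_score}--\ref{ass:second_moment}, which is where the precise constants and the appeal to the log-Sobolev inequality (and possibly to the exact statement in the cited DDPM literature) must be handled carefully.
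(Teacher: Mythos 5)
Your proposal is correct and follows essentially the same route as the paper: a telescoping application of the TV chain rule (Lemma~\ref{lem:tv_chain_rule}) over the one-step conditionals of the two Markov chains, reducing the problem to the initialization term $\TVD{\hat p_0}{p_{K\eta}}$, which is then handled exactly as in the paper via Pinsker's inequality, the exponential KL contraction of the OU semigroup under the standard Gaussian's log-Sobolev inequality, and the relative-Fisher-information bound on $\KL{p_*}{\mathcal{N}(\vzero,\mI)}$ already used in Lemma~\ref{lem:num_diff_balance_kl}. The only cosmetic difference is that you phrase the telescoping through the trajectory laws on $(\R^d)^{K+1}$ while the paper peels off one pair of coordinates at a time, but the recursion and the resulting bound are identical.
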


\begin{proof}
    For any $k\in\{0,1,\ldots, K-1\}$, let $\hat{p}_{(k+1)\eta, k\eta}$ and $\bkwp_{(k+1)\eta,k\eta}$ denote the joint distribution of $(\hat{\rvx}_{(k+1)\eta},\hat{\rvx}_{k\eta})$ and $(\rbkwx_{(k+1)\eta}, \rbkwx_{k\eta})$, which we write in term of the conditionals and marginals as
    \begin{equation*}
        \begin{aligned}
            &\hat{p}_{(k+1)\eta,k\eta}(\vx^\prime,\vx) = \hat{p}_{(k+1)\eta|k\eta}(\vx^\prime|\vx) \cdot \hat{p}_{k\eta}(\vx) =  \hat{p}_{k\eta|(k+1)\eta}(\vx|\vx^\prime)\cdot \hat{p}_{(k+1)\eta}(\vx^\prime)\\
            &\bkwp_{(k+1)\eta,k\eta}(\vx^\prime, \vx) = \bkwp_{(k+1)\eta|k\eta}(\vx^\prime | \vx) \cdot \bkwp_{k\eta}(\vx) = \bkwp_{k\eta|(k+1)\eta}(\vx|\vx^\prime)\cdot \bkwp_{(k+1)\eta}(\vx^\prime).
        \end{aligned}
    \end{equation*}
    Under this condition, we have
    \begin{equation*}
        \begin{aligned}
            \TVD{\hat{p}_{K\eta}}{p_*} = \TVD{\hat{p}_{K\eta}}{\bkwp_{K\eta}} & \le \TVD{\hat{p}_{K\eta, (K-1)\eta}}{\bkwp_{K\eta,(K-1)\eta}}\\
            & \le \TVD{\hat{p}_{(K-1)\eta}}{\bkwp_{(K-1)\eta}} + \E_{\hat{\rvx}\sim \hat{p}_{(K-1)\eta}}\left[\TVD{\hat{p}_{K\eta|(K-1)\eta}(\cdot|\hat{\rvx})}{\bkwp_{K\eta|(K-1)\eta}(\cdot|\hat{\rvx})}\right]
        \end{aligned}
    \end{equation*}
    where the inequalities follow from Lemma~\ref{lem:tv_chain_rule}.
    By using the inequality recursively, we have
    \begin{equation}
        \label{ineq:cumulative_error}
        \begin{aligned}
            \TVD{\hat{p}_{K\eta}}{p_*} \le & \TVD{\hat{p}_0}{\bkwp_{0}} + \sum_{k=0}^{K-1} \E_{\hat{\rvx}\sim \hat{p}_{k\eta}}\left[\TVD{\hat{p}_{(k+1)\eta|k\eta}(\cdot|\hat{\rvx})}{\bkwp_{(k+1)\eta|k\eta}(\cdot|\hat{\rvx})}\right]\\
            = & \underbrace{\TVD{p_{\infty}}{p_{K\eta}}}_{\text{Term\ 1}} + \sum_{k=0}^{K-1} \E_{\hat{\rvx}\sim \hat{p}_{k\eta}}\left[\TVD{\hat{p}_{(k+1)\eta|k\eta}(\cdot|\hat{\rvx})}{\bkwp_{(k+1)\eta|k\eta}(\cdot|\hat{\rvx})}\right]
        \end{aligned}
    \end{equation}
    where $p_\infty$ denotes the stationary distribution of the forward process.
    In this analysis, $p_\infty$ is the standard since the forward SDE.~\ref{sde:ou}, whose negative log density is $1$-strongly convex and also satisfies LSI with constant $1$ due to Lemma~\ref{lem:strongly_lsi}.
    \paragraph{For Term 1.} we have
    \begin{equation*}
        \begin{aligned}
            \TVD{p_\infty}{p_{K\eta}}\le & \sqrt{\frac{1}{2}\KL{p_{K\eta}}{p_\infty}}\le \sqrt{\frac{1}{2}\cdot \exp\left(-2K\eta\right)\cdot \KL{p_0}{p_\infty}}\\
            \le & \sqrt{(1+L^2)d + \left\|\grad f_*(\vzero)\right\|^2} \cdot \exp(-K\eta)
        \end{aligned}
    \end{equation*}
    where the first inequality follows from Pinsker's inequality, the second one follows from Lemma~\ref{thm4_vempala2019rapid}, and the last one follows from Lemma~\ref{lem:init_error_bound}.
    It should be noted that the smoothness of $p_0$ required in Lemma~\ref{lem:init_error_bound} is given by~\ref{ass:lips_score}.
    
    Plugging this inequality into Eq.~\ref{ineq:cumulative_error}, we have
    \begin{equation*}
        \begin{aligned}
            \TVD{\hat{p}_{K\eta}}{p_*}\le & \sqrt{(1+L^2)d + \left\|\grad f_*(\vzero)\right\|^2} \cdot \exp(-K\eta) \\
            & + \sum_{k=0}^{K-1} \E_{\hat{\rvx}\sim \hat{p}_{k\eta}}\left[\TVD{\hat{p}_{(k+1)\eta|k\eta}(\cdot|\hat{\rvx})}{\bkwp_{(k+1)\eta|k\eta}(\cdot|\hat{\rvx})}\right]
        \end{aligned} 
    \end{equation*}
    Hence, the proof is completed.
\end{proof}

\begin{corollary}
    \label{lem:tv_inner_conv}
    For Alg~\ref{alg:rtk}, if we set 
    \begin{equation*}
        \eta= \frac{1}{2}\cdot \log \frac{2L+1}{2L},\quad \quad K= 4L\cdot \log \frac{(1+L^2)d+\left\|\grad f_*(\vzero)\right\|^2}{\epsilon^2}
    \end{equation*}
    and
    \begin{equation*}
        \TVD{\hat{p}_{(k+1)\eta|k\eta}(\cdot|\hat{\vx})}{\bkwp_{(k+1)\eta|k\eta}(\cdot|\hat{\vx})}\le \frac{\epsilon}{K} = \frac{\epsilon}{4L}\cdot \left[\log \frac{(1+L^2)d+\left\|\grad f_*(\vzero)\right\|^2}{\epsilon^2}\right]^{-1},
    \end{equation*}
    we have the total variation distance between the underlying distribution of Alg~\ref{alg:rtk} output and the data distribution $p_*$ will satisfy $\TVD{\hat{p}_{K\eta}}{p_*}\le 2\epsilon$.    
\end{corollary}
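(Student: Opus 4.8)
The plan is to reduce the whole statement to the TV error decomposition already proved in Lemma~\ref{lem:num_diff_balance_tv}, and then bound its two terms separately using the prescribed choices of $\eta$ and $K$ together with the per-step accuracy hypothesis.

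First I would apply Lemma~\ref{lem:num_diff_balance_tv}, which bounds $\TVD{\hat{p}_{K\eta}}{p_*}$ by the sum of a ``burn-in'' term $\sqrt{(1+L^2)d+\|\grad f_*(\vzero)\|^2}\cdot\exp(-K\eta)$ and the accumulated inner-sampling error $\sum_{k=0}^{K-1}\E_{\hat{\rvx}\sim\hat{p}_{k\eta}}\big[\TVD{\hat{p}_{(k+1)\eta|k\eta}(\cdot|\hat{\rvx})}{\bkwp_{(k+1)\eta|k\eta}(\cdot|\hat{\rvx})}\big]$. For the second term, the hypothesis gives $\TVD{\hat{p}_{(k+1)\eta|k\eta}(\cdot|\hat{\vx})}{\bkwp_{(k+1)\eta|k\eta}(\cdot|\hat{\vx})}\le\epsilon/K$ for every conditioning point $\hat{\vx}$, so each of the $K$ expectations is at most $\epsilon/K$ and the whole sum is at most $\epsilon$.

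For the burn-in term it suffices to show $\exp(-K\eta)\le \epsilon\,/\,\sqrt{(1+L^2)d+\|\grad f_*(\vzero)\|^2}$, i.e.\ $K\eta\ge\tfrac12\log\big(((1+L^2)d+\|\grad f_*(\vzero)\|^2)/\epsilon^2\big)$. Substituting $K=4L\log(((1+L^2)d+\|\grad f_*(\vzero)\|^2)/\epsilon^2)$ and $\eta=\tfrac12\log\tfrac{2L+1}{2L}$ gives $K\eta=2L\log(1+\tfrac1{2L})\cdot\log(((1+L^2)d+\|\grad f_*(\vzero)\|^2)/\epsilon^2)$, so the claim reduces to the elementary scalar inequality $2L\log(1+\tfrac1{2L})\ge\tfrac12$. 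This follows from $\log(1+x)\ge x/2$ for $x\in(0,1]$ applied with $x=\tfrac1{2L}$, valid in the regime of interest $L\ge\tfrac12$ (indeed $L\ge 1$ in the standard diffusion setting, since the OU score is at least as smooth as its Gaussian stationary limit). Hence the burn-in term is at most $\epsilon$, and adding the two bounds yields $\TVD{\hat{p}_{K\eta}}{p_*}\le 2\epsilon$.

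This argument is essentially a direct substitution into Lemma~\ref{lem:num_diff_balance_tv}; the only point requiring a moment's care — the ``main obstacle'' in this otherwise routine corollary — is verifying $2L\log(1+\tfrac1{2L})\ge\tfrac12$ and pinning down the range of $L$ over which the stated choice of $K$ is large enough to kill the burn-in term.
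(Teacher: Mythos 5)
Your proposal is correct and follows essentially the same route as the paper: apply Lemma~\ref{lem:num_diff_balance_tv}, bound the accumulated inner-sampling sum by $K\cdot(\epsilon/K)=\epsilon$, and verify that the burn-in term is at most $\epsilon$ via the same scalar inequality $4L\log(1+\tfrac{1}{2L})\ge 1$ (the paper likewise assumes $L\ge 1$ without loss of generality). The only difference is cosmetic: the paper's proof additionally records the strong log-concavity and smoothness of the RTK targets implied by this choice of $\eta$, which is not needed for the stated conclusion but is used downstream.
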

\begin{proof}
    According to Lemma~\ref{lem:num_diff_balance_tv}, we have
    \begin{equation*}
        \begin{aligned}
            \TVD{\hat{p}_{K\eta}}{p_*}\le & \sqrt{(1+L^2)d + \left\|\grad f_*(\vzero)\right\|^2} \cdot \exp(-K\eta) \\
            & \underbrace{+ \sum_{k=0}^{K-1} \E_{\hat{\rvx}\sim \hat{p}_{k\eta}}\left[\TVD{\hat{p}_{(k+1)\eta|k\eta}(\cdot|\hat{\rvx})}{\bkwp_{(k+1)\eta|k\eta}(\cdot|\hat{\rvx})}\right]}_{\mathrm{Term\ 2}}
        \end{aligned} 
    \end{equation*}
    for any $K\in \mathbb{N}_+$ and $\eta\in\R_+$.
    To achieve the upper bound $\TVD{p_\infty}{p_{K\eta}}\le \epsilon$, we only require
    \begin{equation}
        \label{ineq:mix_time}
        T= K\eta \ge \frac{1}{2}\log \frac{(1+L^2)d+\left\|\grad f_*(\vzero)\right\|^2}{\epsilon^2}.
    \end{equation}
    \paragraph{For Term 2.} For any $\vx\in\R^d$, the formulation of $\bkwp_{k+1|k}(\cdot|\hat{\vx})$ is 
    \begin{equation*}
        \bkwp_{(k+1)\eta|k\eta}(\vx|\hat{\vx}) = p_{(K-k-1)\eta | (K-1)\eta}(\vx|\hat{\vx}) \propto \exp\left(-f_{(K-k-1)\eta}(\vx)-\frac{\left\|\hat{\vx} - \vx\cdot 
 e^{-\eta}\right\|^2}{2(1-e^{-2\eta})}\right),
    \end{equation*}
    whose negative log Hessian satisfies
    \begin{equation*}
        -\grad^2_{\vx}\log \bkwp_{(k+1)\eta|k\eta}(\vx|\hat{\vx}) = \grad^2 f_{(K-k-1)\eta}(\vx) + \frac{e^{-2\eta}}{1-e^{-2\eta}}\cdot \mI \succeq \left(\frac{e^{-2\eta}}{1-e^{-2\eta}}  - L\right)\cdot \mI.
    \end{equation*}
    Note that the last inequality follows from~\ref{ass:lips_score}.
    In this condition, if we require 
    \begin{equation*}
        \left(\frac{e^{-2\eta}}{1-e^{-2\eta}}  - L\right) \ge L\quad \Leftrightarrow \quad \eta\le \frac{1}{2}\log \frac{2L+1}{2L},
    \end{equation*}
    then we have
    \begin{equation*}
        \frac{e^{-2\eta}}{2(1-e^{-2\eta})} \cdot \mI \preceq -\grad^2_{\vx}\log \bkwp_{(k+1)\eta|k\eta}(\vx|\hat{\vx}) \preceq \frac{3e^{-2\eta}}{2(1-e^{-2\eta})}\cdot \mI.
    \end{equation*}
    To simplify the following analysis, we choose $\eta$ to its upper bound, and we know for all $k\in\{0,1,\ldots, K-1\}$, the conditional density $\bkwp_{k+1|k}(\vx|\hat{\vx})$ is strongly-log concave, and its score is $3L$-Lipschitz.
    Besides, combining Eq.~\ref{ineq:mix_time} and the choice of $\eta$, we require
    \begin{equation*}
        K = T/\eta \ge \log \frac{(1+L^2)d+\left\|\grad f_*(\vzero)\right\|^2}{\epsilon^2} \Big/ \log \frac{2L+1}{2L}
    \end{equation*}
    which can be achieved by 
    \begin{equation*}
        K\coloneqq 4L\cdot \log \frac{(1+L^2)d+\left\|\grad f_*(\vzero)\right\|^2}{\epsilon^2}
    \end{equation*}
    when we suppose $L\ge 1$ without loss of generality.
    In this condition, if there is a uniform upper bound for all conditional probability approximation, i.e., 
    \begin{equation*}
        \TVD{\hat{p}_{(k+1)\eta|k\eta}(\cdot|\hat{\vx})}{\bkwp_{(k+1)\eta|k\eta}(\cdot|\hat{\vx})}\le \frac{\epsilon}{K} = \frac{\epsilon}{4L}\cdot \left[\log \frac{(1+L^2)d+\left\|\grad f_*(\vzero)\right\|^2}{\epsilon^2}\right]^{-1},
    \end{equation*}
    then we can find Term 2 in Eq.~\ref{ineq:cumulative_error} will be upper bounded by $\epsilon$.
    Hence, the proof is completed.
\end{proof}

\begin{lemma}[Chain rule of KL]
    \label{lem:kl_chain_rule}
    Consider four random variables, $\rvx, \rvz, \tilde{\rvx}, \tilde{\rvz}$, whose underlying distributions are denoted as $p_x, p_z, q_x, q_z$.
    Suppose $p_{x,z}$ and $q_{x,z}$ denotes the densities of joint distributions of $(\rvx,\rvz)$ and $(\tilde{\rvx},\tilde{\rvz})$, which we write in terms of the conditionals and marginals as
    \begin{equation*}
        \begin{aligned}
        &p_{x,z}(\vx,\vz) = p_{x|z}(\vx|\vz)\cdot p_z(\vz)=p_{z|x}(\vz|\vx)\cdot p_{x}(\vx)\\
        &q_{x,z}(\vx,\vz)=q_{x|z}(\vx|\vz)\cdot q_z(\vz) = q_{z|x}(\vz|\vx)\cdot q_x(\vx).
        \end{aligned}
    \end{equation*}
    then we have
    \begin{equation*}
        \begin{aligned}
            \KL{p_{x,z}}{q_{x,z}} = & \KL{p_z}{q_z} + \E_{\rvz\sim p_z}\left[\KL{p_{x|z}(\cdot|\rvz)}{q_{x|z}(\cdot|\rvz)}\right]\\
            = & \KL{p_x}{q_x}+\E_{\rvx \sim p_x}\left[\KL{p_{z|x}(\cdot|\rvx)}{q_{z|x}(\cdot|\rvx)}\right]
        \end{aligned}
    \end{equation*}
    where the latter equation implies
    \begin{equation*}
        \KL{p_x}{q_x}\le \KL{p_{x,z}}{q_{x,z}}.
    \end{equation*}
\end{lemma}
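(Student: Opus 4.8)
The plan is to prove both identities by directly expanding the definition of KL divergence and exploiting the two ways of factoring each joint density into a marginal and a conditional. Concretely, I would start from
\[
\KL{p_{x,z}}{q_{x,z}} = \int\int p_{x,z}(\vx,\vz)\,\log\frac{p_{x,z}(\vx,\vz)}{q_{x,z}(\vx,\vz)}\,\der\vx\,\der\vz
\]
and, inside the logarithm only, substitute $p_{x,z}(\vx,\vz) = p_{x|z}(\vx|\vz)\,p_z(\vz)$ and $q_{x,z}(\vx,\vz) = q_{x|z}(\vx|\vz)\,q_z(\vz)$, so that the log-ratio splits additively as $\log\tfrac{p_{x|z}(\vx|\vz)}{q_{x|z}(\vx|\vz)} + \log\tfrac{p_z(\vz)}{q_z(\vz)}$. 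Splitting the integral accordingly produces two summands, which I would evaluate one at a time.

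For the first summand I would again write $p_{x,z}(\vx,\vz) = p_{x|z}(\vx|\vz)\,p_z(\vz)$, integrate over $\vx$ first to recognize the inner integral as $\KL{p_{x|z}(\cdot|\vz)}{q_{x|z}(\cdot|\vz)}$, and then integrate the result against $p_z$ to obtain $\E_{\rvz\sim p_z}[\KL{p_{x|z}(\cdot|\rvz)}{q_{x|z}(\cdot|\rvz)}]$. For the second summand, since $\log\tfrac{p_z(\vz)}{q_z(\vz)}$ does not depend on $\vx$, integrating out $\vx$ uses only $\int p_{x|z}(\vx|\vz)\,\der\vx = 1$ and leaves $\int p_z(\vz)\log\tfrac{p_z(\vz)}{q_z(\vz)}\,\der\vz = \KL{p_z}{q_z}$. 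Adding the two summands gives the first identity. The second identity is the same computation with the roles of $\vx$ and $\vz$ exchanged, using instead $p_{x,z}(\vx,\vz) = p_{z|x}(\vz|\vx)\,p_x(\vx)$ and $q_{x,z}(\vx,\vz) = q_{z|x}(\vz|\vx)\,q_x(\vx)$. Finally, the inequality $\KL{p_x}{q_x}\le\KL{p_{x,z}}{q_{x,z}}$ is immediate from the second identity upon dropping the nonnegative term $\E_{\rvx\sim p_x}[\KL{p_{z|x}(\cdot|\rvx)}{q_{z|x}(\cdot|\rvx)}]$.

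There is no genuine obstacle here; the argument is bookkeeping of densities. The one point I would flag is the legitimacy of interchanging the order of integration and of splitting the integral into the two summands above: this is justified by Fubini's theorem whenever the divergences involved are finite (the only case of interest), and in general by controlling the negative part of the integrand via the elementary bound $t\log t\ge -1/e$. I would remark on this in one sentence and otherwise present the computation as sketched.
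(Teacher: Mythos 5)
Your proposal is correct and follows essentially the same route as the paper's proof: expand the KL integrand using the conditional--marginal factorizations, split the logarithm additively, and identify the two resulting summands as the marginal KL and the expected conditional KL (the paper does this for the $(p_x, p_{z|x})$ factorization and symmetrically for the other). Your added remark on Fubini and integrability is a minor refinement the paper omits but changes nothing substantive.
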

\begin{proof}
    According to the formulation of KL divergence, we have
    \begin{equation*}
        \begin{aligned}
            \KL{p_{x,z}}{q_{x,z}} = &\int p_{x,z}(\vx, \vz) \log \frac{p_{x,z}(\vx, \vz)}{q_{x,z}(\vx, \vz)}\der (\vx, \vz)\\
            = & \int p_{x,z}(\vx,\vz)\left(\log \frac{p_x(\vx)}{q_x(\vx)} + \log \frac{p_{z|x}(\vz|\vx)}{q_{z|x}(\vz|\vx)}\right) \der(\vx,\vz)\\
            = & \int p_{x,z}(\vx, \vz)\log \frac{p_x(\vx)}{q_x(\vx)}\der(\vx, \vz)+\int p_x(\vx) \int p_{z|x}(\vz|\vx)\log \frac{p_{z|x}(\vz|\vx)}{q_{z|x}(\vz|\vx)}\der \vz \der\vx\\
            = & \KL{p_x}{q_x}+\E_{\rvx \sim p_x}\left[\KL{p_{z|x}(\cdot|\rvx)}{q_{z|x}(\cdot|\rvx)}\right] \ge \KL{p_x}{q_x},
        \end{aligned}
    \end{equation*}
    where the last inequality follows from the fact
    \begin{equation*}
        \KL{p_{z|x}(\cdot|\vx)}{\tilde{p}_{z|x}(\cdot|\vx)}\ge 0\quad \forall\ \vx.
    \end{equation*}
    With a similar technique, it can be obtained that
    \begin{equation*}
        \begin{aligned}
            \KL{p_{x,z}}{q_{x,z}} = &\int p_{x,z}(\vx, \vz) \log \frac{p_{x,z}(\vx, \vz)}{q_{x,z}(\vx, \vz)}\der (\vx, \vz)\\
            = & \int p_{x,z}(\vx,\vz)\left(\log \frac{p_z(\vz)}{q_z(\vz)} + \log \frac{p_{x|z}(\vx|\vz)}{q_{x|z}(\vx|\vz)}\right) \der(\vx,\vz)\\
            = & \int p_{x,z}(\vx, \vz)\log \frac{p_z(\vz)}{q_z(\vz)}\der(\vx, \vz)+\int p_z(\vz) \int p_{x|z}(\vx|\vz)\log \frac{p_{x|z}(\vx|\vz)}{q_{x|z}(\vx|\vz)}\der \vz \der\vx\\
            = & \KL{p_z}{q_z}+\E_{\rvz \sim p_z}\left[\KL{p_{x|z}(\cdot|\rvz)}{\tilde{p}_{x|z}(\cdot|\rvz)}\right].
        \end{aligned}
    \end{equation*}
    Hence, the proof is completed.
\end{proof}

\begin{proof}[Proof of Lemma~\ref{lem:num_diff_balance_kl}]
    This Lemma uses nearly the same techniques as those in Lemma~\ref{lem:num_diff_balance_tv}, while it may have a better smoothness dependency in convergence since the chain rule of KL divergence.
    Hence, we will omit several steps overlapped in Lemma~\ref{lem:num_diff_balance_tv}.
    
    For any $k\in\{0,1,\ldots, K-1\}$, let $\hat{p}_{(k+1)\eta, k\eta}$ and $\bkwp_{(k+1)\eta,k\eta}$ denote the joint distribution of $(\hat{\rvx}_{(k+1)\eta},\hat{\rvx}_{k\eta})$ and $(\rbkwx_{(k+1)\eta}, \rbkwx_{k\eta})$, which we write in term of the conditionals and marginals as
    \begin{equation*}
        \begin{aligned}
            &\hat{p}_{(k+1)\eta,k\eta}(\vx^\prime,\vx) = \hat{p}_{(k+1)\eta|k\eta}(\vx^\prime|\vx) \cdot \hat{p}_{k\eta}(\vx) =  \hat{p}_{k\eta|(k+1)\eta}(\vx|\vx^\prime)\cdot \hat{p}_{(k+1)\eta}(\vx^\prime)\\
            &\bkwp_{(k+1)\eta,k\eta}(\vx^\prime, \vx) = \bkwp_{(k+1)\eta|k\eta}(\vx^\prime | \vx) \cdot \bkwp_{k\eta}(\vx) = \bkwp_{k\eta|(k+1)\eta}(\vx|\vx^\prime)\cdot \bkwp_{(k+1)\eta}(\vx^\prime).
        \end{aligned}
    \end{equation*}
    Under this condition, we have
    \begin{equation*}
        \begin{aligned}
            \TVD{\hat{p}_{K\eta}}{p_*} = & \TVD{\hat{p}_{K\eta}}{\bkwp_{K\eta}}  \le \sqrt{\frac{1}{2}\KL{\hat{p}_{K\eta}}{\bkwp_{K\eta}}} \le \sqrt{\frac{1}{2}\KL{\hat{p}_{K\eta, (K-1)\eta}}{\bkwp_{K\eta,(K-1)\eta}}}\\
            & \le \sqrt{\frac{1}{2}\KL{\hat{p}_{(K-1)\eta}}{\bkwp_{(K-1)\eta}} + \frac{1}{2}\E_{\hat{\rvx}\sim \hat{p}_{(K-1)\eta}}\left[\KL{\hat{p}_{K\eta|(K-1)\eta}(\cdot|\hat{\rvx})}{\bkwp_{K\eta|(K-1)\eta}(\cdot|\hat{\rvx})}\right]}
        \end{aligned}
    \end{equation*}
    where the first inequality follows from Pinsker's inequality, the second and the third inequalities follow from Lemma~\ref{lem:kl_chain_rule}.
    By using the inequality recursively, we have
    \begin{equation}
        \label{ineq:cumulative_error_kl}
        \begin{aligned}
            & \TVD{\hat{p}_{K\eta}}{p_*} \le  \sqrt{\frac{1}{2}\KL{\hat{p}_0}{\bkwp_{0}} + \frac{1}{2}\sum_{k=0}^{K-1} \E_{\hat{\rvx}\sim \hat{p}_{k\eta}}\left[\KL{\hat{p}_{(k+1)\eta|k\eta}(\cdot|\hat{\rvx})}{\bkwp_{(k+1)\eta|k\eta}(\cdot|\hat{\rvx})}\right]}\\
            & = \sqrt{\frac{1}{2}\KL{p_{\infty}}{p_{K\eta}}} + \sqrt{\frac{1}{2}\sum_{k=0}^{K-1} \E_{\hat{\rvx}\sim \hat{p}_{k\eta}}\left[\KL{\hat{p}_{(k+1)\eta|k\eta}(\cdot|\hat{\rvx})}{\bkwp_{(k+1)\eta|k\eta}(\cdot|\hat{\rvx})}\right]}\\
            & \le \sqrt{(1+L^2)d + \left\|\grad f_*(\vzero)\right\|^2} \cdot \exp(-K\eta) + \sqrt{\frac{1}{2}\sum_{k=0}^{K-1} \E_{\hat{\rvx}\sim \hat{p}_{k\eta}}\left[\KL{\hat{p}_{(k+1)\eta|k\eta}(\cdot|\hat{\rvx})}{\bkwp_{(k+1)\eta|k\eta}(\cdot|\hat{\rvx})}\right]}
        \end{aligned}
    \end{equation}
    where the last inequality follows from Lemma~\ref{lem:init_error_bound}.
    Hence, the proof is completed.
\end{proof}

\begin{corollary}
\label{lem:kl_inner_conv}
    For Alg~\ref{alg:rtk}, if we set 
    \begin{equation*}
        \eta= \frac{1}{2}\cdot \log \frac{2L+1}{2L},\quad \quad K= 4L\cdot \log \frac{(1+L^2)d+\left\|\grad f_*(\vzero)\right\|^2}{\epsilon^2}
    \end{equation*}
    and
    \begin{equation*}
        \KL{\hat{p}_{(k+1)\eta|k\eta}(\cdot|\hat{\vx})}{p_{(K-k-1)\eta|(K-k)\eta}(\cdot|\hat{\vx})} \le \frac{\epsilon^2}{4L}\cdot \left[\log \frac{(1+L^2)d+\left\|\grad f_*(\vzero)\right\|^2}{\epsilon^2}\right]^{-1},
    \end{equation*}
    we have the total variation distance between the underlying distribution of Alg~\ref{alg:rtk} output and the data distribution $p_*$ will satisfy $\TVD{\hat{p}_{K\eta}}{p_*}\le 2\epsilon$.   
\end{corollary}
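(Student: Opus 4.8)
The plan is to follow the template of the proof of Corollary~\ref{lem:tv_inner_conv}, but starting from the sharper KL-based error decomposition rather than the TV one. Concretely, I would invoke Lemma~\ref{lem:num_diff_balance_kl}, which already yields
\begin{equation*}
    \TVD{\hat{p}_{K\eta}}{p_*}\le \sqrt{(1+L^2)d + \left\|\grad f_*(\vzero)\right\|^2}\cdot\exp(-K\eta) + \sqrt{\tfrac12\sum_{k=0}^{K-1}\E_{\hat{\rvx}\sim\hat{p}_{k\eta}}\left[\KL{\hat{p}_{(k+1)\eta|k\eta}(\cdot|\hat{\rvx})}{\bkwp_{(k+1)\eta|k\eta}(\cdot|\hat{\rvx})}\right]},
\end{equation*}
so the whole task reduces to bounding the two terms on the right by $\epsilon$ each. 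Before that, I would record that by $\bkwp_t=p_{T-t}$ together with Lemma~\ref{lem:rev_trans_ker_form}, the reverse transition kernel appearing in the sum is exactly $\bkwp_{(k+1)\eta|k\eta}(\cdot|\hat{\vx})=p_{(K-k-1)\eta|(K-k)\eta}(\cdot|\hat{\vx})$, which is precisely the kernel controlled by the corollary's hypothesis; since that hypothesis holds uniformly in $\hat{\vx}$, taking the expectation over $\hat{\rvx}\sim\hat{p}_{k\eta}$ keeps the same per-step bound.

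For the burn-in term I would argue exactly as in Corollary~\ref{lem:tv_inner_conv}: the requirement $\sqrt{(1+L^2)d+\|\grad f_*(\vzero)\|^2}\cdot\exp(-K\eta)\le\epsilon$ is equivalent to $K\eta\ge\tfrac12\log\frac{(1+L^2)d+\|\grad f_*(\vzero)\|^2}{\epsilon^2}$. With $\eta=\tfrac12\log\frac{2L+1}{2L}=\tfrac12\log\bigl(1+\tfrac1{2L}\bigr)$ and the elementary estimate $\log(1+x)\ge x/2$ for $x\in[0,1]$ (applicable since WLOG $L\ge1$, so $\tfrac1{2L}\le1$), one gets $\eta\ge\tfrac1{8L}$, hence $K\eta\ge\tfrac12\log\frac{(1+L^2)d+\|\grad f_*(\vzero)\|^2}{\epsilon^2}$ for the prescribed $K=4L\log\frac{(1+L^2)d+\|\grad f_*(\vzero)\|^2}{\epsilon^2}$. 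Thus the first term is at most $\epsilon$.

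For the summation term I would just match counts: the hypothesis bounds each of the $K$ summands by $\tfrac{\epsilon^2}{4L}\bigl[\log\frac{(1+L^2)d+\|\grad f_*(\vzero)\|^2}{\epsilon^2}\bigr]^{-1}=\epsilon^2/K$, so $\sum_{k=0}^{K-1}\E_{\hat{\rvx}\sim\hat{p}_{k\eta}}[\,\cdot\,]\le\epsilon^2$ and the second term is at most $\sqrt{\epsilon^2/2}\le\epsilon$. Adding the two estimates gives $\TVD{\hat{p}_{K\eta}}{p_*}\le2\epsilon$, which is the claim.

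I do not anticipate a genuine obstacle: once Lemma~\ref{lem:num_diff_balance_kl} is available, the corollary is a bookkeeping exercise trading the number of subproblems ($K=\tilde{\mathcal{O}}(L)$) against a per-subproblem KL budget of $\epsilon^2/K$. The only mildly delicate step is the logarithm estimate $\log(1+\tfrac1{2L})\ge\tfrac1{4L}$ used to certify that the chosen $K$ is large enough to make the burn-in term small; this leans on the harmless normalization $L\ge1$ already invoked in Corollary~\ref{lem:tv_inner_conv}. For completeness one might also recall — though it is not needed for this statement, which only asserts the reduction — that the same choice $\eta=\tfrac12\log\frac{2L+1}{2L}$ forces every RTK target to be $L$-strongly log-concave and $3L$-smooth, which is exactly what makes the assumed per-step KL bound attainable by the inner MALA/ULD samplers.
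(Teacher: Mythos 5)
Your proposal is correct and follows essentially the same route as the paper: it invokes Lemma~\ref{lem:num_diff_balance_kl}, bounds the burn-in term by $\epsilon$ via $K\eta\ge\tfrac12\log\frac{(1+L^2)d+\|\grad f_*(\vzero)\|^2}{\epsilon^2}$, and bounds the KL sum by $\epsilon^2$ using the per-step budget $\epsilon^2/K$. Your explicit use of $\log(1+x)\ge x/2$ to certify that $K=4L\log(\cdot)$ suffices is a cleaner justification of a step the paper handles by assertion (with the same WLOG $L\ge 1$), so no gap.
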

\begin{proof}
    According to Lemma~\ref{lem:num_diff_balance_kl}, we have
    \begin{equation}
        \begin{aligned}
            \TVD{\hat{p}_{K\eta}}{p_*} \le  & \underbrace{\sqrt{(1+L^2)d + \left\|\grad f_*(\vzero)\right\|^2} \cdot \exp(-K\eta)}_{\text{Term\ 1}}\\
            &  + \underbrace{\sqrt{\frac{1}{2}\sum_{k=0}^{K-1} \E_{\hat{\rvx}\sim \hat{p}_{k\eta}}\left[\KL{\hat{p}_{(k+1)\eta|k\eta}(\cdot|\hat{\rvx})}{\bkwp_{(k+1)\eta|k\eta}(\cdot|\hat{\rvx})}\right]}}_{\text{Term\ 2}}
        \end{aligned}
    \end{equation}
    To achieve the upper bound $\text{Term 1}\le \epsilon$, we only require
    \begin{equation}
        \label{ineq:mix_time_kl}
        T= K\eta \ge \frac{1}{2}\log \frac{(1+L^2)d+\left\|\grad f_*(\vzero)\right\|^2}{\epsilon^2}.
    \end{equation}
    For $\text{Term 2}$, by choosing 
    \begin{equation*}
        \eta= \frac{1}{2}\log \frac{2L+1}{2L},
    \end{equation*}
    we know for all $k\in\{0,1,\ldots, K-1\}$, the conditional density $\bkwp_{k+1|k}(\vx|\hat{\vx})$ is strongly-log concave, and its score is $3L$-Lipschitz.
    In this condition, we require 
    \begin{equation*}
        K\coloneqq 4L\cdot \log \frac{(1+L^2)d+\left\|\grad f_*(\vzero)\right\|^2}{\epsilon^2}
    \end{equation*}
    when we suppose $L\ge 1$ without loss of generality.
    Then, to achieve $\text{Term 2}\le \epsilon$, the sufficient condition is to require a uniform upper bound for all conditional probability approximation, i.e., 
    \begin{equation*}
        \KL{\hat{p}_{(k+1)\eta|k\eta}(\cdot|\hat{\vx})}{\bkwp_{k+1|k}(\cdot|\hat{\vx})}\le \frac{\epsilon^2}{K} = \frac{\epsilon^2}{4L}\cdot \left[\log \frac{(1+L^2)d+\left\|\grad f_*(\vzero)\right\|^2}{\epsilon^2}\right]^{-1}.
    \end{equation*}
    Hence, the proof is completed.
\end{proof}
\begin{remark}
    To achieve the TV error tolerance shown in Corollary~\ref{lem:tv_inner_conv}, .i.e.,
    \begin{equation*}
        \TVD{\hat{p}_{(k+1)\eta|k\eta}(\cdot|\hat{\vx})}{\bkwp_{(k+1)\eta|k\eta}(\cdot|\hat{\vx})}\le \frac{\epsilon}{4L}\cdot \left[\log \frac{(1+L^2)d+\left\|\grad f_*(\vzero)\right\|^2}{\epsilon^2}\right]^{-1},
    \end{equation*}
    it requires the KL divergence error to satisfy
    \begin{equation*}
        \begin{aligned}
            \TVD{\hat{p}_{(k+1)\eta|k\eta}(\cdot|\hat{\vx})}{\bkwp_{(k+1)\eta|k\eta}(\cdot|\hat{\vx})}&\le \sqrt{\frac{1}{2}\KL{\hat{p}_{(k+1)\eta|k\eta}(\cdot|\hat{\vx})}{\bkwp_{(k+1)\eta|k\eta}(\cdot|\hat{\vx})}}\\
            & \le \frac{\epsilon^2}{16L^2}\cdot \left[\log \frac{(1+L^2)d+\left\|\grad f_*(\vzero)\right\|^2}{\epsilon^2}\right]^{-2}.
        \end{aligned}
    \end{equation*}
    Compared with the results shown in Corollary~\ref{lem:kl_inner_conv}, this result requires a higher accuracy with an $\mathcal{O}(L)$ factor, which is not acceptable sometimes.
\end{remark}

\begin{lemma}
    \label{lem:second_moment_bound}
    Suppose Assumption~\ref{ass:lips_score}-\ref{ass:second_moment} hold, the choice of $\eta$ keeps the same as that in Corollary~\ref{lem:kl_inner_conv}, and the second moment of the underlying distribution of $\hat{\rvx}_{k\eta}$ is $M_k$, then we have 
    \begin{equation*}
        M_{k+1} \le \frac{2\delta_k}{L} + 16(d+m_2^2) + 24M_k.
    \end{equation*}
\end{lemma}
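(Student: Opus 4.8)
The plan is to turn the one-step update into a bound conditional on the previous particle and then average. Since $\hat{\rvx}_{(k+1)\eta}$ is drawn from $\hat{p}_{(k+1)\eta|k\eta}(\cdot\,|\,\hat{\rvx}_{k\eta})$, we have $M_{k+1}=\E_{\hat{\rvx}_{k\eta}}\big[\,\E_{\rvz\sim \hat{p}_{(k+1)\eta|k\eta}(\cdot|\hat{\rvx}_{k\eta})}\|\rvz\|^2\,\big]$, so it suffices to show, for every fixed conditioning point $\vx$, a bound of the shape $\E_{\rvz\sim \hat{p}_{(k+1)\eta|k\eta}(\cdot|\vx)}\|\rvz\|^2\le a\|\vx\|^2+b(d+m_2^2)+\tfrac{2}{L}\delta_k(\vx)$ with absolute constants $a\le 24$, $b\le 16$, where $\delta_k(\vx)$ is the per-step error at $\vx$ and $\delta_k=\E_{\hat{\rvx}_{k\eta}}[\delta_k(\hat{\rvx}_{k\eta})]$. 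The route has two stages: (i) bound the second moment of the \emph{exact} RTK $p^\gets_{(k+1)\eta|k\eta}(\cdot|\vx)$ using its strong log-concavity; (ii) transfer that bound to the algorithmic kernel $\hat{p}_{(k+1)\eta|k\eta}(\cdot|\vx)$ paying the per-step error.

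\textbf{Stage (i).} By Lemma~\ref{lem:rev_trans_ker_form}, $p^\gets_{(k+1)\eta|k\eta}(\cdot|\vx)\propto\exp(-g_\vx)$ with $g_\vx(\vz)=f_{t}(\vz)+\tfrac{\|\vx-e^{-\eta}\vz\|^2}{2(1-e^{-2\eta})}$ and $t=(K-k-1)\eta$; as in the proof of Corollary~\ref{lem:kl_inner_conv}, the choice $\eta=\tfrac12\log\tfrac{2L+1}{2L}$ makes $g_\vx$ $L$-strongly convex with $\tfrac{e^{-2\eta}}{1-e^{-2\eta}}=2L$. Let $\vz^*_\vx=\argmin g_\vx$, so $\grad f_t(\vz^*_\vx)=\tfrac{e^{-\eta}\vx-e^{-2\eta}\vz^*_\vx}{1-e^{-2\eta}}$. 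The one nontrivial estimate is $\|\grad f_t(\vzero)\|$: since $\E_{p_t}[\grad f_t]=\vzero$ and $\grad f_t$ is $L$-Lipschitz (Assumption~\ref{ass:lips_score}), $\|\grad f_t(\vzero)\|=\big\|\E_{p_t}[\grad f_t(\vzero)-\grad f_t(\rvx_t)]\big\|\le L\,\E_{p_t}\|\rvx_t\|\le L\sqrt{\E_{p_t}\|\rvx_t\|^2}= L\sqrt{e^{-2t}m_2^2+(1-e^{-2t})d}\le L\sqrt{m_2^2+d}$, where the OU marginal second moment follows from Assumption~\ref{ass:second_moment} and the representation $\rvx_t=e^{-t}\rvx_0+\sqrt{1-e^{-2t}}\,\xi$. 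Plugging $\|\grad f_t(\vz^*_\vx)\|\le \|\grad f_t(\vzero)\|+L\|\vz^*_\vx\|$ into the stationarity identity and using $\tfrac{e^{-\eta}}{1-e^{-2\eta}}\le 2L+1\le 3L$ for $L\ge1$, one solves for $\|\vz^*_\vx\|$ and gets $\|\vz^*_\vx\|\le c_1\|\vx\|+\sqrt{m_2^2+d}$ with $c_1$ a small absolute constant. Next, the identity $d=\E_{q_\vx}\langle\rvz-\vz^*_\vx,\grad g_\vx(\rvz)\rangle\ge L\,\E_{q_\vx}\|\rvz-\vz^*_\vx\|^2$ gives $\E_{q_\vx}\|\rvz-\vz^*_\vx\|^2\le d/L\le d$, so $\E_{q_\vx}\|\rvz\|^2\le(1+\alpha)\|\vz^*_\vx\|^2+(1+\alpha^{-1})d\le a'\|\vx\|^2+b'(d+m_2^2)$ for absolute $a',b'$ (the Young parameter $\alpha$, and the analogous split inside $\|\vz^*_\vx\|^2$, chosen so $a'\le 12$).

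\textbf{Stage (ii).} As $q_\vx=p^\gets_{(k+1)\eta|k\eta}(\cdot|\vx)$ is $L$-strongly log-concave, Talagrand's transport inequality yields $W_2^2\big(\hat{p}_{(k+1)\eta|k\eta}(\cdot|\vx),q_\vx\big)\le \tfrac{2}{L}\KL{\hat{p}_{(k+1)\eta|k\eta}(\cdot|\vx)}{q_\vx}$, hence $\E_{\hat{p}_{(k+1)\eta|k\eta}(\cdot|\vx)}\|\rvz\|^2\le\big(\sqrt{\E_{q_\vx}\|\rvz\|^2}+W_2\big)^2\le 2\E_{q_\vx}\|\rvz\|^2+\tfrac{2}{L}\delta_k(\vx)$, where $\delta_k(\vx)$ is the per-step error in the normalization used by the paper (the same quantity that appears in the error decomposition of Lemma~\ref{lem:num_diff_balance_kl}, up to the bookkeeping of whether one keeps $\KL$ or a constant multiple of it; the $\tfrac{2}{L}$ prefactor is precisely the $T_2$-constant of the $L$-strongly log-concave RTK target). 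Combining with Stage (i), $\E_{\hat{p}_{(k+1)\eta|k\eta}(\cdot|\vx)}\|\rvz\|^2\le 2a'\|\vx\|^2+2b'(d+m_2^2)+\tfrac{2}{L}\delta_k(\vx)$; setting $\vx=\hat{\rvx}_{k\eta}$ and taking $\E_{\hat{p}_{k\eta}}$ gives $M_{k+1}\le 2a'M_k+2b'(d+m_2^2)+\tfrac{2}{L}\delta_k$, which is the claimed $M_{k+1}\le 24M_k+16(d+m_2^2)+\tfrac{2\delta_k}{L}$ with $a'\le 12$, $b'\le 8$.

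The main obstacle is Stage (i): we must control $\grad f_t$ along the whole trajectory with \emph{no} log-concavity, large-scale smoothness, or boundedness assumption on $p_*$ beyond Assumptions~\ref{ass:lips_score}--\ref{ass:second_moment}. The key realizations are that $\grad f_t$ has zero mean under $p_t$ and that the OU marginal has second moment $e^{-2t}m_2^2+(1-e^{-2t})d$, which together with $L$-Lipschitzness pin down $\|\grad f_t(\vzero)\|$ and hence $\|\vz^*_\vx\|$. Everything else is strong-convexity bookkeeping, and hitting exactly the constants $24/16/2$ is only a matter of choosing the Young-type split parameters and fixing the normalization of $\delta_k$.
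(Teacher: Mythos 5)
Your proof is correct in substance, and the two stages split exactly where the paper's proof splits; Stage (ii) is the same step the paper takes (Talagrand's $T_2$ inequality for the $L$-strongly log-concave RTK target, giving $W_2^2 \le \tfrac{2}{L}\KL{\hat{p}_{(k+1)\eta|k\eta}(\cdot|\vx)}{p^\gets_{(k+1)\eta|k\eta}(\cdot|\vx)}$ and hence the $\tfrac{2\delta_k}{L}$ term). Stage (i), however, is a genuinely different route. The paper does \emph{not} locate the mode of $p^\gets_{(k+1)\eta|k\eta}(\cdot|\vx)$: instead it performs a second Talagrand-plus-LSI comparison, this time between $p^\gets_{(k+1)\eta|k\eta}(\cdot|\vx)$ and the forward marginal $p_{(K-k-1)\eta}$, and exploits the fact that the two log-densities differ only by the explicit quadratic term, so the relative Fisher information is $\E\bigl\|\tfrac{e^{-\eta}\vx - e^{-2\eta}\rvz}{1-e^{-2\eta}}\bigr\|^2 \lesssim L^2(\|\vx\|^2 + d + m_2^2)$, which after dividing by $L^2$ gives the $24\|\vx\|^2 + O(d+m_2^2)$ contribution directly, with the forward second moment supplied by Lemma~\ref{lem:lem10_chen2022sampling}. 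You instead bound the second moment of the strongly log-concave target intrinsically: the first-order condition at the minimizer $\vz^*_\vx$ combined with $\|\grad f_t(\vzero)\| \le L\sqrt{d+m_2^2}$ (your zero-mean-score argument is a clean alternative to the paper's Eq.~\ref{ineq:gradf0_ub}, which gets the same bound from Lemma~\ref{lem:lem11_vempala2019rapid}) pins down $\|\vz^*_\vx\|$, and the Poincar\'e/integration-by-parts identity gives variance $\le d/L$ around it. Your route is more elementary and self-contained; the paper's avoids any mode computation at the cost of a second transport inequality. Two bookkeeping caveats, neither fatal: the elementary inequality behind $\E_{\hat p}\|\rvz\|^2 \le 2\E_q\|\rvz\|^2 + 2W_2^2$ actually yields $\tfrac{4\delta_k}{L}$ rather than $\tfrac{2\delta_k}{L}$ (the paper's own coupling step has the identical factor-of-two slip), and with $\|\vz^*_\vx\| \le 3\|\vx\| + \sqrt{d+m_2^2}$ you cannot simultaneously force the Young splits to give $a'\le 12$ and $b'\le 8$ — keeping the $\|\vx\|^2$ coefficient at $12$ pushes the dimension coefficient above $8$. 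Since the paper's own assembly also lands at roughly $20(d+m_2^2)$ rather than $16$, and the lemma is only used downstream through $M_{k+1} \le O(1)M_k + O(d+m_2^2) + O(\delta_k/L)$, this is cosmetic rather than a gap.
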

\begin{proof}
    Considering the second moment of $\hat{\rvx}_{(k+1)\eta}$, we have
    \begin{equation}
        \label{eq:2ndmoment_ite}
        \begin{aligned}
            \E_{\hat{p}_{(k+1)\eta}}\left[\left\|\hat{\rvx}_{(k+1)\eta}\right\|^2\right] = &\int \hat{p}_{(k+1)\eta}(\vx) \cdot \|\vx\|^2 \der\vx\\
            = & \int \left(\int \hat{p}_{k\eta}(\vy)\cdot \hat{p}_{(k+1)\eta|k\eta}(\vx|\vy) \der\vy\right)\cdot \|\vx\|^2 \der\vx\\
            = & \int \hat{p}_{k\eta}(\vy) \cdot \int \hat{p}_{(k+1)\eta|k\eta}(\vx|\vy)\cdot \|\vx\|^2 \der\vx \der\vy.
        \end{aligned}
    \end{equation}
    Then, we focus on the innermost integration, suppose $\hat{\gamma}_{\vy}(\cdot,\cdot)$ as the optimal coupling between $\hat{p}_{(k+1)\eta|k\eta}(\cdot|\vy)$ and ${p}^{\gets}_{(k+1)\eta|k\eta}(\cdot | \vy)$. 
    Then, we have
    \begin{equation}
        \label{ineq:pati_to_closed_2ndmoment}
        \begin{aligned}
            &\int \hat{p}_{(k+1)\eta|k\eta}(\vx|\vy)\left\|\vx\right\|^2 \der \vx - 2\int p^\gets_{(k+1)\eta|k\eta}(\vx|\vy)\left\|\vx\right\|^2 \der\vx \\
            &\le \int \hat{\gamma}_{\vy}(\hat{\vx}, \vx) \left(\left\|\hat{\vx}\right\|^2 - 2\left\|\vx\right\|^2 \right) \der (\hat{\vx},\vx) \le \int \hat{\gamma}_{\vy}(\hat{\vx}, \vx) \left\|\hat{\vx}-\vx\right\|^2 \der (\hat{\vx},\vx)\\
            & = W_2^2\left(\hat{p}_{(k+1)\eta|k\eta}, {p}^\gets_{(k+1)\eta|k\eta}\right).
        \end{aligned}
    \end{equation}
    Since ${p}^\gets_{(k+1)\eta|k\eta}$ is strongly log-concave, i.e.,
    \begin{equation*}
        -\grad^2_{\vx}\log \bkwp_{(k+1)\eta|k\eta}(\vx|\hat{\vx}) = \grad^2 f_{(K-k-1)\eta}(\vx) + \frac{e^{-2\eta}}{1-e^{-2\eta}}\cdot \mI \succeq L\mI,
    \end{equation*}
    the distribution ${p}^\gets_{(k+1)\eta|k\eta}$ also satisfies $1/L$ log-Sobolev inequality due to Lemma~\ref{lem:strongly_lsi}.
    By Talagrand's inequality, we have
    \begin{equation}
        \label{ineq:talagrand_closed_inner}
        W_2^2\left(\hat{p}_{(k+1)\eta|k\eta}, {p}^\gets_{(k+1)\eta|k\eta}\right) \le \frac{2}{L}\cdot  \KL{\hat{p}_{k+1|k+\frac{1}{2},b}}{{p}_{k+1|k+\frac{1}{2},b}}\coloneqq \frac{2\delta_k}{L}.
    \end{equation}
    Plugging Eq~\ref{ineq:pati_to_closed_2ndmoment} and Eq~\ref{ineq:talagrand_closed_inner} into Eq~\ref{eq:2ndmoment_ite}, we have
    \begin{equation}
        \label{eq:2ndmoment_ite_mid}
        \E\left[\left\|\hat{\rvx}_{(k+1)\eta}\right\|^2\right] \le \int \hat{p}_{k\eta}(\vy)\cdot \left(\frac{2\delta_k}{L} + 2\int p_{(k+1)\eta|k\eta}(\vx|\vy)\left\|\vx\right\|^2 \der\vx \right) \der \vy.
    \end{equation}
    To upper bound the innermost integration, we suppose the optimal coupling between $p_{(K-k-1)\eta}$ and $p^\gets_{(k+1)\eta|k\eta}(\cdot |\vy)$ is $\gamma_{\vy}(\cdot,\cdot)$. Then it has
    \begin{equation}
        \label{ineq:closed_to_target_2ndmoment}
        \begin{aligned}
            & \int p^{\gets}_{(k+1)\eta|k\eta}(\vx|\vy)\left\|\vx\right\|^2 \der\vx - 2\int p_{(K-k-1)\eta}(\vx)\left\|\vx\right\|^2 \der \vx\\
            & \le \int \gamma_{\vy}(\vx^\prime, \vx)\left(\left\|\vx^\prime\right\|^2 - 2\left\|\vx\right\|^2\right)\der (\vx^\prime, \vx) \le \int \gamma_{\vy}(\vx^\prime, \vx)\left\|\vx^\prime-\vx\right\|^2\der (\vx^\prime, \vx)\\
            & = W_2^2(p_{(K-k-1)\eta}, p^\gets_{(k+1)\eta|k\eta})
        \end{aligned}
    \end{equation}
    Since ${p}^\gets_{(k+1)\eta|k\eta}$ satisfies LSI with constant $1/L$. 
    By Talagrand's inequality and LSI, we have
    \begin{equation*}
        \begin{aligned}
            & W_2^2(p_{(K-k-1)\eta}, p^\gets_{(k+1)\eta|k\eta}) \le  \frac{2}{L}\cdot \KL{p_{(K-k-1)\eta}}{p_{(k+1)\eta|k\eta}} \\
            & \le \frac{4}{L^2}\cdot \int p_{(K-k-1)\eta}(\vx)\cdot \left\|\grad \log \frac{p_{(K-k-1)\eta}(\vx)}{p^\gets_{(k+1)\eta|k\eta}(\vx|\vy)}\right\|^2 \der \vx \\
            & =\frac{4}{L^2}\cdot  \int p_{(K-k-1)\eta}(\vx)\cdot \left\|\frac{e^{-\eta}\vy - e^{-2\eta}\vx}{1-e^{-2\eta}}\right\|^2 \der \vx\\
            & \le 12\left\|\vy\right\|^2 + 8\int p_{(K-k-1)\eta}(\vx)\|\vx\|^2 \der\vx\\
            & \le 12\|\vy\|^2 + 8(d+m_2^2).
        \end{aligned}
    \end{equation*}
    where the last inequality follows from the choice of $\eta = 1/2\cdot \log (2L+1)/2L$ and the fact $E_{p_{(K-k-1)\eta}}[\|\rvx\|^2]\le (d+m_2^2)$ obtained by Lemma~\ref{lem:lem10_chen2022sampling}.
    Plugging this results into Eq.~\ref{eq:2ndmoment_ite_mid}, we have
    \begin{equation*}
        \E\left[\left\|\hat{\rvx}_{(k+1)\eta}\right\|^2\right] \le \frac{2\delta_k}{L} + 16(d+m_2^2) + 24\cdot \E\left[\left\|\hat{\rvx}_{k\eta}\right\|^2\right].
    \end{equation*}
\end{proof}

\section{Implement RTK inference with MALA}
In this section, we consider introducing a MALA variant to sample from $\bkwp_{k+1|k}(\vz|\vx_0)$. 
To simplify the notation, we set 
\begin{equation}
    \label{def:energy_inner_mala}
    g(\vz) \coloneqq f_{(K-k-1)\eta}(\vz)+\frac{\left\|\vx_0 - \vz\cdot e^{-\eta}\right\|^2}{2(1-e^{-2\eta})}
\end{equation}
and consider $k$ and $\vx_0$ to be fixed.
Besides, we set 
\begin{equation*}
    \bkwp(\vz|\vx_0) \coloneqq \bkwp_{k+1|k}(\vz|\vx_0) \propto \exp(-g(\vz))
\end{equation*}
According to Corollary~\ref{lem:kl_inner_conv} and Corollary~\ref{lem:tv_inner_conv}, when we choose
\begin{equation*}
    \eta= \frac{1}{2}\log \frac{2L+1}{2L},
\end{equation*}
the log density $g$ will be $L$-strongly log-concave and $3L$-smooth.
With the following two approximations,
\begin{equation}
    \label{def:energy_score_estimation}
    \vs_{\vtheta}(\vz)\approx \grad g(\vz)\quad \mathrm{and}\quad r_{\vtheta^\prime}(\vz,\vz^\prime) \approx  g(\vz)-g(\vz^\prime),
\end{equation}
We left the approximation level here and determined when we needed the detailed analysis.
we can use the following Algorithm to replace Line~\ref{step:inner_sampler} of Alg.~\ref{alg:rtk}.

In this section, we introduce several notations about three transition kernels presenting the standard, the projected, and the ideally projected implementation of Alg.~\ref{alg:inner_mala}.

\paragraph{Standard implementation of Alg.~\ref{alg:inner_mala}.}
According to Step~\ref{step:ula}, the transition distribution satisfies
\begin{equation}
    \label{eq:std_ula_dis}
    Q_{\vz_s} = \mathcal{N}\left(\vz_s - \tau\cdot s_\theta(\vz_s), 2\tau\right)
\end{equation}
with a density function
\begin{equation}
    \label{eq:std_ula_pdf}
    q(\tilde{\vz}_s|\vz_s) = \varphi_{2\tau}\left(\tilde{\vz}_s - \left(\vz_s - \tau\cdot s_\theta(\vz_s)\right)\right).
\end{equation}
Considering a $1/2$-lazy version of the update, we set
\begin{equation}
    \label{eq:std_lazy}
    \gT^\prime_{\vz_s}(\der\vz^\prime) = \frac{1}{2}\cdot \delta_{\vz_s}(\der \vz^\prime) + \frac{1}{2}\cdot Q_{\vz_s}(\der\vz^\prime).
\end{equation}
Then, with the following Metropolis-Hastings filter,
\begin{equation}
    \label{eq:std_acc_rate}
    a_{\vz_s}(\vz^\prime) = \min\left\{1, \frac{q(\vz_s|\vz^\prime)}{q(\vz^\prime|\vz_s)} \cdot \exp\left(-r_\theta( \vz^\prime,\vz_s)\right)\right\}\quad \mathrm{where}\quad a_{\vz_s}(\vz^\prime) = a(\vz^\prime - (\vz_s - \tau\cdot s_{\theta}(\vz_s)), \vz_s), 
\end{equation}
the transition kernel for the standard implementation of Alg,~\ref{alg:inner_mala} will be 
\begin{equation}
    \label{eq:std_transker}
    \gT_{\vz_s}(\der \vz_{s+1}) =  \gT^\prime_{\vz_s}(\der \vz_{s+1})\cdot a_{\vz_s}(\vz_{s+1})+\left(1- \int a_{\vz_s}(\vz^\prime) \gT^\prime_{\vz_s}(\der \vz^\prime)\right)\cdot \delta_{\vz_s}(\der\vz_{s+1}).
\end{equation}

\paragraph{Projected implementation of Alg.~\ref{alg:inner_mala}.}
According to Step~\ref{step:ula}, the transition distribution satisfies
\begin{equation*}
    \tilde{Q}_{\vz_s} = \mathcal{N}\left(\vz_s - \tau\cdot s_\theta(\vz_s), 2\tau\right)
\end{equation*}
with a density function
\begin{equation*}
    \tilde{q}(\tilde{\vz}_s|\vz_s) = \varphi_{2\tau}\left(\tilde{\vz}_s - \left(\vz_s - \tau\cdot \grad s_\theta(\vz_s)\right)\right).
\end{equation*}
Considering the projection operation, i.e., Step~\ref{step:inball} in Alg~\ref{alg:rtk}, if we suppose the feasible set 
\begin{equation*}
    \Omega = \mathcal{B}(\vzero,R)\quad \mathrm{and}\quad \Omega_{\vz} = \mathcal{B}(\vz,r)
    \cap \mathcal{B}(\vzero,R)
\end{equation*}
the transition distribution becomes
\begin{equation*}
    \tilde{Q}^\prime_{\vz_s}(\gA) = \int_{\gA\cap \Omega_{\vz_s}} \tilde{Q}_{\vz_s}(\der\vz^\prime) + \int_{\gA - \Omega_{\vz_s}}\tilde{Q}_{\vz_s}(\der\vz^\prime) \cdot \delta_{\vz_s}(\gA).
\end{equation*}
Hence, a $1/2$-lazy version of the transition distribution becomes
\begin{equation*}
    \tilde{\gT}^\prime_{\vz_s}(\der\vz^\prime) = \frac{1}{2}\cdot \delta_{\vz_s}(\der\vz^\prime) + \frac{1}{2}\cdot \tilde{Q}^\prime_{\vz_s}(\der \vz^\prime).
\end{equation*}
Then, with the following Metropolis-Hastings filter,
\begin{equation*}
    \tilde{a}_{\vz_s}(\vz^\prime) = \min\left\{1, \frac{\tilde{q}(\vz_s|\vz^\prime)}{\tilde{q}(\vz^\prime|\vz_s)} \cdot \exp\left(-r_\theta( \vz^\prime,\vz_s)\right)\right\}\quad \mathrm{where}\quad \tilde{a}_{\vz_s}(\vz^\prime) = a(\vz^\prime - (\vz_s - \tau\cdot s_{\theta}(\vz_s)), \vz_s), 
\end{equation*}
the transition kernel for the projected implementation of Alg,~\ref{alg:inner_mala} will be 
\begin{equation*}
    \tilde{\gT}_{\vz_s}(\der \vz_{s+1}) =  \tilde{\gT}^\prime_{\vz_s}(\der \vz_{s+1})\cdot \tilde{a}_{\vz_s}(\vz_{s+1})+\left(1- \int_\Omega \tilde{a}_{\vz_s}(\vz^\prime) \tilde{\gT}^\prime_{\vz_s}(\der \vz^\prime)\right)\cdot \delta_{\vz_s}(\der\vz_{s+1}).
\end{equation*}

\paragraph{Ideally projected implementation of Alg.~\ref{alg:inner_mala}.} In this condition, we know the  accurate $g(\vz)-g(\vz^\prime)$ and $\grad g(\vz)$. 
In this condition, the ULA step will provide
\begin{equation}
    \label{eq:idea_ula}
    \tilde{Q}_{*, \vz_s} = \mathcal{N}\left(\vz_s - \tau\cdot \grad g(\vz_s), 2\tau\right)
\end{equation}
with a density function
\begin{equation*}
    \tilde{q}_*(\tilde{\vz}_s|\vz_s) = \varphi_{2\tau}\left(\tilde{\vz}_s - \left(\tau\cdot \grad g(\vz_s)\right)\right).
\end{equation*}
Considering the projection operation, i.e., Step~\ref{step:inball} in Alg~\ref{alg:rtk}, the transition distribution becomes
\begin{equation}
    \label{eq:idea_proj}
    \tilde{Q}^\prime_{*, \vz_s}(\gA) = \int_{\gA\cap \Omega_{\vz_s}} \tilde{Q}_{*,\vz_s}(\der\vz^\prime) + \int_{\gA - \Omega_{\vz_s}}\tilde{Q}_{*,\vz_s}(\der\vz^\prime) \cdot \delta_{\vz_s}(\gA).
\end{equation}
Hence, a $1/2$-lazy version of the transition distribution becomes
\begin{equation}
    \label{eq:idea_lazy}
    \tilde{\gT}^\prime_{*, \vz_s}(\der\vz^\prime) = \frac{1}{2}\cdot \delta_{\vz_s}(\der\vz^\prime) + \frac{1}{2}\cdot \tilde{Q}^\prime_{*,\vz_s}(\der \vz^\prime).
\end{equation}
Then, with the following Metropolis-Hastings filter,
\begin{equation}
    \label{eq:idea_acc_rate}
    \tilde{a}_{*, \vz_s}(\vz^\prime) = \min\left\{1, \frac{\tilde{q}_*(\vz_s|\vz^\prime)}{\tilde{q}_*(\vz^\prime|\vz_s)} \cdot \exp\left(-\left(g(\vz^\prime)-g(\vz_s)\right)\right)\right\},
\end{equation}
the transition kernel for the accurate projected update will be 
\begin{equation}
    \label{def:ideal_proj_rtk}
    \tilde{\gT}_{*, \vz_s}(\der \vz_{s+1}) =  \tilde{\gT}^\prime_{*, \vz_s}(\der \vz_{s+1})\cdot \tilde{a}_{*, \vz_s}(\vz_{s+1})+\left(1- \int_\Omega \tilde{a}_{*, \vz_s}(\vz^\prime) \tilde{\gT}^\prime_{*, \vz_s}(\der \vz^\prime)\right)\cdot \delta_{\vz_s}(\der\vz_{s+1}).
\end{equation}

\begin{lemma}
    \label{lem:sc_sm_of_g}
    Suppose we have
    \begin{equation*}
        \eta= \frac{1}{2}\log \frac{2L+1}{2L},
    \end{equation*}
    then the target distribution of the Inner MALA, i.e., $p^\gets(\vz|\vx_0)$ will be $L$-strongly log-concave and $3L$-smooth for any given $\vx_0$.
\end{lemma}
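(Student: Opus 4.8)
The plan is to read off the Hessian of the energy $g$ in Eq.~\eqref{def:energy_inner_mala} and bound its spectrum using Assumption~\ref{ass:lips_score}, specializing the computation already sketched after Lemma~\ref{lem:rev_trans_ker_form} to the prescribed step size. Concretely, since $g(\vz)=f_{(K-k-1)\eta}(\vz)+\|\vx_0-\vz e^{-\eta}\|^2/(2(1-e^{-2\eta}))$ and the quadratic term has constant Hessian, one has
\[
\grad^2 g(\vz)=\grad^2 f_{(K-k-1)\eta}(\vz)+\frac{e^{-2\eta}}{1-e^{-2\eta}}\cdot\mI .
\]

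First I would invoke Assumption~\ref{ass:lips_score}: because $\grad\ln p_t$ is $L$-Lipschitz for every $t\ge 0$ and $f_t=-\ln p_t$, we get $-L\mI\preceq\grad^2 f_{(K-k-1)\eta}(\vz)\preceq L\mI$ for all $\vz\in\R^d$; this uses only that the time index $(K-k-1)\eta\in[0,T]$, which holds for $0\le k\le K-1$. Next I would evaluate the coefficient of the quadratic term at $\eta=\tfrac12\log\tfrac{2L+1}{2L}$: then $e^{-2\eta}=\tfrac{2L}{2L+1}$ and $1-e^{-2\eta}=\tfrac{1}{2L+1}$, so $\tfrac{e^{-2\eta}}{1-e^{-2\eta}}=2L$.

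Combining the two displays yields $(2L-L)\mI\preceq\grad^2 g(\vz)\preceq(2L+L)\mI$, i.e.\ $L\mI\preceq\grad^2 g(\vz)\preceq 3L\mI$ for every $\vz$, which is exactly the claim that $p^\gets(\cdot|\vx_0)\propto\exp(-g)$ is $L$-strongly log-concave and $3L$-smooth.

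I do not expect a genuine obstacle in this lemma; it is essentially a bookkeeping step. The only points requiring care are the sign convention (energy $g$ versus log-density, so strong convexity of $g$ corresponds to strong log-concavity of the target) and checking that the bound $\tfrac{e^{-2\eta}}{1-e^{-2\eta}}-L\ge L$ is tight at the chosen $\eta$, which is what makes the strong log-concavity constant come out to exactly $L$ rather than something smaller.
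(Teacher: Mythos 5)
Your proposal is correct and matches the paper's own argument: both compute $\grad^2 g(\vz)=\grad^2 f_{(K-k-1)\eta}(\vz)+\frac{e^{-2\eta}}{1-e^{-2\eta}}\mI$, bound the first term by $\pm L\mI$ via Assumption~\ref{ass:lips_score}, and use that the chosen $\eta$ makes $\frac{e^{-2\eta}}{1-e^{-2\eta}}=2L$, giving $L\mI\preceq\grad^2 g\preceq 3L\mI$. The only cosmetic difference is that the paper first treats the inequality $\eta\le\frac12\log\frac{2L+1}{2L}$ and then specializes to equality, whereas you substitute the equality directly.
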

\begin{proof}
    Consider the energy function $g(\vz)$ of $p^\gets(\vz|\vx_0)$, we have
    \begin{equation*}
        g(\vz) = f_{(K-k-1)\eta}(\vz)+\frac{\left\|\vx_0 - \vz\cdot e^{-\eta}\right\|^2}{2(1-e^{-2\eta})}
    \end{equation*}
    whose Hessian matrix satisfies
    \begin{equation*}
        \begin{aligned}
            \left(\frac{e^{-2\eta}}{(1-e^{-2\eta})} + L\right)\cdot \mI \succeq \grad^2 g(\vz) = \grad^2 f_{(K-k-1)}(\vz) + \frac{e^{-2\eta}}{(1-e^{-2\eta})}\cdot \mI \succeq \left(\frac{e^{-2\eta}}{(1-e^{-2\eta})} - L\right)\cdot \mI.
        \end{aligned}
    \end{equation*}
    Under these conditions, if we have
    \begin{equation*}
        \eta \le \frac{1}{2}\log \frac{2L+1}{2L}\quad \Leftrightarrow \quad \frac{e^{-2\eta}}{1-e^{-2\eta}}\ge 2L,
    \end{equation*}
    which means
    \begin{equation*}
        \frac{3e^{-2\eta}}{2(1-e^{-2\eta})} \succeq \grad^2 g(\vz) \succeq \frac{e^{-2\eta}}{2(1-e^{-2\eta})}.
    \end{equation*}
    For the analysis convenience, we set 
    \begin{equation*}
        \eta = \frac{1}{2}\log \frac{2L+1}{2L},
    \end{equation*}
    that is to say $g(\vz)$ is $L$-strongly convex and $3L$-smooth.
\end{proof}

\subsection{Control the error from the projected transition kernel}
Here, we consider the marginal distribution of $\{\rvz_s\}$ and $\{\tilde{\rvz}_s\}$ to be the random process when Alg.~\ref{alg:inner_mala} is implemented by the standard and projected version, respectively.
The underlying distributions of these two processes are denoted as $\rvz_s\sim \mu_s$ and $\tilde{\rvz}_s \sim \tilde{\mu}_s$, and we would like to upper bound $\TVD{\mu_S}{\tilde{\mu}_S}$ for any given $\vx_0$.

Rewrite the formulation of $\rvz_S$, we have
\begin{equation*}
    \rvz_S = \tilde{\rvz}_S\cdot \vone\left(\rvz_S = \tilde{\rvz}_S\right)+\rvz_S\cdot \vone\left(\rvz_S\not=\tilde{\rvz}_S\right)
\end{equation*}
where $\vone(\cdot)$ is the indicator function.
In this condition, for any set $\gA$, we have
\begin{equation*}
    \begin{aligned}
        \vone\left(\rvz_S \in \gA \right) = & \vone\left(\tilde{\rvz}_S \in \gA \right) \cdot \vone\left(\rvz_S = \tilde{\rvz}_S \right) + \vone\left(\rvz_S\in \gA \right) \cdot \vone\left(\rvz_S\not=\tilde{\rvz}_S \right)\\
        = & \vone\left(\tilde{\rvz}_S\in \gA \right) - \vone\left(\tilde{\rvz}_S \in \gA \right) \cdot \vone\left(\rvz_S \not= \tilde{\rvz}_S\right) + \vone\left(\rvz_S\in \gA\right) \cdot \vone\left(\rvz_S\not=\tilde{\rvz}_S\right),
        \end{aligned}
\end{equation*}
which means
\begin{equation*}
     - \vone\left(\tilde{\rvz}_S \in \gA \right) \cdot \vone\left(\rvz_S \not= \tilde{\rvz}_S \right) \le \vone\left(\rvz_S \in \gA \right) - \vone(\tilde{\rvz}_S\in \gA ) \le \vone\left(\rvz_S\in \gA\right) \cdot \vone\left(\rvz_S\not=\tilde{\rvz}_S\right).
    \end{equation*}
Therefore, the total variation distance between $\mu_{S}$ and $\hat{\mu}_{S}$ can be upper bounded with
\begin{equation*}
    \TVD{\mu_S}{\tilde{\mu}_S} \le \sup_{\gA\subseteq \R^d}\left|\mu_S(\gA) - \tilde{\mu}_S(\gA)\right|\le \vone\left(\rvz_S\not=\tilde{\rvz}_S\right).
\end{equation*}
Hence, to require $\TVD{\mu_S}{\tilde{\mu}_S} \le \epsilon/4$ a sufficient condition is to consider $\mathrm{Pr}[\rvz_S\not=\tilde{\rvz}_S]$.
The next step is to show that, in Alg.~\ref{alg:inner_mala}, the projected version generates the same outputs as that of the standard version with probability at least $1-\epsilon/4$.
It suffices to show that with probability at least $1-\epsilon/4$, projected MALA will accept all $S$ iterates.
In this condition, let $\{\vz_1, \vz_2, \ldots, \vz_S\}$ be the iterates generated by the standard MALA (without the projection step), our goal is to prove that with probability at least $1-\epsilon/4$ all $\rvz_s$ stay inside the region $\mathcal{B}(\vzero, R)$ and $\left\|\vz_{s}-\vz_{s-1}\right\|\le r$ for all $s\le S$.
That means we need to prove the following two facts
\begin{enumerate}
    \item With probability at least $1-\epsilon/8$, all iterates stay inside the region $\mathcal{B}(\vzero, R)$.
    \item With probability at least $1-\epsilon/8$, $\left\|\rvx_s-\rvx_{s-1}\right\|\le r$ for all $s\le S$.
\end{enumerate}

\begin{lemma}
    \label{lem:proj_gap_lem}
    Let $\mu_S$ and $\tilde{\mu}_S$ be distributions of the outputs of standard and projected implementation of Alg.~\ref{alg:inner_mala}. 
    For any $\epsilon\in(0,1)$, we set
    \begin{equation*}
        R\ge \max\left\{8\cdot \sqrt{\frac{\|\grad g(\vzero)\|^2}{L^2}+\frac{d}{L}}, 63\cdot \sqrt{\frac{d}{L}\log\frac{16S}{\epsilon}}\right\},\quad r \ge (\sqrt{2}+1)\cdot \sqrt{\tau d} + 2\sqrt{\tau \log \frac{8S}{\epsilon}} 
    \end{equation*}
    where $\vz_*$ is denoted as the global optimum of the energy function, i.e., $g$, defined in Eq.~\ref{def:energy_inner_mala}.
    Suppose $\mathrm{P}(\|\rvz_0\|\ge R/2)\le \epsilon/4$ and set
    \begin{equation*}
        \tau \le \min\left\{\frac{d}{(3LR+\left\|\grad g(\vzero)\right\|+\epsilon_{\mathrm{score}})^2} ,\frac{16 d}{L^2R^2}\right\} = \frac{d}{(3LR+\left\|\grad g(\vzero)\right\|+\epsilon_{\mathrm{score}})^2},
    \end{equation*}
    then we have
    \begin{equation*}
        \TVD{\mu_S}{\tilde{\mu}_S}\le \frac{\epsilon}{4}.
    \end{equation*}
\end{lemma}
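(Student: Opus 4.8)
The plan is to follow the reduction already set up just before the statement. Couple the standard and projected runs of Alg.~\ref{alg:inner_mala} so that they share the same Gaussian proposal noise $\{\xi_s\}$ and the same Metropolis--Hastings uniforms. Under this coupling the two chains coincide step by step as long as the projection test in Line~\ref{step:inball} never fires for the standard chain, i.e.\ as long as the standard proposal $\tilde{\vz}_s$ lies in $\mathcal{B}(\vz_s,r)\cap\mathcal{B}(\vzero,R)$ for every $s<S$. Hence
\[
    \TVD{\mu_S}{\tilde{\mu}_S}\le \mathrm{Pr}\big[\,\exists\, s<S:\ \tilde{\vz}_s\notin \mathcal{B}(\vz_s,r)\cap\mathcal{B}(\vzero,R)\,\big],
\]
and it remains to bound this probability by $\epsilon/4$. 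Since $\|\tilde{\vz}_s\|\le \|\vz_s\|+\|\tilde{\vz}_s-\vz_s\|$ and $r$ will be tiny compared with $R$, it suffices to establish the two high-probability facts announced above: (a) all standard-MALA iterates stay inside $\mathcal{B}(\vzero,R)$ with room to spare, with failure probability $\le\epsilon/8$; and (b) all one-step displacements satisfy $\|\tilde{\vz}_s-\vz_s\|\le r$, with failure probability $\le\epsilon/8$ on the event of (a). Combining the two and a union bound yields the claim.

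Fact (b) is routine. Write the proposal as $\tilde{\vz}_s-\vz_s=-\tau\, s_\theta(\vz_s)+\sqrt{2\tau}\,\xi_s$ with $\xi_s\sim\mathcal{N}(\vzero,\mI)$. The deterministic part is controlled by the $3L$-smoothness of $g$ (Lemma~\ref{lem:sc_sm_of_g}) together with the score error~\ref{e1}: on the event of (a), $\tau\|s_\theta(\vz_s)\|\le \tau(\|\grad g(\vzero)\|+3L\|\vz_s\|+\epsilon_{\mathrm{score}})\le \tau(3LR+\|\grad g(\vzero)\|+\epsilon_{\mathrm{score}})\le\sqrt{\tau d}$, the last step being exactly the stated choice of $\tau$. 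For the stochastic part, the Gaussian-norm tail $\mathrm{Pr}[\|\xi_s\|>\sqrt d+t]\le e^{-t^2/2}$ with $t=\sqrt{2\log(8S/\epsilon)}$ and a union bound over the $S$ iterations give, with probability $\ge 1-\epsilon/8$, $\sqrt{2\tau}\,\|\xi_s\|\le \sqrt{2\tau d}+2\sqrt{\tau\log(8S/\epsilon)}$ simultaneously for all $s$. Adding the two contributions gives $\|\tilde{\vz}_s-\vz_s\|\le(\sqrt2+1)\sqrt{\tau d}+2\sqrt{\tau\log(8S/\epsilon)}\le r$, precisely the radius in the statement; and then $\|\tilde{\vz}_s\|\le R$ since $r$ is negligible next to $R$.

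Fact (a) is the main obstacle: a Langevin-type chain is not a contraction at the level of individual accepted/rejected moves, and a naive union bound on the cumulative displacement ($\sim Sr$) is far too lossy. The route I would take is a drift (Lyapunov) argument on $V_s:=\|\vz_s-\vz_*\|^2$, where $\vz_*$ is the minimizer of $g$; note $\|\vz_*\|\le\|\grad g(\vzero)\|/L$ by $L$-strong convexity, so $\vz_*$ already lies comfortably inside $\mathcal{B}(\vzero,R/2)$. On an accepted step, $L$-strong convexity gives $\langle\grad g(\vz_s),\vz_s-\vz_*\rangle\ge L\|\vz_s-\vz_*\|^2$, whence $\|\tilde{\vz}_s-\vz_*\|^2\le(1-2\tau L+O(\tau^2L^2))\|\vz_s-\vz_*\|^2+2\tau\|\xi_s\|^2+(\text{small cross and score terms})$; on a rejected step $V_{s+1}=V_s$, which only helps. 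Thus $V_s$ is dominated by a supermartingale plus an accumulated noise term $\sum_j 2\tau\|\xi_j\|^2$, which concentrates near $2\tau d$ times the number of steps with sub-exponential fluctuations. Combining a maximal inequality for the supermartingale piece, $\chi^2$-concentration for the noise piece (with a union bound over $s\le S$, which produces the $\log(S/\epsilon)$ factor), and the hypothesis on $\|\vz_0\|$ to control the starting point, one gets that $\max_{s\le S}\|\vz_s\|$ exceeds $R$ only with probability $\mathcal{O}(\epsilon)$, provided $R\gtrsim \|\grad g(\vzero)\|/L+\sqrt{d/L}+\sqrt{(d/L)\log(S/\epsilon)}$; the explicit constants $8$ and $63$ in the statement are exactly what these concentration estimates deliver. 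An alternative that avoids tracking the per-step drift is to note that the chosen initialization $\mu_0$ is a warm start for the $L$-strongly log-concave target $\bkwp(\cdot|\vx_0)$, so each marginal $\mu_s$ has bounded density ratio against it; then $\mathrm{Pr}[\vz_s\notin\mathcal{B}(\vzero,R)]\le(\text{warmness})\cdot\bkwp(\mathcal{B}(\vzero,R)^c|\vx_0)$, and the strong-log-concavity tail bound together with a union bound over $s\le S$ closes the estimate. Either way, assembling (a) and (b) gives $\TVD{\mu_S}{\tilde{\mu}_S}\le\epsilon/8+\epsilon/8=\epsilon/4$, completing the proof.
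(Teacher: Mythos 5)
Your proposal follows the paper's proof almost exactly: the same coupling reduction of $\TVD{\mu_S}{\tilde{\mu}_S}$ to the probability that the standard chain ever leaves $\mathcal{B}(\vzero,R)$ or makes a step larger than $r$, the same split into facts (a) and (b), and an essentially identical treatment of fact (b) (drift part bounded by $\sqrt{\tau d}$ via smoothness and the choice of $\tau$, Gaussian tail for $\sqrt{2\tau}\,\xi_s$, union bound over $S$). For fact (a) the paper also runs a drift-plus-supermartingale argument, so you are on the same track, but one step of your sketch would fail as literally written: you say $V_s$ is dominated by a supermartingale plus an accumulated noise term $\sum_j 2\tau\|\xi_j\|^2$ that ``concentrates near $2\tau d$ times the number of steps.'' Since $S=\tilde{\mathcal{O}}(1/(\rho^2\tau))$, the quantity $2\tau dS$ is enormous, and this bound alone cannot give $R^2\asymp (d/L)\log(S/\epsilon)$; one must use the contraction to geometrically discount the accumulated noise. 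The paper's device for doing this uniformly in $s$ is to pass to $\log(\|\vz_s\|^2)$: conditional on $\|\vz_s\|\ge R/2$ the conditional expectation drops by $L\tau/4$ per step while the increments are subgaussian at scale $\sqrt{\tau d}/R$, so the compensated process $\log(\|\vz_s\|^2)+Ls\tau/4$ is a supermartingale to which Shamir's concentration theorem applies, and the negative drift absorbs the $\sqrt{s\tau d\log(1/\epsilon')}$ fluctuation precisely when $R\gtrsim\sqrt{(d/L)\log(16S/\epsilon)}$. (The paper also centers at the origin via the dissipativity inequality of Lemma~\ref{lem:sc_to_dissp} rather than at $\vz_*$, but that is cosmetic.) Your alternative route via warmness of the marginals $\mu_s$ is not what the paper does and has its own gap: because the chain uses approximate score and energy, the target is not exactly stationary for $\gT$, so warmness of $\mu_0$ does not automatically propagate to all $\mu_s$ without further argument.
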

\begin{proof}
    We borrow the proof techniques provided in Lemma 6.1 of~\cite{zou2021faster} to control the TVD gap between the standard and the projected implementation of Alg.~\ref{alg:inner_mala}.

    \paragraph{Particles stay inside $\gB(\vzero, R)$.} We first consider the expectation of $\left\|\rvz_{s+1}\right\|^2$ when $\rvz_{s}$ is given, and have
    \begin{equation}
        \label{ineq:update_norm_upb}
        \begin{aligned}
            & \E\left[\left\|\rvz_{s+1}\right\|^2 \Big| \vz_{s}\right] =  \int \left\|\vz^\prime\right\|^2 \gT_{\vz_s}(\der \vz^\prime)\\
            & = \int \left\|\vz^\prime\right\|^2 \cdot \left[\gT^\prime_{\vz_s}(\der\vz^\prime) \cdot a_{\vz_s}(\vz^\prime) + \left(1- \int a_{\vz_s}(\tilde{\vz})\gT^\prime_{\vz_s}(\der\tilde{\vz})\right) \delta_{\vz_s}(\der \vz^\prime)\right] \\
            & = \left\|\vz_s\right\|^2 +\int \left(\left\|\vz^\prime\right\|^2 - \left\|\vz_s\right\|^2\right) \cdot a_{\vz_s}(\vz^\prime)\gT_{\vz_s}^\prime(\der\vz^\prime)\cdot \\
            & = \left\|\vz_s\right\|^2 + \int \left(\left\|\vz^\prime\right\|^2 - \left\|\vz_s\right\|^2\right)\cdot a_{\vz_s}(\vz^\prime) \cdot\left(\frac{1}{2}\cdot\delta_{\vz_s}(\der\vz^\prime)+\frac{1}{2}\cdot Q_{\vz_s}(\der\vz^\prime)\right) \\
            & =  \left\|\vz_s\right\|^2  +  \frac{1}{2}\int \left(\left\|\vz^\prime\right\|^2 - \left\|\vz_s\right\|^2\right) \cdot  \min\left\{q(\vz^\prime|\vz_s), q(\vz_s|\vz^\prime)\cdot \exp\left(-r_\theta( \vz^\prime,\vz_s)\right) \right\} \der\vz^\prime\\
            & \le \frac{1}{2}\left\|\vz_s\right\|^2 + \frac{1}{2}\int \left\|\vz^\prime\right\|^2 \cdot q(\vz^\prime|\vz_s)\der\vz^\prime,
        \end{aligned}
    \end{equation}
    where the second equation follows from Eq.~\ref{eq:std_transker}, the forth equation follows from Eq.~\ref{eq:std_lazy} and the fifth equation follows from Eq.~\ref{eq:std_acc_rate} and Eq.~\ref{eq:std_ula_pdf}.
    Note that $q(\vz^\prime|\vz_s)$ is a Gaussian-type distribution whose mean and variance are $\vz_s - \tau\cdot  s_{\theta}(\vz_s)$ and $2\tau$ respectively.
    It means
    \begin{equation}
        \label{ineq:update_norm_upb_2}
        \int \left\|\vz^\prime\right\|^2 \cdot q(\vz^\prime|\vz_s)\der\vz^\prime = \left\|\vz_s - \tau \cdot s_{\theta}(\vz_s) \right\|^2 + 2\tau d.
    \end{equation}
    Suppose $\vz_*$ is the global optimum of the function $g$ due to Lemma~\ref{lem:sc_sm_of_g}, we have
    \begin{equation}
        \label{ineq:update_norm_upb_3}
        \begin{aligned}
            &\left\|\vz_s - \tau \cdot s_{\theta}(\vz_s) \right\|^2 =  \left\|\vz_s\right\|^2 - 2\tau\cdot \vz_s^\top s_\theta(\vz_s) + \tau^2 \cdot \left\|s_\theta(\vz_s)\right\|^2\\
            & =  \left\|\vz_s\right\|^2  - 2\tau\cdot \vz_s^\top \grad g(\vz_s) + 2\tau\cdot \vz_s^\top \left(s_\theta(\vz_s) - \grad g(\vz_s)\right) + \tau^2\cdot \left\|s_\theta(\vz_s) - \grad g(\vz_s) + \grad g(\vz_s)\right\|^2\\
            & \le \left\|\vz_s\right\|^2 - 2\tau \cdot \left(\frac{L\left\|\vz_s\right\|^2}{2} - \frac{\left\|\grad g(\vzero)\right\|^2}{2L}\right) + \tau^2\cdot \left\|\vz_s\right\|^2 + \left\|s_\theta(\vz_s) - \grad g(\vz_s)\right\|^2\\
            &\quad + 2\tau^2 \cdot \left\|\grad g(\vz_s)\right\|^2 + 2\tau^2 \cdot \left\|s_\theta(\vz_s) - \grad g(\vz_s)\right\|^2\\
            & = \left(1-L\tau + \tau^2\right)\cdot \left\|\vz_s\right\|^2  + \tau\cdot \left\|\grad g(\vzero)\right\|^2/L + (1+2\tau^2)\epsilon^2_{\mathrm{score}} + 2\tau^2\cdot \left\|\grad g(\vz_s)\right\|^2\\
            & \le \left(1-L\tau + (1+36L^2)\cdot \tau^2\right)\cdot \left\|\vz_s\right\|^2  + \tau\cdot \left\|\grad g(\vzero)\right\|^2/L + 4\tau^2\cdot \left\|\grad g(\vzero)\right\|^2 + (1+2\tau^2)\epsilon^2_{\mathrm{score}},
        \end{aligned}
    \end{equation}
    where the first inequality follows from the combination of $L$-strong convexity of $g$ and Lemma~\ref{lem:sc_to_dissp} , the second inequality follows from the $3L$-smoothness of $g$
    The strong convexity and the smoothness of $g$ follow from Lemma~\ref{lem:sc_sm_of_g}.
    
    Combining Eq.~\ref{ineq:update_norm_upb}, Eq.~\ref{ineq:update_norm_upb_2} and Eq.~\ref{ineq:update_norm_upb_3}, we have
    \begin{equation*}
        \begin{aligned}
            \E\left[\left\|\rvz_{s+1}\right\|^2 \Big| \vz_{s}\right] \le &\left(1 - \frac{L\tau}{2} + \frac{1+36L^2}{2}\cdot \tau^2\right)\cdot \left\|\vz_s\right\|^2\\
            & + \left(\frac{\tau}{2L} + 2\tau^2\right)\cdot \left\|\grad g(\vzero)\right\|^2+ \frac{(1+2\tau^2)\epsilon^2_{\mathrm{score}}}{2} + \tau d.
        \end{aligned}
    \end{equation*}
    By requiring $\epsilon_{\mathrm{score}}\le \tau\le L/(2+72L^2) <1$, we have
    \begin{equation*}
        \begin{aligned}
            \E\left[\left\|\rvz_{s+1}\right\|^2 \Big| \vz_{s}\right] \le &\left(1 - \frac{L\tau}{4}\right)\cdot \left\|\vz_s\right\|^2 + \frac{\tau}{L}\cdot \left\|\grad g(\vzero)\right\|^2 + (2+d)\tau.
        \end{aligned}
    \end{equation*}
    Suppose a radio $R$ satisfies
    \begin{equation}
        \label{ineq:choice_R_1}
        R\ge 8\cdot \sqrt{\frac{\|\grad g(\vzero)\|^2}{L^2}+\frac{d}{L}}.
    \end{equation}
    Then, if $\|\vz_s\|\ge R/2 \ge  4\sqrt{\|\grad g(\vzero)\|^2/L^2 + d/L}$, it has
    \begin{equation*}
        \begin{aligned}
            & \left\|\vz_s\right\|^2\ge  16\cdot \left(\frac{\|\grad g(\vzero)\|^2}{L^2} + \frac{d}{L}\right) \ge  \frac{8\|\grad g(\vzero)\|^2}{L^2}+\frac{8\cdot (2+d)}{L}\\
            & \Leftrightarrow\quad  \frac{L\tau \left\|\vz_s\right\|^2}{8}\ge \frac{\tau}{L}\cdot \left\|\grad g(\vzero)\right\|^2 + (2+d)\tau\\
            & \Leftrightarrow\quad  \E\left[\left\|\rvz_{s+1}\right\|^2 \Big| \rvz_{s}\right] \le \left(1 - \frac{L\tau}{8}\right)\cdot \left\|\vz_s\right\|^2.
        \end{aligned}
    \end{equation*}
    To prove $\|\vz_s\|\le R$ for all $s\le S$, we only need to consider $\vz_s$ satisfying $\|\vz_s\|\ge 4\sqrt{\|\grad g(\vzero)\|^2/L^2 + d/L}$, otherwise $\|\vz_s\|\le R/2\le R$ naturally holds. 
    Then, by the concavity of the function $\log(\cdot)$, for any $\|\vz_s\|\ge R/2$, we have
    \begin{equation}
        \label{ineq:super_mg_0}
        \E\left[\log (\|\rvz_{s+1}\|^2)|\rvz_s\right] \le \log \E\left[\|\rvz_{s+1}\|^2 | \rvz_s\right]\le \log(1-\frac{L\tau}{4}) + \log(\|\vz_s\|^2) \le \log(\|\vz_s\|^2) - \frac{L\tau}{4}.
    \end{equation}
    Consider the random variable
    \begin{equation*}
        \tilde{\rvz}_s \coloneqq \vz_s - \tau\cdot s_\theta(\vz_s) + \sqrt{2\tau}\cdot \xi\quad\mathrm{where}\quad \xi\sim \mathcal{N}(\vzero, \mI)
    \end{equation*}
    obtained by the transition kernel Eq.~\ref{eq:std_ula_dis},
    Note that $\|\xi\|$ is the square root of a $\chi(d)$ random variable, which is subgaussian and satisfies
    \begin{equation*}
        \mathbb{P}\left[\|\xi\|\ge \sqrt{d}+ \sqrt{2}t\right]\le e^{-t^2}
    \end{equation*}
    for any $t\ge 0$.
    Under these conditions, requiring 
    \begin{equation}
        \label{ineq:tau_choice_0}
        \tau\le (3LR+G+\epsilon_{\mathrm{score}})^{-2} \cdot d \quad \mathrm{where}\quad G\coloneqq \left\|\grad g(\vzero)\right\|,
    \end{equation}
    we have
    \begin{equation}
        \label{ineq:prob_upb_vardiff}
        \begin{aligned}
            &\mathbb{P}\left[\left\|\rvz_{s+1}\right\| - \left\|\vz_s\right\| \ge 3\sqrt{\tau d} + 2\sqrt{\tau}t\right]\le \mathbb{P}\left[\left\|\tilde{\rvz}_s\right\| - \left\|\vz_s\right\| \ge 3\sqrt{\tau d} + 2\sqrt{\tau}t\right]\\
            & \le \mathbb{P}\left[\tau \left\|s_\theta(\vz_s)\right\| + \sqrt{2\tau}\left\|\xi\right\|\ge 3\sqrt{\tau d} + 2\sqrt{\tau}t\right] \le  \mathbb{P}\left[\sqrt{2\tau}\|\xi\|\ge \sqrt{2\tau d}+ 2\sqrt{\tau}t\right]\le e^{-t^2}.
        \end{aligned}
    \end{equation}
    In Eq.~\ref{ineq:prob_upb_vardiff}, the first inequality follows from the definition of transition kernel $\gT_{\vz_s}$ shown in Eq.~\ref{eq:std_transker} and the second inequality follows from 
    \begin{equation*}
        \left\|\tilde{\rvz}_s\right\| - \left\|\vz_s\right\| \le \tau\left\|s_\theta(\vz_s)\right\| + \sqrt{2\tau}\left\|\xi\right\|.
    \end{equation*}
    According to the fact
    \begin{equation}
        \label{ineq:s_theta_upb}
        \begin{aligned}
            \tau\left\|s_\theta(\vz_s)\right\|\le & \tau \left\|\grad g(\vz_s)\right\| + \tau \epsilon_{\mathrm{score}} \le \tau\cdot\left(\left\|\grad g(\vz_s) - \grad g(\vzero)\right\| + \left\|\grad g(\vzero)\right\| +\epsilon_{\mathrm{score}}\right)\\
            \le & \tau\cdot \left(3L\cdot\left\|\vz_s\right\| + \left\|\grad g(\vzero)\right\| + \epsilon_{\mathrm{score}}\right) \le \sqrt{\tau d}
        \end{aligned}
    \end{equation}
    where the second inequality follows from the smoothness of $g$, and the last inequality follows from Eq.~\ref{ineq:tau_choice_0} and $\|\vz_s\|\le R$, 
    we have
    \begin{equation*}
        \begin{aligned}
            3\sqrt{\tau d} + 2\sqrt{\tau}t - \tau\|s_{\theta}(\vz_s)\| \ge \sqrt{2\tau d} + 2\sqrt{\tau} t,
        \end{aligned}
    \end{equation*}
    which implies the last inequality of Eq.~\ref{ineq:prob_upb_vardiff} for all $t\ge 0$.
    Furthermore, suppose $\|\vz_s\|\ge R/2$, it follows that
    \begin{equation*}
        \begin{aligned}
            \log(\|\vz_{s+1}\|^2) - \log (\|\vz_s\|^2) = 2\log(\|\vz_{s+1}\|/\|\vz_s\|)\le \|\vz_{s+1}\|/\|\vz_s\| - 1\le \frac{2\|\vz_{s+1}\|- 2\|\vz_s\|}{R}.
        \end{aligned}
    \end{equation*}
    Therefore, we have $\log(\|\vz_{s+1}\|^2) - \log (\|\vz_s\|^2)$ is also a sub-Gaussian random variable and satisfies
    \begin{equation}
        \label{ineq:log_var_subGau}
        \begin{aligned}
            \mathbb{P}\left[\log(\|\rvz_{s+1}\|^2) - \log (\|\vz_s\|^2) \ge 6R^{-1}\sqrt{\tau d} + 4R^{-1}t\sqrt{\tau}\right] \le \exp(-t^2).
        \end{aligned}
    \end{equation}
    We consider any subsequence among $\{\rvz_k\}_{k=1}^S$, with all iterates, except the first one, staying outside the region $\mathcal{B}(\vzero, R/2)$.
    Denote such subsequence by $\{\rvy_s\}_{s=0}^{S^\prime}$ where $\|\rvy_0\|\le R/2$ and $S^\prime\le S$.
    Then, we know $\rvy_s$ and $\rvy_{s+1}$ satisfy Eq.~\ref{ineq:super_mg_0} and Eq.~\ref{ineq:log_var_subGau} for all $s\ge 1$.
    Under these conditions, by requiring $\|\rvz_0\|\le R/2$ with a probability at least $1-\epsilon/16$, we only need to prove all points in $\{\rvy_s\}_{s=0}^{S^\prime}$ will stay inside the region $\mathcal{B}(\vzero,R)$ with probability at least $1-\epsilon/16$.

    Then, set $\mathcal{E}_s$ to be the event that
    \begin{equation*}
        \mathcal{E}_s = \left\{ \|\rvy_{s^\prime}\|\le R,\forall s^\prime\le s   \right\},
    \end{equation*}
    which satisfies $\mathcal{E}_{s-1}\subseteq \mathcal{E}_s$.
    Besides, suppose the filtration $\mathcal{F}_s = \{\rvy_0, \rvy_1, \ldots, \rvy_s\}$, the sequence 
    \begin{equation*}
        \begin{aligned}
            \left\{\vone(\mathcal{E}_{s-1})\cdot \left(\log (\|\rvy_s\|^2+ Ls \tau /4)\right)\right\}_{s=1,2,\ldots, S}
        \end{aligned}
    \end{equation*}
    is a super-martingale, and the martingale difference has a subgaussian tail, i.e., for any $t\ge 0$,
    \begin{equation*}
        \begin{aligned}
            &\mathbb{P}\left[\log(\|\rvy_{s+1}\|^2) + \frac{L(s+1)\tau}{4} - \log (\|\rvy_s\|^2) - \frac{Ls\tau}{4} \ge  7R^{-1}\sqrt{\tau d} + 4R^{-1}t\sqrt{\tau d} \right]\\
            & \le \mathbb{P}\left[\log(\|\rvy_{s+1}\|^2) + \frac{L(s+1)\tau}{4} - \log (\|\rvy_s\|^2) - \frac{Ls\tau}{4} \ge  6R^{-1}\sqrt{\tau d} + 4R^{-1}t\sqrt{\tau} + \frac{L\tau}{4} \right]\\
            & = \mathbb{P}\left[\log(\|\rvz_{s+1}\|^2) - \log (\|\vz_s\|^2) \ge 6R^{-1}\sqrt{\tau d} + 4R^{-1}t\sqrt{\tau}\right]\le \exp(-t^2),
        \end{aligned}
    \end{equation*}
    where the first inequality is established when 
    \begin{equation}
        \label{ineq:tau_choice_1}
        \frac{L\tau}{4}\le  \frac{\sqrt{\tau d}}{R} \quad\Leftrightarrow \quad \tau\le \frac{16 d}{L^2R^2}\quad \mathrm{and}\quad d\ge 1.
    \end{equation}
    Under these conditions, suppose 
    \begin{equation*}
        u = \frac{6\sqrt{\tau d}}{R} + \frac{4t\sqrt{\tau d}}{R}\quad\Leftrightarrow \quad t = \frac{uR}{4 \sqrt{\tau d}} - \frac{3}{2},
    \end{equation*}
    it implies
    \begin{equation*}
        t^2 \ge \frac{R^2u^2}{64 \tau d} - 1
    \end{equation*}
    which follows from the fact $(a-b)^2\ge a^2/4 - b^2/3$ for all $a,b\in \R$.
    Then, for any $u\ge 0$, we have
    \begin{equation*}
        \begin{aligned}
            &\mathbb{P}\left[\log(\|\rvy_{s+1}\|^2) + \frac{L(s+1)\tau}{4} - \log (\|\rvy_s\|^2) - \frac{Ls\tau}{4} \ge  u \right]\le \exp\left(-\frac{R^2u^2}{64\tau d} + 1\right)\le 3\exp\left(-\frac{R^2u^2}{64\tau d}\right),
        \end{aligned}
    \end{equation*}
    which implies that the martingale difference is subgaussian. Then by Theorem 2 in~\cite{shamir2011variant}, for any $s$, we have
    \begin{equation*}
        \log (\|\rvy_s\|^2) + \frac{Ls\tau}{4} \le \log (\|\rvy_0\|^2) + \frac{74}{R}\cdot \sqrt{s\tau d \log (1/\epsilon^\prime)}
    \end{equation*}
    with the probability at least $1-\epsilon^\prime$ conditioned on $\mathcal{E}_{s-1}$.
    Taking the union bound over all $s=1,2,\ldots, S^\prime\  (S^\prime\le S)$ and set $\epsilon = 16\epsilon^\prime S^\prime $, we have with probability at least $1-\epsilon/16$, for all $s=1,2,\ldots, S^\prime$, it holds 
    \begin{equation*}
        \begin{aligned}
            \log (\|\rvy_s\|^2) \le & 2\log (R/2) + \frac{74}{R}\cdot \sqrt{s\tau d \log (16S/\epsilon)} - \frac{Ls\tau}{4}\\
            \le & 2\log (R/2) + \frac{74^2\cdot d \log (16S/\epsilon)}{R^2 L}.
        \end{aligned}
    \end{equation*}
    By requiring
    \begin{equation}
        \label{ineq:choice_R_2}
        R\ge 63\cdot \sqrt{\frac{d}{L}\log\frac{16S}{\epsilon}}\quad \Rightarrow \quad \frac{74^2\cdot d \log (16S/\epsilon)}{R^2 L} \le 2\log 2,
    \end{equation}
    we have $\log (\|\rvy_s\|^2)\le \log (R^2)$, which is equivalent to $\|\rvy_s\|\le R$.
    Combining with the fact that with probability at least $1-\epsilon/16$ the initial point $\rvy_0$ stays inside $\mathcal{B}(\vzero, R/2)$, we can conclude that with probability at least $1-\epsilon/8$ all iterates stay inside the region $\mathcal{B}(\vzero, R)$.

    \paragraph{The difference between $\rvz_{s+1}$ and $\rvz_{s}$ is smaller than $r$.}
    In this paragraph, we aim to prove $\|\rvz_{s+1}-\rvz_{s}\|\le r$ for all $s\le S$.
    Similar to the previous techniques, we consider
    \begin{equation*}
        \tilde{\rvz}_s \coloneqq \rvz_s - \tau\cdot s_\theta(\rvz_s) + \sqrt{2\tau}\cdot \xi\quad\mathrm{where}\quad \xi\sim \mathcal{N}(\vzero, \mI).
    \end{equation*}
    According to the transition kernel Eq.~\ref{eq:std_transker}, it has
    \begin{equation}
        \label{ineq:byd_local_upb}
        \begin{aligned}
            \mathbb{P}\left[\|\rvz_{s+1}- \rvz_s\| \ge r\right] \le &  \mathbb{P}\left[\|\tilde{\rvz}_s -  \rvz_s\|  \ge r\right] \le \mathbb{P}\left[\tau\|s_\theta(\rvz_s)\| + \sqrt{2\tau}\|\xi\|\ge r\right]\\
            = & \mathbb{P}\left[\|\xi\|\ge \frac{r - \tau\|s_\theta(\rvz_s)\|}{\sqrt{2\tau}}\right] \le \mathbb{P}\left[\|\xi\|\ge \frac{r - \sqrt{\tau d}\|}{\sqrt{2\tau}}\right] 
        \end{aligned}
    \end{equation}
    where the second inequality follows from the triangle inequality, and the last inequality follows from Eq.~\ref{ineq:s_theta_upb} when the choice of $\tau$ satisfies Eq.~\ref{ineq:tau_choice_0}.
    Under these conditions, by choosing 
    \begin{equation*}
        r \ge (\sqrt{2}+1)\cdot \sqrt{\tau d} + 2\sqrt{\tau \log (8S/\epsilon)}\quad \Leftrightarrow \quad  \frac{r-\sqrt{\tau d}}{\sqrt{2\tau}} \ge \sqrt{d}+ \sqrt{2}\cdot \sqrt{\log(8S/\epsilon)},
    \end{equation*}
    Eq.~\ref{ineq:byd_local_upb} becomes
    \begin{equation*}
        \begin{aligned}
            \mathbb{P}\left[\|\rvz_{s+1}- \rvz_s\| \ge r\right] & \le \mathbb{P}\left[\|\xi\|\ge \frac{r - \sqrt{\tau d}\|}{\sqrt{2\tau}}\right] \le \mathbb{P}\left[\|\xi\|\ge \sqrt{d}+ \sqrt{2}\cdot \sqrt{\log(8S/\epsilon)}\right]\le \frac{\epsilon}{8S},
        \end{aligned}
    \end{equation*}
    which means 
    \begin{equation*}
        \mathbb{P}\left[\|\rvz_{s+1} - \rvz_s\|\le r\right]\ge 1-\frac{\epsilon}{8S}.
    \end{equation*}
    Taking union bound over all iterates, we know all particles satisfy the local condition, i.e., $\|\rvz_{s+1}-\rvz_s\|\le r$ with the probability at least $1-\epsilon/8$.
    Hence, the proof is completed.
\end{proof}

\subsection{Control the error from the approximation of score and energy}

\begin{lemma}
    \label{lem:lem62_zou2021faster_gene}
    Under Assumption~\ref{ass:lips_score}--\ref{ass:second_moment}, we set 
    \begin{equation*}
    \eta= \frac{1}{2}\log \frac{2L+1}{2L}\quad \mathrm{and}\quad G\coloneqq \left\|\grad g(\vzero)\right\|.
    \end{equation*}
    For any $\epsilon\in(0,1)$, we set
    \begin{equation*}
        R\ge \max\left\{8\cdot \sqrt{\frac{\|\grad g(\vzero)\|^2}{L^2}+\frac{d}{L}}, 63\cdot \sqrt{\frac{d}{L}\log\frac{16S}{\epsilon}}\right\},\quad r=3\cdot \sqrt{\tau d\log\frac{8S}{\epsilon}}.
    \end{equation*}
    Suppose it has
    \begin{equation*}
        \frac{\delta}{16}\coloneqq \frac{3\epsilon_{\mathrm{score}}}{2}\cdot \sqrt{\tau d\log\frac{8S}{\epsilon}} + \frac{\tau\epsilon^2_{\mathrm{score}}}{4}+\frac{\tau(3LR+G)\epsilon_{\mathrm{score}}}{2}\le \frac{1}{32}\quad \mathrm{and}\quad \epsilon_{\mathrm{energy}}\le \frac{1}{10},
    \end{equation*}
    we have
    \begin{equation*}
        \left(1-\delta-5\epsilon_{\mathrm{energy}}\right)\cdot \tilde{\gT}_{*, \vz}(\Omega^\prime_{\vz})\le \tilde{\gT}_{\vz}(\Omega^\prime_{\vz})\le \left(1+\delta+5\epsilon_{\mathrm{energy}}\right)\cdot \tilde{\gT}_{*,\vz}(\Omega^\prime_{\vz}).
    \end{equation*}
    for any set $\gA\subseteq \mathcal{B}(0,R)$ and point $\vz\in \mathcal{B}(0,R)$.
\end{lemma}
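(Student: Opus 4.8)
The plan is to compare the implemented transition kernel $\tilde{\gT}_{\vz}$ with the ideal one $\tilde{\gT}_{*,\vz}$ move-by-move, in the spirit of Lemma~6.2 of \cite{zou2021faster}. On the set $\Omega'_{\vz}$ from the statement---which is contained in $\Omega_{\vz}=\mathcal{B}(\vz,r)\cap\mathcal{B}(\vzero,R)$ and does not contain $\vz$, so the laziness and projection atoms at $\vz$ contribute nothing---the kernel reduces to $\tilde{\gT}_{\vz}(\Omega'_{\vz})=\tfrac12\int_{\Omega'_{\vz}}\tilde{a}_{\vz}(\vz')\,\tilde{q}(\vz'\mid\vz)\,\der\vz'$, and likewise $\tilde{\gT}_{*,\vz}(\Omega'_{\vz})=\tfrac12\int_{\Omega'_{\vz}}\tilde{a}_{*,\vz}(\vz')\,\tilde{q}_*(\vz'\mid\vz)\,\der\vz'$. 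It therefore suffices to bound, pointwise on $\Omega'_{\vz}$, the proposal-density ratio $\tilde{q}(\vz'\mid\vz)/\tilde{q}_*(\vz'\mid\vz)$ and the acceptance ratio $\tilde{a}_{\vz}(\vz')/\tilde{a}_{*,\vz}(\vz')$, and then integrate.

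\emph{Proposal densities.} Both $\tilde{q}(\,\cdot\mid\vz)$ and $\tilde{q}_*(\,\cdot\mid\vz)$ are $\mathcal{N}(\,\cdot\,,2\tau\mI)$ Gaussians whose means differ by $\tau\Delta$, where $\Delta:=s_\theta(\vz)-\grad g(\vz)$ and $\|\Delta\|\le\epsilon_{\mathrm{score}}$ by~\ref{e1}. Expanding the log-ratio gives
\[
\log\frac{\tilde{q}(\vz'\mid\vz)}{\tilde{q}_*(\vz'\mid\vz)}=\tfrac12\langle\vz-\vz',\,\Delta\rangle-\tfrac{\tau}{2}\langle\grad g(\vz),\,\Delta\rangle-\tfrac{\tau}{4}\|\Delta\|^2 .
\]
On $\mathcal{B}(\vzero,R)$ one has $\|\grad g(\vz)\|\le 3LR+G$ by the $3L$-smoothness of $g$ (Lemma~\ref{lem:sc_sm_of_g}), and on $\Omega_{\vz}$ one has $\|\vz-\vz'\|\le r=3\sqrt{\tau d\log(8S/\epsilon)}$; bounding the three terms with these facts and $\|\Delta\|\le\epsilon_{\mathrm{score}}$ recovers exactly the three summands defining $\delta/16$, so $\bigl|\log\bigl(\tilde{q}(\vz'\mid\vz)/\tilde{q}_*(\vz'\mid\vz)\bigr)\bigr|\le\delta/16$. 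The same estimate holds after interchanging $\vz$ and $\vz'$, since it used only $\|\vz-\vz'\|\le r$ and that both points lie in $\mathcal{B}(\vzero,R)$, together with the uniform bound~\ref{e1}.

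\emph{Acceptance probabilities and assembly.} The implemented rate $\tilde{a}_{\vz}(\vz')=\min\{1,(\tilde{q}(\vz\mid\vz')/\tilde{q}(\vz'\mid\vz))\,e^{-r_\theta(\vz',\vz)}\}$ differs from $\tilde{a}_{*,\vz}(\vz')$ by replacing the two proposal densities (each off in log by $\le\delta/16$, by the previous step applied in both orderings) and by replacing $g(\vz')-g(\vz)$ with $r_\theta(\vz',\vz)$ (off by $\le\epsilon_{\mathrm{energy}}$ by~\ref{e2}, since the quadratic part of $g$ is computed exactly); hence the arguments inside the two $\min\{1,\cdot\}$'s agree up to the multiplicative factor $e^{\pm(\delta/8+\epsilon_{\mathrm{energy}})}$. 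The elementary fact that $\min\{1,\lambda b\}$ lies between $\lambda\min\{1,b\}$ and $\min\{1,b\}$ for every $b\ge0$ (in the order dictated by $\lambda\gtrless1$) transfers this to $\tilde{a}_{\vz}(\vz')/\tilde{a}_{*,\vz}(\vz')$. Multiplying the two pointwise bounds, the integrand $\tilde{a}_{\vz}(\vz')\tilde{q}(\vz'\mid\vz)$ lies within a factor $e^{\pm(3\delta/16+\epsilon_{\mathrm{energy}})}$ of $\tilde{a}_{*,\vz}(\vz')\tilde{q}_*(\vz'\mid\vz)$; integrating over $\Omega'_{\vz}$ preserves this, giving $\tilde{\gT}_{\vz}(\Omega'_{\vz})\in[e^{-(3\delta/16+\epsilon_{\mathrm{energy}})},e^{3\delta/16+\epsilon_{\mathrm{energy}}}]\cdot\tilde{\gT}_{*,\vz}(\Omega'_{\vz})$. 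Finally, $\delta/16\le\tfrac1{32}$ and $\epsilon_{\mathrm{energy}}\le\tfrac1{10}$ force $x:=3\delta/16+\epsilon_{\mathrm{energy}}\le\tfrac15$, on which range $e^{x}\le 1+\delta+5\epsilon_{\mathrm{energy}}$ and $e^{-x}\ge 1-\delta-5\epsilon_{\mathrm{energy}}$; this is the claim.

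I expect the only delicate point to be the proposal-density step: one must carry out the Gaussian log-ratio algebra so that it lands precisely on the prescribed $\delta/16$ (matching $\tfrac{3\epsilon_{\mathrm{score}}}{2}\sqrt{\tau d\log(8S/\epsilon)}$ to $\tfrac12 r\,\epsilon_{\mathrm{score}}$ and $\tfrac{\tau(3LR+G)\epsilon_{\mathrm{score}}}{2}$ to $\tfrac{\tau\|\grad g(\vz)\|\epsilon_{\mathrm{score}}}{2}$), and one must be careful to invoke the constraints $\vz,\vz'\in\mathcal{B}(\vzero,R)$ and $\|\vz-\vz'\|\le r$ for both orderings of the arguments when handling the acceptance ratio. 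The remaining steps---the $\min\{1,\cdot\}$ comparison and the final exponential estimates---are routine.
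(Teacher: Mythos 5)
Your core computations for sets that avoid the current point are sound and essentially reproduce the paper's Case~1 ($\vz\notin\gA$): the Gaussian log-ratio bound $\bigl|\log(\tilde q(\vz'\mid\vz)/\tilde q_*(\vz'\mid\vz))\bigr|\le\delta/16$ matches the paper's term-by-term accounting, and your $\min\{1,\lambda b\}$ observation is a cleaner substitute for the paper's case analysis of which $\min$ is active. But there is a genuine gap at the very first step: you assert that $\Omega'_{\vz}$ ``does not contain $\vz$, so the laziness and projection atoms at $\vz$ contribute nothing.'' In the paper, $\Omega'_{\vz}=\mathcal{B}(\vz,r')\subseteq\gA$ is a ball \emph{centered at} $\vz$, introduced precisely in the case $\vz\in\gA$; it always contains $\vz$, and the point mass $\bigl(1-\tfrac12\int\tilde a_{*,\vz}\,\tilde Q'_{*,\vz}\bigr)\delta_{\vz}$ from laziness and rejection does contribute to $\tilde{\gT}_{\vz}(\Omega'_{\vz})$. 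That atom cannot be handled by a pointwise density-ratio argument, because its mass is one minus an integral, and closeness of two integrals does not control the ratio of their complements. The paper resolves this by complementation, writing $\tilde{\gT}_{\vz}(\Omega'_{\vz})-\tilde{\gT}_{*,\vz}(\Omega'_{\vz})=\tilde{\gT}_{*,\vz}(\Omega-\Omega'_{\vz})-\tilde{\gT}_{\vz}(\Omega-\Omega'_{\vz})$, applying the atom-free bound to $\Omega-\Omega'_{\vz}$, and then using the $1/2$-laziness lower bound $\tilde{\gT}_{*,\vz}(\Omega'_{\vz})\ge 1/2$; this is exactly where the factor of $2$ arises that turns the Case~1 constant $(\delta+5\epsilon_{\mathrm{energy}})/2$ into the stated $\delta+5\epsilon_{\mathrm{energy}}$. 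Your proof, as written, establishes only the atom-free half of the lemma and never reaches the case the displayed inequality is actually about; you would need to add the complementation-plus-laziness step to close it.
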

\begin{proof}
    Note that the Markov process defined by $\tilde{\gT}_{\vz}(\cdot)$ and $\tilde{\gT}_{*,\vz}(\cdot)$ are $1/2$-lazy.
    We prove the lemma by considering two cases: $\vz\not\in\gA$ and $\vz\in\gA$.
    \paragraph{When $\vz\not\in \gA$,} we have
    \begin{equation*}
        \begin{aligned}
            \tilde{\gT}_{\vz}(\gA) & = \int_{\gA} \tilde{a}_{\vz}(\vz^\prime) \tilde{\gT}^\prime_{\vz}(\der\vz^\prime) = \frac{1}{2}\int_{\gA} \tilde{a}_{\vz}(\vz^\prime) \tilde{Q}^\prime_{\vz}(\der \vz^\prime)\\
            & = \frac{1}{2}\int_{\gA\cap \Omega_{\vz}} \tilde{a}_{\vz}(\vz^\prime) \tilde{Q}_{\vz}(\der \vz^\prime) = \frac{1}{2}\int_{\gA\cap \Omega_{\vz}} \tilde{a}_{\vz}(\vz^\prime) \tilde{q}(\vz^\prime|\vz)\der \vz^\prime.
        \end{aligned}
    \end{equation*}
    Similarly, we have
    \begin{equation*}
        \tilde{\gT}_{*, \vz}(\gA) = \frac{1}{2}\int_{\gA\cap \Omega_{\vz}} \tilde{a}_{*, \vz}(\vz^\prime) \tilde{q}_*(\vz^\prime|\vz)\der \vz^\prime.
    \end{equation*}
    In this condition, we consider
    \begin{equation*}
        \begin{aligned}
            2\tilde{\gT}_{\vz}(\gA)- 2\tilde{\gT}_{*,\vz}(\gA)  = &  - \int_{\gA\cap \Omega_{\vz}} \tilde{a}_{*,\vz}(\vz^\prime)\tilde{q}_*(\vz^\prime|\vz)\der \vz^\prime + \int_{\gA\cap \Omega_{\vz}} \tilde{a}_{*,\vz}(\vz^\prime)\tilde{q}(\vz^\prime|\vz)\der \vz^\prime\\
            & - \int_{\gA\cap \Omega_{\vz}} \tilde{a}_{*,\vz}(\vz^\prime)\tilde{q}(\vz^\prime|\vz)\der \vz^\prime + \int_{\gA\cap \Omega_{\vz}} \tilde{a}_{\vz}(\vz^\prime)\tilde{q}(\vz^\prime|\vz)\der\vz^\prime,
        \end{aligned}
    \end{equation*}
    which means
    \begin{equation}
        \label{eq:case_1_tbc}
        \begin{aligned}
            \frac{\tilde{\gT}_{\vz}(\gA) - \tilde{\gT}_{*,\vz}(\gA)}{\tilde{\gT}_{*,\vz}(\gA)} = & \underbrace{\frac{\int_{\gA\cap \Omega_{\vz}} \tilde{a}_{*,\vz}(\vz^\prime)\cdot\left( \tilde{q}(\vz^\prime|\vz)-\tilde{q}_{*}(\vz^\prime|\vz)\right)\der\vz^\prime}{\int_{\gA\cap \Omega_{\vz}} \tilde{a}_{*,\vz}(\vz^\prime)\tilde{q}_*(\vz^\prime|\vz)\der\vz^\prime}}_{\mathrm{Term\ 1}}\\
            & + \underbrace{\frac{\int_{\gA\cap \Omega_{\vz}} \left( \tilde{a}_{\vz}(\vz^\prime)-\tilde{a}_{*,\vz}(\vz^\prime)\right) \tilde{q}(\vz^\prime|\vz) \der\vz^\prime}{\int_{\gA\cap \Omega_{\vz}} \tilde{a}_{*,\vz}(\vz^\prime)\tilde{q}_{*}(\vz^\prime|\vz)\der\vz^\prime}}_{\mathrm{Term\ 2}}.
        \end{aligned}
    \end{equation}
    First, we try to control Term 1, which can be achieved by investigating $\tilde{q}(\vz^\prime|\vz)/{\tilde{q}_*(\vz^\prime|\vz)}$ as follows.
    \begin{equation*}
        \begin{aligned}
            \frac{\tilde{q}(\vz^\prime|\vz)}{\tilde{q}_*(\vz^\prime|\vz)}=  \exp\left(-\frac{\left\|\vz^\prime - (\vz - \tau\cdot s_\theta(\vz))\right\|^2}{4\tau} + \frac{\left\|\vz^\prime - (\vz - \tau\cdot \grad g(\vz))\right\|^2}{4\tau}\right),
        \end{aligned}
    \end{equation*}
    In this condition, we have
    \begin{equation}
        \label{eq:ula_trans_ker_exp}
        \begin{aligned}
            \frac{\tilde{q}(\vz^\prime|\vz)}{\tilde{q}_*(\vz^\prime|\vz)} = &\exp\left( (4\tau)^{-1}\cdot \left( - \left\|\vz^\prime - \vz\right\|^2  - 2\tau\cdot \left(\vz^\prime - \vz\right)^\top s_{\theta}(\vz) -  \tau^2\cdot \left\|s_{\theta}(\vz)\right\|^2\right.\right.\\
            &\quad\left.\left. +\left\|\vz^\prime - \vz\right\|^2 + 2\tau\cdot \left(\vz^\prime - \vz\right)^\top \grad g(\vz) + \tau^2\cdot\left\|\grad g(\vz)\right\|^2\right)\right)\\
            = & \exp\left(\frac{1}{2}(\vz^\prime - \vz)^\top \left( - s_\theta(\vz) + \grad g(\vz)\right) + \frac{\tau}{4}\left(- \left\|s_\theta(\vz)\right\|^2 + \left\|\grad g(\vz)\right\|^2\right)\right).
        \end{aligned}
    \end{equation}
    It means
    \begin{equation*}
        \begin{aligned}
            \left|\ln \frac{\tilde{q}(\vz^\prime|\vz)}{\tilde{q}_*(\vz^\prime|\vz)}\right| = & \left|\frac{1}{2}(\vz^\prime - \vz)^\top \left( - s_\theta(\vz) + \grad g(\vz)\right) + \frac{\tau}{4}\left( - \left\|s_\theta(\vz)\right\|^2 + \left\|\grad g(\vz)\right\|^2\right)\right|\\
            \le & \frac{1}{2}\left\|\vz^\prime - \vz\right\| \cdot \left\|s_\theta(\vz) - \grad g(\vz)\right\| + \frac{\tau}{4}\cdot \left[\left\|s_\theta(\vz) + \grad g(\vz)\right\| \cdot \left\|s_\theta(\vz) - \grad g(\vz)\right\|\right]\\
            \le &  \frac{1}{2}\left\|\vz^\prime - \vz\right\| \cdot \left\|s_\theta(\vz) - \grad g(\vz)\right\| + \frac{\tau}{4}\cdot \left\|s_\theta(\vz) - \grad g(\vz)\right\|^2 + \frac{\tau}{2}\cdot\left\|\grad g(\vz)\right\|\cdot \left\|s_\theta(\vz) - \grad g(\vz)\right\|\\
            \le & \frac{r\epsilon_{\mathrm{score}}}{2} + \frac{\tau\epsilon^2_{\mathrm{scroe}}}{4} + \frac{\tau(3LR+G)\epsilon_{\mathrm{score}}}{2}
        \end{aligned}
    \end{equation*}
    where the last inequality follows from the fact $\vz^\prime\in  \mathcal{B}(\vz,r)
    \cap \mathcal{B}(\vzero,R)/\{\vz\}$, $\vz\in \gB(\vzero,R)$ and 
    \begin{equation*}
        \left\|\grad g(\vz)\right\|=\left\|\grad g(\vz) - \grad g(\vzero) + \grad g(\vzero)\right\|\le 3L\cdot\left\|\vz\right\| + G\le 3LR+G.
    \end{equation*}
    According to the definition of $R$ and r shown in Lemma~\ref{lem:proj_gap_lem}, we choose
    \begin{equation*}
        \begin{aligned}
            r\coloneqq 3\cdot \sqrt{\tau} \cdot \sqrt{d\log\frac{8S}{\epsilon}} \ge (\sqrt{2}+1)\cdot \sqrt{\tau d} + 2\sqrt{\tau \log \frac{8S}{\epsilon}}  
        \end{aligned}
    \end{equation*}
    Under this condition, we require
    \begin{equation}
        \label{ineq:delta_choice_2}
        \frac{\delta}{16}\coloneqq \frac{3\epsilon_{\mathrm{score}}}{2}\cdot \sqrt{\tau d\log\frac{8S}{\epsilon}} + \frac{\tau\epsilon^2_{\mathrm{score}}}{4}+\frac{\tau(3LR+G)\epsilon_{\mathrm{score}}}{2}\le \frac{1}{32},
    \end{equation}
    then we have
    \begin{equation*}
        \ln\left(1-\frac{\delta}{8}\right)\le \ln \frac{\tilde{q}(\vz^\prime|\vz)}{\tilde{q}_*(\vz^\prime|\vz)}\le \ln\left(1+\frac{\delta}{8}\right) \quad\Leftrightarrow\quad 1-\frac{\delta}{8}<\frac{\tilde{q}(\vz^\prime|\vz)}{\tilde{q}_*(\vz^\prime|\vz)}\le 1+\frac{\delta}{8},
    \end{equation*}
    and 
    \begin{equation}
        \label{ineq:case_1_term_1}
        -\frac{\delta}{8}\le \min_{\vz^\prime\in\gA\cap \Omega_{\vz}}\frac{\tilde{q}(\vz^\prime|\vz)}{\tilde{q}_*(\vz^\prime|\vz)} - 1 \le  \mathrm{Term\ 1} \le  \max_{\vz^\prime\in\gA\cap\Omega_{\vz}}\frac{\tilde{q}(\vz^\prime|\vz)}{\tilde{q}_*(\vz^\prime|\vz)} - 1 \le \frac{\delta}{8}. 
    \end{equation}
    with the definition of $\mathrm{Term\ 1}$ shown in Eq.~\ref{eq:case_1_tbc}.
    
    Then, we try to control $\mathrm{Term\ 2}$ of Eq.~\ref{eq:case_1_tbc} and have
    \begin{equation}
        \label{eq:case_1_term21_refor}
        \frac{\int_{\gA\cap \Omega_{\vz}} \left( \tilde{a}_{\vz}(\vz^\prime)-\tilde{a}_{*,\vz}(\vz^\prime)\right) \tilde{q}(\vz^\prime|\vz) \der\vz^\prime}{\int_{\gA\cap \Omega_{\vz}} \tilde{a}_{*,\vz}(\vz^\prime)\tilde{q}_{*}(\vz^\prime|\vz)\der\vz^\prime} = \frac{\int_{\gA\cap \Omega_{\vz}} \left( \tilde{a}_{\vz}(\vz^\prime)- \tilde{a}_{*,\vz}(\vz^\prime)\right) \tilde{q}(\vz^\prime|\vz) \der\vz^\prime}{\int_{\gA\cap \Omega_{\vz}} \tilde{a}_{*,\vz}(\vz^\prime)\tilde{q}(\vz^\prime|\vz)\der\vz^\prime}\cdot \frac{\int_{\gA\cap \Omega_{\vz}} \tilde{a}_{*,\vz}(\vz^\prime)\tilde{q}(\vz^\prime|\vz)\der\vz^\prime}{\int_{\gA\cap \Omega_{\vz}} \tilde{a}_{*,\vz}(\vz^\prime)\tilde{q}_{*}(\vz^\prime|\vz)\der\vz^\prime}.
    \end{equation}
    According to Eq.~\ref{ineq:case_1_term_1}, it has 
    \begin{equation}
        \label{ineq:case_1_final_t_bound}
        1-\frac{\delta}{8}\le \min_{\vz^\prime\in\gA\cap \Omega_{\vz}}\frac{\tilde{q}(\vz^\prime|\vz)}{\tilde{q}_*(\vz^\prime|\vz)} \le \frac{\int_{\gA\cap \Omega_{\vz}} \tilde{a}_{*,\vz}(\vz^\prime)\tilde{q}(\vz^\prime|\vz)\der\vz^\prime}{\int_{\gA\cap \Omega_{\vz}} \tilde{a}_{*,\vz}(\vz^\prime)\tilde{q}_{*}(\vz^\prime|\vz)\der\vz^\prime} \le \max_{\vz^\prime\in\gA\cap \Omega_{\vz}}\frac{\tilde{q}(\vz^\prime|\vz)}{\tilde{q}_*(\vz^\prime|\vz)}\le 1+\frac{\delta}{8},
    \end{equation}
    then we can upper and lower bounding $\mathrm{Term\ 2}$ by investigating $\tilde{a}_{\vz}(\vz^\prime)/\tilde{a}_{*,\vz}(\vz^\prime)$ as follows
    \begin{equation*}
        \begin{aligned}
            \tilde{a}_{*,\vz}(\vz^\prime)=&\min\left\{1, \exp\left(-(g(\vz^\prime)-g(\vz))\right)\cdot \frac{\tilde{q}_*(\vz|\vz^\prime)}{\tilde{q}_*(\vz^\prime|\vz)} \right\},\\
            \tilde{a}_{\vz}(\vz^\prime)=&\min\left\{1, \exp\left(-r_\theta(\vz^\prime,\vz)\right)\cdot \frac{\tilde{q}(\vz|\vz^\prime)}{\tilde{q}(\vz^\prime|\vz)}  \right\}.
        \end{aligned}
    \end{equation*}
    In this condition, for any $0<\delta\le 1$, we first consider two cases.
    When 
    \begin{equation*}
        \tilde{a}_{*,\vz}(\vz^\prime)= 1\le \exp\left(-(g(\vz^\prime)-g(\vz))\right)\cdot \frac{\tilde{q}_*(\vz|\vz^\prime)}{\tilde{q}_*(\vz^\prime|\vz)}  \quad \mathrm{and}\quad \tilde{a}_{\vz}(\vz^\prime)= \exp\left(-r_\theta(\vz^\prime,\vz)\right)\cdot \frac{\tilde{q}(\vz|\vz^\prime)}{\tilde{q}(\vz^\prime|\vz)}\le 1,
    \end{equation*}
    we have 
    \begin{equation}
        \label{ineq:case_1_a_div_upb}
        \underbrace{\frac{\exp\left(-r_\theta(\vz^\prime,\vz)\right)\cdot \frac{\tilde{q}(\vz|\vz^\prime)}{\tilde{q}(\vz^\prime|\vz)}}{\exp\left(-(g(\vz^\prime)-g(\vz))\right)\cdot \frac{\tilde{q}_*(\vz|\vz^\prime)}{\tilde{q}_*(\vz^\prime|\vz)}}}_{\mathrm{Term\ 2.1}} \le \frac{\tilde{a}_{\vz}(\vz^\prime)}{\tilde{a}_{*,\vz}(\vz^\prime)} = \exp\left(-r_\theta(\vz^\prime,\vz)\right)\cdot \frac{\tilde{q}(\vz|\vz^\prime)}{\tilde{q}(\vz^\prime|\vz)} \le 1.
    \end{equation}
    Besides, when 
    \begin{equation*}
        \tilde{a}_{*,\vz}(\vz^\prime)= \exp\left(-(g(\vz^\prime)-g(\vz))\right)\cdot \frac{\tilde{q}_*(\vz|\vz^\prime)}{\tilde{q}_*(\vz^\prime|\vz)} \le 1  \quad \mathrm{and}\quad \tilde{a}_{\vz}(\vz^\prime)=1\le \exp\left(-r_\theta(\vz^\prime,\vz)\right)\cdot \frac{\tilde{q}(\vz|\vz^\prime)}{\tilde{q}(\vz^\prime|\vz)},
    \end{equation*}
    we have
    \begin{equation}
        \label{ineq:case_1_a_div_lpb}
        1\le \frac{\tilde{a}_{\vz}(\vz^\prime)}{\tilde{a}_{*,\vz}(\vz^\prime)} =  \frac{1}{\exp\left(-(g(\vz^\prime)-g(\vz))\right)\cdot \frac{\tilde{q}_*(\vz|\vz^\prime)}{\tilde{q}_*(\vz^\prime|\vz)}}\le \underbrace{\frac{\exp\left(-r_\theta(\vz^\prime,\vz)\right)\cdot \frac{\tilde{q}(\vz|\vz^\prime)}{\tilde{q}(\vz^\prime|\vz)}}{\exp\left(-(g(\vz^\prime)-g(\vz))\right)\cdot \frac{\tilde{q}_*(\vz|\vz^\prime)}{\tilde{q}_*(\vz^\prime|\vz)}}}_{\mathrm{Term\ 2.1}}.
    \end{equation}
    Then, we start to consider finding the range of $\ln(\mathrm{Term\ 2.1})$ as follows
    \begin{equation}
        \label{ineq:case_1_upb_term2.1}
        \begin{aligned}
            \left|\ln\left(\mathrm{Term\ 2.1}\right)\right| = & \left|\left(- r_{\theta}(\vz^\prime,\vz) + (g(\vz^\prime)-g(\vz))\right) + \ln \frac{\tilde{q}_{*}(\vz^\prime|\vz)}{\tilde{q}(\vz^\prime|\vz)} + \ln \frac{\tilde{q}(\vz|\vz^\prime)}{\tilde{q}_{*}(\vz|\vz^\prime)}\right|\\
            \le & \epsilon_{\mathrm{energy}} + \left|\ln \frac{\tilde{q}_{*}(\vz^\prime|\vz)}{\tilde{q}(\vz^\prime|\vz)}\right| + \left| \ln \frac{\tilde{q}(\vz|\vz^\prime)}{\tilde{q}_{*}(\vz|\vz^\prime)}\right|\le \epsilon_{\mathrm{energy}}+\frac{\delta}{16} +\left|\ln \frac{\tilde{q}(\vz|\vz^\prime)}{\tilde{q}_{*}(\vz|\vz^\prime)}\right|,
        \end{aligned}
    \end{equation}
    where the last inequality follows from Eq.~\ref{ineq:delta_choice_2}.
    Besides, similar to Eq.~\ref{eq:ula_trans_ker_exp}, we have
    \begin{equation*}
        \begin{aligned}
            \frac{\tilde{q}(\vz|\vz^\prime)}{\tilde{q}_{*}(\vz|\vz^\prime)} = & \exp\left((4\tau)^{-1}\cdot \left(-\left\|\vz-\vz^\prime\right\|^2 - 2\tau\cdot (\vz-\vz^\prime)^\top s_\theta(\vz^\prime) - \tau^2\cdot \left\|s_\theta(\vz^\prime)\right\|^2\right.\right.\\
            &\quad \left.\left. +\left\|\vz-\vz^\prime\right\|^2 + 2\tau\cdot (\vz-\vz^\prime)^\top\grad g(\vz^\prime) + \tau^2\cdot \left\|\grad g(\vz^\prime)\right\|^2 \right)\right),
        \end{aligned}
    \end{equation*}
    which means
    \begin{equation*}
        \begin{aligned}
            \left|\ln \frac{\tilde{q}(\vz|\vz^\prime)}{\tilde{q}_{*}(\vz|\vz^\prime)}\right| = & \left|\frac{1}{2}(\vz-\vz^\prime)^\top( - s_{\theta}(\vz^\prime) + \grad g(\vz^\prime)) + \frac{\tau}{4}\left(-\left\|s_\theta(\vz^\prime)\right\|^2 + \left\|\grad g(\vz^\prime)\right\|^2\right)\right|\\
            \le &  \frac{1}{2}\left\|\vz-\vz^\prime\right\|\cdot \left\|s_\theta(\vz^\prime) - \grad g(\vz^\prime)\right\|+\frac{\tau}{4}\cdot \left\|s_\theta(\vz^\prime)+\grad g(\vz^\prime)\right\|\cdot \left\|s_\theta(\vz^\prime) - \grad g(\vz^\prime)\right\|\\
            \le & \frac{1}{2}\left\|\vz-\vz^\prime\right\|\cdot \left\|s_\theta(\vz^\prime) - \grad g(\vz^\prime)\right\|+ \frac{\tau}{4}\cdot\left\|s_\theta(\vz^\prime)-\grad g(\vz^\prime)\right\|^2 + \frac{\tau}{2}\left\|\grad g(\vz^\prime)\right\|\cdot \left\|s_\theta(\vz^\prime) - \grad g(\vz^\prime)\right\|\\
            \le & \frac{r\epsilon_{\mathrm{score}}}{2} + \frac{\tau\epsilon^2_{\mathrm{score}}}{4}+ \frac{\tau(3LR+G)\epsilon_{\mathrm{score}}}{2},
        \end{aligned}
    \end{equation*}
    where the last inequality follows from the fact $\vz^\prime\in  \mathcal{B}(\vz,r)
    \cap \mathcal{B}(\vzero,R)/\{\vz\}$ and 
    \begin{equation*}
        \left\|\grad g(\vz^\prime)\right\|=\left\|\grad g(\vz^\prime) - \grad g(\vzero) + \grad g(\vzero)\right\|\le 3L\cdot\left\|\vz^\prime\right\| + G\le 3LR+G.
    \end{equation*}
    Combining this result with Eq.~\ref{ineq:delta_choice_2}, we have
    \begin{equation*}
        \begin{aligned}
              \left|\ln \frac{\tilde{q}(\vz|\vz^\prime)}{\tilde{q}_{*}(\vz|\vz^\prime)}\right| \le \frac{\delta}{16}\quad\Leftrightarrow\quad \frac{\delta}{16}= \frac{3\epsilon_{\mathrm{score}}}{2}\cdot \sqrt{\tau d\log\frac{8S}{\epsilon}} + \frac{\tau\epsilon^2_{\mathrm{score}}}{4}+\frac{\tau(3LR+G)\epsilon_{\mathrm{score}}}{2}.
        \end{aligned}
    \end{equation*}
    Plugging this result into Eq.~\ref{ineq:case_1_upb_term2.1}, it has
    \begin{equation*}
        \left|\ln\left(\mathrm{Term\ 2.1}\right)\right|\le \frac{\delta}{8}+\epsilon_{\mathrm{energy}}.
    \end{equation*}
    By requiring $\epsilon_{\mathrm{energy}}\le 0.1$, we have
    \begin{equation*}
        \begin{aligned}
            & \ln\left(1-\frac{\delta}{4}-2\epsilon_{\mathrm{energy}}\right)\le \ln\left(\mathrm{Term\ 2.1}\right)\le \ln\left(1+\frac{\delta}{4}+2\epsilon_{\mathrm{energy}}\right)\\
            & \Leftrightarrow\quad 1-\frac{\delta}{4}-2\epsilon_{\mathrm{energy}}\le \mathrm{Term\ 2.1}\le 1+\frac{\delta}{4}+2\epsilon_{\mathrm{energy}}.
        \end{aligned}
    \end{equation*}
    Combining this result with Eq.~\ref{ineq:case_1_a_div_upb} and Eq.~\ref{ineq:case_1_a_div_lpb}, we have
    \begin{equation*}
       1-\frac{\delta}{4}-2\epsilon_{\mathrm{energy}}\le \frac{a_{\vz}(\vz^\prime)}{a_{*,\vz}(\vz^\prime)}\le 1+\frac{\delta}{4}+2\epsilon_{\mathrm{energy}},
    \end{equation*}
    which implies
    \begin{equation}
        \label{ineq:case_1_final_a_bound}
        \begin{aligned}
            & \frac{\int_{\gA\cap \Omega_{\vz}} \left( \tilde{a}_{\vz}(\vz^\prime)-\tilde{a}_{*,\vz}(\vz^\prime)\right) \tilde{q}(\vz^\prime|\vz) \der\vz^\prime}{\int_{\gA\cap \Omega_{\vz}} \tilde{a}_{*,\vz}(\vz^\prime)\tilde{q}(\vz^\prime|\vz)\der\vz^\prime}\ge \min_{z^\prime\in \gA}\frac{\tilde{a}_{\vz}(\vz^\prime)}{\tilde{a}_{*,\vz}(\vz^\prime)}-1 \ge -\frac{\delta}{4}-2\epsilon_{\mathrm{energy}}\\
            & \frac{\int_{\gA\cap \Omega_{\vz}} \left( \tilde{a}_{\vz}(\vz^\prime)-\tilde{a}_{*,\vz}(\vz^\prime)\right) \tilde{q}(\vz^\prime|\vz) \der\vz^\prime}{\int_{\gA\cap \Omega_{\vz}} \tilde{a}_{*,\vz}(\vz^\prime)\tilde{q}(\vz^\prime|\vz)\der\vz^\prime} \le \max_{z^\prime\in \gA}\frac{\tilde{a}_{\vz}(\vz^\prime)}{\tilde{a}_{*,\vz}(\vz^\prime)}-1 \le \frac{\delta}{4}+2\epsilon_{\mathrm{energy}}.
        \end{aligned}
    \end{equation}
    Plugging Eq.~\ref{ineq:case_1_final_a_bound} and Eq.~\ref{ineq:case_1_final_t_bound} into Eq.~\ref{eq:case_1_term21_refor}, we have
    \begin{equation}
        \label{ineq:case_1_term_2}
        -\frac{\delta}{3}-\frac{5\epsilon_{\mathrm{energy}}}{2} \le \left(-\frac{\delta}{4}-2\epsilon_{\mathrm{energy}}\right)\cdot \left(1+\frac{\delta}{8}\right)\le \mathrm{Term\ 2}\le \left(\frac{\delta}{4}+2\epsilon_{\mathrm{energy}}\right)\cdot \left(1+\frac{\delta}{8}\right) \le \frac{\delta}{3}+\frac{5\epsilon_{\mathrm{energy}}}{2}.
    \end{equation}
    In this condition, combining Eq.~\ref{ineq:case_1_term_2}, Eq.~\ref{ineq:case_1_term_1} with Eq.~\ref{eq:case_1_tbc}, we have
    \begin{equation}
        \label{ineq:case_1_ulb}
        \begin{aligned}
            & -\frac{\delta+5\epsilon_{\mathrm{energy}}}{2} \le \frac{\tilde{\gT}_{\vz}(\gA) - \tilde{\gT}_{*,\vz}(\gA)}{\tilde{\gT}_{*,\vz}(\gA)} \le \frac{\delta+5\epsilon_{\mathrm{energy}}}{2}\\ & \Leftrightarrow\quad \left(1-\frac{\delta+5\epsilon_{\mathrm{energy}}}{2}\right)\cdot \tilde{\gT}_{*, \vz}(\gA)\le \tilde{\gT}_{\vz}(\gA)\le \left(1+\frac{\delta+5\epsilon_{\mathrm{energy}}}{2}\right)\cdot \tilde{\gT}_{*,\vz}(\gA).
        \end{aligned}
    \end{equation}
    Hence, we complete the proof for $\vz\not\in \gA$.
    \paragraph{When $\vz\in \gA$,} suppose there exist some $r^\prime$ satisfying
    \begin{equation*}
        \Omega_{\vz}^\prime \coloneqq \mathcal{B}(\vz, r^\prime)\subseteq \gA.
    \end{equation*}
    We can split $\gA$ into $\gA-\Omega^\prime_{\vz}$ and $\Omega^\prime_{\vz}$ .
    Note that by our results in the first case, we have
    \begin{equation*}
        \left(1-\frac{\delta+5\epsilon_{\mathrm{energy}}}{2}\right)\cdot \tilde{\gT}_{*, \vz}(\gA-\Omega^\prime_{\vz})\le \tilde{\gT}_{\vz}(\gA-\Omega^\prime_{\vz})\le \left(1+\frac{\delta+5\epsilon_{\mathrm{energy}}}{2}\right)\cdot \tilde{\gT}_{*,\vz}(\gA-\Omega^\prime_{\vz}).
    \end{equation*}
    Then for the set $\Omega^\prime_{\vz}$, we have
    \begin{equation*}
        \begin{aligned}
            & \left|\frac{\tilde{\gT}_{\vz}(\Omega^\prime_{\vz})- \tilde{\gT}_{*, \vz}(\Omega^\prime_{\vz})}{\tilde{\gT}_{*, \vz}(\Omega^\prime_{\vz})}\right| = \left|\frac{\left(1 - \tilde{\gT}_{\vz}(\Omega - \Omega^\prime_{\vz})\right) - \left(1- \tilde{\gT}_{*,\vz}(\Omega - \Omega^\prime_{\vz})\right)}{\tilde{\gT}_{*, \vz}(\Omega^\prime_{\vz})}\right|\\
            & = \left|\frac{\tilde{\gT}_{*,\vz}(\Omega - \Omega^\prime_{\vz}) - \tilde{\gT}_{\vz}(\Omega - \Omega^\prime_{\vz})}{\tilde{\gT}_{*, \vz}(\Omega^\prime_{\vz})}\right|
            \le \left|\frac{\tilde{\gT}_{*,\vz}(\Omega - \Omega^\prime_{\vz}) - \tilde{\gT}_{\vz}(\Omega - \Omega^\prime_{\vz})}{\tilde{\gT}_{*, \vz}(\Omega - \Omega^\prime_{\vz})}\right| \cdot \left|\frac{\tilde{\gT}_{*, \vz}(\Omega - \Omega^\prime_{\vz})}{\tilde{\gT}_{*, \vz}(\Omega^\prime_{\vz})}\right|\\
            & \le \frac{\delta+5\epsilon_{\mathrm{energy}}}{2}\cdot 2 = \delta+5\epsilon_{\mathrm{energy}},
        \end{aligned}
    \end{equation*}
    where the last inequality follows from Eq.~\ref{ineq:case_1_ulb} and the property of $1/2$ lazy, i.e.,
    \begin{equation*}
        \tilde{\gT}_{*, \vz}(\Omega - \Omega^\prime_{\vz}) \le 1\quad \mathrm{and}\quad \tilde{\gT}_{*, \vz}(\Omega^\prime_{\vz})\ge \frac{1}{2}.
    \end{equation*}
    In this condition, we have
    \begin{equation*}
        \left(1-\delta-5\epsilon_{\mathrm{energy}}\right)\cdot \tilde{\gT}_{*, \vz}(\Omega^\prime_{\vz})\le \tilde{\gT}_{\vz}(\Omega^\prime_{\vz})\le \left(1+\delta+5\epsilon_{\mathrm{energy}}\right)\cdot \tilde{\gT}_{*,\vz}(\Omega^\prime_{\vz}).
    \end{equation*}
    Hence, we complete the proof for $\vz\in \gA$.
\end{proof}
\begin{corollary}
    \label{lem:lem62_zou2021faster}
    Under the same conditions as shown in Lemma~\ref{lem:lem62_zou2021faster_gene}, if we require
    \begin{equation*}
        \epsilon_{\mathrm{energy}}\le \delta/5,
    \end{equation*}
    then we have 
    \begin{equation*}
    \left(1-2\delta\right)\cdot \tilde{\gT}_{*, \vz}(\gA)\le \tilde{\gT}_{\vz}(\gA)\le \left(1+2\delta\right)\cdot \tilde{\gT}_{*,\vz}(\gA),
    \end{equation*}
    for any set $\gA\subseteq \mathcal{B}(0,R)$ and point $\vz\in \mathcal{B}(0,R)$.
\end{corollary}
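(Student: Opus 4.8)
The plan is to read this off directly from Lemma~\ref{lem:lem62_zou2021faster_gene}: all the genuine analytic work has already been done there, namely bounding the ratio of proposal densities $\tilde q(\vz'|\vz)/\tilde q_*(\vz'|\vz)$ (Eq.~\ref{eq:ula_trans_ker_exp}), the ratio of acceptance probabilities $\tilde a_{\vz}(\vz')/\tilde a_{*,\vz}(\vz')$ via the two auxiliary cases for Term~2.1, and the bookkeeping over the projection region $\Omega_{\vz}$ and the two cases $\vz\notin\gA$, $\vz\in\gA$. So, under the choices of $\eta$, $R$, $r$ fixed in Lemma~\ref{lem:lem62_zou2021faster_gene} together with its smallness hypotheses $\delta/16\le 1/32$ and $\epsilon_{\mathrm{energy}}\le 1/10$, that lemma supplies, for every $\vz\in\mathcal{B}(0,R)$ and every measurable $\gA\subseteq\mathcal{B}(0,R)$,
\begin{equation*}
    \bigl(1-\delta-5\epsilon_{\mathrm{energy}}\bigr)\,\tilde{\gT}_{*,\vz}(\gA)\le \tilde{\gT}_{\vz}(\gA)\le \bigl(1+\delta+5\epsilon_{\mathrm{energy}}\bigr)\,\tilde{\gT}_{*,\vz}(\gA);
\end{equation*}
the passage from the displayed conclusion there (phrased through the ball $\Omega'_{\vz}$) to an arbitrary $\gA$ is exactly the $\vz\in\gA$ branch of that proof: decompose $\gA=(\gA\setminus\Omega'_{\vz})\sqcup\Omega'_{\vz}$ with $\Omega'_{\vz}=\mathcal{B}(\vz,r')\subseteq\gA$, apply the first-case estimate to $\gA\setminus\Omega'_{\vz}$ (legitimate since $\vz\notin\gA\setminus\Omega'_{\vz}$) and the already-established bound to $\Omega'_{\vz}$, and add the two pieces.

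The remaining step is purely arithmetic. The new hypothesis $\epsilon_{\mathrm{energy}}\le\delta/5$ gives $5\epsilon_{\mathrm{energy}}\le\delta$, hence $\delta+5\epsilon_{\mathrm{energy}}\le 2\delta$, and substituting into the two-sided bound above yields
\begin{equation*}
    (1-2\delta)\,\tilde{\gT}_{*,\vz}(\gA)\le \tilde{\gT}_{\vz}(\gA)\le (1+2\delta)\,\tilde{\gT}_{*,\vz}(\gA),
\end{equation*}
which is the assertion. I would also record that the extra constraint is compatible with the hypotheses of Lemma~\ref{lem:lem62_zou2021faster_gene}: that lemma already forces $\delta\le 1/2$ through $\delta/16\le 1/32$, so $\epsilon_{\mathrm{energy}}\le\delta/5\le 1/10$ automatically holds, and the requirement $\epsilon_{\mathrm{energy}}\le 1/10$ is thus subsumed rather than competing with it.

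There is no real obstacle here; the corollary is a cosmetic repackaging whose only job is to trade the asymmetric factors $1\pm(\delta+5\epsilon_{\mathrm{energy}})$ for the clean symmetric multiplicative distortion $1\pm 2\delta$, which is the form the downstream $s$-conductance and mixing estimates for projected MALA will want to ingest. If I wanted to be scrupulous about a single point, it would be the density-plus-atom structure of $\tilde{\gT}_{\vz}$ (an atom at $\vz$ from the lazy and rejection parts, and absolute continuity on $\Omega_{\vz}\setminus\{\vz\}$), which is what makes the decomposition of an arbitrary $\gA$ used above well posed; but this is already implicit in the construction of the transition kernels and in the proof of Lemma~\ref{lem:lem62_zou2021faster_gene}, so nothing new needs to be argued.
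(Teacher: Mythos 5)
Your proposal is correct and matches the paper's (implicit) treatment: the corollary is stated without proof precisely because it follows from Lemma~\ref{lem:lem62_zou2021faster_gene} by the one-line substitution $\epsilon_{\mathrm{energy}}\le\delta/5\Rightarrow\delta+5\epsilon_{\mathrm{energy}}\le 2\delta$, which is exactly what you do. Your side remarks --- that the extra hypothesis subsumes $\epsilon_{\mathrm{energy}}\le 1/10$ via $\delta\le 1/2$, and that the general-$\gA$ form comes from the $\vz\in\gA$ branch of the lemma's proof --- are accurate and add nothing that needs further justification.
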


\subsection{Control the error from Inner MALA to its stationary}

In this section, we denote the ideally projected implementation of Alg.~\ref{alg:inner_mala} whose Markov process, transition kernel, and particles' underlying distributions are denoted as $\{\tilde{\rvz}_{*,s}\}_{s=0}^S$, Eq.~\ref{def:ideal_proj_rtk}, and $\tilde{\mu}_{*,s}$ respectively.
According to~\cite{zou2021faster}, we know the stationary distribution of the time-reversible process $\{\tilde{\rvz}_{*,s}\}_{s=0}^S$ is 
\begin{equation}
    \label{def:sta_tilde_mu_*}
    \tilde{\mu}_* (\der\vz) = \left\{
        \begin{aligned}
            & \frac{e^{-g(\vz)}}{\int_\Omega e^{-g(\vz^\prime)}\der\vz^\prime}\der\vz && \quad \vx\in\Omega;\\
            & 0 &&\quad \text{otherwise}.
        \end{aligned}
    \right.
\end{equation}
Here, we denote $\Omega = \mathcal{B}(\vzero,R)$ and 
\begin{equation*}
    \Omega_{\vz} = \mathcal{B}(\vzero, R)\cap \mathcal{B}(\vz, r).
\end{equation*}
In the following analysis, we default 
\begin{equation*}
    \eta= \frac{1}{2}\log \frac{2L+1}{2L}.
\end{equation*}
Under this condition, the smoothness of $g$ is $3L$ and the strong convexity constant is $L$.

we aim to build the connection between the underlying distribution of the output particles obtained by projected Alg~\ref{alg:inner_mala}, i.e., $\tilde{\mu}_S$, and the stationary distribution $\tilde{\mu}_*$ though the process $\{\tilde{\rvz}_{*,s}\}_{s=0}^S$.
Since the ideally projected implementation of Alg.~\ref{alg:inner_mala} is similar to standard MALA except for the projection, we prove its convergence through its conductance properties, which can be deduced by the Cheeger isoperimetric inequality of $\tilde{\mu}_*$.

Under these conditions, we organize this subsection in the following three steps:
\begin{enumerate}
    \item Find the Cheeger isoperimetric inequality of $\tilde{\mu}_*$.
    \item Find the conductance properties of $\tilde{\gT}_*$.
    \item Build the connection between $\tilde{\mu}_S$ and $\tilde{\mu}_*$ through the process $\{\tilde{\rvz}_{*,s}\}_{s=0}^S$.
\end{enumerate}

\subsubsection{The Cheeger isoperimetric inequality of $\tilde{\mu}_*$}

\begin{definition}[Definition 2.5.9 in~\cite{sinho2024logconcave}]
    A probability measure $\mu$ defined on a Polish space $(\gX,\mathrm{dis})$ satisfies a Cheeger isoperimetric inequality with constant $\rho>0$ if for all Borel set 
    $A\subseteq \gX$, it has
    \begin{equation*}
        \lim\inf_{\epsilon \rightarrow 0} \frac{\mu(A^{\epsilon})-\mu(A)}{\epsilon} \ge \frac{1}{\rho} \mu(A)\mu(A^c).
    \end{equation*}
\end{definition}

\begin{lemma}[Theorem 2.5.14 in~\cite{sinho2024logconcave}]
    Let $\mu\in\gP_1(\gX)$ and let $\mathrm{Ch}> 0$. The following are equivalent.
    \begin{enumerate}
        \item $\mu$ satisfies a Cheeger isoperimetric inequality with constant $\mathrm{Ch}$.
        \item For all Lipschitz $f\colon \gX \rightarrow \R$, it holds that
        \begin{equation}
            \label{ineq:cheeger_inequality}
            \E_{\mu}\left|f- \E_\mu f\right|\le 2\rho\cdot \E_{\mu}\left\|\grad f\right\|
        \end{equation}
    \end{enumerate}
\end{lemma}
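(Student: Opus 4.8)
The plan is to establish the two implications separately. The only ingredient beyond the definitions is the co-area inequality: for every Lipschitz $f\colon\gX\to\R$,
\begin{equation*}
  \int_{\R}\mathrm{Per}_\mu\big(\{f>t\}\big)\,\der t\le \E_\mu\|\nabla f\|,\qquad \mathrm{Per}_\mu(A):=\liminf_{\eps\to0}\frac{\mu(A^\eps)-\mu(A)}{\eps},
\end{equation*}
where $\mathrm{Per}_\mu$ is exactly the Minkowski-type perimeter appearing in the definition of the Cheeger inequality. I would cite this from \cite{sinho2024logconcave} rather than reprove it (it follows from Rademacher's theorem and a localized layer-cake argument plus Fatou's lemma; the naive bound using the global Lipschitz constant is too lossy).

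\textbf{$(1)\Rightarrow(2)$.} Fix a Lipschitz $f$; since $\mu\in\gP_1(\gX)$ we have $f\in L^1(\mu)$, so $\E_\mu f$ is finite. Let $X,Y$ be i.i.d.\ $\sim\mu$. Jensen applied to the inner expectation gives $\E_\mu|f-\E_\mu f|=\E_X|\E_Y[f(X)-f(Y)]|\le\E_X\E_Y|f(X)-f(Y)|$, and the layer-cake identity gives $\E_X\E_Y|f(X)-f(Y)|=2\int_{\R}\mu(\{f>t\})\,\mu(\{f\le t\})\,\der t$. Applying the Cheeger inequality (1) to each superlevel set, $\mu(\{f>t\})\mu(\{f\le t\})\le\rho\,\mathrm{Per}_\mu(\{f>t\})$, and then the co-area inequality yields
\begin{equation*}
  \E_\mu|f-\E_\mu f|\ \le\ 2\int_{\R}\mu(\{f>t\})\mu(\{f\le t\})\,\der t\ \le\ 2\rho\int_{\R}\mathrm{Per}_\mu(\{f>t\})\,\der t\ \le\ 2\rho\,\E_\mu\|\nabla f\|,
\end{equation*}
which is precisely \eqref{ineq:cheeger_inequality}.

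\textbf{$(2)\Rightarrow(1)$.} Fix a Borel set $A$; the cases $\mu(A)\in\{0,1\}$ are trivial, and if $\mu(\partial A)>0$ then $\mu(A^\eps)-\mu(A)$ stays bounded below by a positive constant as $\eps\to0$, so the left-hand side of the Cheeger inequality is $+\infty$ and there is nothing to prove; hence assume $0<\mu(A)<1$ and $\mu(\partial A)=0$. For $\eps>0$ put $f_\eps(x):=\big(1-\mathrm{dist}(x,A)/\eps\big)_+$: this is $\eps^{-1}$-Lipschitz, takes values in $[0,1]$, equals $1$ on $A$ and $0$ off $A^\eps$, so $\|\nabla f_\eps\|\le\eps^{-1}\mathbf 1_{A^\eps\setminus A}$ a.e., giving $\E_\mu\|\nabla f_\eps\|\le\eps^{-1}(\mu(A^\eps)-\mu(A))$. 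On the other hand, since $g:=f_\eps-\E_\mu f_\eps$ has mean zero and satisfies $g\ge 1-\E_\mu f_\eps\ge0$ on $A$, we get $\E_\mu|g|=2\E_\mu g_+\ge 2(1-\E_\mu f_\eps)\mu(A)$. Plugging both estimates into \eqref{ineq:cheeger_inequality} gives $2(1-\E_\mu f_\eps)\mu(A)\le 2\rho\,(\mu(A^\eps)-\mu(A))/\eps$. As $\eps\downarrow0$ the functions $f_\eps$ decrease to $\mathbf 1_{\overline A}$, hence $\E_\mu f_\eps\to\mu(\overline A)=\mu(A)$ (using $\mu(\partial A)=0$); taking $\liminf$ on the right yields $\mu(A)\mu(A^c)\le\rho\,\liminf_{\eps\to0}(\mu(A^\eps)-\mu(A))/\eps$, i.e.\ statement (1).

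\textbf{Main obstacle.} The bookkeeping above (Jensen, layer-cake, the cutoff $f_\eps$, the trivial treatment of $\mu(\partial A)>0$) is elementary. The genuine difficulty is the co-area inequality $\int_{\R}\mathrm{Per}_\mu(\{f>t\})\,\der t\le\E_\mu\|\nabla f\|$, whose proof must handle the $\liminf$-defined perimeter with care and invoke Fatou's lemma on a family of difference quotients; this is the step I would import verbatim from \cite{sinho2024logconcave} or isolate as a preliminary lemma rather than expand inline.
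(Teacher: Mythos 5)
The paper does not actually prove this lemma: it is imported verbatim as Theorem 2.5.14 of \cite{sinho2024logconcave}, so there is no in-paper argument to compare against. Your reconstruction is the standard Bobkov--Houdr\'e/Maz'ya--Cheeger proof of the equivalence between the isoperimetric (Minkowski-content) form and the $L^1$-Poincar\'e form, and both directions are structurally correct: the chain Jensen $\to$ layer-cake $\to$ Cheeger on superlevel sets $\to$ co-area for $(1)\Rightarrow(2)$, and the cutoff $f_\eps=(1-\mathrm{dist}(\cdot,A)/\eps)_+$ for $(2)\Rightarrow(1)$, with the constants matching the paper's normalization ($\mathrm{Per}_\mu(A)\ge\rho^{-1}\mu(A)\mu(A^c)$ versus the factor $2\rho$ in the functional inequality). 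Deferring the co-area inequality to the reference is reasonable; it is the one genuinely analytic ingredient and you correctly identify it as such.

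One point in $(2)\Rightarrow(1)$ needs repair. You dispose of the case $\mu(\partial A)>0$ by claiming $\mu(A^\eps)-\mu(A)$ is then bounded below by a positive constant, but that is false: if $A$ is closed (e.g.\ a fat Cantor set), then $\partial A\subseteq A$, $A^\eps\downarrow \overline A=A$, and $\mu(A^\eps)-\mu(A)\to 0$ even though $\mu(\partial A)>0$. The correct dichotomy is on $\mu(\overline A\setminus A)$: if $\mu(\overline A\setminus A)>0$ then indeed $\mu(A^\eps)-\mu(A)\ge\mu(\overline A)-\mu(A)>0$ for all $\eps$ and the perimeter is $+\infty$, so (1) is vacuous; otherwise $\mu(\overline A)=\mu(A)$, which is exactly what you need at the end to conclude $\E_\mu f_\eps\to\mu(A)$ (your invocation of ``$\mu(\partial A)=0$'' there should likewise be ``$\mu(\overline A\setminus A)=0$''). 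With that substitution the argument closes. A second, more cosmetic caveat: the pointwise bound $\|\grad f_\eps\|\le\eps^{-1}\mathbf 1_{A^\eps\setminus A}$ uses that a Lipschitz function has vanishing gradient a.e.\ on its level sets $\{f_\eps=0\}$ and $\{f_\eps=1\}$, together with absolute continuity of $\mu$; this holds in the setting where the lemma is applied (the truncated log-concave measure $\tilde\mu_*$) but is worth stating, since on a general Polish space ``$\grad$'' must anyway be read as the local Lipschitz slope.
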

\begin{remark}
    For a general non-log-concave distribution, a tight bound on the Cheeger constant can hardly be provided. 
    However, considering the Cheeger isoperimetric inequality is stronger than the Poincar\'e inequality,~\cite{buser1982note} lower bound the Cheeger constant $\rho$ with $\Omega(d^{1/2}c_{P})$ where $c_{P}$ is the Poincar\'e constant of $\tilde{\mu}_*$.
    The lower bound of $c_{P}$ can be generally obtained by the Bakry-Emery criterion and achieve $\exp(-\tilde{\mathcal{O}}(d))$.
    While for target distributions with better properties, $\rho$ can usually be much better.
    When the target distribution is a mixture of strongly log-concave distributions, the lower bound of $\rho$ can achieve $1/\mathrm{poly}(d)$ by~\cite{lee2018beyond}.
    For log-concave distributions,~\cite{lee2017eldan} proved that $\rho=\Omega(1/(\Tr(\Sigma^2))^{1/4})$, where $\Sigma$ is the covariance matrix of the distribution $\tilde{\mu}_*$.
    When the target distribution is $m$-strongly log-concave, based on~\cite{dwivedi2019log}, $\rho$ can even achieve $\Omega(\sqrt{L})$.
    In the following, we will prove that the Cheeger constant can be independent of $x_0$.
\end{remark}

\begin{lemma}
    \label{lem:bound_nor_con}
    Suppose $\mu_*$ and $\tilde{\mu_*}$ are defined as Eq.~\ref{def:energy_inner_mala} and Eq.~\ref{def:sta_tilde_mu_*}, respectively, where $R$ in $\tilde{\mu_*}$ is chosen as that in Lemma~\ref{lem:proj_gap_lem}.
    For any $\epsilon\in (0,1)$, we have
    \begin{equation*}
        \frac{1}{2}\le \frac{\int_{\Omega} \tilde{\mu}_*(\der\vz)}{\int_{\R^d} \mu_*(\der\vz)}\le 1.
    \end{equation*}
\end{lemma}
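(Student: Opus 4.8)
The plan is to compare the two normalizing constants $Z_* := \int_{\R^d} e^{-g(\vz)}\,\der\vz$ and $\tilde Z_* := \int_\Omega e^{-g(\vz)}\,\der\vz$, where $\Omega = \mathcal{B}(\vzero,R)$. Since $\tilde\mu_*$ is $\mu_*$ restricted and renormalized to $\Omega$, the ratio in the statement is simply $\mu_*(\Omega) = \tilde Z_*/Z_*$, which is automatically at most $1$; so the entire content is the lower bound $\mu_*(\Omega)\ge \tfrac12$, i.e. $\mu_*(\Omega^c)\le \tfrac12$. First I would recall from Lemma~\ref{lem:sc_sm_of_g} that, under the chosen $\eta = \tfrac12\log\tfrac{2L+1}{2L}$, the energy $g$ is $L$-strongly convex; let $\vz_*$ denote its unique minimizer. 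Strong convexity gives $g(\vz) - g(\vz_*) \ge \tfrac{L}{2}\|\vz - \vz_*\|^2$, hence $\mu_*$ is stochastically dominated (in the radial sense about $\vz_*$) by the Gaussian $\mathcal{N}(\vz_*, L^{-1}\mI)$, and a standard sub-exponential/$\chi^2$ tail bound yields $\mu_*\big(\mathcal{B}(\vz_*,\rho)^c\big)\le \exp\big(-\tfrac{L}{2}(\rho - \sqrt{d/L})^2\big)$ for $\rho \gtrsim \sqrt{d/L}$ — more precisely one uses the concentration $\Pr[\|\vz - \vz_*\|\ge \sqrt{d/L} + t/\sqrt L]\le e^{-t^2/2}$ for the standard-Gaussian-dominated measure.

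The second step is to locate $\vz_*$ and $\|\grad g(\vzero)\|$ relative to $R$. Since $\grad g(\vz_*) = \vzero$ and $g$ is $L$-strongly convex, $\|\vzero - \vz_*\| \le \|\grad g(\vzero)\|/L =: G/L$. The choice of $R$ in Lemma~\ref{lem:proj_gap_lem} is $R \ge \max\{8\sqrt{G^2/L^2 + d/L},\ 63\sqrt{(d/L)\log(16S/\epsilon)}\}$, so in particular $R \ge 8\sqrt{G^2/L^2+d/L} \ge 8\|\vz_*\| $ and $R \ge 8\sqrt{d/L}$. Consequently $\mathcal{B}(\vz_*, R/2)\subseteq \mathcal{B}(\vzero, R) = \Omega$ (because $\|\vz_*\|\le R/8 < R/2$), so $\mu_*(\Omega^c)\le \mu_*\big(\mathcal{B}(\vz_*,R/2)^c\big)$. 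Now apply the tail bound with $\rho = R/2$: since $R/2 \ge 4\sqrt{d/L} \ge 2\sqrt{d/L}$, we get $R/2 - \sqrt{d/L}\ge R/4$, and the tail is at most $\exp\big(-\tfrac{L}{2}\cdot (R/4)^2\big) = \exp(-LR^2/32)\le \exp(-d/2) \le \tfrac12$, using $R^2 \ge 64 d/L$. Hence $\mu_*(\Omega)\ge 1 - \tfrac12 = \tfrac12$, which is exactly the claimed lower bound.

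The main obstacle — really the only subtlety — is getting the Gaussian-comparison tail bound in the right form with clean constants: one must be careful that "$g$ is $L$-strongly convex about $\vz_*$" translates into a genuine bound on $\mu_*(\mathcal{B}(\vz_*,\rho)^c)$ rather than just a bound on moments. I would do this either by a direct annular integration, $\mu_*(\mathcal{B}(\vz_*,\rho)^c) = Z_*^{-1}\int_{\|\vz-\vz_*\|\ge\rho} e^{-g(\vz)}\der\vz \le Z_*^{-1}\int_{\|\vz-\vz_*\|\ge\rho} e^{-g(\vz_*) - \tfrac L2\|\vz-\vz_*\|^2}\der\vz$, and then bounding $Z_* \ge e^{-g(\vz_*)}(2\pi/L_{\mathrm{sm}})^{d/2}$ from below via $3L$-smoothness (Lemma~\ref{lem:sc_sm_of_g}) so that the ratio is controlled by a $\chi^2$-type Gaussian tail; or by invoking a packaged concentration lemma for strongly-log-concave measures already available in the appendix (of the form $\Pr[\|\vz-\vz_*\|^2 \ge (d + 2\sqrt{dt} + 2t)/L]\le e^{-t}$). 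Either way, the constant $8$ in the definition of $R$ is comfortably large enough to absorb the $3L$-vs-$L$ gap between the smoothness and strong-convexity constants, so no sharpening is needed. Everything else is bookkeeping: assembling $\|\vz_*\|\le R/8$, $\mathcal{B}(\vz_*,R/2)\subseteq\Omega$, and the tail estimate into the two-sided bound.
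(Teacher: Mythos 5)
Your proof is correct, but it takes a different route from the paper's. The paper computes the total variation distance between $\mu_*$ and $\tilde\mu_*$ exactly as $2\bigl(1-Z_\Omega/Z\bigr)$ and then invokes Lemma~\ref{lem:lem66_zou2021faster} to assert $\TVD{\mu_*}{\tilde\mu_*}\le\epsilon/4$, concluding $Z_\Omega/Z\ge 1-\epsilon/8\ge 1/2$. That cited lemma, however, is conditional on $\mu_*(\mathcal{B}(\vzero,R))\ge 1-\epsilon/12$, which is essentially the quantity being bounded and is never verified explicitly in the paper for the $R$ of Lemma~\ref{lem:proj_gap_lem}. Your argument supplies exactly that missing concentration step: you locate the minimizer via $\|\vz_*\|\le\|\grad g(\vzero)\|/L\le R/8$ using $L$-strong convexity, note $\mathcal{B}(\vz_*,R/2)\subseteq\Omega$, and bound the tail by comparing the numerator (via strong convexity, $g(\vz)\ge g(\vz_*)+\tfrac L2\|\vz-\vz_*\|^2$) against a lower bound on $Z_*$ obtained from $3L$-smoothness, $Z_*\ge e^{-g(\vz_*)}(2\pi/(3L))^{d/2}$. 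You correctly flag that this comparison costs a factor $3^{d/2}$, which is indeed absorbed since $R/2\ge 4\sqrt{d/L}$ yields a tail of order $e^{-9d/2}$ and $3^{d/2}e^{-9d/2}<1/2$. Your approach is more self-contained and actually closes a gap in the paper's chain of citations; the paper's approach is shorter and yields the slightly sharper constant $1-\epsilon/8$ in place of $1/2$, though both suffice for the stated claim. One trivial slip: your intermediate inequality $\exp(-d/2)\le\tfrac12$ fails at $d=1$; you should instead keep the exponent $-2d$ you actually derived (or $-9d/2$ from the tighter radius bound), which comfortably dominates the $3^{d/2}$ factor and gives the result for all $d\ge1$.
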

\begin{proof}
    Suppose $\mu_* \propto \exp(-g)$ and $\tilde{\mu}_*$ are the original and truncated target distributions of the inner loops.
    Following from Lemma~\ref{lem:lem66_zou2021faster}, it has
    \begin{equation*}
        \TVD{\mu_*}{\tilde{\mu}_*}\le \frac{\epsilon}{4}
    \end{equation*}
    when $\tilde{\mu}_*$ is deduced by the $R$ shown in Lemma~\ref{lem:proj_gap_lem}.
    Under these conditions, supposing $\Omega = \mathcal{B}(\vzero,R)$, then we have
    \begin{equation}
        \label{ineq:tv_formal}
        \begin{aligned}
            \TVD{\tilde{\mu}_*}{\mu} & = \int_{\R^d} \left|\mu_*(\der \vz) - \tilde{\mu}_*(\der\vz)\right| = \int_{\Omega} \left|\mu_*(\der \vz) - \tilde{\mu}_*(\der \vz)\right| + \int_{\R^d - \Omega} \mu_*(\der\vz)\\
            & =  \int_{\Omega}\left|\frac{\exp\left(-g(\vz)\right)}{\int_{\R^d} \exp\left(-g(\vz^\prime)\right) \der \vz^\prime} - \frac{\exp\left(-g(\vz)\right)}{\int_{\Omega}\exp\left(-g(\vz^\prime)\right) \der \vz^\prime}\right| \der \vz + \int_{\R^d - \Omega} \frac{\exp\left(-g(\vz)\right)}{\int_{\R^d} \exp\left(-g(\vz^\prime)\right)\der\vz^\prime}\der \vz.
        \end{aligned}
    \end{equation}
    Suppose 
    \begin{equation*}
        Z = \int_{\R^d} \exp\left(-g(\vz)\right) \der \vz\quad \mathrm{and}\quad Z_{\Omega} = \int_{\Omega} \exp\left(-g(\vz)\right) \der\vz, 
    \end{equation*}
    then the first term of RHS of Eq.~\ref{ineq:tv_formal} satisfies
    \begin{equation*}
        \begin{aligned}
            & \int_{\Omega}\left|\frac{\exp\left(-g(\vz)\right)}{\int_{\R^d} \exp\left(-g(\vz^\prime)\right) \der \vz^\prime} - \frac{\exp\left(-g(\vz)\right)}{\int_{\Omega}\exp\left(-g(\vz^\prime)\right) \der \vz^\prime}\right| \der \vz \\
            & = \left(\frac{1}{\int_{\Omega} \exp\left(-g(\vz^\prime)\right) \der \vz^\prime} - \frac{1}{\int_{\R^d} \exp\left(-g(\vz^\prime)\right) \der \vz^\prime}\right)\cdot \int_{\Omega} \exp\left(-g(\vz^\prime)\right) \der \vz^\prime = 1-\frac{Z_\Omega}{Z}
        \end{aligned}
    \end{equation*}
    and the second term satisfies
    \begin{equation*}
        \int_{\R^d - \Omega} \frac{\exp\left(-g(\vz)\right)}{\int_{\R^d} \exp\left(-g(\vz^\prime)\right)\der\vz^\prime}\der \vz =  \frac{\int_{\R^d} \exp\left(-g(\vz^\prime)\right)\der\vz^\prime - \int_{\Omega} \exp\left(-g(\vz^\prime)\right)\der\vz^\prime}{\int_{\R^d} \exp\left(-g(\vz^\prime)\right)\der\vz^\prime} = 1-\frac{Z_\Omega}{Z}.
    \end{equation*}
    Combining all these things, we have
    \begin{equation*}
        2\cdot \left(1-\frac{Z_\Omega}{Z}\right)\le \frac{\epsilon}{4}\quad \Rightarrow \quad \frac{1}{2} \le \frac{Z_{\Omega}}{Z}\le 1
    \end{equation*}
    where we suppose $\epsilon\le 1$ without loss of generality. 
    Hence, the proof is completed.
\end{proof}

\begin{lemma}
    \label{lem:truncated_var_bound}
    Suppose $\mu_*$, $\tilde{\mu}_*$ and $\epsilon$ are under the same settings as those in Lemma~\ref{lem:bound_nor_con}, the variance of $\tilde{\mu}_*$ can be upper bounded by $2d/L$.
\end{lemma}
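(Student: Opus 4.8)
The plan is to bound the variance of $\tilde\mu_*$ by that of the untruncated target $\mu_*\propto e^{-g}$, paying only a constant factor that comes from the normalizing-constant comparison already established in Lemma~\ref{lem:bound_nor_con}. Throughout I keep the default choice $\eta=\frac12\log\frac{2L+1}{2L}$, so that by Lemma~\ref{lem:sc_sm_of_g} the energy $g$ is $L$-strongly convex; hence $\mu_*$ is $L$-strongly log-concave. By the Brascamp--Lieb inequality (equivalently, the covariance of an $m$-strongly log-concave measure is dominated by $m^{-1}\mI$), $\Cov(\mu_*)\preceq L^{-1}\mI$, and therefore $\Tr\Cov(\mu_*)\le d/L$.

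Next I would compare the truncated and untruncated second moments around a \emph{fixed} center. Since the mean minimizes the mean squared deviation, for the fixed vector $\vc:=\E_{\mu_*}[\rvz]$ we have, with $Z,Z_\Omega$ the normalizing constants of $e^{-g}$ over $\R^d$ and over $\Omega=\mathcal B(\vzero,R)$ as in the proof of Lemma~\ref{lem:bound_nor_con},
\begin{equation*}
    \Tr\Cov(\tilde\mu_*)=\E_{\tilde\mu_*}\left\|\rvz-\E_{\tilde\mu_*}\rvz\right\|^2\le \E_{\tilde\mu_*}\left\|\rvz-\vc\right\|^2=\frac{1}{Z_\Omega}\int_{\Omega}\left\|\vz-\vc\right\|^2 e^{-g(\vz)}\der\vz\le \frac{1}{Z_\Omega}\int_{\R^d}\left\|\vz-\vc\right\|^2 e^{-g(\vz)}\der\vz=\frac{Z}{Z_\Omega}\cdot \Tr\Cov(\mu_*).
\end{equation*}
Lemma~\ref{lem:bound_nor_con} gives $Z_\Omega/Z\ge 1/2$, i.e.\ $Z/Z_\Omega\le 2$; combining this with $\Tr\Cov(\mu_*)\le d/L$ yields $\Tr\Cov(\tilde\mu_*)\le 2d/L$, which is the claimed bound.

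The only subtle point is the choice of comparison center: taking $\vc=\E_{\mu_*}[\rvz]$ (rather than the truncated mean) is what lets us drop the indicator $\vone[\vz\in\Omega]$ and recognize the resulting integral as exactly $Z\cdot\Tr\Cov(\mu_*)$. Everything else is a citation of standard facts. (As a sanity check, one could alternatively note that $\tilde\mu_*$ is the restriction of an $L$-strongly log-concave measure to the convex set $\Omega$, hence itself $L$-strongly log-concave, giving directly $\Tr\Cov(\tilde\mu_*)\le d/L\le 2d/L$; I would present the comparison argument above as the main route since it reuses Lemma~\ref{lem:bound_nor_con}.)
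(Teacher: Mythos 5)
Your proposal is correct and follows essentially the same route as the paper: both bound the untruncated variance by $d/L$ via strong log-concavity of $\mu_*$, pass to the truncated measure at the cost of the factor $Z/Z_\Omega\le 2$ from Lemma~\ref{lem:bound_nor_con}, and then recenter at the truncated mean using the fact that the mean minimizes the mean squared deviation (the paper carries this out by expanding the square and noting the cross term vanishes). The only cosmetic difference is that you cite Brascamp--Lieb for $\Tr\Cov(\mu_*)\le d/L$ while the paper routes it through the Poincar\'e inequality implied by the LSI (Lemma~\ref{lem:lsi_var_bound}).
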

\begin{proof}
    According to the fact that $\mu_*$ is a $L$-strongly log-concave distribution defined on $\R^d$ with the mean $\vv_m$, which satisfies
    \begin{equation*}
        \int_{\R^d} \mu(\vz)\left\|\vz - \vv_m \right\|^2 \der \vz \le \frac{d}{L}
    \end{equation*}
    following from Lemma~\ref{lem:lsi_var_bound}.
    Suppose
    \begin{equation*}
        \Omega = \mathcal{B}(\vzero,R),\quad Z=\int_{\R^d} \exp(-g(\vz))\der\vz,\quad Z_{\Omega} = \int_{\Omega}\exp(-g(\vz))\der \vz
    \end{equation*}
    where $R$ shown in Lemma~\ref{lem:proj_gap_lem}, then the variance bound can be reformulated as
    \begin{equation*}
        \int_{\Omega} \frac{\exp(-g(\vz))}{Z}\left\|\vz - \vv_m\right\|^2\der \vz + \int_{\R^d - \Omega} \frac{\exp(-g(\vz))}{Z}\left\|\vz - \vv_m\right\|^2\der \vz \le \frac{d}{L},
    \end{equation*}
    which implies
    \begin{equation}
        \label{ineq:var_truncated_1}
        \int_{\Omega}\frac{\exp(-g(\vz))}{Z_{\Omega}}\left\|\vz - \vv_m\right\|^2\der \vz \le \frac{Z}{Z_\Omega}\cdot \frac{d}{L}\le \frac{2d}{L}.
    \end{equation}
    Note that the last inequality follows from Lemma~\ref{lem:bound_nor_con}.
    Besides, suppose the mean of $\tilde{\mu}_*$ is $\vv_{\tilde{m}}$, then we have
    \begin{equation}
        \label{ineq:var_truncated_2}
        \begin{aligned}
            & \int_{\Omega} \frac{\exp(-g(\vz))}{Z_\Omega} \cdot \left\|\vz - \vv_m\right\|^2\der \vz = \int_{\Omega} \frac{\exp(-g(\vz))}{Z_\Omega}\cdot \left\|\vz - \vv_{\tilde{m}} + \vv_{\tilde{m}}- \vv_m\right\|^2\der \vz\\
            & = \int_{\Omega} \frac{\exp(-g(\vz))}{Z_\Omega}\cdot \left\|\vz - \vv_{\tilde{m}}\right\|^2 \der\vz + 2\cdot \int_{\Omega} \frac{\exp(-g(\vz))}{Z_\Omega}\cdot \left<\vz-\vv_{\tilde{m}}, \vv_{\tilde{m}}-\vv_m\right>\der \vz\\
            &\quad + \int_{\Omega} \frac{\exp(-g(\vz))}{Z_\Omega}\cdot \left\|\vv_m - \vv_{\tilde{m}}\right\|^2 \der\vz\\
            & = \int_{\Omega} \frac{\exp(-g(\vz))}{Z_\Omega}\cdot \left\|\vz - \vv_{\tilde{m}}\right\|^2 \der\vz +  \int_{\Omega} \frac{\exp(-g(\vz))}{Z_\Omega}\cdot \left\|\vv_m - \vv_{\tilde{m}}\right\|^2 \der\vz
        \end{aligned}
    \end{equation}
    Combining Eq.~\ref{ineq:var_truncated_1} and Eq.~\ref{ineq:var_truncated_2}, the variance of $\tilde{\mu}_*$ satisfies
    \begin{equation*}
        \int_{\Omega} \frac{\exp(-g(\vz))}{Z_\Omega}\cdot \left\|\vz - \vv_{\tilde{m}}\right\|^2 \der\vz \le \frac{2d}{L}.
    \end{equation*}
    Hence, the proof is completed.
\end{proof}

\begin{corollary}
    \label{cor:cheeger_truncation}
    For each truncated target distribution defined as Eq.~\ref{def:sta_tilde_mu_*}, their Cheeger constant can be lower bounded by $\rho = \Omega(\sqrt{L/d})$.
\end{corollary}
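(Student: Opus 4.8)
The plan is to combine the log-concavity of the truncated target with the second-moment estimate of Lemma~\ref{lem:truncated_var_bound}. First I would verify that $\tilde{\mu}_*$, defined in Eq.~\eqref{def:sta_tilde_mu_*}, is a genuine log-concave probability measure: its density is proportional to $\exp(-g(\vz))\cdot\vone[\vz\in\mathcal{B}(\vzero,R)]$, the product of the log-concave function $\exp(-g)$ --- recall that under the choice $\eta=\frac12\log\frac{2L+1}{2L}$ the energy $g$ is $L$-strongly convex, hence convex, by Lemma~\ref{lem:sc_sm_of_g} --- with the indicator of the convex ball $\mathcal{B}(\vzero,R)$, and is therefore itself log-concave. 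Consequently the isoperimetric estimates available for general log-concave measures apply to $\tilde{\mu}_*$.

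Second, I would invoke the covariance-dependent Cheeger bound for log-concave distributions recalled in the remark following Eq.~\eqref{ineq:cheeger_inequality}: writing $\Sigma$ for the covariance matrix of $\tilde{\mu}_*$, its Cheeger constant satisfies $\rho=\Omega\big((\Tr(\Sigma^2))^{-1/4}\big)$ (Lee--Vempala, \cite{lee2017eldan}). It then remains only to control $\Tr(\Sigma^2)$. Since $\Sigma\succeq 0$ with eigenvalues $\lambda_1,\dots,\lambda_d\ge 0$, one has $\Tr(\Sigma^2)=\sum_i\lambda_i^2\le\big(\sum_i\lambda_i\big)^2=(\Tr\Sigma)^2$, while $\Tr\Sigma=\E_{\tilde{\mu}_*}\big[\|\cdot-\E_{\tilde{\mu}_*}[\cdot]\|^2\big]\le 2d/L$ by Lemma~\ref{lem:truncated_var_bound} --- and this is exactly where the hypothesis that $R$ is chosen as in Lemma~\ref{lem:proj_gap_lem} enters, via Lemma~\ref{lem:bound_nor_con}. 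Chaining the three estimates,
\[
\rho=\Omega\big((\Tr(\Sigma^2))^{-1/4}\big)=\Omega\big((\Tr\Sigma)^{-1/2}\big)=\Omega\big((2d/L)^{-1/2}\big)=\Omega\big(\sqrt{L/d}\big),
\]
which is the asserted bound.

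The arithmetic here is elementary; the only step needing a little care is the reduction in the first paragraph --- namely, arguing that restricting the strongly log-concave density $\exp(-g)$ to a Euclidean ball preserves log-concavity and keeps the covariance-based log-concave Cheeger inequality applicable. I do not expect this route to recover the ``intuitively correct'' $\Omega(\sqrt{L})$ rate: the loss of the $\sqrt{d}$ factor is built into passing through the worst-case log-concave (KLS-type) estimate together with the crude bound $\Tr(\Sigma^2)\le(\Tr\Sigma)^2$, rather than exploiting the $L$-strong log-concavity of the truncated density directly; sharpening it would require a separate and more delicate isoperimetric argument.
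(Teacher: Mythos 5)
Your proposal is correct and follows essentially the same route as the paper's proof: log-concavity of the truncated measure, the covariance-based Cheeger bound for log-concave distributions, and the variance estimate $\Tr\Sigma\le 2d/L$ from Lemma~\ref{lem:truncated_var_bound}. The only difference is cosmetic --- the paper invokes the bound directly in the form $\rho=\Omega((\Tr\Sigma)^{-1/2})$, whereas you derive it from the $\Omega((\Tr(\Sigma^2))^{-1/4})$ version via $\Tr(\Sigma^2)\le(\Tr\Sigma)^2$, which is a harmless (and arguably more careful) elaboration.
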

\begin{proof}
    It can be easily found that $\tilde{\mu}_*$ is log-concave distribution, which means their Cheeger constant can be upper bounded by $\rho=\Omega(1/(\Tr(\Sigma))^{1/2})$, where $\Sigma$ is the covariance matrix of the distribution $\tilde{\mu}_*$.
    Under these conditions, we have
    \begin{equation*}
        \Tr\left(\Sigma\right) = \int_{\Omega} \frac{\exp(-g(\vz))}{Z_\Omega}\cdot \left\|\vz - \vv_{\tilde{m}}\right\|^2 \der\vz \le \frac{2d}{L},
    \end{equation*}
    where the last inequality follows from Lemma~\ref{lem:truncated_var_bound}.
    Hence, $\rho = \Omega(\sqrt{L/d})$ and the proof is completed.
\end{proof}

\subsubsection{The conductance properties of $\tilde{\gT}_{*}$}

We prove the conductance properties of $\tilde{\gT}_{*,\vz}$ with the following lemma.
\begin{lemma}[Lemma 13 in~\cite{lee2018convergence}]
    \label{lem:lem13_lee2018convergence}
    Let $\tilde{\gT}_{*,\vz}$ be a be a time-reversible Markov chain on $\Omega$ with stationary distribution $\tilde{\mu}_*$.
    Fix any $\Delta>0$, suppose for any $\vz,\vz^\prime \in\Omega$ with $\|\vz-\vz^\prime\|\le \Delta$ we have $\TVD{\tilde{\gT}_{*,\vz}}{\tilde{\gT}_{*,\vz^\prime}}\le 0.99$, then the conductance of $\tilde{\gT}_{*,\vz}$ satisfies $\phi\ge C\rho\Delta$ for some absolute constant $C$, where $\rho$ is the Cheeger constant of $\tilde{\mu}_*$.
\end{lemma}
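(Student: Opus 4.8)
The plan is to prove this exactly as the classical conductance estimate for a reversible Markov chain whose one-step distributions overlap on nearby points; this is precisely Lemma~13 of \cite{lee2018convergence}, so one could also just cite it. Write $\Omega=\mathcal{B}(\vzero,R)$ and recall that, by definition, $\phi=\inf_{A}\Phi(A)/\tilde{\mu}_*(A)$ over measurable $A\subseteq\Omega$ with $0<\tilde{\mu}_*(A)\le\tfrac12$, where $\Phi(A)=\int_A \tilde{\gT}_{*,\vz}(\Omega\setminus A)\,\tilde{\mu}_*(\der\vz)$ is the ergodic flow out of $A$. First I would fix such an $A$, set $B=\Omega\setminus A$, and use $\tilde{\mu}_*$-reversibility of $\tilde{\gT}_*$ (detailed balance on the off-diagonal part of the kernel, which is all that matters here since for $\vz\in A$ the atom of $\tilde{\gT}_{*,\vz}$ at $\vz$ does not land in $B$) to symmetrize: $\Phi(A)=\int_B \tilde{\gT}_{*,\vz}(A)\,\tilde{\mu}_*(\der\vz)$, hence $2\Phi(A)=\int_A \tilde{\gT}_{*,\vz}(B)\,\tilde{\mu}_*(\der\vz)+\int_B \tilde{\gT}_{*,\vz}(A)\,\tilde{\mu}_*(\der\vz)$. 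Since $\phi\le1$ always, the bound is only meaningful when $\rho\Delta$ is at most an absolute constant, which I will assume throughout.

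Next I would carry out the standard three-set decomposition. Pick an absolute constant $\epsilon_0$ with $1-2\epsilon_0>0.99$ (e.g.\ $\epsilon_0=\tfrac{1}{250}$) and put
\[
A_1=\{\vz\in A:\ \tilde{\gT}_{*,\vz}(B)\le\epsilon_0\},\qquad B_1=\{\vz\in B:\ \tilde{\gT}_{*,\vz}(A)\le\epsilon_0\},\qquad M=\Omega\setminus(A_1\cup B_1).
\]
In the easy case where $\tilde{\mu}_*(A_1)\le\tfrac12\tilde{\mu}_*(A)$ (or, symmetrically, $\tilde{\mu}_*(B_1)\le\tfrac12\tilde{\mu}_*(B)$), each point of $A\setminus A_1$ sends mass $>\epsilon_0$ into $B$, so $2\Phi(A)\ge\epsilon_0\,\tilde{\mu}_*(A\setminus A_1)\ge\tfrac{\epsilon_0}{2}\tilde{\mu}_*(A)$ (using $\tilde{\mu}_*(B)\ge\tfrac12$ in the symmetric subcase), hence $\phi(A)\gtrsim\epsilon_0$, which dominates $C\rho\Delta$ once $C$ is small. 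So it remains to handle the case $\tilde{\mu}_*(A_1)>\tfrac12\tilde{\mu}_*(A)$ and $\tilde{\mu}_*(B_1)>\tfrac12\tilde{\mu}_*(B)$.

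In that case the key observation is that $A_1$ and $B_1$ are $\Delta$-separated: for $\vz\in A_1$ and $\vz'\in B_1$ one has $\tilde{\gT}_{*,\vz}(A)\ge1-\epsilon_0$ and $\tilde{\gT}_{*,\vz'}(A)\le\epsilon_0$, so $\TVD{\tilde{\gT}_{*,\vz}}{\tilde{\gT}_{*,\vz'}}\ge1-2\epsilon_0>0.99$, and the hypothesis forces $\|\vz-\vz'\|>\Delta$. Thus $A_1,B_1,M$ partition $\Omega$ with the first two at distance at least $\Delta$. I would then apply the Cheeger isoperimetric inequality for $\tilde{\mu}_*$ established in Corollary~\ref{cor:cheeger_truncation} to the $1/\Delta$-Lipschitz test function $g(\vz)=\min\{1,\mathrm{dist}(\vz,A_1)/\Delta\}$, which equals $0$ on $A_1$, $1$ on $B_1$, and whose slope is $\le1/\Delta$ and vanishes $\tilde{\mu}_*$-almost everywhere off $M$; since $\E_{\tilde{\mu}_*}\|\grad g\|\le\tilde{\mu}_*(M)/\Delta$ while $\E_{\tilde{\mu}_*}|g-\E_{\tilde{\mu}_*}g|\ge\min\{\tilde{\mu}_*(A_1),\tilde{\mu}_*(B_1)\}\ge\tilde{\mu}_*(A_1)\tilde{\mu}_*(B_1)$, this yields the three-set bound $\tilde{\mu}_*(M)\gtrsim\rho\Delta\,\tilde{\mu}_*(A_1)\tilde{\mu}_*(B_1)\gtrsim\rho\Delta\,\tilde{\mu}_*(A)$, where I used $\tilde{\mu}_*(B_1)>\tfrac12\tilde{\mu}_*(B)\ge\tfrac14$. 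Finally $M=(A\setminus A_1)\cup(B\setminus B_1)$ and each point there sends mass $>\epsilon_0$ to the opposite side, so $2\Phi(A)\ge\epsilon_0\,\tilde{\mu}_*(M)\gtrsim\epsilon_0\rho\Delta\,\tilde{\mu}_*(A)$, i.e.\ $\phi(A)\ge C\rho\Delta$; taking the infimum over $A$ finishes the proof.

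The main obstacle I anticipate is the rigorous form of the three-set isoperimetric inequality used in the last step: converting the Cheeger (equivalently $L^1$-Poincar\'e) inequality for $\tilde{\mu}_*$ into $\tilde{\mu}_*(M)\gtrsim\rho\Delta\,\tilde{\mu}_*(A_1)\tilde{\mu}_*(B_1)$ needs care with the notion of slope/gradient of the truncated distance function—in particular showing its slope is $\tilde{\mu}_*$-negligible on the level sets $\{\mathrm{dist}(\cdot,A_1)\in\{0,\Delta\}\}$, which is exactly where absolute continuity of $\tilde{\mu}_*$ is used—and with the sharp lower bound on $\E|g-\E g|$. Everything else (the reversibility symmetrization, the case split on the masses of $A_1$ and $B_1$, and the TV-separation argument) is routine, and the only remaining thing to watch is the bookkeeping of absolute constants together with the implicit restriction that $\rho\Delta$ is bounded.
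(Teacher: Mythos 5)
The paper does not prove this lemma at all: it is imported verbatim as ``Lemma 13 in \cite{lee2018convergence}'' and used as a black box, so there is nothing in the paper to compare against line by line. Your reconstruction is the standard (and correct) argument behind that cited result: reversibility to symmetrize the ergodic flow, the three-set decomposition $A_1,B_1,M$ with threshold $\epsilon_0$, the observation that the TV hypothesis forces $\mathrm{dist}(A_1,B_1)>\Delta$, and the Cheeger/isoperimetric inequality applied to the truncated distance function to lower-bound $\tilde\mu_*(M)$. All the individual steps check out ($\E|g-\E g|\ge\min\{\tilde\mu_*(A_1),\tilde\mu_*(B_1)\}\ge\tilde\mu_*(A_1)\tilde\mu_*(B_1)$, the a.e.\ vanishing of $\grad g$ on $A_1\cup B_1$ via absolute continuity, and the flow bound $2\Phi(A)\ge\epsilon_0\tilde\mu_*(M)$). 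Two minor caveats worth recording: (i) as you note, the argument really yields $\phi\gtrsim\min\{1,\rho\Delta\}$, so the stated form $\phi\ge C\rho\Delta$ implicitly assumes $\rho\Delta=\mathcal{O}(1)$ (harmless in the paper's application, where $\Delta\asymp\sqrt{\tau}$ is tiny); and (ii) you must use the ``larger is better'' convention for the Cheeger constant ($\E_{\tilde\mu_*}|g-\E g|\le\rho^{-1}\E_{\tilde\mu_*}\|\grad g\|$ up to constants), which is what the lemma statement and Corollary~\ref{cor:cheeger_truncation} intend, even though the paper's quoted Definition/Theorem from \cite{sinho2024logconcave} places $\rho$ on the other side; your derivation is consistent with the intended convention, so this is a discrepancy in the paper's bookkeeping rather than in your proof.
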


In order to apply Lemma~\ref{lem:lem13_lee2018convergence}, we have known the Cheeger constant of $\tilde{\mu}_*$ is $\rho$.
We only need to verify the corresponding condition, i.e., proving that as long as $\|\vz-\vz^\prime\|\le \Delta$, we have $\TVD{\tilde{\gT}_{*,\vz}}{\tilde{\gT}_{*,\vz^\prime}}\le 0.99$ for some $\Delta$.
Recalling Eq.~\ref{def:ideal_proj_rtk}, we have
\begin{equation}
    \label{eq:rtk_ideal_proj}
    \begin{aligned}
        \tilde{\gT}_{*, \vz}(\der \hat{\vz}) =  & \tilde{\gT}^\prime_{*, \vz}(\der \hat{\vz})\cdot \tilde{a}_{*, \vz}(\hat{\vz})+\left(1- \int_\Omega \tilde{a}_{*, \vz}(\tilde{\vz}) \tilde{\gT}^\prime_{*, \vz}(\der\tilde{\vz})\right)\cdot \delta_{\vz}(\der\hat{\vz})\\
        = & \left(\frac{1}{2}\delta_{\vz}(\der \hat{\vz}) + \frac{1}{2}\cdot \tilde{Q}^\prime_{*,\vz}(\der \hat{\vz})\right)\cdot  \tilde{a}_{*, \vz}(\hat{\vz})+ \left[1- \int \tilde{a}_{*,\vz}(\tilde{\vz})\cdot \left(\frac{1}{2}\delta_{\vz}(\der \tilde{\vz})+ \frac{1}{2}\tilde{Q}^\prime_{*,\vz}(\der \tilde{\vz})\right) \right]\cdot \delta_{\vz}(\der\hat{\vz})\\
        = & \left(\frac{1}{2}\delta_{\vz}(\der \hat{\vz}) + \frac{1}{2}\cdot \tilde{Q}^\prime_{*,\vz}(\der \hat{\vz})\right)\cdot  \tilde{a}_{*, \vz}(\hat{\vz})+ \left(1- \frac{1}{2}\tilde{a}_{*,\vz}(\vz) - \frac{1}{2}\int \tilde{a}_{*,\vz}(\tilde{\vz})\cdot \tilde{Q}^\prime_{*,\vz}(\der\tilde{\vz})\right)\cdot \delta_{\vz}(\der\hat{\vz})\\
        = & \left(1 - \frac{1}{2}\int  \tilde{a}_{*,\vz}(\tilde{\vz})\cdot \tilde{Q}^\prime_{*,\vz}(\der\tilde{\vz})\right) \cdot \delta_{\vz}(\der\hat{\vz})  + \frac{1}{2}\cdot \tilde{Q}^\prime_{*,\vz}(\der\hat{\vz})\cdot  \tilde{a}_{*,\vz}(\hat{\vz})\\
        = & \left(1-\frac{1}{2}\int_{\Omega_{\vz}} \tilde{a}_{*,\vz}(\tilde{\vz}) \tilde{Q}_{*,\vz}(\der\tilde{\vz})\right) + \frac{1}{2}\cdot \tilde{Q}_{*,\vz}(\der\hat{\vz})\cdot \tilde{a}_{*,\vz}(\hat{\vz})\cdot\vone\left[\hat{\vz}\in \Omega_{\vz}\right],
    \end{aligned}
\end{equation}
where the second inequality follows from Eq.~\ref{eq:idea_lazy} and the last inequality follows from Eq.~\ref{eq:idea_proj}.
Then the rest will be proving the upper bound of $\TVD{\tilde{\gT}_{*,\vz}}{\tilde{\gT}_{*,\vz^\prime}}$, and we state another two useful lemmas as follows.


\begin{lemma}[Lemma B.6 in~\cite{zou2021faster}]
    \label{lem:lemB6_zou2021faster}
    For any two points $\vz,\vz^\prime\in \R^d$, it holds that
    \begin{equation*}
        \TVD{\tilde{Q}_{*,\vz}(\cdot)}{\tilde{Q}_{*,\vz^\prime}(\cdot)}\le \frac{(1+3L\tau)\left\|\vz-\vz^\prime\right\|}{\sqrt{2\tau}}
    \end{equation*}
\end{lemma}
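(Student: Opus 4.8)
The statement to prove is that for the ideally projected ULA proposal $\tilde{Q}_{*,\vz} = \mathcal{N}(\vz - \tau\grad g(\vz), 2\tau\mI)$, one has
$\TVD{\tilde{Q}_{*,\vz}}{\tilde{Q}_{*,\vz^\prime}} \le (1+3L\tau)\|\vz-\vz^\prime\|/\sqrt{2\tau}$.
The plan is to reduce this to a standard estimate on the TV distance between two Gaussians with the same covariance. First I would recall (or invoke) the elementary fact that for $\mathcal{N}(\vmu_1,\sigma^2\mI)$ and $\mathcal{N}(\vmu_2,\sigma^2\mI)$, the TV distance equals $2\Phi\!\left(\frac{\|\vmu_1-\vmu_2\|}{2\sigma}\right) - 1 \le \frac{\|\vmu_1-\vmu_2\|}{\sigma\sqrt{2\pi}} \le \frac{\|\vmu_1-\vmu_2\|}{2\sigma}$, where $\Phi$ is the standard normal CDF; the cleanest route is Pinsker's inequality applied to the explicit KL between two equal-covariance Gaussians, $\KL{\mathcal{N}(\vmu_1,\sigma^2\mI)}{\mathcal{N}(\vmu_2,\sigma^2\mI)} = \frac{\|\vmu_1-\vmu_2\|^2}{2\sigma^2}$, which gives $\TVD{\cdot}{\cdot}\le \frac{\|\vmu_1-\vmu_2\|}{2\sigma}$. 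Here $\sigma^2 = 2\tau$, so $\sigma\sqrt{2} = 2\sqrt{\tau}$ and we get the factor $1/\sqrt{2\tau}$ on the right-hand side after simplification.

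The remaining step is to bound the displacement between the two means. The means are $\vmu_1 = \vz - \tau\grad g(\vz)$ and $\vmu_2 = \vz^\prime - \tau\grad g(\vz^\prime)$, so
\begin{equation*}
\|\vmu_1 - \vmu_2\| \le \|\vz - \vz^\prime\| + \tau\|\grad g(\vz) - \grad g(\vz^\prime)\| \le \|\vz-\vz^\prime\| + 3L\tau\|\vz-\vz^\prime\| = (1+3L\tau)\|\vz-\vz^\prime\|,
\end{equation*}
where the second inequality uses the $3L$-smoothness of $g$ established in Lemma~\ref{lem:sc_sm_of_g} under the standing choice $\eta = \frac12\log\frac{2L+1}{2L}$. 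Combining this with the Gaussian TV estimate above yields exactly the claimed bound.

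Honestly, there is no serious obstacle here — the lemma is a routine two-line computation once one has the equal-covariance Gaussian TV/KL formula and the smoothness of $g$. The only thing to be a little careful about is the constant: depending on whether one routes through Pinsker ($\le \|\vmu_1-\vmu_2\|/(2\sigma)$) or through the exact $2\Phi(\cdot)-1$ expression ($\le \|\vmu_1-\vmu_2\|/(\sigma\sqrt{2\pi})$), one gets a slightly different leading constant, but both are $\le \|\vmu_1-\vmu_2\|/\sqrt{2\tau}$ after substituting $\sigma^2 = 2\tau$ (since $1/(2\sigma) = 1/(2\sqrt{2\tau}) \le 1/\sqrt{2\tau}$), so the stated inequality holds comfortably. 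I would present it via the KL/Pinsker route for brevity, then substitute the mean-displacement bound.
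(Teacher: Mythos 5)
Your proof is correct, and it supplies exactly the standard argument that the paper itself omits (the paper's ``proof'' is a one-line deferral to Lemma B.6 of the cited reference): Pinsker applied to the explicit KL between equal-covariance Gaussians gives $\TVD{\tilde{Q}_{*,\vz}}{\tilde{Q}_{*,\vz^\prime}}\le \|\vmu_1-\vmu_2\|/(2\sqrt{2\tau})$, and the mean displacement is controlled by the $3L$-smoothness of $g$, yielding a bound even slightly stronger than the one claimed. No gaps.
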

\begin{proof}
    This lemma can be easily obtained by plugging the smoothness of $g$, i.e., $3L$, into Lemma B.6 in~\cite{zou2021faster}.
\end{proof}

\begin{corollary}[Variant of Lemma 6.5 in~\cite{zou2021faster}]
    \label{cor:lem65_zou2021faster}
    Under Assumption~\ref{ass:lips_score}--\ref{ass:second_moment}, we set 
    \begin{equation*}
    \eta= \frac{1}{2}\log \frac{2L+1}{2L}\quad \mathrm{and}\quad G\coloneqq \left\|\grad g(\vzero)\right\|.
    \end{equation*}
    If we set 
    \begin{equation*}
        \tau\le \frac{1}{16\cdot  (3LR+G+\epsilon_{\mathrm{score}})^{2}} \quad \mathrm{and}\quad r = 3\cdot \sqrt{\tau d \log \frac{8S}{\epsilon}}
    \end{equation*} 
    there exist absolute constants $c_0$, such that
    $\phi \ge c_0\rho\sqrt{\tau}$ where $\rho$ is the Cheeger constant of the distribution $\tilde{\mu}_*$.
\end{corollary}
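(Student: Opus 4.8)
The plan is to obtain the conductance bound from Lemma~\ref{lem:lem13_lee2018convergence}, applied to the reversible chain $\tilde{\gT}_{*,\cdot}$ on $\Omega=\mathcal{B}(\vzero,R)$ with stationary distribution $\tilde{\mu}_*$ and separation parameter $\Delta:=c_1\sqrt{\tau}$ for a small universal constant $c_1$; since $\tilde{\mu}_*$ admits a Cheeger constant $\rho$ (Corollary~\ref{cor:cheeger_truncation}), that lemma yields $\phi\ge C\rho\Delta=(Cc_1)\,\rho\sqrt{\tau}=:c_0\rho\sqrt{\tau}$. So the whole task reduces to verifying the hypothesis of Lemma~\ref{lem:lem13_lee2018convergence}: for \emph{every} $\vz,\vz^\prime\in\Omega$ with $\|\vz-\vz^\prime\|\le\Delta$ one has $\TVD{\tilde{\gT}_{*,\vz}}{\tilde{\gT}_{*,\vz^\prime}}\le 0.99$. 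To prove this I would use the decomposition Eq.~\ref{eq:rtk_ideal_proj}, writing $\tilde{\gT}_{*,\vz}=(1-|\nu_{\vz}|)\,\delta_{\vz}+\nu_{\vz}$ with the continuous ``move'' part $\nu_{\vz}(\der\hat{\vz}):=\tfrac12\,\tilde{a}_{*,\vz}(\hat{\vz})\,\tilde{Q}_{*,\vz}(\der\hat{\vz})\,\vone[\hat{\vz}\in\Omega_{\vz}]$ and $|\nu_\vz|$ its total mass. Since $\nu_{\vz}\le\tilde{\gT}_{*,\vz}$ and $\nu_{\vz^\prime}\le\tilde{\gT}_{*,\vz^\prime}$ as measures,
\begin{equation*}
    1-\TVD{\tilde{\gT}_{*,\vz}}{\tilde{\gT}_{*,\vz^\prime}}\;\ge\;\int\min\{\der\nu_{\vz},\der\nu_{\vz^\prime}\}\;=\;\tfrac12\Big(|\nu_{\vz}|+|\nu_{\vz^\prime}|-\textstyle\int|\der\nu_{\vz}-\der\nu_{\vz^\prime}|\Big),
\end{equation*}
so it suffices to show that $|\nu_{\vz}|,|\nu_{\vz^\prime}|$ are each at least a universal constant $c_2>0$ while $\int|\der\nu_{\vz}-\der\nu_{\vz^\prime}|$ is below $2c_2-\tfrac15$, say, once $c_1$ is taken small.

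For the \textbf{mass lower bound} I would write $|\nu_{\vz}|=\tfrac12\,\E_{\hat{\vz}\sim\tilde{Q}_{*,\vz}}[\tilde{a}_{*,\vz}(\hat{\vz})\,\vone[\hat{\vz}\in\Omega_{\vz}]]\ge\tfrac12\big(1-\E[1-\tilde{a}_{*,\vz}(\hat{\vz})]-\mathbb{P}[\hat{\vz}\notin\mathcal{B}(\vz,r)]-\mathbb{P}[\hat{\vz}\notin\mathcal{B}(\vzero,R)]\big)$ and bound the three deficits. The expected one-step MALA rejection $\E[1-\tilde{a}_{*,\vz}(\hat{\vz})]$ for the target $g$, which is $L$-strongly log-concave and $3L$-smooth (Lemma~\ref{lem:sc_sm_of_g}), is controlled by the standard acceptance estimate once $\tau=\mathcal O(1/(Ld))$; this holds since $\tau\le 1/(16(3LR)^2)$ and $R\gtrsim\sqrt{d/L}$ by the choice of $R$ in Lemma~\ref{lem:proj_gap_lem}, and (tightening the absolute constant $16$ if necessary, exactly as in~\cite{zou2021faster}) this deficit is below $\tfrac1{10}$. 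The probability of leaving $\mathcal{B}(\vz,r)$ is negligible: $\|\hat{\vz}-\vz\|\le\tau\|\grad g(\vz)\|+\sqrt{2\tau}\|\xi\|\le\tfrac14\sqrt{\tau}+\sqrt{2\tau}\|\xi\|$, using $\tau\|\grad g(\vz)\|\le\tau(3LR+G)\le\tfrac14\sqrt{\tau}$ from the choice of $\tau$, while $r=3\sqrt{\tau d\log(8S/\epsilon)}$ exceeds $\sqrt{2\tau d}$ by a $\sqrt{\log(8S/\epsilon)}$ factor, so a $\chi$-tail bound makes this probability $\le\epsilon/(8S)$. Finally $\mathbb{P}[\hat{\vz}\notin\mathcal{B}(\vzero,R)]\le\tfrac12$ uniformly over $\vz\in\Omega$: the ball is convex so its complement lies in the open half-space on the outer side of the supporting hyperplane at the boundary point nearest $\vz$, and the proposal mean $\vz-\tau\grad g(\vz)$ lies on the \emph{inner} side of that hyperplane because the minimizer $\vz_*$ of $g$ satisfies $\|\vz_*\|\le\|\grad g(\vzero)\|/L\le R/8$, so $g$ decreases along the inward radial direction at every $\vz$ near $\partial\mathcal{B}(\vzero,R)$. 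Combining these gives $|\nu_{\vz}|,|\nu_{\vz^\prime}|\ge c_2$ with $c_2\approx\tfrac15$.

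For $\int|\der\nu_{\vz}-\der\nu_{\vz^\prime}|$ I would split, via the triangle inequality, into (i) the truncation-set mismatch, (ii) the Gaussian mismatch, and (iii) the acceptance-function mismatch. Part (i): $\Omega_{\vz}\triangle\Omega_{\vz^\prime}$ is contained in a shell of width $\le\Delta$ at distance $\approx r$ from $\vz$, which carries negligible $\tilde{Q}_{*,\vz}$-mass since the proposal lives at scale $\sqrt{\tau}\ll r$. Part (ii) is $\TVD{\tilde{Q}_{*,\vz}}{\tilde{Q}_{*,\vz^\prime}}\le(1+3L\tau)\|\vz-\vz^\prime\|/\sqrt{2\tau}=\mathcal O(\Delta/\sqrt{\tau})$ by Lemma~\ref{lem:lemB6_zou2021faster}. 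Part (iii) is $\tfrac12\,\E_{\hat{\vz}\sim\tilde{Q}_{*,\vz^\prime}}|\tilde{a}_{*,\vz}(\hat{\vz})-\tilde{a}_{*,\vz^\prime}(\hat{\vz})|$, and here the key point is to avoid a pointwise Lipschitz-in-$\vz$ bound on the Metropolis ratio (which injects a spurious $\sqrt{d}$): instead I would use $|\tilde{a}_{*,\vz}-\tilde{a}_{*,\vz^\prime}|\le(1-\tilde{a}_{*,\vz})+(1-\tilde{a}_{*,\vz^\prime})$ together with the change of measure $\E_{\tilde{Q}_{*,\vz^\prime}}[1-\tilde{a}_{*,\vz}]\le\E_{\tilde{Q}_{*,\vz}}[1-\tilde{a}_{*,\vz}]+\TVD{\tilde{Q}_{*,\vz}}{\tilde{Q}_{*,\vz^\prime}}$, so that part (iii) is bounded by a sum of two expected-rejection terms plus $\mathcal O(\Delta/\sqrt{\tau})$, i.e.\ below $\tfrac1{10}+\mathcal O(\Delta/\sqrt{\tau})$. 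Choosing $c_1$ small enough that every $\mathcal O(\Delta/\sqrt{\tau})$ contribution is below $\tfrac1{20}$, we get $\int|\der\nu_{\vz}-\der\nu_{\vz^\prime}|\le\tfrac15<2c_2$, hence $1-\TVD{\tilde{\gT}_{*,\vz}}{\tilde{\gT}_{*,\vz^\prime}}\ge\tfrac12(2c_2-\tfrac15)\ge\tfrac1{10}$, in particular $\le 0.99$, as needed.

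The step I expect to be the real obstacle is the \emph{uniform-in-$\vz$} control of the move mass $|\nu_{\vz}|$: one must simultaneously keep the one-step MALA rejection probability below a fixed constant for the $3L$-smooth target at the prescribed step size, and argue that projecting onto $\mathcal{B}(\vzero,R)$ (and onto $\mathcal{B}(\vz,r)$) does not destroy more than a constant fraction of the proposal for \emph{every} $\vz\in\Omega$, including the worst case where $\vz$ sits on the boundary sphere. This is exactly where the geometric lower bounds on $R$ from Lemma~\ref{lem:proj_gap_lem} (and the resulting $\|\vz_*\|\le R/8$, forcing an inward drift near the boundary) enter, and where one must use the sharp Gaussian-TV estimate of Lemma~\ref{lem:lemB6_zou2021faster} rather than crude Cauchy--Schwarz bounds that would reintroduce a dimension factor.
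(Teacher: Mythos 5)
Your proposal is correct and follows essentially the same route as the paper: both reduce the conductance bound to Lemma~\ref{lem:lem13_lee2018convergence} with $\Delta\asymp\sqrt{\tau}$, and both verify the TV-closeness of $\tilde{\gT}_{*,\vz}$ and $\tilde{\gT}_{*,\vz'}$ by lower-bounding the Metropolis acceptance via the $L$-strong convexity and $3L$-smoothness of $g$, invoking Lemma~\ref{lem:lemB6_zou2021faster} for the Gaussian proposal mismatch, and dismissing the truncation-set discrepancies with $\chi^2$ tails. Your overlap formulation $1-\mathrm{TV}\ge\int\min\{\der\nu_{\vz},\der\nu_{\vz'}\}$ is just a repackaging of the paper's Term~1/Term~2 split for the extremal set, though your explicit half-space argument for $\mathbb{P}[\hat{\vz}\notin\mathcal{B}(\vzero,R)]\le\tfrac12$ near the boundary is somewhat more careful than the paper, which silently drops the restriction to $\Omega_{\hat{\vz}}$ when passing to the full Gaussian expectation in its acceptance-mass bound.
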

\begin{proof}
    By the definition of total variation distance, there exists a set $\gA\subseteq \Omega$ satisfying
    \begin{equation*}
        \begin{aligned}
            \TVD{\tilde{\gT}_{*,\vz}(\cdot)}{\tilde{\gT}_{*,\vz^\prime}(\cdot)} = & \left|\tilde{\gT}_{*,\vz}(\gA) - \tilde{\gT}_{*,\vz^\prime}(\gA) \right|.
        \end{aligned}
    \end{equation*}
    Due to the closed form of $\tilde{\gT}_{*,\vz} $ shown in Eq.~\ref{eq:rtk_ideal_proj}, we have 
    \begin{equation*}
        \tilde{\gT}_{*,\vz}(\gA) = \left(1-\frac{1}{2}\int_{\tilde{\vz}\in \Omega_{\vz}} \tilde{a}_{*,\vz}(\tilde{\vz})\tilde{Q}_{*,\vz}(\der \tilde{\vz})\right) + \frac{1}{2}\int_{\hat{\vz}\in \gA} \tilde{a}_{*,\vz}(\hat{\vz})\cdot\vone\left[\hat{\vz}\in \Omega_{\vz}\right]\tilde{Q}_{*,\vz}(\der \hat{\vz})
    \end{equation*}
    Under this condition, we have
    \begin{equation*}
        \begin{aligned}
            \left|\tilde{\gT}_{*,\vz}(\gA) - \tilde{\gT}_{*,\vz^\prime}(\gA) \right| \le & \underbrace{\max_{\hat{\vz}} \left(1-\frac{1}{2}\int_{\Omega_{\hat{\vz}}} \tilde{a}_{*,\hat{\vz}}(\tilde{\vz})\tilde{Q}_{*,\hat{\vz}}(\der \tilde{\vz})\right)}_{\mathrm{Term\ 1}}\\
            & + \underbrace{\frac{1}{2}\left|\int_{\hat{\vz}\in \gA} \tilde{a}_{*,\vz}(\hat{\vz})\cdot\vone\left[\hat{\vz}\in \Omega_{\vz}\right]\tilde{Q}_{*,\vz}(\der \hat{\vz}) - \tilde{a}_{*,\vz^\prime}(\hat{\vz})\cdot\vone\left[\hat{\vz}\in \Omega_{\vz^\prime}\right]\tilde{Q}_{*,\vz^\prime}(\der \hat{\vz})\right|}_{\mathrm{Term\ 2}}.
        \end{aligned}
    \end{equation*}
    \paragraph{Upper bound $\mathrm{Term\ 1}$.}
    We first consider to lower bound $\tilde{a}_{*,\hat{\vz}}(\tilde{\vz})$ in the following.
    According to Eq.~\ref{eq:idea_acc_rate}, we have
    \begin{equation*}
        \begin{aligned}
            \tilde{a}_{*,\hat{\vz}}(\tilde{\vz})\ge \exp\left(-g(\tilde{\vz}) - \frac{\left\|\hat{\vz} - \tilde{\vz} + \tau\grad g(\tilde{\vz})\right\|^2}{4\tau} + g(\hat{\vz}) + \frac{\left\|\tilde{\vz}-\hat{\vz} + \tau \grad g(\hat{\vz})\right\|^2}{4\tau}\right),
        \end{aligned}
    \end{equation*}
    which means
    \begin{equation*}
        \begin{aligned}
            4\ln \tilde{a}_{*,\hat{\vz}}(\tilde{\vz}) \ge & \underbrace{\tau\cdot \left(\left\|\grad g(\hat{\vz})\right\|^2 - \left\|\grad g(\tilde{\vz})\right\|^2\right)}_{\mathrm{Term\ 1.1}}\\
            & \underbrace{- 2 \cdot \left(g(\tilde{\vz}) - g(\hat{\vz}) - \left<\grad g(\hat{\vz}), \tilde{\vz}-\hat{\vz}\right>\right)}_{\mathrm{Term\ 1.2}}\\
            & \underbrace{+ 2 \cdot \left(g(\hat{\vz}) - g(\tilde{\vz}) - \left<\grad g(\tilde{\vz}), \hat{\vz}-\tilde{\vz}\right>\right)}_{\mathrm{Term\ 1.3}}.
        \end{aligned}
    \end{equation*}
    Since $\mathrm{Term\ 1.2}$ and $\mathrm{Term\ 1.3}$ are grouped to more easily apply the strong convexity and smoothness of $g$ (Lemma~\ref{lem:sc_sm_of_g}),
    it has
    \begin{equation*}
        \mathrm{Term\ 1.2}\ge -3L\left\|\hat{\vz} - \tilde{\vz}\right\|^2\quad \mathrm{and}\quad \mathrm{Term\ 1.3}\ge L\left\|\hat{\vz}-\tilde{\vz}\right\|^2\ge 0.
    \end{equation*}
    Besides, by requiring $\tau\le 1/3L$, we have
    \begin{equation*}
        \begin{aligned}
            \mathrm{Term\ 1.1}= & \tau \cdot \left<\grad g(\hat{\vz}) - \grad g(\tilde{\vz}), \grad g(\hat{\vz}) + \grad g(\tilde{\vz})\right>\\
            \ge & -\tau\cdot \left\|\grad g(\hat{\vz}) - \grad g(\tilde{\vz})\right\|\cdot\left\|\grad g(\hat{\vz}) + \grad g(\tilde{\vz})\right\|\\
            \ge & -3L\tau\left\|\hat{\vz}-\tilde{\vz}\right\| \cdot \left(2\left\|\grad g(\hat{\vz})\right\| + 3L\left\|\hat{\vz}-\tilde{\vz}\right\|\right) \ge -3L\tau^2 \left\|\grad g(\hat{\vz})\right\|^2 - 6L\left\|\hat{\vz}-\tilde{\vz}\right\|^2.
        \end{aligned}
    \end{equation*}
    Therefore, 
    \begin{equation*}
        \begin{aligned}
            4\ln \tilde{a}_{*,\hat{\vz}}(\tilde{\vz}) \ge & -3L\tau^2 \left\|\grad g(\hat{\vz})\right\|^2 - 9L\left\|\hat{\vz}-\tilde{\vz}\right\|^2 = -3L\tau^2 \left\|\grad g(\hat{\vz})\right\|^2 - 9L\left\|\tau\cdot \grad g(\hat{\vz}) + \sqrt{2\tau}\cdot \xi\right\|^2\\
            \ge & -21L\tau^2 \left\|\grad g(\hat{\vz})\right\|^2-36L\tau\left\|\xi\right\|^2,
        \end{aligned}
    \end{equation*}
    and
    \begin{equation*}
        \ln \tilde{a}_{*,\hat{\vz}}(\tilde{\vz}) \ge -6L\tau^2 \left\|\grad g(\hat{\vz})\right\|^2-9L\tau\left\|\xi\right\|^2 \ge -6L\tau^2\cdot\left(3LR + \|\grad g(\vzero)\|\right)^2 - 9L\tau\|\xi\|^2
    \end{equation*}
    where the last inequality follows from 
    \begin{equation*}
        \begin{aligned}
            \left\|\grad g(\hat{\vz})\right\| \le \left\|\grad g(\hat{\vz}) - \grad g(\vzero)\right\| + \left\|\grad g(\vzero)\right\|\le 3LR + \left\|\grad g(\vzero)\right\|.
        \end{aligned}
    \end{equation*}
    Under these conditions, we have
    \begin{equation}
        \label{ineq:lem65_term1_mid_bound}
        \begin{aligned}
            \mathrm{Term\ 1} \le & 1 - \frac{1}{2}\cdot \exp\left(-6L\tau^2 \left(3LR+\|\grad g(\vzero)\|\right)^2\right)\cdot \min \int_{\Omega_{\hat{\vz}}}\exp\left(-9L\tau \|\xi\|^2\right)\cdot \tilde{q}_{*,\hat{\vz}}(\tilde{\vz})\der \tilde{\vz}\\
            = & 1 - \frac{1}{2}\cdot \exp\left(-6L\tau^2 \left(3LR+\|\grad g(\vzero)\|\right)^2\right)\cdot \E_{\xi\sim \mathcal{N}(\vzero,\mI)}\left[\exp\left(-9L\tau \|\xi\|^2\right)\right]\\
            \le & 1 - 0.4\cdot \exp\left(-6L\tau^2 \left(3LR+\|\grad g(\vzero)\|\right)^2\right)\cdot \exp\left(-18L\tau d\right),
        \end{aligned}
    \end{equation}
    where the last inequality follows from the Markov inequality shown in the following
    \begin{equation*}
        \begin{aligned}
            \E_{\xi\sim \mathcal{N}(\vzero,\mI)}\left[\exp\left(-9L\tau \|\xi\|^2\right)\right] \ge & \exp\left(-18L\tau d\right)\cdot \mathbb{P}_{\xi\sim \mathcal{N}(\vzero,\mI)}\left[\exp\left(-9L\tau \|\xi\|^2\right) \ge \exp\left(-18L\tau d\right)\right]\\
            = & \exp\left(-18L\tau d\right)\cdot \mathbb{P}_{\xi\sim \mathcal{N}(\vzero,\mI)}\left[\|\xi\|^2\le 2d\right] \ge \exp\left(-18L\tau d\right)\cdot \left(1-\exp(-d/2)\right).
        \end{aligned}
    \end{equation*}
    Then, by choosing 
    \begin{equation}
        \label{ineq:lem65_tau_choice1}
        \tau \le \frac{1}{16\sqrt{L}\cdot \left(3LR + \left\|\grad g(\vzero)\right\|\right)},
    \end{equation}
    it has $6L\tau^2\left(3LR + \left\|\grad g(\vzero)\right\|\right)^2 \le 1/40$. 
    Besides by choosing
    \begin{equation}
        \label{ineq:lem65_tau_choice2}
        \tau\le  \frac{1}{L^2R^2} \le \frac{1}{40\cdot 18L \cdot \left(\sqrt{d} + \sqrt{\ln \frac{16S}{\epsilon}}\right)^2}  \le  \frac{1}{40\cdot 18L\cdot d },
    \end{equation}
    where the last inequality follows from the range of $R$ shown in Lemma~\ref{lem:proj_gap_lem}, 
    it has $18Ld\tau\le 1/40$.
    Under these conditions, considering Eq.~\ref{ineq:lem65_term1_mid_bound}, we have
    \begin{equation}
        \label{ineq:ac_lower_bound}
        \mathrm{Term\ 1}\le 1-0.5\cdot \min_{\hat{\vz}\in \Omega, \tilde{\vz}\in \Omega_{\hat{\vz}}} \int_{\Omega_{\hat{\vz}}} \tilde{a}_{*,\hat{\vz}}(\tilde{\vz}) \tilde{Q}_{*,\hat{\vz}}(\der\tilde{\vz})\le 1-0.4\cdot e^{-1/20}.
    \end{equation}
    Then, combining the step size choices of Eq.~\ref{ineq:lem65_tau_choice1}, Eq.~\ref{ineq:lem65_tau_choice2}, and Lemma~\ref{lem:proj_gap_lem}, 
    since the requirement
    \begin{equation*}
        \tau \le \frac{1}{16\sqrt{L}\cdot \left(3LR + \left\|\grad g(\vzero)\right\|\right)},\quad \tau\le \frac{1}{L^2R^2}  \quad \mathrm{and}\quad \tau\le \frac{d}{(3LR+\left\|\grad g(\vzero)\right\|+\epsilon_{\mathrm{score}})^2}     
    \end{equation*}
    can be achieved by
    \begin{equation}
        \label{ineq:tau_choice_all1}
        \tau\le {16^{-1}\cdot (3LR+\left\|\grad g(\vzero)\right\|+\epsilon_{\mathrm{score}})^{-2}},
    \end{equation}
    the range of $\tau$ can be determined.

    \paragraph{Upper bound $\mathrm{Term\ 2}$.} In This part, we use similar techniques as those shown in Lemma 6.5 of~\cite{zou2021faster}.
    According to the triangle inequality, we have
    \begin{equation}
        \label{ineq:lem65_term2}
        \begin{aligned}
            2\cdot \mathrm{Term\ 2} \le &\int_{\hat{\vz}\in \gA} \left(1-\tilde{a}_{*,\vz}(\hat{\vz})\right)\tilde{q}(\hat{\vz}|\vz)\vone\left[\hat{\vz}\in \Omega_{\vz}\right]\der \hat{\vz}+ \int_{\hat{\vz}\in \gA} \left(1-\tilde{a}_{*,\vz^\prime}(\hat{\vz})\right)\tilde{q}(\hat{\vz}|\vz^\prime)\vone\left[\hat{\vz}\in \Omega_{\vz^\prime}\right]\der \hat{\vz}\\
            & + \left|\int_{\hat{\vz}\in \gA} \left(\tilde{q}(\hat{\vz}|\vz)\vone\left[\hat{\vz}\in \Omega_{\vz}\right] - \tilde{q}(\hat{\vz}|\vz^\prime)\vone\left[\hat{\vz}\in \Omega_{\vz^\prime}\right]\right) \der\hat{\vz}\right|\\
            \le & 2\cdot \left(1- \min_{\hat{\vz}\in \Omega, \tilde{\vz}\in \Omega_{\hat{\vz}}} \int_{\Omega_{\hat{\vz}}} \tilde{a}_{*,\hat{\vz}}(\tilde{\vz}) \tilde{Q}_{*,\hat{\vz}}(\der\tilde{\vz})\right)\\
            &  + \underbrace{\left|\int_{\hat{\vz}\in \gA} \left(\tilde{q}(\hat{\vz}|\vz)\vone\left[\hat{\vz}\in \Omega_{\vz}\right] - \tilde{q}(\hat{\vz}|\vz^\prime)\vone\left[\hat{\vz}\in \Omega_{\vz^\prime}\right]\right) \der\hat{\vz}\right|}_{\mathrm{Term\ 2.1}}.
        \end{aligned}
    \end{equation}
    Then, we upper bound $\mathrm{Term\ 2.1}$ as follows
    \begin{equation*}
        \begin{aligned}
            \mathrm{Term\ 2.1}\le & \left|\int_{\hat{\vz}\in \gA} \vone\left[\hat{\vz}\in \Omega_{\vz^\prime}\right]\cdot \left(\tilde{q}_(\hat{\vz}|\vz) - \tilde{q}_(\hat{\vz}|\vz^\prime)\right)\right|+\left|\int_{\hat{\vz}\in \gA}\left(\vone\left[\hat{\vz}\in \Omega_{\vz}\right]-\vone\left[\hat{\vz}\in \Omega_{\vz^\prime}\right]\right) \cdot \tilde{q}(\hat{\vz}|\vz)\right|\\
            \le & \TVD{\tilde{Q}_{*,\vz}(\cdot)}{\tilde{Q}_{*,\vz^\prime}(\cdot)} + \max \left\{\int_{\hat{\vz}\in \Omega_{\vz^\prime}-\Omega_{\vz}}\tilde{q}(\hat{\vz}|\vz)\der \hat{\vz}, \int_{\hat{\vz}\in \Omega_{\vz}-\Omega_{\vz^\prime}} \tilde{q}(\hat{\vz}|\vz^\prime)\der \hat{\vz}\right\}\\
            \le & \TVD{\tilde{Q}_{*,\vz}(\cdot)}{\tilde{Q}_{*,\vz^\prime}(\cdot)} + \max \left\{\int_{\hat{\vz}\in \R^d-\Omega_{\vz}}\tilde{q}(\hat{\vz}|\vz)\der \hat{\vz}, \int_{\hat{\vz}\in \R^d-\Omega_{\vz^\prime}} \tilde{q}(\hat{\vz}|\vz^\prime)\der \hat{\vz}\right\}
        \end{aligned}
    \end{equation*}
    According to the definition, $\tilde{q}_{*,\vz}(\cdot)$ is Gaussian distribution with mean $\vz-\tau \grad g(\vz)$ and covariance matrix $2\tau \mI$, thus we have
    \begin{equation*}
        \begin{aligned}
            &\int_{\hat{\vz}\in \R^d - \Omega_{\vz}} \tilde{q}(\hat{\vz}|\vz)\der \hat{\vz} \le \mathbb{P}_{\hat{\rvz}\sim \chi_d^2}\left[\hat{\rvz}\ge \frac{1}{2}\left(r-\tau \left\|\grad g(\vz)\right\|\right)^2/\tau\right]\\
            & \int_{\hat{\vz}\in \R^d - \Omega_{\vz^\prime}} \tilde{q}(\hat{\vz}|\vz^\prime)\der \hat{\vz} \le \mathbb{P}_{\hat{\rvz}\sim \chi_d^2}\left[\hat{\rvz}\ge \frac{1}{2}\left(r-\tau \left\|\grad g(\vz)\right\| - \left\|\vz-\vz^\prime\right\|\right)^2/\tau\right].
        \end{aligned}
    \end{equation*}

    Then, we start to lower bound 
    \begin{equation*}
        r-\tau \left\|\grad g(\vz)\right\| - \left\|\vz-\vz^\prime\right\|.
    \end{equation*}
    Then, we require 
    \begin{equation}
        \label{ineq:zprob_lower_bound}
        \left\|\vz-\vz^\prime\right\|\le 0.1r\quad \mathrm{and}\quad \tau\le\frac{d}{35\cdot \left(3LR+G\right)^2}
    \end{equation}
    where the latter condition can be easily covered by the choice in Eq.~\ref{ineq:tau_choice_all1} when $d\ge 3$ without loss of generality.
    Under this condition, we have
    \begin{equation}
        \label{ineq:z_diff}
        \begin{aligned}
            \tau\le (0.17)^2\cdot \frac{d}{(3LR+G)^2}\quad \Leftrightarrow\quad \sqrt{\tau}\le \frac{0.17\sqrt{d}}{3LR+G}.
        \end{aligned}
    \end{equation}
    Since we have
    \begin{equation*}
        \left\|\grad g(\vz)\right\|=\left\|\grad g(\vz) - \grad g(\vzero) + \grad g(\vzero)\right\|\le 3L\cdot\left\|\vz\right\| + G\le 3LR+G,
    \end{equation*}
    by the smoothness, it has
    \begin{equation}
        \label{ineq:tau_gradg}
        \sqrt{\tau}\le \frac{0.17\sqrt{d}}{\left\|\grad g(\vz)\right\|}\quad \Leftrightarrow \quad \tau\left\|\grad g(\vz)\right\| \le 0.17\sqrt{\tau d}
    \end{equation}
    Plugging Eq.~\ref{ineq:tau_gradg} and Eq.~\ref{ineq:z_diff} into Eq.~\ref{ineq:zprob_lower_bound}, we have
    \begin{equation*}
        r-\tau \left\|\grad g(\vz)\right\| - \left\|\vz-\vz^\prime\right\|\ge 0.9r - 0.17\sqrt{\tau d}\ge \sqrt{6.4\tau d}
    \end{equation*}
    where the last inequality follows from the choice of $r$ shown in Lemma~\ref{lem:lem62_zou2021faster_gene}, i.e.,
    \begin{equation*}
        r = 3\cdot \sqrt{\tau d \log \frac{8S}{\epsilon}}\ge 3\cdot\sqrt{\tau d}.
    \end{equation*}
    Under these conditions, we have
    \begin{equation*}
        \max \left\{\int_{\hat{\vz}\in \R^d-\Omega_{\vz}}\tilde{q}(\hat{\vz}|\vz)\der \hat{\vz}, \int_{\hat{\vz}\in \R^d-\Omega_{\vz^\prime}} \tilde{q}(\hat{\vz}|\vz^\prime)\der \hat{\vz}\right\}\le \mathbb{P}_{\hat{\rvz}\sim \chi_d^2}\left(\|\rvz\|\ge 3.2d\right)\le 0.1.
    \end{equation*}
    Then combine the above results and apply Lemma~\ref{lem:lemB6_zou2021faster}, assume $\tau\le 1/(3L)$, we have
    \begin{equation*}
        \mathrm{Term\ 2.1} \le 0.1+\TVD{\tilde{Q}_{*,\vz}(\cdot)}{\tilde{Q}_{*,\vz^\prime}(\cdot)}\le 0.1+ \sqrt{2/\tau}\cdot \left\|\vz-\vz^\prime\right\|
    \end{equation*}
    
    Plugging the above into Eq.~\ref{ineq:lem65_term2}, we have
    \begin{equation*}
        \begin{aligned}
            \mathrm{Term\ 2} \le  &\left(1-\min_{\hat{\vz}\in \Omega, \tilde{\vz}\in \Omega_{\hat{\vz}}} \int_{\Omega_{\hat{\vz}}} \tilde{a}_{*,\hat{\vz}}(\tilde{\vz}) \tilde{Q}_{*,\hat{\vz}}(\der\tilde{\vz})\right) + \frac{1}{2}\cdot \left(0.1+\sqrt{\frac{2}{\tau}}\cdot \left\|\vz-\vz^\prime\right\|\right)\\
            \le & \left(1-0.8\cdot e^{-1/20}\right)+0.05 + (2\tau)^{-1/2}\cdot \|\vz-\vz^\prime\|,
        \end{aligned}
    \end{equation*}
    where the second inequality follows from Eq.~\ref{ineq:ac_lower_bound}.
    
    After upper bounding $\mathrm{Term\ 1}$ and $\mathrm{Term\ 2}$, we have
    \begin{equation*}
        \begin{aligned}
            \TVD{\tilde{\gT}_{*,\vz}(\cdot)}{\tilde{\gT}_{*,\vz^\prime}(\cdot)}\le & 1-0.4\cdot e^{-1/20} + \left(1-0.8\cdot e^{-1/20}\right)+0.05 + (2\tau)^{-1/2}\cdot \|\vz-\vz^\prime\|\\
            \le & 0.91 + (2\tau)^{-1/2}\cdot \|\vz-\vz^\prime\| \le 0.99
        \end{aligned}
    \end{equation*}
    where the last inequality can be established by requiring $\|\vz-\vz^\prime\|\le \sqrt{2\tau}$.
    Combining Lemma~\ref{lem:lem13_lee2018convergence}, the conductance of $\tilde{\mu}_{*}$ satisfies
    \begin{equation*}
        \phi \ge c_0\cdot \rho\sqrt{2\tau}.
    \end{equation*}
    Hence, the proof is completed.
\end{proof}

\paragraph{The connection between $\tilde{\mu}_S$ and $\tilde{\mu}_*$.}
With the conductance of truncated target distribution, we are able to find the convergence of the projected implementation of Alg.~\ref{alg:inner_mala}.
Besides, the gap between the truncated target $\tilde{\mu}_*$ and the true target $\mu_*$ can be upper bounded by controlling $R$ while such an $R$ will be dominated by the range of $R$ shown in Lemma~\ref{lem:proj_gap_lem}. 
In this section, we will omit several details since many of them have been proven in~\cite{zou2021faster}.

\begin{lemma}[Lemma 6.4 in~\cite{zou2021faster}]
    \label{lem:lem64_zou2021faster}
    Let $\tilde{\mu}_S$ be distributions of the outputs of the projected implementation of Alg.~\ref{alg:inner_mala}. 
    Under Assumption~\ref{ass:lips_score}--\ref{ass:second_moment}, if the transition kernel $\tilde{\gT}_{\vz}(\cdot)$ is $\delta$-close to $\tilde{\gT}_{*,\vz}$ with $\delta\le \min\left\{1-\sqrt{2}/2, \phi/16\right\}$ ($\phi$ denotes the conductance of $\tilde{\mu}_*$), then for any $\lambda$-warm start initial distribution with respect to $\tilde{\mu}_*$, it holds that
    \begin{equation*}
        \TVD{\tilde{\mu}_S}{\tilde{\mu}_*}\le \lambda\cdot \left(1-\phi^2/8\right)^S + 16\delta/\phi.
    \end{equation*}
\end{lemma}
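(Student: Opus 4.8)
The statement is a standard conductance-based mixing bound in the spirit of Lov\'asz--Simonovits, specialized to the situation where the transition kernel $\tilde{\gT}_{\vz}$ that is actually simulated is only an approximation --- uniformly within total variation $\delta$ at every starting point $\vz$ --- of the ideal, time-reversible chain $\tilde{\gT}_{*,\vz}$ whose stationary distribution is $\tilde{\mu}_*$. The plan has two stages. First, transfer the conductance lower bound $\phi$ available for the ideal chain (Corollary~\ref{cor:lem65_zou2021faster}) into an $s$-conductance lower bound for the perturbed chain, with a threshold $s\asymp\delta/\phi$. Second, run the Lov\'asz--Simonovits potential-function recursion for the perturbed chain, which contracts geometrically at the rate dictated by the square of the $s$-conductance down to a floor of order $s$, and read off the two terms $\lambda(1-\phi^2/8)^S$ and $16\delta/\phi$. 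Throughout, $\{\tilde{\mu}_k\}_{k=0}^S$ denotes the laws of the iterates of the projected implementation of Alg.~\ref{alg:inner_mala}, all living on $\Omega=\mathcal{B}(\vzero,R)$.

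For the first stage, reversibility of $\tilde{\gT}_{*}$ with respect to $\tilde{\mu}_*$ together with conductance $\phi$ gives $\int_{A}\tilde{\gT}_{*,\vz}(A^c)\,\tilde{\mu}_*(\der\vz)\ge \phi\min\{\tilde{\mu}_*(A),\tilde{\mu}_*(A^c)\}$ for every Borel $A$. The $\delta$-closeness yields $\big|\int_{A}\tilde{\gT}_{\vz}(A^c)\,\tilde{\mu}_*(\der\vz)-\int_{A}\tilde{\gT}_{*,\vz}(A^c)\,\tilde{\mu}_*(\der\vz)\big|\le \delta\,\tilde{\mu}_*(A)$, so for any $A$ with $\min\{\tilde{\mu}_*(A),\tilde{\mu}_*(A^c)\}\ge s$ the ergodic flow of $\tilde{\gT}$ out of $A$ is at least $(\phi-\delta/s)\min\{\tilde{\mu}_*(A),\tilde{\mu}_*(A^c)\}$. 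Taking $s=2\delta/\phi$, which is $\le 1/4$ since $\delta\le\phi/16$, forces the $s$-conductance of $\tilde{\gT}$ to be at least $\phi/2$. The same computation records that $\tilde{\mu}_*$ is $\delta$-approximately stationary for $\tilde{\gT}$, a fact also needed below.

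For the second stage, define the potential $g_k(x)=\sup\{\tilde{\mu}_k(A)-\tilde{\mu}_*(A):\tilde{\mu}_*(A)=x\}$ for $x\in[0,1]$, extended by reflection about $x=1/2$; $g_k$ is concave in the usual Lov\'asz--Simonovits sense, and $\TVD{\tilde{\mu}_S}{\tilde{\mu}_*}=\sup_A(\tilde{\mu}_S(A)-\tilde{\mu}_*(A))\le \max_x g_S(x)$. A $\lambda$-warm start gives $\tilde{\mu}_0(A)\le \lambda\tilde{\mu}_*(A)$, hence $g_0(x)\le(\lambda-1)x$ for $x\le 1/2$. Using that $\tilde{\gT}$ is $1/2$-lazy --- which supplies the $1/2$ weights and the positivity needed in the averaging step --- together with the $s$-conductance bound, one derives the one-step recursion $g_{k+1}(x)\le \frac{1}{2}g_k(x-t_x)+\frac{1}{2}g_k(x+t_x)+\delta$ with spread $t_x=\frac{\phi}{2}\left(\min\{x,1-x\}-s\right)_+$, the additive $\delta$ absorbing both the kernel-approximation error and the $\delta$-approximate stationarity. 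Iterating $S$ times from the warm-start bound, the homogeneous part decays like $\lambda(1-\phi^2/8)^S$ --- the exponent $\phi^2/8$ arising from estimates of the form $\frac{1}{2}(\sqrt{1-\phi/2}+\sqrt{1+\phi/2})\le 1-\phi^2/32$ applied to the concave envelope $C_k\sqrt{x(1-x)}$ on the region $x\ge s$ --- while the accumulated additive errors saturate at a level of order $s=2\delta/\phi$, which after tracking constants is at most $16\delta/\phi$, giving the claim.

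The crux is the second stage, and in particular making the error floor scale like $\delta/\phi$ rather than the $\delta/\phi^2$ that a blind geometric sum $\sum_k\delta(1-\phi^2/8)^k$ would produce: the resolution is precisely the $s$-conductance truncation at $s\asymp\delta/\phi$, which caps off the small-measure region so that the per-step noise $\delta$ and the per-step contraction balance at the $\delta/\phi$ scale. A secondary technicality is that $\tilde{\gT}$ is neither reversible nor preserves $\tilde{\mu}_*$, so the averaging inequality cannot be obtained from spectral-gap machinery; instead each step of $\tilde{\gT}$ is compared directly with the corresponding step of the reversible chain $\tilde{\gT}_*$ and the discrepancy is folded into the additive $\delta$. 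These are exactly the steps carried out in~\cite{zou2021faster}, and restricting all integrals to $\Omega$ to account for the projection changes nothing of substance.
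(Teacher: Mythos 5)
The paper does not prove this lemma at all: it is imported verbatim as Lemma 6.4 of \cite{zou2021faster}, so there is no in-paper argument to compare against. Your reconstruction is the correct one and is essentially the argument of the cited source — transfer the conductance $\phi$ of the reversible chain $\tilde{\gT}_*$ to an $s$-conductance $\ge\phi/2$ for the perturbed chain at threshold $s\asymp\delta/\phi$ (which is where both constraints $\delta\le\phi/16$ and the multiplicative $\delta$-closeness from Corollary~\ref{lem:lem62_zou2021faster} enter), then run the Lov\'asz--Simonovits potential recursion from the $\lambda$-warm start to get geometric decay plus a floor of order $s$; your observation that the truncation at $s$ is precisely what yields a $\delta/\phi$ rather than $\delta/\phi^2$ floor is the right key point. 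The only caveat is bookkeeping: your envelope estimate as written gives a per-step factor $1-\phi^2/32$ rather than $1-\phi^2/8$, a discrepancy attributable to laziness/conductance normalization conventions rather than to any structural error.
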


\begin{lemma}[Lemma 6.6 in~\cite{zou2021faster}]
    \label{lem:lem66_zou2021faster}
    For any $\epsilon\in (0,1)$, set $R$ to make it satisfy
    \begin{equation*}
        \mu\left(\gB(\vzero, R)\right)\ge 1-\frac{\epsilon}{12},
    \end{equation*}
    and $\tilde{\mu}_*$ be the truncated target distribution of $\mu_*$.
    Then the total variation distance between $\mu_*$ and $\tilde{\mu}_*$ can be upper bounded by  $\TVD{\tilde{\mu}_*}{\mu_*}\le \epsilon/4$.
\end{lemma}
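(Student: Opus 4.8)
The plan is to compute the total-variation distance directly, since $\tilde{\mu}_*$ is simply $\mu_*$ restricted to the ball $\Omega := \gB(\vzero,R)$ and renormalized. Write $Z := \int_{\R^d}\exp(-g(\vz))\,\der\vz$ and $Z_\Omega := \int_{\Omega}\exp(-g(\vz))\,\der\vz$ for the two normalizing constants, so that $\mu_*(\der\vz)=Z^{-1}\exp(-g(\vz))\,\der\vz$ and $\tilde{\mu}_*(\der\vz)=Z_\Omega^{-1}\exp(-g(\vz))\,\vone[\vz\in\Omega]\,\der\vz$, as in Eq.~\ref{def:sta_tilde_mu_*}. The hypothesis $\mu(\gB(\vzero,R))\ge 1-\epsilon/12$ is exactly the statement $Z_\Omega/Z\ge 1-\epsilon/12$, i.e.\ $1-Z_\Omega/Z\le\epsilon/12$; such an $R$ exists because $\mu_*$ is a probability measure and $\mu_*(\gB(\vzero,R))\to 1$ as $R\to\infty$.

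First I would split the TV integral over $\Omega$ and its complement, exactly as in the computation inside the proof of Lemma~\ref{lem:bound_nor_con}:
\begin{equation*}
    \int_{\R^d}\bigl|\mu_*(\der\vz)-\tilde{\mu}_*(\der\vz)\bigr|
    = \int_{\Omega}\bigl|\mu_*(\der\vz)-\tilde{\mu}_*(\der\vz)\bigr| + \int_{\R^d\setminus\Omega}\mu_*(\der\vz).
\end{equation*}
On $\Omega$ the two densities differ only through their normalizers; since $\Omega\subseteq\R^d$ we have $Z_\Omega\le Z$, hence $Z_\Omega^{-1}\ge Z^{-1}$, so the absolute value opens with a single sign and $\int_{\Omega}|\mu_*-\tilde{\mu}_*| = (Z_\Omega^{-1}-Z^{-1})\,Z_\Omega = 1-Z_\Omega/Z$. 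On the complement $\tilde{\mu}_*\equiv 0$, so $\int_{\R^d\setminus\Omega}\mu_*(\der\vz)=1-Z_\Omega/Z$ as well. Adding the two pieces gives $\int_{\R^d}|\mu_*-\tilde{\mu}_*|=2(1-Z_\Omega/Z)$, whence $\TVD{\tilde{\mu}_*}{\mu_*}\le 2(1-\mu_*(\Omega))\le 2\cdot\tfrac{\epsilon}{12}=\tfrac{\epsilon}{6}\le\tfrac{\epsilon}{4}$, which is the claim. The bound $\TVD{\tilde{\mu}_*}{\mu_*}\le 2(1-\mu_*(\Omega))$ holds under either normalization convention for $\TVD{\cdot}{\cdot}$ used in the paper, so the argument is robust to that ambiguity. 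Equivalently, one may phrase the whole thing as the standard identity that restricting a probability measure to a set and renormalizing incurs TV cost equal to the missing mass, then substitute $\mu_*(\Omega)\ge 1-\epsilon/12$.

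There is essentially no hard step here: the statement is the routine fact that truncating a probability measure to a high-mass set and renormalizing costs only the missing mass in total variation. The only points that require care are (i) using the correct direction of $Z_\Omega\le Z$ so that the absolute values collapse without casework, and (ii) tracking the (mildly inconsistent) normalization of $\TVD{\cdot}{\cdot}$ across the paper; in every case the slack between $\epsilon/6$ (or $\epsilon/12$) and the target $\epsilon/4$ is comfortably positive, so no sharper estimate is needed.
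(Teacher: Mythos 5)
Your proof is correct and is essentially the same computation the paper itself performs inside the proof of Lemma~\ref{lem:bound_nor_con} (the paper cites this lemma to an external reference rather than proving it, but the identical splitting into $\Omega$ and its complement yielding $2(1-Z_\Omega/Z)$ appears there verbatim). The hypothesis $1-Z_\Omega/Z\le\epsilon/12$ then gives the bound with room to spare under either normalization of the total variation distance, exactly as you note.
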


\subsection{Main Theorems of InnerMALA implementation}

\begin{lemma}
    \label{lem:inner_para_choice}
    Under Assumption~\ref{ass:lips_score}--\ref{ass:second_moment}, we can upper bound $G=\left\|\grad g(\vzero)\right\|$ as
    \begin{equation*}
        \left\|\grad g(\vzero)\right\| \le L\cdot \sqrt{2(d+m_2^2)} + 3L\cdot \left\|\vx_0\right\|.
    \end{equation*}
    Furthermore, we can reformulate $R$ as
    \begin{equation*}
        R = 63\cdot \sqrt{(d+m_2^2 + \|\vx_0\|^2)\cdot \log \frac{16S}{\epsilon}}
    \end{equation*}
    to make it satisfy the requirement shown in Lemma~\ref{lem:lem62_zou2021faster_gene}.
    Then, the range of inner step sizes, i.e., $\tau$, will satisfy
    \begin{equation*}
        \tau\le C_\tau\cdot \left(L^2\left(d+m_2^2+\|\vx_0\|^2\right)\cdot \log\frac{16S}{\epsilon}\right)^{-1},
    \end{equation*}
    where the absolute constant $C_\tau = 2^{-4}\cdot 3^{-8}\cdot 7^{-2}$.
\end{lemma}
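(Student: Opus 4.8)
The plan is to dispatch the three assertions in sequence; no new idea is needed, and the work is mostly tracking elementary estimates and constants already used in the preceding lemmas.

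\textbf{Step 1: bounding $G=\|\grad g(\vzero)\|$.} Differentiating $g(\vz)=f_{(K-k-1)\eta}(\vz)+\|\vx_0-\vz\,e^{-\eta}\|^2/(2(1-e^{-2\eta}))$ and evaluating at $\vz=\vzero$ gives $\grad g(\vzero)=\grad f_{(K-k-1)\eta}(\vzero)-e^{-\eta}\vx_0/(1-e^{-2\eta})$, so by the triangle inequality I only need to bound each piece. For the score term, $\grad f_t=-\grad\log p_t$ is $L$-Lipschitz by Assumption~\ref{ass:lips_score} and has zero mean under $p_t$ (since $\int\grad p_t=0$), so $\|\grad f_t(\vzero)\|=\|\E_{p_t}[\grad f_t(\vzero)-\grad f_t(\rvx)]\|\le L\,\E_{p_t}\|\rvx\|\le L\sqrt{\E_{p_t}\|\rvx\|^2}\le L\sqrt{d+m_2^2}$, where $\E_{p_t}\|\rvx\|^2\le d+m_2^2$ is the second-moment bound along the forward OU process (Assumption~\ref{ass:second_moment}, Lemma~\ref{lem:lem10_chen2022sampling}); this is $\le L\sqrt{2(d+m_2^2)}$. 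For the quadratic term, with $\eta=\tfrac12\log\tfrac{2L+1}{2L}$ one has $e^{-2\eta}=\tfrac{2L}{2L+1}$ and $1-e^{-2\eta}=\tfrac1{2L+1}$, hence $e^{-\eta}/(1-e^{-2\eta})=e^{-\eta}(2L+1)\le 2L+1\le 3L$ (using $L\ge 1$). Summing the two estimates gives $G\le L\sqrt{2(d+m_2^2)}+3L\|\vx_0\|$.

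\textbf{Step 2: admissibility of $R$.} Lemma~\ref{lem:lem62_zou2021faster_gene} asks for $R\ge\max\{8\sqrt{G^2/L^2+d/L},\ 63\sqrt{(d/L)\log(16S/\epsilon)}\}$. Using $(a+b)^2\le2a^2+2b^2$ on the bound for $G$ gives $G^2/L^2\le 4(d+m_2^2)+18\|\vx_0\|^2$, so $G^2/L^2+d/L\le 18(d+m_2^2+\|\vx_0\|^2)$ (as $L\ge1$), whence $8\sqrt{G^2/L^2+d/L}\le 34\sqrt{d+m_2^2+\|\vx_0\|^2}$. Since $\log(16S/\epsilon)\ge1$ and $d\le d+m_2^2+\|\vx_0\|^2$, the choice $R=63\sqrt{(d+m_2^2+\|\vx_0\|^2)\log(16S/\epsilon)}$ exceeds both quantities, and in particular $R\ge63$.

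\textbf{Step 3: the range of $\tau$.} Collecting the step-size constraints from Lemma~\ref{lem:proj_gap_lem}, Lemma~\ref{lem:lem62_zou2021faster_gene} and the proof of Corollary~\ref{cor:lem65_zou2021faster}, the binding one is $\tau\le(16(3LR+G+\epsilon_{\mathrm{score}})^2)^{-1}$ (Eq.~\ref{ineq:tau_choice_all1}); I would check in one line each that the others — $\tau\le1/(L^2R^2)$, $\tau\le16d/(L^2R^2)$, $\tau\le d/(35(3LR+G)^2)$, $\tau\le1/(3L)$, $\tau\le L/(2+72L^2)$ — follow from it once $R\ge63$, $L\ge1$ and $d\ge3$. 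Using $\epsilon_{\mathrm{score}}\le1\le L$ and $\log(16S/\epsilon)\ge1$, one has $G+\epsilon_{\mathrm{score}}\le 6L\sqrt{(d+m_2^2+\|\vx_0\|^2)\log(16S/\epsilon)}$, while $3LR=189L\sqrt{(d+m_2^2+\|\vx_0\|^2)\log(16S/\epsilon)}$, so $3LR+G+\epsilon_{\mathrm{score}}\le 9LR$. It therefore suffices to take $\tau\le(16(9LR)^2)^{-1}=(16\cdot81\cdot63^2)^{-1}(L^2(d+m_2^2+\|\vx_0\|^2)\log(16S/\epsilon))^{-1}$, and since $16\cdot81\cdot63^2=2^4\cdot3^4\cdot(3^2\cdot7)^2=2^4\cdot3^8\cdot7^2$, this is precisely $C_\tau(L^2(d+m_2^2+\|\vx_0\|^2)\log(16S/\epsilon))^{-1}$ with $C_\tau=2^{-4}\cdot3^{-8}\cdot7^{-2}$.

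The hard part is not conceptual but bookkeeping: I must ensure the single displayed $\tau$-bound dominates every step-size restriction scattered across the earlier lemmas, and budget the slack in the intermediate inequalities so that the accumulated constants collapse to exactly $2^{-4}\cdot3^{-8}\cdot7^{-2}$ rather than something slightly larger.
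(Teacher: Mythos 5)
Your proof is correct and follows the paper's argument: bound $G$ via the smoothness of $f_{(K-k-1)\eta}$ together with the coefficient bound $e^{-\eta}/(1-e^{-2\eta})\le 2L+1\le 3L$, absorb the resulting estimate into the first branch of the max defining $R$, and then reduce all scattered step-size constraints to the single requirement $\tau\le 16^{-1}(3LR+G+\epsilon_{\mathrm{score}})^{-2}$. The only (harmless) deviations are that you bound $\left\|\grad f_{(K-k-1)\eta}(\vzero)\right\|$ via the zero-mean property of the score instead of the paper's route through Lemma~\ref{lem:lem11_vempala2019rapid}, and your bookkeeping $3LR+G+\epsilon_{\mathrm{score}}\le 9LR$ lands exactly on $C_\tau=2^{-4}\cdot 3^{-8}\cdot 7^{-2}$, arguably more cleanly than the paper's own chain of inequalities.
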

\begin{proof}
    To make the bound more explicit, we control $R$ and $G$ in our previous analysis.
    For $G=\|\grad g(\vzero)\|$, according to Eq.~\ref{def:energy_inner_mala}, we have
    \begin{equation*}
        \grad g(\vz) = \grad f_{(K-k-1)\eta}(\vz) + \frac{e^{-2\eta}\vz - e^{-\eta}\vx_0}{(1-e^{-2\eta})},
    \end{equation*}
    which means
    \begin{equation*}
        \begin{aligned}
            \left\|\grad g(\vzero)\right\| \le & \left\|\grad f_{(K-k-1)\eta}(\vzero)\right\| + \left\|\frac{e^{-\eta}\vx_0}{1-e^{-2\eta}}\right\|\\
            \le & \left\|\grad f_{(K-k-1)\eta}(\vzero)\right\| + \sqrt{\frac{2L}{2L+1}}\cdot (2L+1) \cdot \left\|\vx_0\right\|\le \left\|\grad f_{(K-k-1)\eta}(\vzero)\right\| + (2L+1) \cdot \left\|\vx_0\right\|.
        \end{aligned}
    \end{equation*}
    Besides, we should note $f_{(K-k-1)\eta}$ is the smooth (Assumption~\ref{ass:lips_score}) energy function of $p_{(K-k-1)\eta}$ denoting the underlying distribution of time $(K-k-1)\eta$ in the forward OU process.
    Then, we have
    \begin{equation}
        \label{ineq:gradf0_ub}
        \begin{aligned}
            \left\|\grad f_{(K-k-1)\eta}(\vzero)\right\|^2 = & \E_{p_{(K-k-1)\eta}}\left[\left\|\grad f_{(K-k-1)\eta}(\vzero)\right\|^2\right]\\
            \le & 2\E_{p_{(K-k-1)\eta}}\left[\left\|\grad f_{(K-k-1)\eta}(\rvx)\right\|^2\right] + 2\E_{p_{(K-k-1)\eta}}\left[\left\|\grad f_{(K-k-1)\eta}(\rvx) - \grad f_{(K-k-1)\eta}(\vzero)\right\|^2\right]\\
            \le & 2Ld + 2L^2 \E_{p_{(K-k-1)\eta}}\left[\left\|\rvx\right\|^2\right] \le 2Ld + 2L^2\max\left\{d, m_2^2\right\}\le 2L^2(d+m_2^2)
        \end{aligned}
    \end{equation}
    where the first inequality follows from Lemma~\ref{lem:lem11_vempala2019rapid}, and the third inequality follows from Lemma~\ref{lem:lem10_chen2022sampling}.
    Under these conditions, we have
    \begin{equation}   
        \label{ineq:gradg0_range}
        \left\|\grad g(\vzero)\right\| \le L\cdot \sqrt{2(d+m_2^2)} + 3L\cdot \left\|\vx_0\right\|.
    \end{equation}
    Then, for $R$ defined as
    \begin{equation*}
        R \ge \max\left\{8\cdot \sqrt{\frac{\|\grad g(\vzero)\|^2}{L^2}+\frac{d}{L}}, 63\cdot \sqrt{\frac{d}{L}\log\frac{16S}{\epsilon}}\right\},
    \end{equation*}
    we can choose $R$ to be the upper bound of RHS. 
    Considering
    \begin{equation*}
        8\cdot \sqrt{\frac{\|\grad g(\vzero)\|^2}{L^2}+\frac{d}{L}} \le 8\cdot \sqrt{\frac{4L^2(d+m_2^2)+18L^2\|\vx_0\|^2}{L^2}+d} \le 63\cdot \sqrt{(d+m_2^2 + \|\vx_0\|^2)},
    \end{equation*}
    then we choose
    \begin{equation*}
        R = 63\cdot \sqrt{(d+m_2^2 + \|\vx_0\|^2)\cdot \log \frac{16S}{\epsilon}}.
    \end{equation*}
    After determining $R$, the choice of $\tau$ can be relaxed to
    \begin{equation*}
        \tau\le C_\tau\cdot \left(L^2\left(d+m_2^2+\|\vx_0\|^2\right)\cdot \log\frac{16S}{\epsilon}\right)^{-1},
    \end{equation*}
    where the absolute constant $C_\tau = 2^{-4}\cdot 3^{-8}\cdot 7^{-2}$, since we have
    \begin{equation*}
        \begin{aligned}
            & \left(3LR+G+\epsilon_{\mathrm{score}}\right)^2\le 9L^2R^2 + 4G^2 \\
            & \le 9L^2\cdot 63^2 \cdot \left(d+m_2^2+\|\vx_0\|^2\right)\cdot \log\frac{16S}{\epsilon} + 4 \left(4L^2\cdot \left(d+m_2^2\right) + 18 L^2 \|\vx_0\|^2\right)\\
            & \le 9\cdot 63^2\cdot L^2\left(d+m_2^2+\|\vx_0\|^2\right)\cdot \log\frac{16S}{\epsilon}.
        \end{aligned}
    \end{equation*}
    Hence, the proof is completed.
\end{proof}

\begin{theorem}
    \label{thm:inner_convergence}
    Under Assumption~\ref{ass:lips_score}--\ref{ass:second_moment}, for any $\epsilon\in(0,1)$, let $\tilde{\mu}_*(\vz)\propto \exp(-g(\vz))\vone[\vz\in\mathcal{B}(\vzero,R)]$ be the truncated target distribution in $\mathcal{B}(\vzero,R)$ with
    \begin{equation*}
        R = 63\cdot \sqrt{(d+m_2^2 + \|\vx_0\|^2)\cdot \log \frac{16S}{\epsilon}}=\tilde{\mathcal{O}} \left((d+m_2^2 + \|\vx_0\|^2)^{1/2}\right),
    \end{equation*}
    $r$ in Alg.~\ref{alg:inner_mala} satisfies
    \begin{equation*}
        r = 3\cdot \sqrt{\tau d \log \frac{8S}{\epsilon}} = \tilde{\mathcal{O}}(\tau^{1/2}d^{1/2})
    \end{equation*}
    and $\rho$ be the Cheeger constant of $\tilde{\mu}_*$.
    Suppose $\tilde{\mu}_0(\{\|\rvx\|\ge R/2\})\le \epsilon/16$, the step size satisfy
    \begin{equation*}
        \tau\le C_\tau\cdot \left(L^2\left(d+m_2^2+\|\vx_0\|^2\right)\cdot \log\frac{16S}{\epsilon}\right)^{-1}=\tilde{\mathcal{O}}(L^{-2}\cdot (d+m_2^2+\|\vx_0\|^2)^{-1}),
    \end{equation*}
    the score and energy estimation errors satisfy
    \begin{equation*}
        \epsilon_{\mathrm{score}}\le \frac{c_0\rho}{32\cdot 36\cdot \sqrt{d\log \frac{8S}{\epsilon}}} = \mathcal{O}(\rho d^{-1/2})\quad \mathrm{and}\quad \epsilon_{\mathrm{energy}}\le \frac{c_0\rho \sqrt{2\tau}}{32\cdot 5} = \mathcal{O}(\rho \tau^{1/2}),
    \end{equation*}
    then for any $\lambda$-warm start with respect to $\mu_*$ the output of both standard and projected implementation of Alg.~\ref{alg:inner_mala} satisfies
    \begin{equation*}
        \TVD{\mu_S}{\mu_*} = \frac{\epsilon}{2} + \lambda\left(1- \frac{c_0^2\rho^2}{4} \cdot \tau\right)^S + \tilde{\mathcal{O}}(d^{1/2}\rho^{-1}\epsilon_{\mathrm{score}}) + \mathcal{O}({\rho^{-1}\tau^{-1/2}\epsilon_\mathrm{energy}})
    \end{equation*}
\end{theorem}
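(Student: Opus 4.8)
The plan is to control $\TVD{\mu_S}{\mu_*}$ (for the standard implementation) by inserting two intermediate laws, the projected‑MALA output $\tilde{\mu}_S$ and the truncated target $\tilde{\mu}_*$, and applying the triangle inequality
\[
\TVD{\mu_S}{\mu_*} \le \underbrace{\TVD{\mu_S}{\tilde{\mu}_S}}_{\text{(gap)}} + \underbrace{\TVD{\tilde{\mu}_S}{\tilde{\mu}_*}}_{\text{(mixing)}} + \underbrace{\TVD{\tilde{\mu}_*}{\mu_*}}_{\text{(trunc.)}} ,
\]
with the first term simply dropped for the projected implementation. Before touching the three pieces I fix $\eta = \tfrac12\log\frac{2L+1}{2L}$, so that by Lemma~\ref{lem:sc_sm_of_g} the inner energy $g$ is $L$‑strongly convex and $3L$‑smooth, and I take $R$, $r$ as in the statement and $\tau \le C_\tau\,(L^2(d+m_2^2+\|\vx_0\|^2)\log(16S/\epsilon))^{-1}$ as in Lemma~\ref{lem:inner_para_choice}. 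The first real task is bookkeeping: one checks that this single choice of $\tau$ lies below every step‑size ceiling imposed downstream — the bounds in Lemma~\ref{lem:proj_gap_lem}, in Lemma~\ref{lem:lem62_zou2021faster_gene}, and in Corollary~\ref{cor:lem65_zou2021faster} — using $G=\|\grad g(\vzero)\|\le L\sqrt{2(d+m_2^2)}+3L\|\vx_0\|$ and $3LR+G+\epsilon_{\mathrm{score}}=\tilde{\mathcal O}(L\,(d+m_2^2+\|\vx_0\|^2)^{1/2})$ from Lemma~\ref{lem:inner_para_choice}.

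For the truncation term, Lemma~\ref{lem:inner_para_choice}'s choice of $R$ makes $\mu_*(\mathcal{B}(\vzero,R))\ge 1-\epsilon/12$, so Lemma~\ref{lem:lem66_zou2021faster} gives $\TVD{\tilde{\mu}_*}{\mu_*}\le\epsilon/4$. For the gap term, the hypotheses of Lemma~\ref{lem:proj_gap_lem} hold: $r=3\sqrt{\tau d\log(8S/\epsilon)}\ge(\sqrt2+1)\sqrt{\tau d}+2\sqrt{\tau\log(8S/\epsilon)}$, the step size obeys the required ceiling, and $\tilde{\mu}_0(\{\|\rvx\|\ge R/2\})\le\epsilon/16\le\epsilon/4$; hence $\TVD{\mu_S}{\tilde{\mu}_S}\le\epsilon/4$. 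Together these two pieces contribute $\epsilon/4+\epsilon/4=\epsilon/2$.

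The mixing term is the core. First, Corollary~\ref{cor:lem65_zou2021faster} gives the conductance $\phi\ge c_0\rho\sqrt{2\tau}$ of the ideally‑projected chain $\tilde{\gT}_{*}$, whose stationary law is $\tilde{\mu}_*$ (with $\rho=\Omega(\sqrt{L/d})$ from Corollary~\ref{cor:cheeger_truncation}). Next, Lemma~\ref{lem:lem62_zou2021faster_gene} shows the implementable projected kernel $\tilde{\gT}$ is multiplicatively $(\delta+5\epsilon_{\mathrm{energy}})$‑close to $\tilde{\gT}_{*}$, where $\delta/16$ is the displayed sum dominated by $\tfrac32\epsilon_{\mathrm{score}}\sqrt{\tau d\log(8S/\epsilon)}$. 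Under the stated bounds $\epsilon_{\mathrm{score}}\le c_0\rho/(32\cdot36\sqrt{d\log(8S/\epsilon)})$ and $\epsilon_{\mathrm{energy}}\le c_0\rho\sqrt{2\tau}/160$ — which are calibrated exactly so that $\delta+5\epsilon_{\mathrm{energy}}\le\min\{1-\sqrt2/2,\ \phi/16\}$ — Corollary~\ref{lem:lem64_zou2021faster} (Lemma 6.4 of~\cite{zou2021faster}) applies and yields, for any $\lambda$‑warm start with respect to $\tilde{\mu}_*$,
\[
\TVD{\tilde{\mu}_S}{\tilde{\mu}_*}\le \lambda\bigl(1-\phi^2/8\bigr)^S+\frac{16(\delta+5\epsilon_{\mathrm{energy}})}{\phi}.
\]
A $\lambda$‑warm start with respect to $\mu_*$ is also $\lambda$‑warm with respect to $\tilde{\mu}_*$, since $\tilde{\mu}_*\ge\mu_*$ on $\mathcal{B}(\vzero,R)$ (because $Z_\Omega\le Z$). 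Substituting $\phi\ge c_0\rho\sqrt{2\tau}$ turns the geometric factor into $\lambda(1-c_0^2\rho^2\tau/4)^S$, and, since the dominant part of $\delta$ is $\propto\epsilon_{\mathrm{score}}\sqrt{\tau d\log(8S/\epsilon)}$, the term $16(\delta+5\epsilon_{\mathrm{energy}})/\phi$ becomes $\tilde{\mathcal O}(d^{1/2}\rho^{-1}\epsilon_{\mathrm{score}})+\mathcal O(\rho^{-1}\tau^{-1/2}\epsilon_{\mathrm{energy}})$. Adding the three contributions gives the claimed bound, and the standard implementation only adds the extra $\epsilon/4$ already accounted for in $\epsilon/2$.

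The main obstacle is not any single estimate but the mutual consistency of the parameter choices: one must verify that the single step size $\tau$ from Lemma~\ref{lem:inner_para_choice} simultaneously satisfies every ceiling used by the four cited lemmas, and that the combined kernel‑closeness $\delta+5\epsilon_{\mathrm{energy}}$ stays below $\phi/16$, which is of order $\rho\sqrt{\tau}$ — it is precisely this requirement that forces the $d^{-1/2}$ scaling in the hypothesis on $\epsilon_{\mathrm{score}}$ and the $\tau^{1/2}$ scaling in the hypothesis on $\epsilon_{\mathrm{energy}}$, and hence shapes the final error terms.
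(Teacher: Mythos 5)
Your proposal is correct and follows essentially the same route as the paper's proof: the same three-term triangle inequality through $\tilde{\mu}_S$ and $\tilde{\mu}_*$, the same invocation of Lemma~\ref{lem:proj_gap_lem} and Lemma~\ref{lem:lem66_zou2021faster} for the two $\epsilon/4$ contributions, and the same combination of Corollary~\ref{cor:lem65_zou2021faster}, Lemma~\ref{lem:lem62_zou2021faster_gene}, and Lemma~\ref{lem:lem64_zou2021faster} (with the warm-start transfer from $\mu_*$ to $\tilde{\mu}_*$) for the mixing term. The calibration of $\epsilon_{\mathrm{score}}$ and $\epsilon_{\mathrm{energy}}$ so that $\delta+5\epsilon_{\mathrm{energy}}\le \phi/16$ is exactly the step the paper performs as well.
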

\begin{proof}
    We characterize the condition on the step size $\tau$.
    Combining Lemma~\ref{lem:proj_gap_lem} and Corollary~\ref{cor:lem65_zou2021faster}, it requires the range of $\tau$ to satisfy
    \begin{equation*}
        \tau \le {16^{-1}\cdot (3LR+\left\|\grad g(\vzero)\right\|+\epsilon_{\mathrm{score}})^{-2}}.
    \end{equation*}
    Under this condition, we have
    \begin{equation*}
        \tau\le \frac{d\log \frac{8S}{\epsilon}}{(3LR+G)^2},\quad  \tau\le \frac{d\log \frac{8S}{\epsilon}}{\epsilon_{\mathrm{score}}^2},\quad \mathrm{and}\quad \tau\le \left(72^2\cdot \epsilon_{\mathrm{score}}^2\cdot d\log \frac{8S}{\epsilon}\right)^{-1}
    \end{equation*}
    which implies
    \begin{equation*}
        (3LR+G)\epsilon_{\mathrm{score}}\cdot\tau \le \epsilon_{\mathrm{score}}\sqrt{\tau}\cdot \sqrt{d\log\frac{8S}{\epsilon}}\quad \mathrm{and}\quad \epsilon_{\mathrm{score}}^2 \cdot\tau\le \epsilon_{\mathrm{score}}\sqrt{\tau}\cdot \sqrt{d\log\frac{8S}{\epsilon}}.
    \end{equation*}
    Then, we have 
    \begin{equation*}
        \begin{aligned}
            \delta = &   16 \cdot\left[\frac{3\epsilon_{\mathrm{score}}}{2}\cdot \sqrt{\tau d\log\frac{8S}{\epsilon}} + \frac{(3LR+G)\epsilon_{\mathrm{score}}\cdot \tau}{2} + \frac{\epsilon_{\mathrm{score}}^2\cdot \tau}{4}\right]\\
            \le & 16\cdot\left[\frac{3\epsilon_{\mathrm{score}}}{2}\cdot \sqrt{\tau d\log\frac{8S}{\epsilon}} + \frac{\epsilon_{\mathrm{score}}}{2}\cdot \sqrt{\tau d\log\frac{8S}{\epsilon}} + \frac{\epsilon_{\mathrm{score}}}{4}\cdot \sqrt{\tau d\log\frac{8S}{\epsilon}}\right]\\
            = & 36\epsilon_{\mathrm{score}}\cdot \sqrt{\tau d\log\frac{8S}{\epsilon}}\le \frac{1}{2}
        \end{aligned}
    \end{equation*}
    which matches the requirement of Lemma~\ref{lem:lem62_zou2021faster_gene}.
    Under this condition, if we require
    \begin{equation*}
        \epsilon_{\mathrm{score}}\le \frac{c_0\rho}{32\cdot 36\cdot \sqrt{d\log \frac{8S}{\epsilon}}} = \mathcal{O}(\rho d^{-1/2})\quad \mathrm{and}\quad \epsilon_{\mathrm{energy}}\le \frac{c_0\rho \sqrt{2\tau}}{32\cdot 5} = \mathcal{O}(\rho \tau^{1/2}),
    \end{equation*}
    it makes
    \begin{equation*}
        \delta + 5\epsilon_{\mathrm{energy}}\le 36\epsilon_{\mathrm{score}}\cdot \sqrt{\tau d\log\frac{8S}{\epsilon}} + 5\epsilon_{\mathrm{energy}}\le \frac{c_0\rho \sqrt{2\tau}}{16} \le \frac{\phi}{16}
    \end{equation*}
    and satisfies the requirements shown in Lemma~\ref{lem:lem64_zou2021faster}.
    
    Then, we are able to put the results of these lemmas together to establish the convergence of Alg.~\ref{alg:inner_mala}.
    Note that if $\mu_0$ is a $\lambda$-warm start to $\mu_*$, it must be a $\lambda$-warm start to $\tilde{\mu}_*$ since $\tilde{\mu}_*(\gA)\ge \mu_*(\gA)$ for all $\gA\in \Omega$.
    Combining Lemma~\ref{lem:proj_gap_lem}, Lemma~\ref{lem:lem64_zou2021faster} and Lemma~\ref{lem:lem66_zou2021faster}, we have
    \begin{equation*}
        \begin{aligned}
            \TVD{\mu_S}{\mu_*} \le & \TVD{\mu_S}{\tilde{\mu}_S} +\TVD{\tilde{\mu}_S}{\tilde{\mu}_*} + \TVD{\tilde{\mu}_*}{\mu_*}\\
            \le & \frac{\epsilon}{4} + \left(\lambda\cdot \left(1-\frac{\phi^2}{8}\right)^S + \frac{16(\delta+5\epsilon_{\mathrm{energy}})}{\phi}\right) + \frac{\epsilon}{4}\\
            \le & \frac{\epsilon}{2} + \lambda\left(1- \frac{c_0^2\rho^2}{4} \cdot \tau\right)^S + 408\epsilon_{\mathrm{score}}\cdot \frac{\sqrt{d\log\frac{8S}{\epsilon}}}{c_0\rho} + \frac{57\epsilon_{\mathrm{energy}}}{c_0\rho\sqrt{\tau}}\\
            = & \frac{\epsilon}{2} + \lambda\left(1- \frac{c_0^2\rho^2}{4} \cdot \tau\right)^S + \tilde{\mathcal{O}}(d^{1/2}\rho^{-1}\epsilon_{\mathrm{score}}) + \mathcal{O}({\rho^{-1}\tau^{-1/2}\epsilon_\mathrm{energy}}).
        \end{aligned}
    \end{equation*}
    After combining this result with the choice of parameters shown in Lemma~\ref{lem:inner_para_choice}, the proof is completed.
\end{proof}

\begin{lemma}
    \label{lem:inner_complexity}
    Under the same assumptions and hyperparameter settings made in Theorem~\ref{thm:inner_convergence}, we use Gaussian-type initialization 
    \begin{equation*}
        \frac{\mu_0(\der \vz)}{\der \vz} \propto \exp\left(-L\|\vz\|^2 - \frac{\left\|\vx_0 - e^{-\eta}\vz\right\|^2}{2(1-e^{-2\eta})}\right).
    \end{equation*}
    If we set the iteration number as
    \begin{equation*}
        S= \tilde{\mathcal{O}}\left(L\rho^{-2}\cdot \left(d+m_2^2\right)\tau^{-1}\right),
    \end{equation*}
    the standard and projected implementation of Alg.~\ref{alg:inner_mala} can achieve
    \begin{equation*}
        \TVD{\mu_S}{\mu_*} \le \frac{3\epsilon}{4} + \tilde{\mathcal{O}}(d^{1/2}\rho^{-1}\epsilon_{\mathrm{score}}) + \mathcal{O}({\rho^{-1}\tau^{-1/2}\epsilon_\mathrm{energy}}).
    \end{equation*}
\end{lemma}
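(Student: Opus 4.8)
The idea is to read the general bound of Theorem~\ref{thm:inner_convergence} and substitute a concrete warm‑start constant for the specified Gaussian initialization $\mu_0$. Theorem~\ref{thm:inner_convergence}, under exactly the assumptions and hyperparameter choices of this lemma, gives, for any $\lambda$‑warm start with respect to $\mu_*$,
\[
  \TVD{\mu_S}{\mu_*}\ \le\ \frac{\epsilon}{2}\ +\ \lambda\Bigl(1-\tfrac{c_0^2\rho^2}{4}\,\tau\Bigr)^{S}\ +\ \tilde{\mathcal O}\bigl(d^{1/2}\rho^{-1}\epsilon_{\mathrm{score}}\bigr)\ +\ \mathcal O\bigl(\rho^{-1}\tau^{-1/2}\epsilon_{\mathrm{energy}}\bigr).
\]
So two things are needed: (i) check that the stated $\mu_0$ satisfies the tail hypothesis $\mu_0(\{\|\rvz\|\ge R/2\})\le\epsilon/16$ of that theorem; and (ii) prove $\log\lambda=\tilde{\mathcal O}\bigl(L(d+m_2^2)\bigr)$. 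Given (ii), the chosen $S$ pushes the geometric term below $\epsilon/4$, and adding $\epsilon/2+\epsilon/4$ yields the claim; the same argument applies verbatim to the standard implementation since Theorem~\ref{thm:inner_convergence} covers both.

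\textbf{Bounding the warmness.} Write $g=f_t+Q$ with $t=(K-k-1)\eta$ and $Q(\vz)=\|\vx_0-e^{-\eta}\vz\|^2/\bigl(2(1-e^{-2\eta})\bigr)$. Since $\mu_0\propto\exp(-L\|\vz\|^2-Q(\vz))$ and $\mu_*\propto\exp(-f_t(\vz)-Q(\vz))$, the quadratic $Q$ cancels in the density ratio: $\mu_0(\vz)/\mu_*(\vz)=(Z_*/Z_0)\,e^{-V(\vz)}$ with $V(\vz):=L\|\vz\|^2-f_t(\vz)$. Assumption~\ref{ass:lips_score} gives $-L\mI\preceq\grad^2 f_t\preceq L\mI$, hence $L\mI\preceq\grad^2 V\preceq 3L\mI$, so $V$ is $L$‑strongly convex and $3L$‑smooth; let $\vz_0=\argmin V$. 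A one‑line computation gives $Z_*/Z_0=\E_{\mu_0}[e^{V(\rvz)}]$, whence $\lambda=\sup_\vz\mu_0(\vz)/\mu_*(\vz)=\E_{\mu_0}\bigl[e^{V(\rvz)-V(\vz_0)}\bigr]\le\E_{\mu_0}\bigl[\exp(\tfrac{3L}{2}\|\rvz-\vz_0\|^2)\bigr]$ by $3L$‑smoothness of $V$ (and $\grad V(\vz_0)=0$). The choice $\eta=\tfrac12\log\tfrac{2L+1}{2L}$ makes the $\|\vz\|^2$‑coefficient of $Q$ equal to $L$ (the same tuning behind Lemma~\ref{lem:sc_sm_of_g}), so $\mu_0=\mathcal{N}(\vm,\tfrac{1}{4L}\mI)$ with $\|\vm\|\le\tfrac34\|\vx_0\|$. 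Splitting $\|\rvz-\vz_0\|^2\le(1+\delta)\|\rvz-\vm\|^2+(1+\delta^{-1})\|\vm-\vz_0\|^2$ with $\delta=1/5$ (so $\tfrac{3(1+\delta)}{8}<\tfrac12$ keeps the chi‑squared MGF finite) and evaluating the Gaussian moment‑generating function yields $\log\lambda=\mathcal O\bigl(d+L\|\vm-\vz_0\|^2\bigr)$. Stationarity gives $\vz_0=\tfrac{1}{2L}\grad f_t(\vz_0)$, so $\|\vz_0\|\le\tfrac1L\|\grad f_t(\vzero)\|$, and Eq.~\ref{ineq:gradf0_ub} bounds $\|\grad f_t(\vzero)\|^2\le 2L^2(d+m_2^2)$; combining, $\log\lambda=\tilde{\mathcal O}\bigl(L(d+m_2^2)\bigr)$, the residual $\|\vx_0\|$‑dependence being of the same order and absorbed exactly as in Lemma~\ref{lem:inner_para_choice}. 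For (i): $\mu_0$ concentrates within $\mathcal O\bigl(\sqrt{(d+\log(1/\epsilon))/L}\bigr)$ of $\vm$, and both $\|\vm\|$ and this radius lie far below $R/2=\Theta\bigl((d+m_2^2+\|\vx_0\|^2)^{1/2}\sqrt{\log(16S/\epsilon)}\bigr)$, so the tail condition holds.

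\textbf{Conclusion and main obstacle.} Since $(1-c_0^2\rho^2\tau/4)^S\le\exp(-c_0^2\rho^2\tau S/4)$, the geometric term is $\le\epsilon/4$ once $S\ge\tfrac{4}{c_0^2\rho^2\tau}\log\tfrac{4\lambda}{\epsilon}$, which by (ii) is $S=\tilde{\mathcal O}\bigl(L\rho^{-2}(d+m_2^2)\tau^{-1}\bigr)$ — the stated count; substituting back gives $\TVD{\mu_S}{\mu_*}\le\tfrac{\epsilon}{2}+\tfrac{\epsilon}{4}+\tilde{\mathcal O}(d^{1/2}\rho^{-1}\epsilon_{\mathrm{score}})+\mathcal O(\rho^{-1}\tau^{-1/2}\epsilon_{\mathrm{energy}})$, as desired. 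The crux is step (ii): the cancellation of the $Q$‑quadratic is what makes $\lambda$ tractable, but one still must (a) choose $\delta<1/3$ so the Gaussian MGF of $\|\rvz-\vm\|^2$ converges — possible precisely because $\tfrac{3L}{2}$ against covariance $\tfrac{1}{4L}$ leaves slack under the $\tfrac12$ threshold, i.e.\ because $V$ is $\Theta(L)$‑well‑conditioned, which itself hinges on the tuning of $\eta$; and (b) control $L\|\vm-\vz_0\|^2$ by $\tilde{\mathcal O}(L(d+m_2^2))$ using $\|\grad f_t(\vzero)\|^2=\mathcal O(L^2(d+m_2^2))$ and the bookkeeping of $\|\vx_0\|$ from Lemma~\ref{lem:inner_para_choice}. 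Everything downstream is routine substitution.
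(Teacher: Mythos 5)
Your overall strategy — compute the warm-start constant $\lambda$ of the prescribed Gaussian initialization with respect to $\mu_*$ and feed it into the general bound of Theorem~\ref{thm:inner_convergence} — is the same as the paper's, and your exact identity $\lambda=\frac{Z_*}{Z_0}e^{-V(\vz_0)}=\E_{\mu_0}\bigl[e^{V(\rvz)-V(\vz_0)}\bigr]$ with $V=L\|\cdot\|^2-f_t$ is clean and correct, as is the tail check and the final assembly. The gap is the sentence ``the residual $\|\vx_0\|$-dependence being of the same order and absorbed exactly as in Lemma~\ref{lem:inner_para_choice}.'' Your own computation gives $\log\lambda=\mathcal{O}\bigl(d+L\|\vm-\vz_0\|^2\bigr)$ with $\|\vm\|=\Theta(\|\vx_0\|)$ and $\|\vz_0\|^2=\mathcal{O}(d+m_2^2)$, hence $\log\lambda=\mathcal{O}\bigl(L(d+m_2^2)\bigr)+\Theta\bigl(L\|\vx_0\|^2\bigr)$. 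The second term is not of the same order: $\vx_0$ is the incoming outer-loop particle, whose norm is only controlled by $Z$ (Lemma~\ref{lem:particles_z_bound}), which can be far larger than $\sqrt{d+m_2^2}$; and Lemma~\ref{lem:inner_para_choice} does not absorb $\|\vx_0\|$ — it keeps it explicitly in $R$ and $\tau$. With your bound the geometric term forces $S\gtrsim\rho^{-2}\tau^{-1}\log(\lambda/\epsilon)=\tilde{\mathcal{O}}\bigl(L\rho^{-2}(d+m_2^2+\|\vx_0\|^2)\tau^{-1}\bigr)$, which is not the stated $\tilde{\mathcal{O}}\bigl(L\rho^{-2}(d+m_2^2)\tau^{-1}\bigr)$ and would propagate an extra factor of $Z^2$ into Theorem~\ref{thm:nn_estimate_complexity_gene}. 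Note also that your identity shows this is not an artifact of loose bounding: for $f_t(\vz)=\frac{L}{2}\|\vz\|^2$ one computes $\lambda=(4/3)^{d/2}\exp\bigl(\Theta(L\|\vx_0\|^2)\bigr)$ exactly, so no sharper estimate along your route can remove the $\|\vx_0\|^2$.

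For comparison, the paper's proof is organized differently precisely to sidestep this: it writes $g=\phi+\psi$ with $\phi=f_t+\frac{3L}{2}\|\cdot\|^2$ (which does not involve $\vx_0$ and is $\frac{L}{2}$-strongly convex) and $\psi=Q-\frac{3L}{2}\|\cdot\|^2$, bounds the pointwise factor $e^{\phi(\vz)-\frac{5L}{2}\|\vz\|^2}$ by quantities at the minimizer of $\phi$ alone, and argues that the two $\psi$-dependent normalizing integrals in $Z_*/Z_0$ cancel up to an $\vx_0$-free Gaussian integral (Eq.~\ref{ineq:lambda_nume_ub}--\ref{ineq:lambda_domi_lb}), yielding $\lambda\le\exp(\mathcal{O}(L(d+m_2^2)))$ independent of $\|\vx_0\|$. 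If you want the lemma as stated, you must reproduce (and verify) that cancellation rather than bound $\sup_\vz e^{-V(\vz)}$ and $Z_*/Z_0$ jointly via the MGF centered at $\vz_0$; otherwise you should carry the $\|\vx_0\|^2$ term through to $S$.
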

\begin{proof}
    We reformulate the target distribution $\mu_*$ and the initial distribution $\mu_0$ as follows
    \begin{equation*}
        \begin{aligned}
            &\frac{\mu_*(\der \vz)}{\der \vz} \propto \exp\left[-\left(f_{(K-k-1)\eta}(\vz) + \frac{3L\|\vz\|^2}{2}\right) - \left(\frac{\left\|\vx_0 - e^{-\eta}\vz\right\|^2}{2(1-e^{-2\eta})} - \frac{3L\|\vz\|^2}{2}\right)\right]\coloneqq \exp\left(-\phi(\vz) - \psi(\vz)\right),\\
            &\frac{\mu_0(\der \vz)}{\vz} \propto \exp\left[-L\|\vz\|^2 - \frac{3L\|\vz\|^2}{2}-\left(\frac{\left\|\vx_0 - e^{-\eta}\vz\right\|^2}{2(1-e^{-2\eta})} - \frac{3L\|\vz\|^2}{2}\right)\right] = \exp\left[ - \frac{5L\|\vz\|^2}{2}-\psi(\vz)\right].
        \end{aligned}
    \end{equation*}
    Under this condition, we have
    \begin{equation}
        \label{ineq:lambda_mid}
        \begin{aligned}
            \frac{\mu_0(\der \vz)}{\mu_*(\der \vz)} \le \frac{\int_{\R^d} \exp\left(-\phi(\vz^\prime) - \psi(\vz^\prime)\right)\der \vz^\prime}{\int_{\R^d} \exp\left(-5L/2\cdot \|\vz^\prime\|^2 - \psi(\vz^\prime)\right)\der \vz^\prime}\cdot \exp\left(\phi(\vz) - \frac{5L\|\vz\|^2}{2}\right)
        \end{aligned}
    \end{equation}
    Due to Assumption~\ref{ass:lips_score}, we have
    \begin{equation*}
        \frac{L\mI}{2} \preceq \grad^2 f_{(K-k-1)\eta}(\vz^\prime) + \frac{3L}{2} = \grad^2 \phi(\vz^\prime) \preceq \frac{5L\mI}{2}, 
    \end{equation*}
    which means
    \begin{equation*}
        \begin{aligned}
            \phi(\vz) \le \phi(\vz_*) + \frac{5L}{4}\cdot \left\|\vz-\vz_*\right\|^2\le \phi(\vz_*) + \frac{5L\|\vz\|^2}{2} + \frac{5L\|\vz_*\|^2}{2}
        \end{aligned}
    \end{equation*}
    and
    \begin{equation}
        \label{ineq:lambda_left}
        \exp\left(\phi(\vz)-\frac{5L\|\vz\|^2}{2}\right)\le  \exp\left(\phi(\vz_*) + \frac{5L\|\vz_*\|^2}{2}\right).
    \end{equation}
    Since the function $\phi(\vz)$ is strongly log-concave, it satisfies
    \begin{equation*}
        \grad \phi(\vz)\cdot \vz \ge \frac{L\|\vz\|^2}{4} - \frac{\|\grad \phi(\vzero)\|}{L}\quad \mathrm{and}\quad \phi(\vz)\ge \frac{L\|\vz\|^2}{16} + \phi(\vz_*) - \frac{\|\grad \phi(\vzero)\|^2}{2L}
    \end{equation*}
    due to Lemma~\ref{lem:sc_to_dissp} and Lemma~\ref{lem:lemmaa1_zou2021faster}.
    Under these conditions, we have
    \begin{equation}
        \label{ineq:lambda_nume_ub}
        \begin{aligned}
            & \int \exp\left[-\phi(\vz^\prime) - \psi(\vz^\prime)\right] \der \vz^\prime \le \exp\left(-\phi(\vz_*) + \frac{\|\grad \phi(\vzero)\|^2}{2L}\right)\cdot \int \exp\left[-\frac{L\|\vz^\prime\|^2}{16} - \psi(\vz^\prime)\right]\der\vz^\prime\\
            & = \exp\left(-\phi(\vz_*) + \frac{\|\grad \phi(\vzero)\|^2}{2L}\right)\cdot \int \exp\left[-\frac{23L\|\vz^\prime\|^2}{16} - \frac{\left\|\vx_0 - e^{-\eta}\vz^\prime\right\|^2}{2(1-e^{-2\eta})}\right] \der\vz^\prime
        \end{aligned}
    \end{equation}
    Besides, we have
    \begin{equation*}
        \begin{aligned}
            & \int \exp\left[-\frac{5L\|\vz^\prime\|^2}{2} - \psi(\vz^\prime)\right]\der \vz^\prime = \int \exp\left[-L\|\vz^\prime\|^2 - \frac{\left\|\vx_0 - e^{-\eta}\vz^\prime\right\|^2}{2(1-e^{-2\eta})} \right] \der\vz^\prime,
        \end{aligned}
    \end{equation*}
    which implies
    \begin{equation}
        \label{ineq:lambda_domi_lb}
        \begin{aligned}
            &\int \exp\left[-\frac{5L\|\vz^\prime\|^2}{2} - \psi(\vz^\prime)\right]\der \vz^\prime \cdot \int \exp\left[-\frac{7L\|\vz^\prime\|^2}{16}\right]\der\vz^\prime\\
            &\ge \int \exp\left[-\frac{23L\|\vz^\prime\|^2}{16} - \frac{\left\|\vx_0 - e^{-\eta}\vz^\prime\right\|^2}{2(1-e^{-2\eta})}\right] \der\vz^\prime
        \end{aligned}
    \end{equation}
    Plugging Eq.~\ref{ineq:lambda_left}, Eq.~\ref{ineq:lambda_nume_ub} and Eq.~\ref{ineq:lambda_domi_lb} into Eq.~\ref{ineq:lambda_mid}, we have
    \begin{equation}
        \label{ineq:lambda_aft}
        \frac{\mu_0(\der \vz)}{\tilde{\mu}_*(\der \vz)} \le \exp\left( \frac{5L\|\vz_*\|^2}{2} + \frac{\|\grad \phi(\vzero)\|^2}{2L}\right)\cdot \int \exp\left[-\frac{7L\|\vz^\prime\|^2}{16}\right]\der\vz^\prime.
    \end{equation}
    Due to the strong convexity of $\phi$, it has
    \begin{equation*}
        \left\|\vz_*\right\|^2 \le \frac{4\|\grad \phi(\vzero) - \grad \phi(\vz_*)\|^2}{L^2}  = \frac{4\|\grad \phi(\vzero)\|^2}{L^2}
    \end{equation*}
    and
    \begin{equation*}
        \left\|\grad \phi(\vzero)\right\|^2 = \left\|\grad f_{(K-k-1)\eta}(\vzero)\right\|^2 \le 2L^2(d+m_2^2)  
    \end{equation*}
    where the inequality follows from Eq.~\ref{ineq:gradf0_ub}.
    Combining with the fact
    \begin{equation*}
        \int \exp\left[-\frac{7L\|\vz^\prime\|^2}{16}\right]\der\vz^\prime = \left(\frac{16\pi}{7L}\right)^{d/2},
    \end{equation*}
    Eq.~\ref{ineq:lambda_aft} can be relaxed to
    \begin{equation*}
        \lambda\le \max_{\vz} \frac{\mu_0(\der \vz)}{\tilde{\mu}_*(\der \vz)}\le \exp\left(22L\cdot (d+m_2^2)\right)\cdot \left(\frac{16\pi}{L}\right)^{d/2} = \exp\left(\mathcal{O}(L(d+m_2^2))\right)
    \end{equation*}
    which is independent on $\|\vx_0\|$.
    Then, In order to ensure the convergence of the total variation distance is smaller than $\epsilon$, it suffices to choose $\tau$ and $S$ such that    
    \begin{equation*}
         \lambda\left(1- \frac{c_0^2\rho^2}{4} \cdot \tau\right)^S \le \frac{\epsilon}{4}\quad \Leftrightarrow\quad S = \mathcal{O}\left(\frac{\log (\lambda/\epsilon)}{\rho^2 \tau}\right) = \tilde{\mathcal{O}}\left(L\rho^{-2}\cdot \left(d+m_2^2\right)\tau^{-1}\right),
    \end{equation*}
    where the last two inequalities follow from Theorem~\ref{thm:inner_convergence}.
    Hence, the proof is completed.
\end{proof}

\begin{theorem}
    \label{thm:nn_estimate_complexity_gene}
    Under Assumption~\ref{ass:lips_score}--\ref{ass:second_moment}, for Alg.~\ref{alg:rtk}, we choose 
    \begin{equation*}
        \eta= \frac{1}{2}\log \frac{2L+1}{2L} \quad \mathrm{and}\quad K = 4L\cdot \log \frac{(1+L^2)d+\left\|\grad f_*(\vzero)\right\|^2}{\epsilon^2}
    \end{equation*}
    and implement Step 3 of Alg.~\ref{alg:rtk} with projected Alg.~\ref{alg:inner_mala}.
    For the $k$-th run of Alg.~\ref{alg:inner_mala}, we use Gaussian-type initialization 
    \begin{equation*}
        \frac{\mu_0(\der \vz)}{\der \vz} \propto \exp\left(-L\|\vz\|^2 - \frac{\left\|\hat{\vx}_k - e^{-\eta}\vz\right\|^2}{2(1-e^{-2\eta})}\right).
    \end{equation*}
    If we set the hyperparameters as shown in Lemma~\ref{lem:inner_complexity}, it can achieve
    \begin{equation*}
        \TVD{\hat{p}_{K\eta}}{p_*}\le \epsilon + \tilde{\mathcal{O}}(L d^{1/2}\rho^{-1}\epsilon_{\mathrm{score}}) + \mathcal{O}(\hat{\tau}^{-1/2}\cdot L^2 (d^{1/2}+m_2+Z) {\rho^{-1}\epsilon_\mathrm{energy}})
    \end{equation*}
    with a gradient complexity as follows
    \begin{equation*}
        \tilde{\mathcal{O}}\left(L^4 \rho^{-2}\hat{\tau}^{-1}\cdot \left(d+m_2^2\right)^2 Z^2 \right)
    \end{equation*}
    for any $\hat{\tau}\in (0,1)$ where $Z$ denotes the maximal $l_2$ norm of particles appearing in outer loops (Alg.~\ref{alg:rtk}).
\end{theorem}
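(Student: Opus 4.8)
The plan is to glue the outer-loop telescoping estimate of Corollary~\ref{lem:tv_inner_conv} to the inner-loop guarantee of Theorem~\ref{thm:inner_convergence}/Lemma~\ref{lem:inner_complexity}, after making the inner guarantee uniform over all $K$ outer iterations. With the stated $\eta=\tfrac12\log\tfrac{2L+1}{2L}$, Lemma~\ref{lem:sc_sm_of_g} (with Assumption~\ref{ass:lips_score}) gives that the $k$-th RTK target $\bkwp_{(k+1)\eta|k\eta}(\cdot\,|\vx)\propto e^{-g}$ is $L$-strongly log-concave and $3L$-smooth for every conditioning vector $\vx$, and with $K=4L\log\tfrac{(1+L^2)d+\|\grad f_*(\vzero)\|^2}{\epsilon^2}$ Corollary~\ref{lem:tv_inner_conv} reduces the theorem to producing, at each outer step $k$, a kernel $\hat p_{(k+1)\eta|k\eta}(\cdot|\hat\rvx)$ within TV distance $\epsilon/K$ of $\bkwp_{(k+1)\eta|k\eta}(\cdot|\hat\rvx)$ in expectation over $\hat\rvx\sim\hat p_{k\eta}$. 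It therefore suffices to analyze the inner sampler conditionally on the realized previous iterate $\hat\rvx_{k\eta}=\vx_0$ and then take the expectation, since the outer bound was obtained purely from the chain rule of TV (Lemma~\ref{lem:tv_chain_rule}) and is insensitive to how $\hat p_{k\eta}$ itself drifts from $p_{(K-k)\eta}$.

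Next I would invoke Lemma~\ref{lem:inner_complexity}, specialized to target accuracy $\epsilon/K$, for each fixed $\vx_0$. Its hypotheses are: the Gaussian-type warm start $\mu_0\propto\exp(-L\|\vz\|^2-\|\vx_0-e^{-\eta}\vz\|^2/(2(1-e^{-2\eta})))$, whose warm-start constant is shown in the proof of Lemma~\ref{lem:inner_complexity} to be $\lambda=\exp(\mathcal{O}(L(d+m_2^2)))$ \emph{independent of} $\|\vx_0\|$ (the reason for the extra $-L\|\vz\|^2$ term); the radius/step-size/iteration parameters of Lemma~\ref{lem:inner_para_choice}; and $\epsilon_{\mathrm{score}}=\mathcal{O}(\rho d^{-1/2})$, $\epsilon_{\mathrm{energy}}=\mathcal{O}(\rho\,\hat\tau^{1/2})$ chosen small enough. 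Under these, projected Alg.~\ref{alg:inner_mala} run for $S=\tilde{\mathcal{O}}(L\rho^{-2}(d+m_2^2)\tau^{-1})$ steps at inner step size $\tau$ outputs $\mu_S$ with $\TVD{\mu_S}{\bkwp_{(k+1)\eta|k\eta}(\cdot|\vx_0)}\lesssim \epsilon/K+\tilde{\mathcal{O}}(d^{1/2}\rho^{-1}\epsilon_{\mathrm{score}})+\mathcal{O}(\rho^{-1}\tau^{-1/2}\epsilon_{\mathrm{energy}})$; the mild circular dependence of $R,\tau,S$ through the $\log(16S/\epsilon)$ factors is resolved by a one-shot self-consistency check and absorbed into $\tilde{\mathcal{O}}$.

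To apply this with one fixed parameter tuple across all $k$, I would uniformly bound $\|\hat\rvx_{k\eta}\|\le Z$ for every outer iterate. This is where the projection in Alg.~\ref{alg:inner_mala} is essential: the a priori bound $Z$ comes from the projected-version argument (cf. Lemma~\ref{lem:particles_z_bound}), equivalently from the second-moment recursion of Lemma~\ref{lem:second_moment_bound} together with the $W_2$-control that the inner LSI/Talagrand step supplies. Substituting $Z$ for $\|\vx_0\|$ in Lemma~\ref{lem:inner_para_choice} fixes $R=\tilde{\mathcal{O}}((d+m_2^2+Z^2)^{1/2})$ and a maximal admissible step size $\tau_{\max}=\tilde\Theta((L^2(d+m_2^2+Z^2))^{-1})$; taking $\tau=\hat\tau\,\tau_{\max}$ for the free slack $\hat\tau\in(0,1]$ gives $S=\tilde{\mathcal{O}}(L^3\rho^{-2}\hat\tau^{-1}(d+m_2^2)(d+m_2^2+Z^2))$ and, via $\tau^{-1/2}=\tilde{\mathcal{O}}(\hat\tau^{-1/2}L(d^{1/2}+m_2+Z))$, a per-step energy-error term $\mathcal{O}(\hat\tau^{-1/2}L(d^{1/2}+m_2+Z)\rho^{-1}\epsilon_{\mathrm{energy}})$. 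Summing the per-step TV errors over the $K=\tilde{\mathcal{O}}(L)$ outer iterations (again Lemma~\ref{lem:tv_chain_rule}) yields $\TVD{\hat p_{K\eta}}{p_*}\le\epsilon+\tilde{\mathcal{O}}(Ld^{1/2}\rho^{-1}\epsilon_{\mathrm{score}})+\mathcal{O}(\hat\tau^{-1/2}L^2(d^{1/2}+m_2+Z)\rho^{-1}\epsilon_{\mathrm{energy}})$, and multiplying $K$ by $S$ by the $\mathcal{O}(1)$ score/energy queries each inner step costs gives the gradient complexity $\tilde{\mathcal{O}}(L^4\rho^{-2}\hat\tau^{-1}(d+m_2^2)^2Z^2)$, using $(d+m_2^2)(d+m_2^2+Z^2)\le 2(d+m_2^2)^2Z^2$.

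I expect the main obstacle to be the uniform norm bound $Z$ together with the $\|\vx_0\|$-independence of the warm start. The naive second-moment recursion of Lemma~\ref{lem:second_moment_bound} carries a geometric multiplier, so without projection $\|\hat\rvx_{k\eta}\|$ could blow up over the $\tilde{\mathcal{O}}(L)$ outer iterations; the projected variant caps every iterate inside $\mathcal{B}(\vzero,R)$, but $R$ is itself tied to the iterate norm, and untangling this loop so that $Z$ (hence $R,\tau,S$) depends only polynomially on $d,m_2,L,\rho^{-1}$ is the delicate point (Lemma~\ref{lem:particles_z_bound}). A secondary subtlety is that Lemma~\ref{lem:inner_complexity}'s conclusion is conditional on the random $\vx_0=\hat\rvx_{k\eta}$, so one must check its hypotheses — the $\lambda$-warm start and the near-$R/2$ concentration of $\mu_0$ — for \emph{every} value of $\vx_0$ in the support, which is exactly what the $\|\vx_0\|$-free $\lambda$ and the choice $R\propto(d+m_2^2+Z^2)^{1/2}$ are designed to provide.
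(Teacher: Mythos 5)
Your proposal is correct and follows essentially the same route as the paper's proof: reduce via Corollary~\ref{lem:tv_inner_conv} to a per-outer-step TV budget of $\epsilon/K$, invoke Theorem~\ref{thm:inner_convergence}/Lemma~\ref{lem:inner_complexity} conditionally on $\vx_0=\hat{\rvx}_{k\eta}$ with the $\|\vx_0\|$-independent warm-start constant $\lambda=\exp(\mathcal{O}(L(d+m_2^2)))$, set $\tau=\hat\tau\cdot\tau_{\max}$ with $\tau_{\max}$ from Lemma~\ref{lem:inner_para_choice}, bound $\|\hat{\rvx}_{k\eta}\|\le Z$ uniformly, and multiply $K\cdot S$. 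The paper likewise leaves $Z$ as an abstract parameter in this theorem (bounding it only later in Lemma~\ref{lem:particles_z_bound}), so your treatment matches.
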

\begin{proof}
    According to Lemma~\ref{lem:tv_inner_conv}, we know that under the choice 
    \begin{equation*}
        \eta = \frac{1}{2}\ln\frac{2L+1}{2L},
    \end{equation*}
    it requires to run Alg.~\ref{alg:inner_mala} for $K$ times where
    \begin{equation*}
        K = 4L\cdot \log \frac{(1+L^2)d+\left\|\grad f_*(\vzero)\right\|^2}{\epsilon^2}.
    \end{equation*}
    For each run of Alg.~\ref{alg:inner_mala}, we require the total variation error to achieve
    \begin{equation*}
        \begin{aligned}
            \TVD{\hat{p}_{(k+1)\eta|k\eta}(\cdot|\hat{\vx})}{\bkwp_{(k+1)\eta|k\eta}(\cdot|\hat{\vx})} \le & \frac{\epsilon}{4L}\cdot \left[\log \frac{(1+L^2)d+\left\|\grad f_*(\vzero)\right\|^2}{\epsilon^2}\right]^{-1} \\
            & + \tilde{\mathcal{O}}(d^{1/2}\rho^{-1}\epsilon_{\mathrm{score}}) + \mathcal{O}({\rho^{-1}\tau_k^{-1/2}\epsilon_\mathrm{energy}}).
        \end{aligned}
    \end{equation*}
    Combining with Lemma~\ref{lem:inner_complexity}, we consider a step size
    \begin{equation*}
        \begin{aligned}
            \tau_k = & C_\tau\cdot \left(L^2\left(d+m_2^2+\|\hat{\vx}_k\|^2\right)\cdot \log\frac{48LS\log \frac{(1+L^2)d+\left\|\grad f_*(\vzero)\right\|^2}{\epsilon^2} }{\epsilon}\right)^{-1}\cdot \hat{\tau}\\
            =& \tilde{\mathcal{O}}(L^{-2}\cdot (d+m_2^2+\|\hat{\vx}_k\|^2)^{-1}\cdot \hat{\tau})
        \end{aligned}
    \end{equation*}
    where $\tau^\prime\in (0,1)$, to solve the $k$-th inner sampling subproblem.
    Then, the maximum iteration number will be
    \begin{equation*}
        S= \tilde{\mathcal{O}}\left(L^3\rho^{-2}\hat{\tau}^{-1}\cdot \left(d+m_2^2\right)^2\cdot \|\hat{\vx}_k\|^2\right).
    \end{equation*}
    This means that with the total gradient complexity
    \begin{equation*}
        \begin{aligned}
            K\cdot S = \tilde{\mathcal{O}}\left(L^4 \rho^{-2}\hat{\tau}^{-1}\cdot \left(d+m_2^2\right)^2 Z^2 \right)
        \end{aligned}
    \end{equation*}
    where $Z$ denotes the maximal $l_2$ norm of particles appearing in outer loops (Alg.~\ref{alg:rtk}),
    we can obtain
    \begin{equation*}
        \begin{aligned}
            \TVD{\hat{p}_{K\eta}}{p_*}\le & \epsilon + \tilde{\mathcal{O}}(K d^{1/2}\rho^{-1}\epsilon_{\mathrm{score}}) + \mathcal{O}(K L (d+m_2^2+Z^2)^{1/2} \hat{\tau}^{-1/2} {\rho^{-1}\epsilon_\mathrm{energy}}) \\
            = & \epsilon + \tilde{\mathcal{O}}(L d^{1/2}\rho^{-1}\epsilon_{\mathrm{score}}) + \mathcal{O}(\hat{\tau}^{-1/2}\cdot L^2 (d^{1/2}+m_2+Z) {\rho^{-1}\epsilon_\mathrm{energy}}).
        \end{aligned}
    \end{equation*}
    Hence, the proof is completed.
\end{proof}

\begin{lemma}
    \label{lem:particles_z_bound}
    Suppose we implement Alg.~\ref{alg:inner_mala} with its projected version, we have 
    \begin{equation*}
        Z^2 \le  \tilde{\mathcal{O}}\left(L^3(d+m_2^2)^2 \rho^{-2}\right).
    \end{equation*}
    where $Z$ denotes the maximal $l_2$ norm of particles appearing in outer loops (Alg.~\ref{alg:rtk})
\end{lemma}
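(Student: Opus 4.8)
The plan is to exploit that in the \emph{projected} version of Alg.~\ref{alg:inner_mala} every returned particle of the $k$-th inner loop lies, by construction, in the projection ball $\mathcal{B}(\vzero,R_k)$, so $\|\hat{\rvx}_{(k+1)\eta}\|\le R_k$ for all $k$. The crude choice of $R_k$ in Lemma~\ref{lem:inner_para_choice} grows with $\|\hat{\rvx}_{k\eta}\|$, so unrolling $\|\hat{\rvx}_{(k+1)\eta}\|\le R_k$ verbatim would multiply the norm by a constant larger than one at each of the $K=\tilde{\mathcal{O}}(L)$ outer steps. To avoid this I would instead use the sharper fact that, by Theorem~\ref{thm:inner_convergence} with the hyperparameters of Lemma~\ref{lem:inner_complexity}, the law of $\hat{\rvx}_{(k+1)\eta}$ is within total variation $\tilde{\mathcal{O}}(\epsilon/L)+\tilde{\mathcal{O}}(d^{1/2}\rho^{-1}\epsilon_{\mathrm{score}})+\mathcal{O}(\rho^{-1}\tau^{-1/2}\epsilon_{\mathrm{energy}})$ of the truncated target $\tilde\mu_*\propto\exp(-g)\vone[\cdot\in\mathcal{B}(\vzero,R_k)]$, hence (the truncation removes at most $\epsilon/4$ of the mass, Lemma~\ref{lem:lem66_zou2021faster}) within $\mathcal{O}(\epsilon)$ of the untruncated $L$-strongly-log-concave measure $\bkwp_{(k+1)\eta|k\eta}(\cdot|\hat{\rvx}_{k\eta})$, whose variance is $\le 2d/L$ (Lemma~\ref{lem:truncated_var_bound}). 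Sub-Gaussian concentration of that measure together with the total-variation gap yields, on an event of probability $\ge 1-\epsilon/(4K)$,
\begin{equation*}
    \|\hat{\rvx}_{(k+1)\eta}\|\le \big\|\vz_*^{(k)}\big\|+\tilde{\mathcal{O}}\!\left(\sqrt{\big(d+\log(K/\epsilon)\big)/L}\right),
\end{equation*}
where $\vz_*^{(k)}$ is the mode of the inner energy $g$ at outer step $k$.

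To bound this mode, write $t=(K-k-1)\eta$; the stationarity condition $\grad g(\vz_*^{(k)})=0$ reads $\grad f_{t}(\vz_*^{(k)})+\tfrac{e^{-2\eta}}{1-e^{-2\eta}}\vz_*^{(k)}=\tfrac{e^{-\eta}}{1-e^{-2\eta}}\hat{\rvx}_{k\eta}$, and with $\eta=\tfrac12\log\tfrac{2L+1}{2L}$ one has $\tfrac{e^{-2\eta}}{1-e^{-2\eta}}=2L$ and $\tfrac{e^{-\eta}}{1-e^{-2\eta}}=\sqrt{2L(2L+1)}$, so the $l_2$-regularization coefficient $2L$ strictly dominates the Lipschitz constant $L$ of $\grad f_t$ (Assumption~\ref{ass:lips_score}). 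Expanding $\grad f_t(\vz_*^{(k)})=\grad f_t(\vzero)+\big(\grad f_t(\vz_*^{(k)})-\grad f_t(\vzero)\big)$, using $\|\grad f_t(\vzero)\|^2\le 2L^2(d+m_2^2)$ (Eq.~\ref{ineq:gradf0_ub}) and $L$-Lipschitzness, and solving for $\|\vz_*^{(k)}\|$ gives $\|\vz_*^{(k)}\|\le \kappa_k\|\hat{\rvx}_{k\eta}\|+\tilde{\mathcal{O}}(\sqrt{d+m_2^2})$, with $\kappa_k$ governed by the ratio of $\sqrt{2L(2L+1)}$ to the smallest curvature of $g$ along $\vzero\to\vz_*^{(k)}$. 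Combined with the previous display this is the one-step recursion $\|\hat{\rvx}_{(k+1)\eta}\|\le \kappa_k\|\hat{\rvx}_{k\eta}\|+\tilde{\mathcal{O}}(\sqrt{d+m_2^2})$ on a good event.

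\textbf{The main obstacle} is controlling $\kappa_k$ uniformly over the $K=\Theta(L)$ steps: under Assumption~\ref{ass:lips_score} alone, $f_t$ can be as concave as $\grad^2 f_t\approx-L\mI$ along $\vzero\to\vz_*^{(k)}$, in which case $\kappa_k$ may be a constant $\approx 2$, and a term-by-term unrolling over $\Theta(L)$ steps would blow up exponentially in $L$. I would resolve this through a dichotomy: once $\|\hat{\rvx}_{k\eta}\|$ exceeds $\tilde{\mathcal{O}}(\sqrt{d+m_2^2})$ times a polynomial factor, the particle enters the Gaussian-tail regime of $p_t$ (using $\grad f_t(\vz)=\big(\vz-e^{-t}\,\E[\rvx_0\mid\rvx_t=\vz]\big)/(1-e^{-2t})$ with $\E\|\E[\rvx_0\mid\rvx_t]\|^2\le m_2^2$), where the smallest curvature of $g$ is $\gtrsim L$, so that the recursion becomes a strict contraction $\kappa_k\le\sqrt{2L/(2L+1)}<1$ and the trajectory cannot leave a ball of that polynomial radius. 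Making this dilation-versus-contraction argument rigorous without log-concavity or sub-Gaussianity of $p_*$ is the technical crux; an alternative that sidesteps the dynamics is to unroll instead the second-moment recursion $M_{k+1}\le\tfrac{2\delta_k}{L}+16(d+m_2^2)+24M_k$ of Lemma~\ref{lem:second_moment_bound} under the (already assumed) smallness of the per-step errors $\delta_k$, and then pass to a high-probability bound via Markov's inequality and a union bound over $k$.

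Finally, combining the contraction with the base case $\|\hat{\rvx}_0\|\le\sqrt d+\tilde{\mathcal{O}}(1)$ from the Gaussian initialization and a union bound over the $K$ outer steps bounds $Z=\max_k\|\hat{\rvx}_{k\eta}\|$ by $\tilde{\mathcal{O}}(\mathrm{poly}(L)\cdot(d+m_2^2)^{1/2})$; substituting the crude Cheeger estimate $\rho=\Omega(\sqrt{L/d})$ (Corollary~\ref{cor:cheeger_truncation}) and the inner iteration count $S=\tilde{\mathcal{O}}(L\rho^{-2}(d+m_2^2)\tau^{-1})$ of Lemma~\ref{lem:inner_complexity}, which enter the preceding bounds only through $\log S$, absorbs all remaining factors into the claimed $Z^2\le\tilde{\mathcal{O}}(L^3(d+m_2^2)^2\rho^{-2})$.
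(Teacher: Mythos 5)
Your proposal takes a genuinely different route from the paper, and it stalls exactly where you say it does; neither of your two escape hatches closes the gap. The contraction factor $\kappa_k$ cannot be controlled under Assumption~\ref{ass:lips_score} alone by the mode/stationarity argument (as you note, $\grad^2 f_t$ can be as negative as $-L\mI$ along the relevant segment), and your fallback --- unrolling $M_{k+1}\le \tfrac{2\delta_k}{L}+16(d+m_2^2)+24M_k$ from Lemma~\ref{lem:second_moment_bound} --- produces a factor $24^{K}$ with $K=\Theta(L\log(d/\epsilon))$, i.e.\ a bound exponential in $L$ and polynomial in $1/\epsilon$, far weaker than the claimed $\tilde{\mathcal{O}}(L^3(d+m_2^2)^2\rho^{-2})$. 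The dichotomy/Gaussian-tail argument is left as a sketch and is, by your own admission, the crux. There is also a category mismatch: $Z$ is the maximal norm of particles along the realized trajectory, and a per-step total-variation bound only controls the particle off an exceptional event, so even a successful version of your argument would deliver a high-probability statement rather than the deterministic bound that is plugged into Theorem~\ref{thm:nn_estimate_complexity_gene}.

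The idea you are missing is that the projection step of Alg.~\ref{alg:inner_mala} constrains each proposal not only to $\mathcal{B}(\vzero,R)$ but also to $\mathcal{B}(\vz_s,r)$: every inner move is \emph{deterministically} of length at most $r$. The paper therefore never touches the distribution of the iterates at all. It writes $\hat{\vx}_{i}-\hat{\vx}_{i-1}=\vz_S-\vz_0=\sum_{j=1}^{S}(\vz_j-\vz_{j-1})$, bounds this by Cauchy--Schwarz and $\|\vz_j-\vz_{j-1}\|\le r$ to get $\|\hat{\vx}_i-\hat{\vx}_{i-1}\|^2\lesssim S\cdot(Sr^2)$, and then sums over the $K$ outer steps. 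The decisive observation is that $Sr^2=\mathcal{O}\bigl(\rho^{-2}\tau^{-1}\log(\lambda/\epsilon)\bigr)\cdot\tilde{\mathcal{O}}(\tau d)=\tilde{\mathcal{O}}\bigl(d\log(\lambda/\epsilon)\rho^{-2}\bigr)$ --- the step size $\tau$ cancels, so the per-outer-step displacement bound is \emph{independent of} $\|\hat{\vx}_{k\eta}\|$ (the dependence on the current particle enters $r$ and $S$ only through $\tau$, in opposite directions). With $\log\lambda=\mathcal{O}(L(d+m_2^2))$ and $K=\tilde{\mathcal{O}}(L)$ this yields the stated bound with no recursion in the particle norm and no probabilistic argument. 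Your observation that $\|\hat{\vx}_{(k+1)\eta}\|\le R_k$ leads to a bad multiplicative recursion is correct, but it is the $\mathcal{B}(\vz_s,r)$ half of the projection, not the $\mathcal{B}(\vzero,R)$ half, that does the work here.
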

\begin{proof}
    Suppose we implement Alg.~\ref{alg:inner_mala} with its projected version, where each update will be projected to a ball with a ratio $r$ shown in Lemma~\ref{lem:proj_gap_lem}.
    Under these conditions, we have
    \begin{equation*}
        \begin{aligned}
            \left\|\hat{\vx}_K\right\|^2 = & \left\|\hat{\vx}_0 + \sum_{i=1}^{K} (\hat{\vx}_{i} - \hat{\vx}_{i-1})\right\|^2\le (K+1)\left\|\hat{\vx}_0\right\|^2 + (K+1)\cdot \sum_{i=1}^K \left\|\hat{\vx}_i - \hat{\vx}_{i-1}\right\|^2
        \end{aligned}
    \end{equation*}
    For each $i\in\{1,2,\ldots K\}$, we have
    \begin{equation*}
        \left\|\hat{\vx}_i - \hat{\vx}_{i-1}\right\|^2 = \left\|\vz_S - \vz_0\right\|^2 \le (S+1)\cdot \sum_{j=1}^S\left\|\vz_{j} - \vz_{j-1}\right\|^2 \le 2S\cdot r^2.
    \end{equation*}
    Follows from Lemma~\ref{lem:inner_complexity}, it has
    \begin{equation*}
        S\cdot r^2 = \mathcal{O}\left(\frac{\log (\lambda/\epsilon)}{\rho^2\tau}\right) \cdot \tilde{\mathcal{O}}\left(\tau d\right) = \tilde{\mathcal{O}}\left(\frac{d\log (\lambda/\epsilon)}{\rho^2}\right) =  \tilde{\mathcal{O}}\left(L(d+m_2^2)^2 \rho^{-2}\right).
    \end{equation*}
    Then, we have
    \begin{equation*}
        Z^2 \le \mathcal{O}(K^2)\cdot \tilde{\mathcal{O}}\left(L(d+m_2^2)^2 \rho^{-2}\right) = \tilde{\mathcal{O}}\left(L^3(d+m_2^2)^2 \rho^{-2}\right),
    \end{equation*}
    Hence, the proof is completed.
\end{proof}

\subsection{Control the error from Energy Estimation}

\begin{corollary}
    \label{cor:complexity_19}
    Suppose the diffusion model $\vs_{\vtheta}$ satisfies 
    \begin{equation*}
        \left\|\hat{\vs}_{\vtheta}(\vx,t) + \grad\log p_t(\vx)\right\|_{\infty} \le \frac{\rho\epsilon}{Ld^{1/2}},
    \end{equation*}
    and another parameterized model $\hat{l}_{\hat{\vtheta}}(\vx, t)$ is used to estimate the log-likelihood of $p_t(\vx)$ satisfying
    \begin{equation*}
        \left\|\hat{l}_{\vtheta^\prime}(\vx,t) + \log p_{t}(\vx)\right\|_{\infty}\le \frac{\rho\epsilon}{L^2\cdot (d^{1/2}+m_2+Z)}.
    \end{equation*}
    If we implement Alg.~\ref{alg:rtk} with the projected version of Alg.~\ref{alg:inner_mala}, it has
    \begin{equation*}
        \TVD{\hat{p}_{K\eta}}{p_*}\le \tilde{\mathcal{O}}(\epsilon)
    \end{equation*}
    with the following gradient complexity
    \begin{equation*}
        \tilde{\mathcal{O}}\left(L^4 \rho^{-2}\cdot \left(d+m_2^2\right)^2 Z^2 \right).
    \end{equation*}
\end{corollary}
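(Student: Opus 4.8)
The plan is to obtain Corollary~\ref{cor:complexity_19} as a direct instantiation of Theorem~\ref{thm:nn_estimate_complexity_gene}, once the two stated approximation hypotheses are converted into the error bounds \ref{e1} and \ref{e2} at the magnitudes required by Corollary~\ref{cor:complexity_19_informal}. First I would handle the score error: since the Langevin step of Alg.~\ref{alg:inner_mala} only queries $\vs_\vtheta$ as a surrogate for $\grad g = \grad f_{(K-k-1)\eta} + (\text{exact quadratic gradient})$, the hypothesis on $\hat\vs_\vtheta$ (a uniform-in-$(\vx,t)$ bound at level $\rho\epsilon/(Ld^{1/2})$) directly supplies $\epsilon_{\mathrm{score}} = \tilde{\mathcal{O}}(\rho\epsilon/(Ld^{1/2}))$. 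Next I would build the energy-difference estimator from $\hat l_{\hat\vtheta}$: recalling that $g(\vz)-g(\vz^\prime) = \big(f_{(K-k-1)\eta}(\vz)-f_{(K-k-1)\eta}(\vz^\prime)\big) + Q(\vz,\vz^\prime)$ with $f_t=-\log p_t$ and $Q$ the exactly computable quadratic difference, I set $r_{(K-k-1)\eta}(\vz,\vz^\prime) := \hat l_{\hat\vtheta}(\vz,(K-k-1)\eta) - \hat l_{\hat\vtheta}(\vz^\prime,(K-k-1)\eta) + Q(\vz,\vz^\prime)$. Since $Q$ is exact, the triangle inequality gives $\big|r_{(K-k-1)\eta}(\vz,\vz^\prime) - (g(\vz)-g(\vz^\prime))\big| \le \big|\hat l_{\hat\vtheta}(\vz,t)+\log p_t(\vz)\big| + \big|\hat l_{\hat\vtheta}(\vz^\prime,t)+\log p_t(\vz^\prime)\big| \le 2\rho\epsilon/(L^2(d^{1/2}+m_2+Z))$, i.e. $\epsilon_{\mathrm{energy}} = \mathcal{O}\big(\rho\epsilon/(L^2(d^{1/2}+m_2+Z))\big)$.

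With these two bounds, I would apply Theorem~\ref{thm:nn_estimate_complexity_gene} with $\eta=\tfrac12\log\tfrac{2L+1}{2L}$, $K=4L\log\tfrac{(1+L^2)d+\|\grad f_*(\vzero)\|^2}{\epsilon^2}$, the Gaussian-type inner initialization, the inner step size and iteration number from Lemma~\ref{lem:inner_complexity}, and the free parameter $\hat\tau=\Theta(1)$. This yields $\TVD{\hat p_{K\eta}}{p_*} \le \epsilon + \tilde{\mathcal{O}}(Ld^{1/2}\rho^{-1}\epsilon_{\mathrm{score}}) + \mathcal{O}\big(\hat\tau^{-1/2}L^2(d^{1/2}+m_2+Z)\rho^{-1}\epsilon_{\mathrm{energy}}\big)$ together with gradient complexity $\tilde{\mathcal{O}}\big(L^4\rho^{-2}\hat\tau^{-1}(d+m_2^2)^2 Z^2\big)$. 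Substituting the magnitudes from the previous paragraph, the score term becomes $\tilde{\mathcal{O}}(Ld^{1/2}\rho^{-1})\cdot\tilde{\mathcal{O}}(\rho\epsilon/(Ld^{1/2})) = \tilde{\mathcal{O}}(\epsilon)$ and the energy term becomes $\mathcal{O}(L^2(d^{1/2}+m_2+Z)\rho^{-1})\cdot\mathcal{O}(\rho\epsilon/(L^2(d^{1/2}+m_2+Z))) = \mathcal{O}(\epsilon)$, so $\TVD{\hat p_{K\eta}}{p_*}=\tilde{\mathcal{O}}(\epsilon)$; and with $\hat\tau^{-1}=\mathcal{O}(1)$ the complexity collapses to $\tilde{\mathcal{O}}\big(L^4\rho^{-2}(d+m_2^2)^2 Z^2\big)$, as claimed.

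The step that needs the most care is not the arithmetic but the admissibility check: I must confirm that taking $\hat\tau=\Theta(1)$ keeps the per-iteration step size $\tau_k$ inside the window demanded jointly by Lemma~\ref{lem:proj_gap_lem} and Corollary~\ref{cor:lem65_zou2021faster}, and that the induced $\delta+5\epsilon_{\mathrm{energy}}$ still obeys $\delta+5\epsilon_{\mathrm{energy}}\le\min\{1-\sqrt{2}/2,\phi/16\}$ so that Lemma~\ref{lem:lem64_zou2021faster} applies — this is precisely where the constants $\epsilon_{\mathrm{score}}\le\rho\epsilon/(Ld^{1/2})$ and $\epsilon_{\mathrm{energy}}\le\rho\epsilon/(L^2(d^{1/2}+m_2+Z))$ together with $\rho,\epsilon<1$ are exploited to make these errors small enough. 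A secondary, purely bookkeeping point is that $Z$ is finite for the projected implementation by Lemma~\ref{lem:particles_z_bound}, so the stated complexity is well-defined even though the bound is left in terms of $Z$ rather than its explicit estimate.
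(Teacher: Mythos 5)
Your proposal is correct and follows essentially the same route as the paper: construct the inner-loop score and energy-difference estimators by adding the exactly computable quadratic terms to $\hat{\vs}_{\vtheta}$ and to the difference of $\hat{l}_{\vtheta^\prime}$ values, read off $\epsilon_{\mathrm{score}}$ and $\epsilon_{\mathrm{energy}}$ from the hypotheses (your triangle-inequality factor of $2$ on $\epsilon_{\mathrm{energy}}$ is immaterial up to constants and is in fact slightly more careful than the paper's statement), and then instantiate Theorem~\ref{thm:nn_estimate_complexity_gene} with $\hat{\tau}=\Theta(1)$ (the paper takes $\hat{\tau}=1/2$). Your extra admissibility check on the step-size window and on $\delta+5\epsilon_{\mathrm{energy}}\le\phi/16$ is already absorbed into the hypotheses of Theorem~\ref{thm:nn_estimate_complexity_gene} via Lemma~\ref{lem:inner_complexity}, so nothing further is needed.
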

\begin{proof}
    Since we have highly accurate scores and energy estimation, we can construct $\vs_{\vtheta}$ and $r_{\vtheta^\prime}$ (shown in Eq.~\ref{def:energy_score_estimation}) for the $k$-th inner loop as follows
    \begin{equation*}
        \begin{aligned}
            &\vs_{\vtheta}(\vz) = \hat{\vs}_{\vtheta}(\vz, (K-k-1)\eta) + \frac{e^{-2\eta}\vz - e^{-\eta}\hat{\vx}_k}{1-e^{-2\eta}}\\
            & r_{\vtheta^\prime}(\vz, \vz^\prime) = \hat{l}_{\vtheta^\prime}(\vz, (K-k-1)\eta) + \frac{\left\|\hat{\vx}_k - e^{-\eta}\cdot \vz\right\|^2}{2(1-e^{-2\eta})} \\
            & \qquad\qquad\quad - \left(\hat{l}_{\vtheta^\prime}(\vz^\prime, (K-k-1)\eta) + \frac{\left\|\hat{\vx}_k - e^{-\eta}\cdot \vz^\prime\right\|^2}{2(1-e^{-2\eta})} \right).
        \end{aligned}
    \end{equation*}
    Under these conditions, we have
    \begin{equation*}
        \epsilon_{\mathrm{energy}}\le \frac{\rho\epsilon}{L^2\cdot (d^{1/2}+m_2+Z)}\quad \mathrm{and}\quad \epsilon_{\mathrm{score}}\le \frac{\rho\epsilon}{Ld^{1/2}}.
    \end{equation*}
    Plugging these results into Theorem~\ref{thm:nn_estimate_complexity_gene} and setting $\hat{\tau}=1/2$, we have
    \begin{equation*}
        \TVD{\hat{p}_{K\eta}}{p_*}\le \tilde{\mathcal{O}}(\epsilon)
    \end{equation*}
    with the following gradient complexity
    \begin{equation*}
        \tilde{\mathcal{O}}\left(L^4 \rho^{-2}\cdot \left(d+m_2^2\right)^2 Z^2 \right).
    \end{equation*}
\end{proof}

\begin{corollary}
    \label{cor:high_order_non_par_loglikelihood}
    Suppose the score estimation is extremely small, i.e.,
    \begin{equation*}
        \left\|\hat{\vs}_{\vtheta}(\vx,t) + \grad\log p_t(\vx)\right\|_{\infty} \ll \frac{\rho\epsilon}{Ld^{1/2}},
    \end{equation*}
    and the log-likelihood function of $p_t$ has a bounded $3$-order derivative, e.g.,
    \begin{equation*}
        \left\|\grad^{(3)} f(\vz)\right\|\le L,
    \end{equation*}
    we have a non-parametric estimation for log-likelihood to make we have $\TVD{\hat{p}_{K\eta}}{p_*}\le \tilde{\mathcal{O}}(\epsilon)$ with 
    \begin{equation*}
        \tilde{\mathcal{O}}\left(L^4 \rho^{-3}\cdot \left(d+m_2^2\right)^2 Z^3\cdot \epsilon\right).
    \end{equation*}
    gradient calls.
\end{corollary}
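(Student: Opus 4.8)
The plan is to read this off Theorem~\ref{thm:nn_estimate_complexity_gene} (the parametric result behind Corollary~\ref{cor:complexity_19}). That theorem has already reduced the whole algorithm to two black boxes --- a score oracle of $\ell_\infty$-accuracy $\epsilon_{\mathrm{score}}$ and an energy-difference oracle $r_{(K-k-1)\eta}(\vz^\prime,\vz)$ of accuracy $\epsilon_{\mathrm{energy}}$ --- and it outputs the gradient complexity $\tilde{\mathcal O}\big(L^4\rho^{-2}\hat\tau^{-1}(d+m_2^2)^2Z^2\big)$ together with the guarantee $\TVD{\hat p_{K\eta}}{p_*}\le \epsilon+\tilde{\mathcal O}(Ld^{1/2}\rho^{-1}\epsilon_{\mathrm{score}})+\mathcal O\big(\hat\tau^{-1/2}L^2(d^{1/2}+m_2+Z)\rho^{-1}\epsilon_{\mathrm{energy}}\big)$, for every $\hat\tau\in(0,1)$ and under the step-size relaxation of Lemma~\ref{lem:inner_para_choice}. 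So the only genuinely new work is: (i) build $r$ from score queries alone; (ii) bound its error $\epsilon_{\mathrm{energy}}$ using $\|\grad^{(u)}f\|\le L$ with $u=3$; (iii) track the stricter $\hat\tau$ this forces and the number of score evaluations each call of $r$ costs.

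For (i), fix a segment $k$, set $f(\vz)=-\log p_{(K-k-1)\eta}(\vz)$, $\gamma(t)=(\vz^\prime-\vz)t+\vz$, $h(t)=f(\gamma(t))$; since the quadratic part of $g(\vz^\prime)-g(\vz)$ is available exactly, it suffices to approximate $h(1)-h(0)$. Following the ``Score-only RTK-MALA'' construction I would take $r_{(K-k-1)\eta}(\vz^\prime,\vz)=\sum_{i=1}^{u}\tilde h^{(i)}(0)/i!$ with $u=3$, where $\tilde h^{(1)}(t)=s_\theta(\gamma(t))\cdot(\vz^\prime-\vz)$ and $\tilde h^{(i+1)}(0)=\big(\tilde h^{(i)}(\Delta t)-\tilde h^{(i)}(0)\big)/\Delta t$ for a finite-difference step $\Delta t>0$ chosen below. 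Unrolling the recursion writes $\tilde h^{(i)}(0)$ as a fixed linear combination of $\{\tilde h^{(1)}(j\Delta t)\}_{0\le j\le i-1}$, so one evaluation of $r$ costs $\sum_{i=1}^{u}2^{\,i-1}=\mathcal O(2^u)$ score calls --- a constant $\mathcal O(1)$ at $u=3$, but I would carry the $2^u$ since it is exactly the $u$-dependence of the general score-only bound. For (ii), decompose $r-(h(1)-h(0))$ into: the Taylor remainder $h(1)-h(0)-\sum_{i=1}^{u-1}h^{(i)}(0)/i!=h^{(u)}(\xi)/u!$, controlled by $|h^{(u)}(\xi)|=\big|\grad^{(u)}f(\gamma(\xi))[\vz^\prime-\vz]^{\otimes u}\big|\le L\|\vz^\prime-\vz\|^{u}$; the nested finite-difference error $\tilde h^{(i)}(0)-h^{(i)}(0)$ at each level, of size $\mathcal O(\Delta t\cdot L\|\vz^\prime-\vz\|^{u})$ (discretization) plus $\mathcal O(2^{\,i-1}\epsilon_{\mathrm{score}}\|\vz^\prime-\vz\|/\Delta t^{\,i-1})$ (from differencing an approximate $\tilde h^{(1)}$); and the direct term $\mathcal O(\epsilon_{\mathrm{score}}\|\vz^\prime-\vz\|)$. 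Since the hypothesis makes $\epsilon_{\mathrm{score}}$ \emph{extremely} small, a suitable choice of $\Delta t$ makes every $\epsilon_{\mathrm{score}}$- and $\Delta t$-dependent term dominated by the single Lagrange remainder, yielding $\epsilon_{\mathrm{energy}}=\tilde{\mathcal O}\big(L\|\vz^\prime-\vz\|^{u}\big)$. Projected Alg.~\ref{alg:inner_mala} evaluates $r$ only at $\vz,\vz^\prime\in\mathcal B(\vzero,R)$ with $\|\vz^\prime-\vz\|\le r=\tilde{\mathcal O}(\sqrt{\tau d})$, hence $\epsilon_{\mathrm{energy}}=\tilde{\mathcal O}\big(L(\tau d)^{u/2}\big)$ with $\tau=\hat\tau\tau_0$ and $\tau_0=\tilde{\mathcal O}(L^{-2}(d+m_2^2+\|\vx_0\|^2)^{-1})$ the base inner step size from Lemma~\ref{lem:inner_para_choice}; in particular $\epsilon_{\mathrm{energy}}$ cannot be made small without shrinking $\hat\tau$.

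For (iii), impose $\epsilon_{\mathrm{score}}\le\rho\epsilon/(Ld^{1/2})$ and $\epsilon_{\mathrm{energy}}\le\rho\epsilon/(L^2(d^{1/2}+m_2+Z))$ (the latter being what makes the last error term in Theorem~\ref{thm:nn_estimate_complexity_gene} at most $\tilde{\mathcal O}(\epsilon)$). Solving $\tilde{\mathcal O}\big(L(\hat\tau\tau_0 d)^{u/2}\big)\le\rho\epsilon/(L^2(d^{1/2}+m_2+Z))$ for $\hat\tau$ forces an extra shrink factor $\hat\tau^{-1}$ of order $(d^{1/2}+m_2+Z)\,(\rho\epsilon)^{-1}$ (up to $L$-factors and polylogs) in the $u=3$ case --- equivalently the $\epsilon^{-2/(u-1)}$ dependence of the general statement. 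Plugging this $\hat\tau$, the a-priori bound $Z^2=\tilde{\mathcal O}\big(L^3(d+m_2^2)^2\rho^{-2}\big)$ from Lemma~\ref{lem:particles_z_bound}, and the $\mathcal O(2^u)$ per-evaluation cost into $K\cdot S\cdot\mathcal O(2^u)=\tilde{\mathcal O}\big(L^4\rho^{-2}\hat\tau^{-1}(d+m_2^2)^2Z^2\cdot 2^u\big)$ gives the claimed $\tilde{\mathcal O}\big(L^4\rho^{-3}(d+m_2^2)^2Z^3\epsilon^{-1}\big)$ (the $u=3$ instance of the general $\epsilon^{-2/(u-1)}2^u$ bound), while the same substitution keeps $\TVD{\hat p_{K\eta}}{p_*}\le\tilde{\mathcal O}(\epsilon)$. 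The main obstacle is step (ii): carrying out the error analysis of the $u-1$ nested finite differences rigorously --- showing that ``extremely small'' $\epsilon_{\mathrm{score}}$ plus a suitable $\Delta t$ makes everything except the Lagrange remainder negligible --- and keeping careful track of which of $L,R,Z,\rho$ enter the resulting constraint on $\hat\tau$, so that the exponents $Z^3,\rho^{-3},\epsilon^{-1}$ come out as stated.
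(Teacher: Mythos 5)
Your proposal is correct and follows essentially the same route as the paper: the same Taylor-expansion/nested-finite-difference construction of the energy-difference estimator, the same bound $\epsilon_{\mathrm{energy}}=\tilde{\mathcal O}(Lr^{3})=\tilde{\mathcal O}(L(\tau d)^{3/2})$ using $\|\grad^{(3)}f\|\le L$ and the projected step-size radius $r=\tilde{\mathcal O}(\sqrt{\tau d})$, and the same substitution into Theorem~\ref{thm:nn_estimate_complexity_gene} with $\hat\tau\propto\rho\epsilon/(d^{1/2}+m_2+Z)$. Note that your final exponent $\epsilon^{-1}$ is what the argument actually yields (consistent with the general $\epsilon^{-2/(u-1)}$ bound at $u=3$); the ``$\cdot\,\epsilon$'' appearing in the corollary statement is a typo.
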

\begin{proof}
    Combining the Alg.~\ref{alg:inner_mala} and the definition of $\epsilon_{\mathrm{energy}}$ shown in Lemma~\ref{lem:lem62_zou2021faster}, we actually require to control
    \begin{equation*}
        \epsilon_{\mathrm{energy}}\coloneqq \left(g(\tilde{\vz}_s)-g(\vz_s)\right) - r_{\theta}(\tilde{\vz}_s, \vz_s)
    \end{equation*}
    for any $s\in [0, S-1]$.
    Then, we start to construct $r_{\theta}(\tilde{\vz}_s, \vz_s)$.
    Since we have
    \begin{equation*}
        g(\tilde{\vz}_s) - g(\vz_s) =  f_{(K-k-1)\eta}(\tilde{\vz}_s)+\frac{\left\|\vx_0 - \tilde{\vz}_s\cdot e^{-\eta}\right\|^2}{2(1-e^{-2\eta})} - f_{(K-k-1)\eta}({\vz}_s) - \frac{\left\|\vx_0 - {\vz}_s\cdot e^{-\eta}\right\|^2}{2(1-e^{-2\eta})},
    \end{equation*}
    we should only estimate the difference of the energy function $f_{(K-k-1)\eta}$ which will be presented as $f$ for abbreviation. 
    Besides, we define the following function
    \begin{equation*}
        h(t) = f\left( \left(\tilde{\vz}_s - \vz_s\right)\cdot t + \vz_s\right),
    \end{equation*}
    which means
    \begin{equation*}
        \begin{aligned}
            & h^{(1)}(t)\coloneqq \frac{\der h(t)}{\der t} =  \grad f\left( \left(\tilde{\vz}_s - \vz_s\right)\cdot t + \vz_s\right) \cdot \left(\tilde{\vz}_s - \vz_s\right)\\
            & h^{(2)}(t)\coloneqq \frac{\der^2 h(t)}{(\der t)^2} = \left(\tilde{\vz}_s - \vz_s\right)^\top \grad^2  f\left( \left(\tilde{\vz}_s - \vz_s\right)\cdot t + \vz_s\right) \left(\tilde{\vz}_s - \vz_s\right) 
        \end{aligned}
    \end{equation*}
    Under the high-order smoothness condition, i.e.,
    \begin{equation*}
        \left\|\grad^{3} f(\vz)\right\|\le L
    \end{equation*}
    where $\|\cdot\|$ denotes the nuclear norm, 
    then we have
    \begin{equation*}
        \left|h(1) - h(0)\right|\le \sum_{i=1}^{2} \frac{h^{(i)}(0)}{i !} + \frac{L\cdot \left\|\tilde{\vz}_s -\vz_s\right\|^3}{3!}\le \sum_{i=1}^{2} \frac{h^{(i)}(0)}{i !} + \frac{Lr^3}{3!}.
    \end{equation*}
    It means we need to approximate $h^{(i)}$ with high accuracy.
    
    For $i=1$, the ground truth $h^{(1)}(0)$ is
    \begin{equation*}
        h^{(1)}(0) = \frac{\der h(t)}{\der t} =  \grad f\left( \vz_s\right) \cdot \left(\tilde{\vz}_s - \vz_s\right)
    \end{equation*}
    we can approximate it numerically as 
    \begin{equation*}
        \tilde{h}^{(1)}(0) \coloneqq s_{\theta}(\vz_s ) \cdot \left(\tilde{\vz}_s - \vz_s\right)
    \end{equation*}
    since we have score approximation. 
    Then it has
    \begin{equation}
        \label{ineq:error_1_order}
        \delta^{(1)}(0) = h^{(1)}(0) - \tilde{h}^{(1)}(0) \le \left\|\grad f(\vz_s ) - s_{\theta}(\vz_s)\right\|\cdot \left\|\tilde{\vz}_s - \vz_s\right\| \le \epsilon_{\mathrm{score}}\cdot r.
    \end{equation}
    Then, for $i=2$, we obtain the ground truth $h^{(2)}(0)$ by
    \begin{equation*}
        h^{(1)}(t) - h^{(1)}(0) = \int_0^t h^{(2)}(\tau)\der\tau = t h^{(2)}(0) + \int_0^t h^{(2)}(\tau) - h^{(2)}(0)\der \tau,
    \end{equation*}
    which means
    \begin{equation*}
        h^{(2)}(0) = \frac{h^{(1)}(t) - h^{(1)}(0)}{t} + \frac{1}{t}\cdot \int_0^t h^{(2)}(\tau) - h^{(2)}(0)\der\tau.
    \end{equation*}
    If we use the differential to approximate $h^{(2)}(0)$, i.e.,
    \begin{equation*}
        \tilde{h}^{(2)}(0)\coloneqq \frac{\tilde{h}^{(1)}(t) - \tilde{h}^{(1)}(0)}{t},
    \end{equation*}
    we find the error term will be
    \begin{equation}
        \label{ineq:error_2_order_mid}
        \delta^{(2)}(0) = \left|h^{(2)}(0) -\tilde{h}^{(2)}(0)\right| = \left|\frac{2\delta^{(1)}}{t}+ \frac{1}{t}\cdot \int_0^t h^{(2)}(\tau) - h^{(2)}(0)\der\tau\right|.
    \end{equation}
    If we use smoothness to relax the integration term, we have
    \begin{equation*}
        \left|h^{(2)}(\tau) - h^{(2)}(0) \right| \le \left\|\grad^2  f\left( \left(\tilde{\vz}_s - \vz_s\right)\cdot \tau + \vz_s\right) - \grad^2  f(\vz_s)\right\| \cdot \left\|\tilde{\vz}_s - \vz_s\right\|^2 \le L\tau\cdot \left\|\tilde{\vz}_s - \vz_s\right\|^3,
    \end{equation*}
    which means 
    \begin{equation}
        \label{ineq:error_2_order_sec}
        \frac{1}{t}\cdot \int_0^t h^{(2)}(\tau) - h^{(2)}(0)\der\tau \le \frac{L\left\|\tilde{\vz}_s - \vz_s\right\|^3}{t}\cdot \int_0^t \tau \der\tau \le \frac{tLr^3}{2}.
    \end{equation}
    Combining Eq.~\ref{ineq:error_1_order}, Eq.~\ref{ineq:error_2_order_mid} and Eq.~\ref{ineq:error_2_order_sec}, we have
    \begin{equation*}
        \delta^{(2)}(0) \le \frac{2\epsilon_{\mathrm{score}}r}{t}+ \frac{Lr^3t}{2},
    \end{equation*}
    which means the final energy estimation error will be
    \begin{equation}
        \label{ineq:total_delta_3sm}
        \begin{aligned}
            &\left|h(1) - h(0) - \left(\tilde{h}^{(1)}(0) + \frac{\tilde{h}^{(1)}(t) - \tilde{h}^{(1)}(0)}{2t}\right)\right|\\
            & \le \frac{\delta^{(1)}(0)}{1} + \frac{\delta^{(2)}(0)}{2} + \frac{Lr^3}{3!} = \underbrace{\epsilon_{\mathrm{score}}\cdot r}_{\mathrm{Term\ 1}} + \underbrace{\frac{1}{2}\cdot\left(\frac{2\epsilon_{\mathrm{score}}r}{t} + \frac{Lr^3t}{2}\right)}_{\mathrm{Term\ 2}} + \frac{Lr^3}{6}.
        \end{aligned}
    \end{equation}
    Considering $\epsilon_{\mathrm{score}}$ is extremely small (compared with the output performance error tolerance $\epsilon$), we can choose $t$ depending on $\epsilon_{\mathrm{score}}$, e.g., $t = \sqrt{\epsilon_{\mathrm{score}}}$, to make Term 1 and Term 2 in Eq.~\ref{ineq:total_delta_3sm} diminish.
    Under this condition, the term $Lr^3/6$ will dominate RHS of Eq.~\ref{ineq:total_delta_3sm}.
    Besides, we have
    \begin{equation*}
        r = 3\cdot \sqrt{\tau d \log \frac{8S}{\epsilon}} = \tilde{O}(\tau^{1/2}d^{1/2}),
    \end{equation*}
    then we have
    \begin{equation*}
        \begin{aligned}
            \epsilon_{\mathrm{energy}} = \mathcal{O}(Lr^3) = \mathcal{O}(Ld^{3/2}\tau^{3/2}) = \tilde{\mathcal{O}}\left(L^{-2}d^{3/2}\left(d+m_2^2 + \|\hat{\vx}_k\|^2\right)^{-3/2}\hat{\tau}^{3/2}\right)
        \end{aligned}
    \end{equation*}
    where the last equation follows from the choice of $\tau$ shown in Theorem~\ref{thm:inner_convergence}.
    Then, plugging this result into Theorem~\ref{thm:nn_estimate_complexity_gene} and considering $\epsilon_{\mathrm{score}}\ll \epsilon$, we have
    \begin{equation*}
        \begin{aligned}
            \TVD{\hat{p}_{K\eta}}{p_*}\le & \epsilon + \tilde{\mathcal{O}}(L d^{1/2}\rho^{-1}\epsilon_{\mathrm{score}}) + \mathcal{O}(\hat{\tau}^{-1/2}\cdot L^2 (d^{1/2}+m_2+Z) {\rho^{-1}\epsilon_\mathrm{energy}})\\
            \le & \tilde{\mathcal{O}}(\epsilon) + \tilde{\mathcal{O}}\left(\hat{\tau}(d^{1/2}+m_2+Z)\rho^{-1}\right)
        \end{aligned}
    \end{equation*}
    with a gradient complexity as follows
    \begin{equation*}
        \tilde{\mathcal{O}}\left(L^4 \rho^{-2}\hat{\tau}^{-1}\cdot \left(d+m_2^2\right)^2 Z^2 \right).
    \end{equation*}
    Then, by choosing 
    \begin{equation*}
        \tau = \frac{\epsilon\rho}{d^{1/2}+m_2+Z},
    \end{equation*}
    we have $\TVD{\hat{p}_{K\eta}}{p_*}\le \tilde{\mathcal{O}}(\epsilon)$ with 
    \begin{equation*}
        \tilde{\mathcal{O}}\left(L^4 \rho^{-3}\cdot \left(d+m_2^2\right)^2 Z^3\cdot \epsilon\right).
    \end{equation*}
    Hence, the proof is completed.
\end{proof}
\begin{remark}
    \label{remark:high_order_appro}
    If we consider more high-order smooth, i.e.,
    \begin{equation*}
        \left\|\grad^{(u)} f(\vz)\right\|\le L,
    \end{equation*}
    with similar techniques shown in Corollary~\ref{cor:high_order_non_par_loglikelihood}, we can have the following bound, i.e.,
    \begin{equation*}
        \epsilon_{\mathrm{energy}} = \mathcal{O}(Lr^{u})
    \end{equation*}
    when $\epsilon_{\mathrm{score}}$ is extremely small.
    Under this condition, since it has
    \begin{equation*}
        r = 3\cdot \sqrt{\tau d \log \frac{8S}{\epsilon}} = \tilde{O}(\tau^{1/2}d^{1/2}),
    \end{equation*}
    we have
    \begin{equation*}
        \begin{aligned}
            \epsilon_{\mathrm{energy}} = \mathcal{O}(Lr^u) = \mathcal{O}(Ld^{u/2}\tau^{u/2}) = \tilde{\mathcal{O}}\left(L^{-u+1}d^{u/2}\left(d+m_2^2 + \|\hat{\vx}_k\|^2\right)^{-u/2}\hat{\tau}^{u/2}\right) = \tilde{\mathcal{O}}(L^{-u+1}\hat{\tau}^{u/2}).
        \end{aligned}
    \end{equation*}
    Then, plugging this result into Theorem~\ref{thm:nn_estimate_complexity_gene} and considering $\epsilon_{\mathrm{score}}\ll \epsilon$, we have
    \begin{equation*}
        \begin{aligned}
            \TVD{\hat{p}_{K\eta}}{p_*}\le & \epsilon + \tilde{\mathcal{O}}(L d^{1/2}\rho^{-1}\epsilon_{\mathrm{score}}) + \mathcal{O}(\hat{\tau}^{-1/2}\cdot L^2 (d^{1/2}+m_2+Z) {\rho^{-1}\epsilon_\mathrm{energy}})\\
            = & \tilde{\mathcal{O}}(\epsilon) + \tilde{\mathcal{O}}\left(\hat{\tau}^{(u-1)/2}L^{-u+3}(d^{1/2}+m_2+Z) \rho^{-1}\right)\\
            = & \tilde{\mathcal{O}}(\epsilon) + \tilde{\mathcal{O}}\left(\hat{\tau}^{(u-1)/2}(d^{1/2}+m_2+Z) \rho^{-1}\right)
        \end{aligned}
    \end{equation*}
    where we suppose $L\ge 1$ in the last equation without loss of generality.
    Then, by supposing 
    \begin{equation*}
        \hat{\tau}  = \frac{\epsilon^{2/(u-1)}\rho}{d^{1/2}+m_2+Z}
    \end{equation*}
    we have $\TVD{\hat{p}_{K\eta}}{p_*}\le \tilde{\mathcal{O}}(\epsilon)$ with 
    \begin{equation*}
        \tilde{\mathcal{O}}\left(L^4 \rho^{-3}\cdot \left(d+m_2^2\right)^2 Z^3\cdot \epsilon^{-2/(u-1)}\cdot 2^{u}\right)
    \end{equation*}
    where the last $2^u$ appears since the estimation of high-order derivatives requires an exponentially increasing call of score estimations.
\end{remark}

\section{Implement RTK inference with ULD}

In this section, we consider introducing a ULD to sample from $\bkwp_{k+1|k}(\vz|\vx_0)$. 
To simplify the notation, we set 
\begin{equation}
    \label{def:energy_inner_uld}
    g(\vz) \coloneqq f_{(K-k-1)\eta}(\vz)+\frac{\left\|\vx_0 - \vz\cdot e^{-\eta}\right\|^2}{2(1-e^{-2\eta})}
\end{equation}
and consider $k$ and $\vx_0$ to be fixed.
Besides, we set 
\begin{equation*}
    \bkwp(\vz|\vx_0) \coloneqq \bkwp_{k+1|k}(\vz|\vx_0) \propto \exp(-g(\vz))
\end{equation*}
According to Corollary~\ref{lem:kl_inner_conv} and Corollary~\ref{lem:tv_inner_conv}, when we choose
\begin{equation*}
    \eta= \frac{1}{2}\log \frac{2L+1}{2L},
\end{equation*}
the log density $g$ will be $L$-strongly log-concave and $3L$-smooth.

For the underdamped Langevin dynamics, we utilize a form similar to that shown in~\cite{zhang2023improved}, i.e.,
\begin{equation}
    \label{sde:prac_uld}
    \begin{aligned}
        &\der \hat{\rvz}_t = \hat{\rvv}_t\der t\\
        &\der \hat{\rvv}_t = -\gamma \hat{\rvv}_t\der t - \vs_{\vtheta}(\hat{\rvz}_{s\tau})\der t + \sqrt{2\gamma}\der \mB_t
    \end{aligned}
\end{equation}
with a little abuse of notation for $t\in[s\tau, (s+1)\tau)$.
We denote the underlying distribution of $(\hat{\rvz}_t, \hat{\rvv}_t)$ as $\hat{\pi}_t$, and the exact continuous SDE
\begin{equation*}
    \begin{aligned}
        &\der \rvz_t = \rvv_t\der t\\
        &\der \rvv_t = -\gamma \rvv_t\der t - \grad g(\rvz_t)\der t + \sqrt{2\gamma}\der \mB_t
    \end{aligned}
\end{equation*}
has the underlying distribution $(\rvz_t,\rvv_t)\sim \pi_t$.
The stationary distribution of the continuous version is defined as
\begin{equation*}
    \pi^{\gets}(\vz,\vv|\vx_0)\propto \exp\left(-g(\vz) - \frac{\left\|\vv\right\|^2}{2}\right)
\end{equation*}
where the $\vz$-marginal of $\pi^{\gets}(\cdot|\vx_0)$ is $p^{\gets}(\cdot|\vx_0)$ which is the desired target distribution of inner loops.
Therefore, by taking a small step size for the discretization and a large number of iterations, ULD will yield an approximate sample from $p^\gets(\cdot|\vx_0)$.
Besides, in the analysis of ULD, we usually consider an alternate system of coordinates 
\begin{equation*}
    (\phi, \psi) \coloneqq \mathcal{M}(\vz,\vv) \coloneqq (\vz, \vz+\frac{2}{\gamma}\vv),
\end{equation*}
their distributions of the continuous time iterates $\pi_t^{\gM}$ and the target in these alternate coordinates $\pi^{\gM}$, respectively.
Besides, we need to define log-Sobolev inequality as follows
\begin{definition}[Log-Sobolev Inequality]
    The target distribution $p_*$ satisfies the following inequality 
    \begin{equation*}
        \begin{aligned}
            &\E_{p_*}\left[g^2\log g^2\right]-\E_{p_*}[g^2]\log \E_{p_*}[g^2]
            \le 2C_{\mathrm{LSI}} \E_{p_*}\left\|\grad g\right\|^2
        \end{aligned}
    \end{equation*}
    with a constant $C_{\mathrm{LSI}}$ for all smooth function $g\colon \R^d\rightarrow \R$ satisfying $\E_{p_*}[g^2]<\infty$.
\end{definition}
\begin{remark}
    Log-Sobolev inequality is a milder condition than strong log-concavity. 
    Suppose $p$ satisfies $m$-strongly log-concavity, it satisfies $1/m$ LSI, which is proved in Lemma~\ref{lem:strongly_lsi}.
\end{remark}
\begin{definition}[Poincar\'e Inequality]
    The target distribution $p$ satisfies the following inequality 
    \begin{equation*}
        \begin{aligned}
            &\E_{\rvx\sim p}\left[\left\|g(\rvx) - \E_{\rvx\sim p} [g(\rvx)]\right\|^2\right] \le C_{\mathrm{PI}} \E_{p}\left\|\grad g\right\|^2
        \end{aligned}
    \end{equation*}
    with a constant $C_{\mathrm{PI}}$ for all smooth function $g\colon \R^d\rightarrow \R$ satisfying $\E_{p_*}[g^2]<\infty$.
\end{definition}

In the following, we mainly follow the idea of proof shown in~\cite{zhang2023improved}, which provides the convergence of KL divergence for ULD, to control the error from the sampling subproblems.

\begin{lemma}[Proposition 14 in~\cite{zhang2023improved}]
    \label{lem:prop14_zhang2023improved}
    Let $\pi_t^{\gM}$  denote the law of the continuous-time underdamped Langevin diffusion with $\gamma = c\sqrt{3L}$ for $c\ge \sqrt{2}$ in the $(\phi, \psi)$ coordinates. 
    Suppose the initial distribution $\pi_0$ has a log-Sobolev (LSI) constant (in the altered coordinates) $C_{\mathrm{LSI}}(\pi_0^{\gM})$, then $\{\pi_t^{\gM}\}$ satisfies LSI with a constant that can be uniformly upper bounded by
    \begin{equation*}
        C_{\mathrm{LSI}}(\pi_t^{\gM})\le \exp\left(-\sqrt{\frac{2L}{3}}\cdot t\right)\cdot C_{\mathrm{LSI}}(\pi_0^{\gM}) + \frac{2}{L}.
    \end{equation*}
\end{lemma}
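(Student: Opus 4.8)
The plan follows Proposition~14 of \cite{zhang2023improved}, working throughout in the altered coordinates $(\phi,\psi)=\gM(\vz,\vv)=(\vz,\vz+\tfrac{2}{\gamma}\vv)$. First I would write down the stationary law there: since $\pi^\gets(\vz,\vv|\vx_0)\propto\exp(-g(\vz)-\|\vv\|^2/2)$ and $\gM$ is linear, $\pi^\gM$ has density proportional to $\exp\!\big(-g(\phi)-\tfrac{\gamma^2}{8}\|\psi-\phi\|^2\big)$, with negative log-Hessian the block matrix $\left(\begin{smallmatrix}\grad^2 g(\phi)+\tfrac{\gamma^2}{4}\mI & -\tfrac{\gamma^2}{4}\mI\\ -\tfrac{\gamma^2}{4}\mI & \tfrac{\gamma^2}{4}\mI\end{smallmatrix}\right)$. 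Since $g$ is $L$-strongly convex and $3L$-smooth (Lemma~\ref{lem:sc_sm_of_g}) and $\gamma^2=3c^2L\ge 6L$, diagonalising $\grad^2 g$ reduces this to scalar $2\times2$ blocks whose smallest eigenvalue is bounded below by a constant multiple of $L$ (approaching $L/2$ as $\gamma$ grows); keeping track of the constants as in \cite{zhang2023improved} gives that $\pi^\gM$ satisfies the log-Sobolev inequality with constant at most $2/L$ by the Bakry--\'Emery criterion. This is the additive $2/L$ term in the claim.

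Next I would establish a gradient--commutation (hypocoercive contraction) estimate for the continuous-time semigroup $P_t$ of \eqref{sde:prac_uld} (with the exact drift $\grad g$) in these coordinates: there is a rate $r=L/\gamma=\sqrt{L/6}$ such that $\lvert\grad P_tf\rvert^2\le e^{-2rt}\,P_t(\lvert\grad f\rvert^2)$, with gradients in the fixed metric attached to $\gM$. This is where the choice $\gamma=c\sqrt{3L}$, $c\ge\sqrt2$, is used: it puts the dynamics in the large-friction regime $\gamma^2\ge 2\cdot(3L)$, so a synchronous coupling of two copies of \eqref{sde:prac_uld}---equivalently, a $\Gamma_2$ computation twisted by the block metric $M=M(\gamma)$ implicit in $\gM$---contracts the associated distance at rate $r$, uniformly over $\grad^2 g\in[L\mI,3L\mI]$. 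Feeding this estimate together with $C_{\mathrm{LSI}}(\pi^\gM)\le 2/L$ into the standard semigroup-interpolation argument for the evolution of the LSI constant of $\pi_t^\gM=\pi_0^\gM P_t$ along the diffusion yields
\[
C_{\mathrm{LSI}}(\pi_t^\gM)\ \le\ e^{-2rt}\,C_{\mathrm{LSI}}(\pi_0^\gM)\ +\ \big(1-e^{-2rt}\big)\cdot\tfrac{2}{L}\,,
\]
and since $2r=2L/\gamma=\sqrt{2L/3}$ for the stated parameters, bounding $1-e^{-2rt}\le 1$ in the second term gives the asserted inequality.

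The main obstacle is the gradient--commutation estimate of the second step. Underdamped Langevin is degenerate---there is no Brownian noise in the $\vz$-component---so an unmodified Bakry--\'Emery curvature lower bound is $-\infty$ along the position directions and a direct $\Gamma_2$ argument does not close. The passage to $(\phi,\psi)$, i.e.\ the choice of a suitably tuned positive-definite $2\times2$ block metric $M(\gamma)$, is precisely what makes the twisted $\Gamma_2$ (or the synchronous coupling) dissipative; one must check that this twist is compatible with the entire spectral window $[L,3L]$ of $\grad^2 g$ and book-keep constants carefully enough to obtain the rate $r=\sqrt{L/6}$ rather than something weaker. I would also record the regularity needed: by hypoellipticity $\pi_t^\gM$ has a smooth strictly positive density for each $t>0$, which justifies all the Fisher-information and LSI manipulations, and continuity as $t\downarrow0$ recovers the $C_{\mathrm{LSI}}(\pi_0^\gM)$ prefactor. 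As the statement is verbatim Proposition~14 of \cite{zhang2023improved}, the paper may simply invoke it; the above is the route if a self-contained proof is wanted.
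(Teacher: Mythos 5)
The paper does not prove this lemma at all: it is imported verbatim as Proposition~14 of \cite{zhang2023improved} and simply invoked, so there is no internal proof to compare against. Your reconstruction of how the cited result is established is the standard one and identifies the right mechanism: push the stationary measure through the linear map $\gM$ to get $\pi^{\gM}(\phi,\psi)\propto\exp\bigl(-g(\phi)-\tfrac{\gamma^2}{8}\|\psi-\phi\|^2\bigr)$, bound its log-Sobolev constant by Bakry--\'Emery using the block Hessian, and combine this with a twisted gradient--commutation (hypocoercive contraction) estimate for the degenerate semigroup to propagate the LSI constant of $\pi_0^{\gM}$ forward via the usual interpolation identity. You also correctly flag the genuine difficulty --- the kinetic diffusion has no noise in the position coordinate, so an untwisted $\Gamma_2$ computation fails and the whole point of the $(\phi,\psi)$ change of variables is to make the curvature computation close uniformly over $\grad^2 g\in[L\mI,3L\mI]$.

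Two constant-level caveats in your sketch are worth recording. First, the rate you derive, $2r=2L/\gamma$, equals $\sqrt{2L/3}$ only at $c=\sqrt{2}$ and \emph{decreases} for larger $c$, whereas the lemma asserts the rate $\sqrt{2L/3}$ uniformly for all $c\ge\sqrt{2}$; so either the true contraction rate in the twisted metric does not scale as $L/\gamma$, or the statement should really be read at $\gamma=\sqrt{6L}$ (the paper elsewhere uses $\gamma=2\sqrt{6L}$ and $\gamma\asymp\sqrt{L}$, so this matters for how the lemma is applied). Second, the smallest eigenvalue of the $2\times 2$ block $\bigl(\begin{smallmatrix}\lambda+\gamma^2/4 & -\gamma^2/4\\ -\gamma^2/4 & \gamma^2/4\end{smallmatrix}\bigr)$ at $\lambda=L$, $\gamma^2=6L$ is $(4-\sqrt{10})L/2\approx 0.42L$, which is strictly below $L/2$; plain Bakry--\'Emery therefore gives an additive constant of about $2.4/L$ rather than exactly $2/L$, and $2/L$ is only approached as $\gamma\to\infty$. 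Neither point is a conceptual gap --- they indicate that the cited proof either tracks the constants through a differential inequality rather than through the stationary measure, or states the result for a specific parameter choice --- but if you were to write this out self-contained you would need to resolve them rather than assert the constants as you have.
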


\begin{lemma}[Adapted from Proposition 1 of~\cite{ma2021there}]
    \label{lem:prop1_ma2021there}
    Consider the following Lyapunov functional
    \begin{equation*}
        \gF(\pi^\prime, \pi^{\gets}) \coloneqq \KL{\pi^\prime}{\pi^{\gets}} + \E_{\pi^\prime}\left[\left\|\mathfrak{M}^{1/2}\grad \log \frac{\pi^\prime}{\pi^\gets}\right\|^2\right],\quad \mathrm{where}\quad \mathfrak{M} = \left[
            \begin{matrix}
                \frac{1}{12L} & \frac{1}{\sqrt{6L}}\\
                \frac{1}{\sqrt{6L}} & 4
            \end{matrix}
        \right] \otimes \mI_d.
    \end{equation*}
    For targets $\pi^\gets\propto\exp(-g)$ which are $3L$-smooth and satisfy LSI with constant $1/L$, let $\gamma=2\sqrt{6L}$. 
    Then the law $\pi_t$ of ULD satisfies
    \begin{equation*}
        \partial_t \gF(\pi_t, \pi^{\gets})\le -\frac{\sqrt{L}}{10\sqrt{6}}\cdot \gF(\pi_t, \pi^{\gets}).
    \end{equation*}
\end{lemma}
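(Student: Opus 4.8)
The plan is to run the hypocoercivity-by-Lyapunov argument of~\cite{ma2021there} with the constants of the present setting: $g$ is $3L$-smooth, the velocity diffusion is $\gamma=2\sqrt{6L}$, and the invariant law of the continuous ULD is $\pi^\gets(\vz,\vv)\propto\exp(-g(\vz)-\|\vv\|^2/2)$, whose position marginal $\propto e^{-g}$ satisfies LSI with constant $1/L$. Write $h_t\coloneqq\log(\pi_t/\pi^\gets)$ and $\grad h_t=(\grad_\vz h_t,\grad_\vv h_t)\in\R^{2d}$. The generator splits as $\gL=\gA+\gS$ with a conservative Hamiltonian part $\gA=\vv\cdot\grad_\vz-\grad g(\vz)\cdot\grad_\vv$ and a dissipative velocity-OU part $\gS=-\gamma\,\vv\cdot\grad_\vv+\gamma\Delta_\vv$. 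I would first record two elementary facts. (i) Entropy dissipation: $\partial_t\KL{\pi_t}{\pi^\gets}=-\gamma\,\E_{\pi_t}\|\grad_\vv h_t\|^2$, because $\gA$ preserves $\pi^\gets$ and is antisymmetric in $L^2(\pi^\gets)$, while $\gS$ contributes only its Dirichlet form in $\vv$. (ii) Evolution of the gradient: along the flow $\grad h_t$ is transported by $\gL$ plus a zeroth-order term governed by the Jacobian of the ULD drift, $\mathsf J(\vz)=\begin{pmatrix}\vzero & \mI\\ -\grad^2 g(\vz) & -\gamma\mI\end{pmatrix}$. This $\mathsf J$ is the only place the structure of $g$ enters, through $\|\grad^2 g(\vz)\|_{\mathrm{op}}\le 3L$.

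Next I would differentiate the quadratic part $\gG(t)\coloneqq\E_{\pi_t}[\langle\grad h_t,\mathfrak M\grad h_t\rangle]$, so that $\gF(\pi_t,\pi^\gets)=\KL{\pi_t}{\pi^\gets}+\gG(t)$. Combining $\tfrac{d}{dt}\E_{\pi_t}[\Phi]=\E_{\pi_t}[\gL\Phi+\partial_t\Phi]$ with fact (ii) and integration by parts in $\vv$ (the carr\'e-du-champ of $\gS$), the cross terms organize into the commutator form, and after discarding the resulting non-positive carr\'e-du-champ term in $\grad_\vv\grad h_t$ one gets $\partial_t\gG(t)\le-\E_{\pi_t}[\langle\grad h_t,(\mathfrak M\mathsf J(\vz)+\mathsf J(\vz)^\top\mathfrak M)\grad h_t\rangle]$. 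Adding fact (i), the Lyapunov derivative obeys
\begin{equation*}
\partial_t\gF(\pi_t,\pi^\gets)\;\le\;-\,\E_{\pi_t}\big[\big\langle\grad h_t,\,\mathsf Q(\vz)\,\grad h_t\big\rangle\big],\qquad \mathsf Q(\vz)\coloneqq\mathfrak M\mathsf J(\vz)+\mathsf J(\vz)^\top\mathfrak M+\gamma\begin{pmatrix}\vzero&\vzero\\ \vzero&\mI\end{pmatrix}.
\end{equation*}
The role of the specific $\mathfrak M$ becomes visible here: its off-diagonal block $\tfrac{1}{\sqrt{6L}}\mI$ turns the identity block of $\mathsf J$ into a strictly negative $\|\grad_\vz h_t\|^2$ contribution, supplying the position-direction dissipation that is missing from the bare entropy estimate.

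It then remains to establish two coercivity bounds. (a) A matrix inequality $\mathsf Q(\vz)\succeq c\,\mathfrak M+c'\,\mI_{2d}$ for every symmetric $\grad^2 g(\vz)$ with $\|\grad^2 g(\vz)\|_{\mathrm{op}}\le 3L$, with explicit constants $c,c'$ of order $\sqrt L$. (b) The log-Sobolev inequality for the joint target $\pi^\gets$, whose constant is $\max\{1/L,1\}$ (product of the LSI$(1/L)$ measure $e^{-g}$ and the standard Gaussian in $\vv$), giving $\KL{\pi_t}{\pi^\gets}\le\tfrac12\max\{1/L,1\}\,\E_{\pi_t}\|\grad h_t\|^2$. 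Chaining these, $\partial_t\gF(\pi_t,\pi^\gets)\le-c\,\gG(t)-c'\,\E_{\pi_t}\|\grad h_t\|^2\le-c\,\gG(t)-\tfrac{2c'}{\max\{1/L,1\}}\KL{\pi_t}{\pi^\gets}\le-\min\!\big\{c,\tfrac{2c'}{\max\{1/L,1\}}\big\}\,\gF(\pi_t,\pi^\gets)$, and tracking the constants (using $L\ge1$ without loss of generality) produces the rate $\sqrt L/(10\sqrt6)$.

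\textbf{Main obstacle.} The one genuinely computational step is (a): verifying that with $\gamma=2\sqrt{6L}$ and the stated $\mathfrak M$ the symmetric $2d\times2d$ matrix $\mathsf Q(\vz)$ is positive definite with the claimed margin \emph{uniformly over all admissible Hessians}. By the tensor structure of $\mathfrak M$ and of the friction block, and by simultaneously diagonalizing the symmetric matrix $\grad^2 g(\vz)$, this reduces to a one-parameter family of $2\times2$ matrix inequalities indexed by an eigenvalue $\lambda\in[-3L,3L]$ of $\grad^2 g(\vz)$; checking that the two leading principal minors (equivalently, a Schur complement) remain positive with a uniform margin over this range is explicit but delicate, and is precisely the content of the proof of Proposition~1 in~\cite{ma2021there}, which carries over after the substitutions $L\mapsto 3L$ in the smoothness slot and $C_{\mathrm{LSI}}\mapsto 1/L$. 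Everything else — the two identities in the first paragraph, the integration-by-parts bookkeeping, and the final assembly via the log-Sobolev inequality — is routine.
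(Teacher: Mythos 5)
The paper itself offers no proof of this lemma (it is invoked verbatim as an adaptation of Proposition~1 of Ma et al.), so I can only judge your reconstruction on its own terms. Your overall strategy — entropy dissipation, evolution of the weighted Fisher term, a matrix inequality, then LSI — is indeed the right skeleton, but the one step you defer as "routine after substitutions" is exactly where your plan breaks, for two concrete reasons.

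First, the coupling matrix in your step (a) is wrong, and with it the matrix inequality you propose to verify is false rather than delicate. The zeroth-order term in the evolution of $\grad h_t$ is \emph{not} governed by the drift Jacobian $\mathsf J=\bigl(\begin{smallmatrix}\vzero&\mI\\-\grad^2 g&-\gamma\mI\end{smallmatrix}\bigr)$: differentiating the Fokker--Planck equation for $h_t$, the Hessian enters through $\grad_{\vz}\bigl(\grad g\cdot\grad_{\vv}h\bigr)=\grad^2 g\,\grad_{\vv}h+\cdots$ and the identity through $\grad_{\vv}\bigl(-\vv\cdot\grad_{\vz}h\bigr)=-\grad_{\vz}h+\cdots$, so the relevant matrix is $A=\bigl(\begin{smallmatrix}\vzero&\grad^2 g\\-\mI&-\gamma\mI\end{smallmatrix}\bigr)$, which is neither $\mathsf J$ nor $\pm\mathsf J^{\top}$. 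The difference is fatal: with your $\mathsf J$, the top-left block of $\mathfrak{M}\mathsf J+\mathsf J^{\top}\mathfrak{M}$ is $-2\mathfrak{M}_{12}\grad^2 g\preceq-\tfrac{2L}{\sqrt{6L}}\mI\prec 0$ and the off-diagonal block carries $\mathfrak{M}_{22}\grad^2 g=4\grad^2 g$, of norm up to $12L$; no choice of $c,c'>0$ can make $\mathsf Q(\vz)\succeq c\,\mathfrak{M}+c'\mI$, so the "explicit but delicate" check would simply come out negative. With the correct $A$, the Hessian instead sits in the off-diagonal multiplied by the \emph{small} entry $\mathfrak{M}_{11}=\tfrac{1}{12L}$ (an $O(1)$ cross term) and the top-left block becomes the constant $-2\mathfrak{M}_{12}\mI=-\tfrac{2}{\sqrt{6L}}\mI$, which is the entire source of position-direction dissipation. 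This placement is the whole reason $\mathfrak{M}$ is chosen as it is.

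Second, even after fixing the matrix, your final chaining cannot deliver the rate $\sqrt{L}/(10\sqrt{6})$. The position-dissipation coefficient obtainable from this computation is at most $2\mathfrak{M}_{12}=O(L^{-1/2})$, so your claim that $c'$ is "of order $\sqrt L$" is unachievable, and combining an $O(L^{-1/2})$ dissipation with the unweighted LSI of the joint target (whose constant is $\max\{1/L,1\}=1$, dominated by the Gaussian velocity marginal) yields a decay rate of order $L^{-1/2}$ — off by a factor of $L$. To close the argument one must compare the dissipation against the functional in the $\mathfrak{M}$-weighted norm, bounding $\KL{\pi_t}{\pi^{\gets}}\lesssim\E_{\pi_t}\bigl[\|\mathfrak{M}^{1/2}\grad h_t\|^2\bigr]$ via the LSI in the twisted coordinates $(\vz,\vz+\tfrac{2}{\gamma}\vv)$ together with $\gM^{-1}(\gM^{-1})^{\top}\preceq c_0^{-1}L\,\mathfrak{M}$ — precisely the devices the paper deploys in Lemma~\ref{lem:prop14_zhang2023improved} and the proof of Theorem~\ref{thm:inner_convergence_uld}. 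Without these two corrections your outline is not a proof of the stated rate.
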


\begin{lemma}[Variant of Lemma 4.8 in~\cite{altschuler2023faster}]
    \label{lem:lem48_altschuler2023faster}
    Let $\hat{\pi}_t$ denote the law of SDE.~\ref{sde:prac_uld} and $\pi_t$ denote the law of the continuous time underdamped Langevin diffusion with the same initialization, i.e., $\hat{\pi}_0 = \pi_0$.
    If $\gamma\asymp \sqrt{L}$ and the step size $\tau$ satisfies
    \begin{equation*}
        \tau= \tilde{\mathcal{O}} \left(L^{-3/2}d^{-1/2}T^{-1/2}\right)
    \end{equation*}
    then we have 
    \begin{equation*}
        \chi^2(\hat{\pi}_T\|\pi_T) \lesssim L^{3/2}d\tau^2 T + \epsilon^2_{\mathrm{score}}L^{-1/2}T
    \end{equation*}
\end{lemma}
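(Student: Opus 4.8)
The plan is to adapt the Girsanov-based discretization analysis of \cite{altschuler2023faster}, the only new ingredient being that the discrete drift uses the estimated score $\vs_{\vtheta}$ rather than $\grad g$, whose error is controlled by \ref{hat_e1}. First I would realize SDE.~\ref{sde:prac_uld} as a continuous-time interpolation: on each block $[s\tau,(s+1)\tau)$ it solves the same velocity SDE as the exact underdamped diffusion but with the frozen drift $\vs_{\vtheta}(\hat{\rvz}_{s\tau})$ in place of $\grad g(\hat{\rvz}_t)$. Writing $\hat{\mathbb{P}}$ and $\mathbb{P}$ for the path laws on $[0,T]$ of the interpolated scheme and of the exact diffusion, both started from $\pi_0$, Girsanov's theorem applied in the velocity coordinate (diffusion coefficient $\sqrt{2\gamma}$) gives
\begin{equation*}
    \frac{\der\hat{\mathbb{P}}}{\der\mathbb{P}} = \exp\left(\int_0^T \langle b_t,\der\mB_t\rangle - \frac{1}{2}\int_0^T \|b_t\|^2\,\der t\right),\qquad b_t = \tfrac{1}{\sqrt{2\gamma}}\bigl(\grad g(\hat{\rvz}_t) - \vs_{\vtheta}(\hat{\rvz}_{s\tau})\bigr)\ \text{for }t\in[s\tau,(s+1)\tau).
\end{equation*}

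To pass from path laws to the time-$T$ marginals in $\chi^2$, I would use the tower/Jensen chain: since $\der\hat{\pi}_T/\der\pi_T(\vx)=\mathbb{E}_{\mathbb{P}}[\der\hat{\mathbb{P}}/\der\mathbb{P}\mid X_T=\vx]$, Jensen gives $\chi^2(\hat{\pi}_T\|\pi_T)+1\le \mathbb{E}_{\mathbb{P}}[(\der\hat{\mathbb{P}}/\der\mathbb{P})^2]=\mathbb{E}_{\hat{\mathbb{P}}}[\der\hat{\mathbb{P}}/\der\mathbb{P}]$, and under $\hat{\mathbb{P}}$ the process $\mB$ picks up the drift $b_t$, so this equals $\mathbb{E}_{\hat{\mathbb{P}}}[\exp(\int_0^T\langle b_t,\der\tilde\mB_t\rangle+\tfrac12\int_0^T\|b_t\|^2\,\der t)]$ with $\tilde\mB$ a $\hat{\mathbb{P}}$-Brownian motion. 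In the regime forced by the step-size condition, $L^{3/2}d\tau^2T=\tilde{\mathcal{O}}(1)$, and with $\epsilon_{\mathrm{score}}$ small, one shows $\int_0^T\|b_t\|^2\,\der t$ is a small sub-exponential quantity, so this exponential moment is $1+\mathcal{O}\bigl(\mathbb{E}_{\hat{\mathbb{P}}}\int_0^T\|b_t\|^2\,\der t\bigr)$; this is precisely the step where I would import the truncation/stopping-time argument of \cite{altschuler2023faster}, or exploit the projected iterates $\|\hat{\rvz}_{s\tau}\|\le R$, to make the sub-exponential control rigorous.

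It then remains to bound $\mathbb{E}_{\hat{\mathbb{P}}}\int_0^T\|b_t\|^2\,\der t$, which splits as $\tfrac{1}{2\gamma}\int_0^T\mathbb{E}\|\grad g(\hat{\rvz}_{s\tau})-\vs_{\vtheta}(\hat{\rvz}_{s\tau})\|^2\,\der t+\tfrac{1}{\gamma}\int_0^T\mathbb{E}\|\grad g(\hat{\rvz}_t)-\grad g(\hat{\rvz}_{s\tau})\|^2\,\der t$ after a factor-$2$ split. The first term is $\lesssim \tfrac{T}{\gamma}\epsilon_{\mathrm{score}}^2\asymp L^{-1/2}\epsilon_{\mathrm{score}}^2T$ by \ref{hat_e1} (modulo the standard change of measure to the law of the iterates, as in \cite{chen2022sampling}). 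For the second, I would use the $3L$-smoothness of $g$ (Lemma~\ref{lem:sc_sm_of_g}): $\|\grad g(\hat{\rvz}_t)-\grad g(\hat{\rvz}_{s\tau})\|^2\le 9L^2\|\int_{s\tau}^t\hat{\rvv}_u\,\der u\|^2\le 9L^2\tau\int_{s\tau}^t\|\hat{\rvv}_u\|^2\,\der u$, then a one-step velocity moment bound $\mathbb{E}\|\hat{\rvv}_u\|^2\lesssim d$ (up to lower-order-in-$\tau$ terms), obtained from the explicit OU-type solution of the velocity component over one block together with a uniform-in-$s$ bound on $\mathbb{E}\|\hat{\rvv}_{s\tau}\|^2$ — this last a priori regularity estimate is where the $L$-strong log-concavity / $3L$-smoothness of $g$ (or the projection) enters. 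Combining, $\tfrac{1}{\gamma}\int_0^T 9L^2\tau^2 d\,\der t\asymp L^{-1/2}L^2\tau^2 dT=L^{3/2}d\tau^2T$, yielding the claim $\chi^2(\hat{\pi}_T\|\pi_T)\lesssim L^{3/2}d\tau^2T+L^{-1/2}\epsilon_{\mathrm{score}}^2T$. The main obstacle I anticipate is the passage in the second paragraph from a Girsanov path estimate to a genuine $\chi^2$ bound on marginals: controlling the exponential moment needs more than the second-moment estimate and is the technical core of \cite{altschuler2023faster}; I would either cite their argument verbatim or use the boundedness of the projected iterates to supply the required sub-Gaussian tail on the per-block errors.
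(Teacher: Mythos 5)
Your proposal follows essentially the same route as the paper: a Girsanov change of measure in the velocity coordinate, reduction of $\chi^2(\hat{\pi}_T\|\pi_T)$ to an exponential moment of the accumulated drift discrepancy, deferral of that exponential-moment control to the argument of \cite{altschuler2023faster} (the paper invokes Corollary 20 of \cite{zhang2023improved} plus the techniques of their Lemma 4.8), and a split of the per-block error into the score-estimation part ($\lesssim \epsilon_{\mathrm{score}}^2 T/\gamma$) and the discretization part ($\lesssim L^2 d\tau^2 T/\gamma$) bounded via $3L$-smoothness and velocity moments. The only cosmetic differences are that the paper applies Cauchy--Schwarz with fourth moments of the drift error rather than your second-moment-plus-$1+\mathcal{O}(\cdot)$ expansion, and you correctly flag the change-of-measure subtlety in applying \ref{hat_e1}; neither changes the substance.
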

\begin{proof}
    The main difference of this discretization analysis is whether the score $\grad \log p_t$ can be exactly obtained or only be approximated by $\vs_{\vtheta}$.
    Therefore, in this proof, we will omit various steps the same as those shown in~\cite{altschuler2023faster}.

    We consider the following difference
    \begin{equation*}
        \begin{aligned}
            G_T\coloneqq &\frac{1}{\sqrt{2\gamma}} \sum_{s=0}^{S-1}\int_{s\tau}^{(s+1)\tau} \left<\grad g(\vz_t) - \vs_{\vtheta}(\vz_{s\tau}),\der\mB_t\right>\\
            & - \frac{1}{4\gamma}\sum_{s=0}^{S-1}\int_{s\tau}^{(s+1)\tau} \left\|\grad g(\vz_t) - \vs_{\vtheta}(\vz_{s\tau}) \right\|^2 \der t.
        \end{aligned}
    \end{equation*}
    From Girsanov’s theorem, we obtain immediately using It\^o's formula
    \begin{equation*}
        \begin{aligned}
            \E_{\pi_T}\left[\left(\frac{\der\hat{\pi}_T}{\der \pi_T}\right)^2\right] - 1 = & \E\left[\exp\left(2G_T\right)\right] - 1= \frac{1}{2\gamma} \E_{\pi_T}\sum_{s=0}^{S-1} \left[\int_{s\tau}^{(s+1)\tau} \exp(2G_t)\left\|\grad g(\vz_t) - \vs_{\vtheta}(\vz_{s\tau})\right\|^2\right]\\
            \le & \frac{1}{\gamma} \cdot \sum_{s=0}^{S-1} \int_{s\tau}^{(s+1)\tau} \sqrt{\E\left[\exp(4G_t)\right] \cdot \E\left[\left\|\grad g(\vz_t) - \vs_{\vtheta}(\vz_{s\tau})\right\|^4\right]}\der t\\
            \le & \frac{4}{\gamma} \sum_{s=0}^{S-1} \cdot \int_{s\tau}^{(s+1)\tau} \sqrt{\E\left[\exp(4G_t)\right] \cdot \E\left[\left\|\grad g(\vz_t) - \grad g(\vz_{s\tau})\right\|^4\right]}\der t\\
            & + \frac{4\epsilon^2_{\mathrm{score}}}{\gamma} \sum_{s=0}^{S-1}\int_{s\tau}^{(s+1)\tau} \sqrt{\E\left[\exp(4G_t)\right]} \der t
        \end{aligned}
    \end{equation*}
    According to Corollary 20 of~\cite{zhang2023improved}, we have
    \begin{equation}
        \label{ineq:exp_G_upb}
        \begin{aligned}
            \E\left[\exp(4G_t)\right] \le & \sqrt{\E\left[\exp\left(\frac{16}{\gamma} \sum_{s=0}^{S-1} \int_{s\tau}^{(s+1)\tau\wedge t} \left\|\grad g(\vz_r) - \vs_{\vtheta}(\vz_{s\tau})\right\|^2\der r\right)\right] }\\
            \le & \sqrt{\E \exp\left[\frac{32}{\gamma}\cdot \sum_{s=0}^{S-1} \left(\int_{s\tau}^{(s+1)\tau \wedge t}  \left\|\grad g(\vz_r) - \grad g(\vz_{s\tau})\right\|^2\der r + \int_{s\tau}^{(s+1)\tau \wedge t} \epsilon^2_{\mathrm{score}}\der r\right)\right]}\\
            = & \exp\left(\frac{16t\epsilon^2_{\mathrm{score}}}{\gamma}\right)\cdot \sqrt{\E \exp\left[\frac{32}{\gamma}\cdot \sum_{s=0}^{S-1} \int_{s\tau}^{(s+1)\tau\wedge t}  \left\|\grad g(\vz_r) - \grad g(\vz_{s\tau})\right\|^2\der r\right]}\\
            \le & 3\cdot \sqrt{\E \exp\left[\frac{32}{\gamma}\cdot \sum_{s=0}^{S-1} \int_{s\tau}^{(s+1)\tau \wedge t}  \left\|\grad g(\vz_r) - \grad g(\vz_{s\tau})\right\|^2\der r\right]},
        \end{aligned}
    \end{equation}
    where the last inequality can be established by requiring
    \begin{equation*}
        \epsilon_{\mathrm{score}}= \mathcal{O}\left(\gamma^{1/2}T^{-1/2}\right)\quad \Rightarrow\quad  \frac{16t\epsilon^2_{\mathrm{score}}}{\gamma}\le 1
    \end{equation*}
    since $\exp(u)\le 1+2u$ for any $u\in[0,1]$.

    With similar techniques utilized in Lemma 4.8 of~\cite{altschuler2023faster}, we know that if
    \begin{equation*}
        \gamma\asymp\sqrt{3L},\quad  \tau\lesssim \frac{\gamma^{1/2}}{6L\cdot d^{1/3} T^{1/2} (\log S)^{1/2}},\quad \mathrm{and}\quad T\gtrsim \frac{\sqrt{3L}}{L} = \sqrt{\frac{3}{L}},
    \end{equation*}
    it holds that
    \begin{equation*}
        \E \exp\left[\frac{32}{\gamma}\cdot \sum_{s=0}^{S-1} \int_{s\tau}^{(s+1)\tau \wedge t}  \left\|\grad g(\vz_r) - \grad g(\vz_{s\tau})\right\|^2\der r\right] \le \exp\left(\mathcal{O}\left(L^{3/2}d\tau^2 T \log S\right)\right).
    \end{equation*}
    Furthermore, for
    \begin{equation*}
        \tau\lesssim L^{-3/2}d^{-1/2}T^{-1/2}(\log S)^{-1/2},
    \end{equation*}
    it has
    \begin{equation*}
        \sup_{t\in[0,T]}\ \E\left[\exp(4G_t)\right]\lesssim 1.
    \end{equation*}
     Then, still with similar techniques utilized in Lemma 4.8 of~\cite{altschuler2023faster}, we have
     \begin{equation*}
         \begin{aligned}
             \sqrt{\E\left[\left\|\grad g(\vz_t) - \grad g(\vz_{s\tau})\right\|^4\right]}\le (3L)^2\sqrt{\E\left[\left\|\vz_t - \vz_{s\tau}\right\|^4\right]}\lesssim L^{2}d\tau^2.
         \end{aligned}
     \end{equation*}
     In summary, we have
     \begin{equation*}
         \E_{\pi_T}\left[\left(\frac{\der\hat{\pi}_T}{\der \pi_T}\right)^2\right] - 1 \lesssim \frac{L^2d\tau^2 T}{\gamma} + \frac{\epsilon^2_{\mathrm{score}}T}{\gamma},
     \end{equation*}
     and the proof is completed.
\end{proof}

\begin{corollary}
    Under the same assumptions and hyperparameter settings made in Lemma~\ref{lem:lem48_altschuler2023faster}. 
    If the step size $\tau$ and the score estimation error $\epsilon_{\mathrm{score}}$ satisfies
    \begin{equation*}
        \tau = \tilde{\Theta}\left( \frac{\epsilon}{L^{3/4}d^{1/2}T^{1/2}}\right)\quad \mathrm{and}\quad \epsilon_{\mathrm{score}} = \mathcal{O}\left(T^{-1/2}\epsilon\right)
    \end{equation*}
    Then we have $\chi^2(\hat{\pi}_T\|\pi_T) \lesssim \epsilon^2$.
\end{corollary}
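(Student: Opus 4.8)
The plan is to specialize the $\chi^2$ bound of Lemma~\ref{lem:lem48_altschuler2023faster}, namely
\begin{equation*}
\chi^2(\hat{\pi}_T\|\pi_T) \lesssim L^{3/2}d\tau^2 T + \epsilon^2_{\mathrm{score}}L^{-1/2}T,
\end{equation*}
to the prescribed $\tau$ and $\epsilon_{\mathrm{score}}$. First I would check that these choices are admissible, i.e.\ that they satisfy the hypotheses inherited from Lemma~\ref{lem:lem48_altschuler2023faster}: the step size must obey $\tau = \tilde{\mathcal{O}}(L^{-3/2}d^{-1/2}T^{-1/2})$, and (from the Girsanov estimate, Eq.~\ref{ineq:exp_G_upb}) the score error must obey $\epsilon_{\mathrm{score}} = \mathcal{O}(\gamma^{1/2}T^{-1/2})=\mathcal{O}(L^{1/4}T^{-1/2})$, while $\gamma\asymp\sqrt{L}$ and $T\gtrsim\sqrt{3/L}$ are carried over verbatim. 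Since we work in the regime $\epsilon\in(0,1)$ and (w.l.o.g.) $L\ge 1$, the proposed $\tau = \tilde{\Theta}(\epsilon L^{-3/4}d^{-1/2}T^{-1/2})$ is no larger than $\tilde{\mathcal{O}}(L^{-3/2}d^{-1/2}T^{-1/2})$ once the constant/logarithmic slack in $\tilde{\mathcal{O}}$ and the $L$-factors (treated as constants) are accounted for, using $\epsilon L^{-3/4}\le\epsilon\le 1$; likewise $\epsilon_{\mathrm{score}} = \mathcal{O}(T^{-1/2}\epsilon) = \mathcal{O}(L^{1/4}T^{-1/2})$.

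The substitution itself is a one-liner. Squaring, $\tau^2 = \tilde{\Theta}(\epsilon^2 L^{-3/2}d^{-1}T^{-1})$, so the discretization term becomes $L^{3/2}d\tau^2 T = \tilde{\Theta}(\epsilon^2)$; and $\epsilon^2_{\mathrm{score}} = \mathcal{O}(T^{-1}\epsilon^2)$ gives $\epsilon^2_{\mathrm{score}}L^{-1/2}T = \mathcal{O}(L^{-1/2}\epsilon^2) = \mathcal{O}(\epsilon^2)$ because $L\ge 1$. Adding the two contributions yields $\chi^2(\hat{\pi}_T\|\pi_T)\lesssim \epsilon^2$, which is the claim.

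The one mildly delicate point — and the closest thing to an obstacle — is the self-consistency of the logarithmic factors: the admissible window for $\tau$ in Lemma~\ref{lem:lem48_altschuler2023faster} carries hidden $(\log S)^{-1/2}$ factors with $S=T/\tau$, so one must verify that picking $\tau$ with the extra $\log$-slack that $\tilde{\Theta}$ encodes still lands inside that window rather than producing a circular constraint. This is harmless because $S$ enters only through $\log S = \log(T/\tau)$, which depends on $\tau$ only doubly-logarithmically; everything else is bookkeeping of absolute constants.
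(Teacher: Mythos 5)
Your proposal is correct and matches the paper's own (very terse) argument: both simply substitute the prescribed $\tau$ and $\epsilon_{\mathrm{score}}$ into the bound of Lemma~\ref{lem:lem48_altschuler2023faster} and use $L\ge 1$ to absorb the $L^{-1/2}$ factor. Your additional check that the chosen $\tau$ remains inside the admissible step-size window of that lemma is a sensible piece of diligence the paper omits, but it does not change the route.
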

\begin{proof}
    We can easily obtain this result by plugging the choice of $\tau$ and $\epsilon$ into Lemma~\ref{lem:lem48_altschuler2023faster}. Noted that we suppose $L\ge 1$ without loss of generality.
\end{proof}

\begin{theorem}[Variant of Theorem 6 in~\cite{zhang2023improved}]
    \label{thm:inner_convergence_uld}
    Under Assumption~\ref{ass:lips_score}--\ref{ass:second_moment}, for any $\epsilon\in(0,1)$, we require Gaussian-type initialization and high-accurate score estimation, i.e.,
    \begin{equation*}
        \hat{\pi}_0 = \mathcal{N}(\vzero, e^{2\eta}-1)\otimes \mathcal{N}(\vzero,\mI)\quad \mathrm{and}\quad \epsilon_{\mathrm{score}}=\tilde{\mathcal{O}}(\epsilon).
    \end{equation*}
    If we set the step size and the iteration number as
    \begin{equation*}
         \begin{aligned}
             \tau & = \tilde{\Theta}\left(\epsilon d^{-1/2} L^{-1/2} \cdot \left(\log \left[\frac{L(d+m_2^2+\|\vx_0\|^2)}{\epsilon^2}\right]\right)^{-1/2}\right)\\
            S &=\tilde{\Theta}\left(\epsilon^{-1}d^{1/2}\cdot \left(\log \left[\frac{L(d+m_2^2+\|\vx_0\|^2)}{\epsilon^2}\right]\right)^{1/2}\right).
         \end{aligned}
     \end{equation*}
    the marginal distribution of output particles $\hat{p}_T$ will satisfy $\KL{\hat{p}_T}{p^{\gets}(\cdot|\vx_0)}\le \mathcal{O}(\epsilon^2)$.
\end{theorem}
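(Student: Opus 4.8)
The plan is to control $\KL{\hat p_T}{p^\gets(\cdot\mid\vx_0)}$, where $\hat p_T$ is the $\vz$-marginal of the law $\hat\pi_T$ of the discretized dynamics SDE.~\ref{sde:prac_uld}. Since $\vz$-marginalization is a data-processing map, it suffices to bound $\KL{\hat\pi_T}{\pi^\gets(\cdot\mid\vx_0)}$. I would compare $\hat\pi_T$ to the law $\pi_T$ of the \emph{exact} continuous underdamped diffusion launched from the same initialization $\hat\pi_0=\pi_0=\mathcal{N}(\vzero,e^{2\eta}-1)\otimes\mathcal{N}(\vzero,\mI)$, and use the identity
\begin{equation*}
    \KL{\hat\pi_T}{\pi^\gets}=\KL{\hat\pi_T}{\pi_T}+\KL{\pi_T}{\pi^\gets}+\E_{\pi_T}\!\left[(r-1)\log\frac{\pi_T}{\pi^\gets}\right],\qquad r\coloneqq\frac{\der\hat\pi_T}{\der\pi_T},
\end{equation*}
bounding the three terms respectively by the ULD discretization error, the continuous-time mixing, and a Cauchy--Schwarz cross estimate. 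Throughout I would fix $\gamma=2\sqrt{6L}$, which simultaneously satisfies the hypotheses of Lemma~\ref{lem:prop1_ma2021there} ($\gamma=2\sqrt{6L}$), Lemma~\ref{lem:prop14_zhang2023improved} ($\gamma=c\sqrt{3L}$ with $c=2\sqrt2\ge\sqrt2$), and Lemma~\ref{lem:lem48_altschuler2023faster} ($\gamma\asymp\sqrt L$).

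For the continuous-time term, first note $e^{2\eta}-1=1/(2L)$ and, by Lemma~\ref{lem:sc_sm_of_g}, $g$ is $L$-strongly convex and $3L$-smooth, so $\pi^\gets\propto\exp(-g(\vz)-\tfrac12\|\vv\|^2)$ satisfies LSI with constant $1/L$ (Lemma~\ref{lem:strongly_lsi}). Lemma~\ref{lem:prop1_ma2021there} then yields $\partial_t\gF(\pi_t,\pi^\gets)\le-\tfrac{\sqrt L}{10\sqrt6}\gF(\pi_t,\pi^\gets)$, hence $\KL{\pi_T}{\pi^\gets}\le\gF(\pi_T,\pi^\gets)\le e^{-\sqrt L\,T/(10\sqrt6)}\gF(\pi_0,\pi^\gets)$. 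Next I would bound the initial Lyapunov value $\gF(\pi_0,\pi^\gets)=\KL{\pi_0}{\pi^\gets}+\E_{\pi_0}\|\mathfrak{M}^{1/2}\grad\log(\pi_0/\pi^\gets)\|^2$: the velocity marginals of $\pi_0$ and $\pi^\gets$ coincide, and using that $\pi_0$ is an explicit Gaussian, $\pi^\gets$ is $L$-strongly log-concave and $3L$-smooth with $\|\grad g(\vzero)\|\le L\sqrt{2(d+m_2^2)}+3L\|\vx_0\|$ (Lemma~\ref{lem:inner_para_choice}) and $\E_{p_{(K-k-1)\eta}}\|\rvx\|^2\le d+m_2^2$ (Lemma~\ref{lem:lem10_chen2022sampling}), a direct computation gives $\gF(\pi_0,\pi^\gets)=\mathrm{poly}(L,d,m_2^2,\|\vx_0\|^2)$. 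Taking $T=\tilde\Theta\!\big(L^{-1/2}\log[L(d+m_2^2+\|\vx_0\|^2)/\epsilon^2]\big)$ then makes $\KL{\pi_T}{\pi^\gets}\le\epsilon^2$.

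For the discretization term, I would invoke Lemma~\ref{lem:lem48_altschuler2023faster} and its corollary: with the above $T$, choosing $\tau=\tilde\Theta(\epsilon L^{-3/4}d^{-1/2}T^{-1/2})=\tilde\Theta(\epsilon d^{-1/2}L^{-1/2}(\log[\cdots])^{-1/2})$ and $\epsilon_{\mathrm{score}}=\tilde{\mathcal O}(\epsilon)$ gives $\chi^2(\hat\pi_T\|\pi_T)\lesssim\epsilon^2$, and the number of steps is $S=T/\tau=\tilde\Theta(\epsilon^{-1}d^{1/2}(\log[\cdots])^{1/2})$, matching the claimed rates up to polylog factors. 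Then $\KL{\hat\pi_T}{\pi_T}\le\chi^2(\hat\pi_T\|\pi_T)\lesssim\epsilon^2$, and Cauchy--Schwarz bounds the cross term by $\sqrt{\chi^2(\hat\pi_T\|\pi_T)}\cdot\sqrt{\Var_{\pi_T}(\log(\pi_T/\pi^\gets))}$. The hard part is controlling $\Var_{\pi_T}(\log(\pi_T/\pi^\gets))$: I would bound it by the Poincar\'e constant of $\pi_T$ times the relative Fisher information $\E_{\pi_T}\|\grad\log(\pi_T/\pi^\gets)\|^2$; the former is $\mathrm{poly}(L)$ by the uniform LSI of the continuous ULD law (Lemma~\ref{lem:prop14_zhang2023improved}, transported through the linear coordinate change $\mathcal{M}$), and the latter is dominated by $\gF(\pi_t,\pi^\gets)$ and hence contracts exponentially, so enlarging $T$ by a constant factor makes this variance $\mathcal O(\epsilon^2)$ as well. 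Summing, $\KL{\hat p_T}{p^\gets(\cdot\mid\vx_0)}\le\KL{\hat\pi_T}{\pi^\gets(\cdot\mid\vx_0)}=\mathcal O(\epsilon^2)$, completing the argument (after rescaling $\epsilon$ by an absolute constant).
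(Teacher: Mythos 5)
Your proposal is correct and follows essentially the same route as the paper's proof: the same three-term decomposition of the KL into discretization error, continuous-time mixing, and a Cauchy--Schwarz cross term controlled by $\chi^2$ times the variance of the log-ratio, with the variance bounded via the uniform LSI of the continuous iterates (Lemma~\ref{lem:prop14_zhang2023improved}), the contraction from Lemma~\ref{lem:prop1_ma2021there}, the Girsanov-based $\chi^2$ bound of Lemma~\ref{lem:lem48_altschuler2023faster}, and the same Gaussian-initialization Fisher-information estimate leading to the stated $T$, $\tau$, $S$. The only cosmetic difference is that you work in the original $(\vz,\vv)$ coordinates and invoke invariance of KL/$\chi^2$ under the linear map $\mathcal{M}$, whereas the paper carries out the decomposition directly in the twisted coordinates.
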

\begin{proof}
    Consider the underlying distribution of the twisted coordinates $(\phi,\psi)$ for SDE.~\ref{sde:prac_uld}, the decomposition of the KL using Cauchy–Schwarz:
    \begin{equation}
        \label{ineq:kl_decompo}
        \begin{aligned}
            \KL{\hat{\pi}_T^{\gM}}{\pi^{\gM}} = & \int \log\frac{\hat{\pi}_T^{\gM}}{\pi^{\gM}}\der \hat{\pi}_T^{\gM} = \KL{\hat{\pi}_T^{\gM}}{\pi_T^{\gM}} + \int \log\frac{\pi_T^{\gM}}{\pi^{\gM}}\der \hat{\pi}_T^{\gM}\\
            = & \KL{\hat{\pi}_T^{\gM}}{\pi_T^{\gM}} + \KL{\pi_T^{\gM}}{\pi^{\gM}} + \int \log\frac{\pi_T^{\gM}}{\pi^{\gM}}\der (\hat{\pi}_T^{\gM}-\pi_T^{\gM})\\
            = & \KL{\hat{\pi}_T^{\gM}}{\pi_T^{\gM}} + \KL{\pi_T^{\gM}}{\pi^{\gM}} + \sqrt{\chi^2\left(\hat{\pi}_T^{\gM}\|\pi_{T}^{\gM}\right)\times \mathrm{var}_{\pi_T^{\gM}} \left(\log \frac{\pi_T^{\gM}}{\pi^{\gM}}\right) }.
        \end{aligned}
    \end{equation}
    Using LSI of the iterations via Lemma~\ref{lem:prop14_zhang2023improved}, we have
    \begin{equation*}
        \mathrm{var}_{\pi_T^{\gM}} \left(\log \frac{\pi_T^{\gM}}{\pi^{\gM}}\right)\le C_{\mathrm{LSI}}(\pi_T^{\gM})\cdot \E_{\pi_T^{\gM}}\left[\left\|\grad \log \frac{\pi_T^{\gM}}{\pi^{\gM}}\right\|^2\right]\lesssim \frac{1}{L}\cdot \E_{\pi_T^{\gM}}\left[\left\|\grad \log \frac{\pi_T^{\gM}}{\pi^{\gM}}\right\|^2\right].
    \end{equation*}
    Then, we start to upper bound the relative Fisher information.
    Since $\pi^{\gM} = \gM_{\#}\pi^{\gets}(\cdot|\vx_0)$, then 
    \begin{equation*}
        \pi^{\gM}(\phi,\psi) \propto \pi^{\gets}(\gM^{-1}(\phi,\psi)|\vx_0).
    \end{equation*}
    Therefore, we have
    \begin{equation*}
        \grad \log \pi^{\gM}  = (\gM^{-1})^\top \grad \log \pi^{\gets}(\cdot|\vx_0)\circ \gM^{-1},
    \end{equation*}
    and similarly for $\grad \log \pi_T^{\gM}$.
    This yields the expression
    \begin{equation}
        \label{ineq:fisher_bound_1}
        \E_{\pi_T^{\gM}}\left[\left\|\grad \log \frac{\pi_T^{\gM}}{\pi^{\gM}}\right\|^2\right] = \E_{\pi_T}\left[\left\|(\gM^{-1})^\top \grad\log \frac{\pi_T}{\pi^{\gets}}\right\|^2\right].
    \end{equation}
    According to the definition of $\gM$, we have
    \begin{equation*}
        \gM^{-1}(\gM^{-1})^\top = 
        \left[
            \begin{matrix}
            1 & -\gamma/2\\
            -\gamma/2 & \gamma^2/2
            \end{matrix}
        \right].
    \end{equation*}
    For any $c_0>0$ and 
    \begin{equation*}
        \mathfrak{M}\coloneqq 
        \left[
            \begin{matrix}
                \frac{1}{12L} & \frac{1}{\sqrt{6L}}\\
                \frac{1}{\sqrt{6L}} & 4
            \end{matrix}
        \right] \otimes \mI_d,
    \end{equation*}
    we have
    \begin{equation*}
        L\mathfrak{M}-c_0\gM^{-1}(\gM^{-1})^\top  = 
        \left[
            \begin{matrix}
                1/4-c_0 & \sqrt{3L}(1/\sqrt{2}+c_0\sqrt{2})\\
                \sqrt{3L}(1/\sqrt{2}+c_0\sqrt{2}) & 3L(4-c_0)
            \end{matrix}
        \right].
    \end{equation*}
    The determinant is 
    \begin{equation*}
        3L\cdot\left[\left(\frac{1}{4}- c_0\right)\cdot (4-c_0) - \left(\frac{1}{\sqrt{2}}+c_0\sqrt{2}\right)^2\right]>0
    \end{equation*}
    for $c_0>0$ sufficiently small, which means that
    \begin{equation*}
        \gM^{-1}(\gM^{-1})^\top \preceq c_0^{-1}L\mathfrak{M}.
    \end{equation*}
    Therefore, Eq.~\ref{ineq:fisher_bound_1} becomes
    \begin{equation*}
        \E_{\pi_T^{\gM}}\left[\left\|\grad \log \frac{\pi_T^{\gM}}{\pi^{\gM}}\right\|^2\right] \lesssim 3L\cdot \E_{\pi_T}\left[\left\|\mathfrak{M}^{1/2}\grad\log \frac{\pi_T}{\pi^{\gets}}\right\|^2\right].
    \end{equation*}
    According to Lemma~\ref{lem:prop1_ma2021there},  the  decay of the Fisher information requires us to set
    \begin{equation}
        \label{ineq:inner_mixing_uld}
        T\gtrsim L^{-1/2}\cdot \log\left[\epsilon^{-2}\cdot \left(\KL{\pi_0}{\pi^{\gets}} + \E_{\pi_0}\left(\left\|\mathfrak{M}^{1/2}\grad\log \frac{\pi_0}{\pi^{\gets}}\right\|^2\right)\right)\right],
    \end{equation}
    which yields $\KL{\pi_T^{\gM}}{\pi^{\gM}}\le \epsilon^2$.
    Besides, we can easily have
    \begin{equation*}
        \E_{\pi_0}\left(\left\|\mathfrak{M}^{1/2}\grad\log \frac{\pi_0}{\pi^{\gets}}\right\|^2\right)\lesssim \frac{1}{3L}\cdot \FI{\pi_0}{\pi^{\gets}} = \frac{1}{3L}\cdot \E_{\pi_0}\left(\left\|\grad\log \frac{\pi_0}{\pi^{\gets}}\right\|^2\right).
    \end{equation*}
    According to the definition of LSI, we also have
    \begin{equation*}
        \KL{\pi_0}{\pi^{\gets}} \le \frac{C_{\mathrm{LSI}}}{2}\cdot \FI{\pi_0}{\pi^{\gets}} = \frac{1}{2L}\cdot \E_{\pi_0}\left(\left\|\grad\log \frac{\pi_0}{\pi^{\gets}}\right\|^2\right).
    \end{equation*}
    Recall as well that this requires $\gamma\asymp \sqrt{3L}$ in SDE.~\ref{sde:prac_uld}.
     For the remaining $\KL{\hat{\pi}_T^{\gM}}{\pi_T^{\gM}}$ and $\chi^2\left(\hat{\pi}_T^{\gM}\|\pi_{T}^{\gM}\right)$ in Eq.~\ref{ineq:kl_decompo},  we invoke Lemma~\ref{lem:lem48_altschuler2023faster} with the value $T=S\tau$ specified and desired accuracy $\epsilon$, , which consequently yields
     \begin{equation}
        \label{ineq:inner_iter_num_uld}
         \tau = \tilde{\Theta}\left(\frac{\epsilon}{L^{3/4}d^{1/2}T^{1/2}}\right)\quad \mathrm{and}\quad S=\tilde{\Theta}\left(\frac{T^{3/2}L^{3/4}d^{1/2}}{\epsilon}\right).
     \end{equation}
     Under this condition, we start to consider the initialization error.
     Suppose we have $\pi_0= \mathcal{N}(\vzero, e^{2\eta}-1)\otimes \mathcal{N}(\vzero,\mI)$, which implies
     \begin{equation*}
         \begin{aligned}
             \FI{\pi_0}{\pi^{\gets}} \lesssim & \E_{\pi_0}\left[\left\|\grad f_{(K-k-1)\eta}(\rvz) - \grad f_{(K-k-1)\eta}(\vzero)+\grad f_{(K-k-1)\eta}(\vzero) - \frac{e^{-\eta}\vx_0}{1-e^{-2\eta}} \right\|^2\right]\\
             \le & 3L^2\E_{\pi_0}[\|\rvz\|^2] + 3\left\|\grad f_{(K-k-1)\eta}(\vzero)\right\|^2 + \frac{3e^{-2\eta}}{(1-e^{-2\eta})^2}\cdot \left\|\vx_0\right\|^2\\
             = & 3L^2\cdot \left(e^{2\eta}-1\right)+ 3\left\|\grad f_{(K-k-1)\eta}(\vzero)\right\|^2 +  \frac{3e^{-2\eta}}{(1-e^{-2\eta})^2}\cdot \left\|\vx_0\right\|^2
         \end{aligned}
     \end{equation*}
     Following the $\eta$ setting, i.e., 
     \begin{equation*}
         \eta= \frac{1}{2}\log \frac{2L+1}{2L}\quad \Leftrightarrow\quad e^{2\eta} = \frac{2L+1}{2L},
     \end{equation*}
     which yields
     \begin{equation}
        \label{ineq:error_init_uld}
         \begin{aligned}
             \FI{\pi_0}{\pi^{\gets}}  \lesssim & L + \left\|\grad f_{(K-k-1)\eta}(\vzero)\right\|^2 + L^2\|\vx_0\|^2\\
             \lesssim & L+ L^2(d+m_2^2) + L^2 \|\vx_0\|^2
         \end{aligned}
     \end{equation}
     where the inequality follows from Eq.~\ref{ineq:gradf0_ub}.
     Therefore, combining Eq.~\ref{ineq:error_init_uld}, Eq.~\ref{ineq:inner_iter_num_uld} and Eq.~\ref{ineq:inner_mixing_uld}, we have
     \begin{equation*}
         T^{1/2} \gtrsim L^{-1/4}\cdot \left(\log \left[\frac{L(d+m_2^2+\|\vx_0\|^2)}{\epsilon^2}\right]\right)^{1/2} \gtrsim L^{1/4}\cdot \left(\log \left[\frac{\E_{\pi_0}\left(\left\|\grad\log \frac{\pi_0}{\pi^{\gets}}\right\|^2\right)}{L \epsilon^2}\right]\right)^{1/2}, 
     \end{equation*}
     which implies
     \begin{equation*}
         \begin{aligned}
             \tau & = \tilde{\Theta}\left(\epsilon d^{-1/2} L^{-1/2} \cdot \left(\log \left[\frac{L(d+m_2^2+\|\vx_0\|^2)}{\epsilon^2}\right]\right)^{-1/2}\right)\\
            S &=\tilde{\Theta}\left(\epsilon^{-1}d^{1/2}\cdot \left(\log \left[\frac{L(d+m_2^2+\|\vx_0\|^2)}{\epsilon^2}\right]\right)^{1/2}\right).
         \end{aligned}
     \end{equation*}
     In this condition, the score estimation error is required to be
     \begin{equation*}
         \epsilon_{\mathrm{score}} = \mathcal{O}\left(\gamma^{1/2}T^{-1/2}\cdot \epsilon\right) = \tilde{\mathcal{O}}\left(\epsilon/\sqrt{L}\right).
     \end{equation*}
     Hence, the proof is completed.
\end{proof}

\begin{theorem}
    \label{thm:uld_outer_complexity_gene}
    Under Assumption~\ref{ass:lips_score}--\ref{ass:second_moment}, for Alg.~\ref{alg:rtk}, we choose 
    \begin{equation*}
        \eta= \frac{1}{2}\log \frac{2L+1}{2L} \quad \mathrm{and}\quad K = 4L\cdot \log \frac{(1+L^2)d+\left\|\grad f_*(\vzero)\right\|^2}{\epsilon^2}
    \end{equation*}
    and implement Step 3 of Alg.~\ref{alg:rtk} with projected Alg.~\ref{alg:inner_uld}.
    For the $k$-th run of Alg.~\ref{alg:inner_uld}, we require Gaussian-type initialization and high-accurate score estimation, i.e.,
    \begin{equation*}
        \hat{\pi}_0 = \mathcal{N}(\vzero, e^{2\eta}-1)\otimes \mathcal{N}(\vzero,\mI)\quad \mathrm{and}\quad \epsilon_{\mathrm{score}}=\tilde{\mathcal{O}}(\epsilon).
    \end{equation*}
    If we set the hyperparameters as shown in Lemma~\ref{thm:inner_convergence_uld}, it can achieve $\TVD{\hat{p}_{K\eta}}{p_*}\lesssim \epsilon$
    with an $\tilde{\mathcal{O}}\left(L^2 d^{1/2}\epsilon^{-1} \right)$ gradient complexity.
\end{theorem}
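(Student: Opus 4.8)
The plan is to compose the outer-loop error bound of Corollary~\ref{lem:kl_inner_conv} with the inner-loop ULD guarantee of Theorem~\ref{thm:inner_convergence_uld}, and then count the total number of score evaluations. First I would apply Corollary~\ref{lem:kl_inner_conv}: with $\eta=\frac12\log\frac{2L+1}{2L}$ and $K=4L\log\frac{(1+L^2)d+\|\grad f_*(\vzero)\|^2}{\epsilon^2}$, the ``mixing'' term $\sqrt{(1+L^2)d+\|\grad f_*(\vzero)\|^2}\,e^{-K\eta}$ is at most $\epsilon$, so $\TVD{\hat p_{K\eta}}{p_*}\le 2\epsilon$ reduces to establishing the uniform per-segment accuracy $\KL{\hat p_{(k+1)\eta|k\eta}(\cdot|\hat\vx)}{p^{\gets}_{(k+1)\eta|k\eta}(\cdot|\hat\vx)}\le\frac{\epsilon^2}{4L}[\log(\cdots)]^{-1}=\tilde\Theta(\epsilon^2/L)$ for all $k$ and all outer iterates $\hat\vx=\hat\vx_{k\eta}$. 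By Lemma~\ref{lem:sc_sm_of_g}, with this choice of $\eta$ every inner target $p^{\gets}_{(k+1)\eta|k\eta}(\cdot|\hat\vx)\propto e^{-g}$ is $L$-strongly log-concave and $3L$-smooth, hence satisfies a $1/L$ log-Sobolev inequality --- precisely the hypothesis of Theorem~\ref{thm:inner_convergence_uld}.

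Next I would invoke Theorem~\ref{thm:inner_convergence_uld} on the $k$-th subproblem with inner target accuracy $\epsilon_{\mathrm{in}}=\tilde\Theta(\epsilon/\sqrt L)$ (so that $\KL\le\mathcal O(\epsilon_{\mathrm{in}}^2)\le\tilde\Theta(\epsilon^2/L)$ matches the per-segment budget), the prescribed Gaussian-type warm start $\hat\pi_0=\mathcal N(\vzero,e^{2\eta}-1)\otimes\mathcal N(\vzero,\mI)$, and friction $\gamma\asymp\sqrt L$. This pins down the inner step size and iteration count,
\begin{equation*}
\tau_k=\tilde\Theta\!\left(\frac{\epsilon_{\mathrm{in}}}{d^{1/2}L^{1/2}}\right),\qquad S_k=\tilde\Theta\!\left(\frac{d^{1/2}}{\epsilon_{\mathrm{in}}}\right)=\tilde\Theta\!\left(\frac{\sqrt L\,d^{1/2}}{\epsilon}\right),
\end{equation*}
where the remaining polylogarithmic factors carry the dependence on $\|\hat\vx_{k\eta}\|^2$, and the inner analysis requires the score accuracy $\epsilon_{\mathrm{score}}=\tilde{\mathcal O}(\epsilon/\sqrt L)$ assumed in the statement. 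Summing the $K$ per-segment KL errors through Corollary~\ref{lem:kl_inner_conv} then yields $\TVD{\hat p_{K\eta}}{p_*}\lesssim\epsilon$.

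The last step is to bound the total gradient complexity $\sum_{k=0}^{K-1}S_k$, and here lies the main obstacle: $S_k$ and the admissible $\tau_k$ depend on $\|\hat\vx_{k\eta}\|^2$, while the crude second-moment recursion $M_{k+1}\le\tfrac{2\delta_k}{L}+16(d+m_2^2)+24M_k$ of Lemma~\ref{lem:second_moment_bound} grows geometrically over the $K=\tilde\Theta(L)$ outer steps. I would resolve this by running the \emph{projected} version of Alg.~\ref{alg:inner_uld}: as in Lemma~\ref{lem:particles_z_bound} for the MALA implementation, each accepted inner increment has norm at most the projection radius $r=\tilde{\mathcal O}(\tau_k^{1/2}d^{1/2})$, so telescoping over $S_k$ inner and $K$ outer steps gives an $\epsilon$-independent bound $Z^2:=\max_k\|\hat\vx_{k\eta}\|^2=\tilde{\mathcal O}(\mathrm{poly}(L,d,m_2))$; moreover, by a truncation argument of the type used in Lemma~\ref{lem:lem66_zou2021faster} the projection adds only $\tilde{\mathcal O}(\epsilon)$ extra TV error, which is absorbed into the budget. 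With $Z$ bounded, the polylogarithmic factors hidden in $S_k$ are uniformly $\tilde{\mathcal O}(1)$, so $\sum_{k}S_k=K\cdot\tilde\Theta(\sqrt L\,d^{1/2}\epsilon^{-1})=\tilde{\mathcal O}(L^{2}d^{1/2}\epsilon^{-1})$ after the (generous) accounting of the $L$-dependent logarithmic and projection-radius factors. Assembling the three pieces yields the claimed $\TVD{\hat p_{K\eta}}{p_*}\lesssim\epsilon$ at gradient cost $\tilde{\mathcal O}(L^2d^{1/2}\epsilon^{-1})$.
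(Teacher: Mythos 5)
Your first two steps coincide with the paper's proof: reduce to the per-segment KL budget $\tfrac{\epsilon^2}{4L}[\log(\cdots)]^{-1}$ via Corollary~\ref{lem:kl_inner_conv}, note that each inner target is $L$-strongly log-concave and $3L$-smooth (hence $1/L$-LSI) by Lemma~\ref{lem:sc_sm_of_g}, and invoke Theorem~\ref{thm:inner_convergence_uld} with inner accuracy $\epsilon/\sqrt{L}$ to get $\tau_k=\tilde\Theta(\epsilon L^{-1}d^{-1/2})$ and $S_k=\tilde\Theta(L^{1/2}d^{1/2}\epsilon^{-1})$ up to the $\big(\log[L(d+m_2^2+\|\hat{\vx}_{k\eta}\|^2)/\epsilon^2]\big)^{1/2}$ factor. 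The divergence is in how you control that factor. The paper does \emph{not} project the ULD iterates at all (the word ``projected'' in the theorem statement is vestigial); it simply observes that $\|\hat{\vx}_{k\eta}\|^2$ enters $S_k$ only inside a logarithm, so by Jensen $\E_{\hat{p}_{k\eta}}\big[\log(L^2\|\hat{\rvx}_{k\eta}\|^2)\big]\le\log\E_{\hat{p}_{k\eta}}[\|\hat{\rvx}_{k\eta}\|^2]$, and even the geometric recursion $M_{k+1}\le 2\delta_k/L+16(d+m_2^2)+24M_k$ of Lemma~\ref{lem:second_moment_bound} then gives $\log M_K=\mathcal{O}(K)=\tilde{\mathcal{O}}(L)$. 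This contributes at most an extra $\tilde{\mathcal{O}}(L^{1/2})$ to $S_k$ and is precisely where the final exponent $L^2$ (rather than $L^{3/2}$) comes from. The ``obstacle'' you identify is therefore not an obstacle: geometric growth of the second moment is harmless under a logarithm.

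Your alternative resolution via a projected ULD has a genuine gap. The claim that projecting each ULD increment to a ball of radius $r=\tilde{\mathcal{O}}(\tau_k^{1/2}d^{1/2})$ changes the output law by only $\tilde{\mathcal{O}}(\epsilon)$ in TV is asserted by analogy with Lemma~\ref{lem:proj_gap_lem}, but that lemma's argument is built on the MALA structure (the $1/2$-lazy Metropolis filter, the supermartingale for $\log\|\rvz_s\|^2$ driven by the one-step Gaussian proposal, and the accept/reject coupling between projected and unprojected chains); none of it transfers directly to Alg.~\ref{alg:inner_uld}, whose increments involve the velocity variable and whose discretization error is controlled by a Girsanov argument (Lemma~\ref{lem:lem48_altschuler2023faster}) that would itself have to be redone for a projected dynamics. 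Likewise, Lemma~\ref{lem:particles_z_bound} bounds $Z$ only for the projected MALA implementation. So as written, your step 3 rests on an unproved concentration/coupling statement for ULD. The fix is easy --- replace the projection argument with the paper's expectation-plus-Jensen bound --- but as proposed the complexity accounting is not yet justified.
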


\begin{proof}
    According to Corollary~\ref{lem:kl_inner_conv}, we know that under the choice 
    \begin{equation*}
        \eta = \frac{1}{2}\ln\frac{2L+1}{2L},
    \end{equation*}
    it requires to run Alg.~\ref{alg:inner_uld} for $K$ times where
    \begin{equation*}
        K = 4L\cdot \log \frac{(1+L^2)d+\left\|\grad f_*(\vzero)\right\|^2}{\epsilon^2}.
    \end{equation*}
    For each run of Alg.~\ref{alg:inner_uld}, we require the KL divergence error to achieve
    \begin{equation*}
        \begin{aligned}
            \KL{\hat{p}_{(k+1)\eta|k\eta}(\cdot|\hat{\vx})}{\bkwp_{(k+1)\eta|k\eta}(\cdot|\hat{\vx})} \le & \frac{\epsilon^2}{4L}\cdot \left[\log \frac{(1+L^2)d+\left\|\grad f_*(\vzero)\right\|^2}{\epsilon^2}\right]^{-1}.
        \end{aligned}
    \end{equation*}
    Combining with Theorem~\ref{thm:inner_convergence_uld}, we consider a step size
    \begin{equation*}
        \begin{aligned}
            \tau_k = & \tilde{\mathcal{O}}\left(L^{-1}d^{-1/2}\epsilon\cdot (\log\left[L^2\cdot(d+m_2^2+\|\hat{\vx}_k\|^2)\right])^{-1/2}\right)
        \end{aligned}
    \end{equation*}
    then the iteration number will be
    \begin{equation*}
        S_k= \tilde{\mathcal{O}}\left(L^{1/2}d^{1/2}\epsilon^{-1}\cdot (\log\left[L^2\cdot(d+m_2^2+\|\hat{\vx}_k\|^2)\right])^{1/2}\right).
    \end{equation*}
    For an expectation perspective, we have
    \begin{equation*}
        \E_{\hat{p}_{k\eta}}\left[\log (L^2\|\hat{\rvx}_k\|^2)\right]\le \log\left[\E_{\hat{p}_{k\eta}}(\|\hat{\rvx}_k\|^2)\right] = \tilde{\mathcal{O}}(L)
    \end{equation*}
    where the last inequality follows from Lemma~\ref{lem:second_moment_bound}.
    This means that with the total gradient complexity
    \begin{equation*}
        \begin{aligned}
            K\cdot S = \tilde{\mathcal{O}}\left(L^2 d^{1/2}\epsilon^{-1} \right)
        \end{aligned}
    \end{equation*}
    Hence, the proof is completed.
\end{proof}

\section{Auxiliary Lemmas}

\begin{lemma}[Theorem 4 in~\cite{vempala2019rapid}]
    \label{thm4_vempala2019rapid}
    Suppose $p \propto \exp(-f)$ defined on $\R^d$ satisfies LSI with constant $\mu>0$. 
    Along the Langevin dynamics, i.e.,
    \begin{equation*}
        \der \rvx_t = -\grad f(\rvx)\der t + \sqrt{2}\der \mB_t,
    \end{equation*}
    where $\rvx_t\sim p_t$, then it has
    \begin{equation*}
        \KL{p_t}{p} \le \exp\left(-2\mu t\right)\cdot \KL{p_0}{p}.
    \end{equation*}
\end{lemma}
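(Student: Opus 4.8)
The plan is to run the standard entropy-dissipation argument — a de Bruijn-type identity combined with the log-Sobolev inequality, in the Bakry--\'Emery style — which is carried out in exactly this form in \cite{vempala2019rapid}. Write $H(t)\coloneqq \KL{p_t}{p}$ for the relative entropy along the flow. Since $p\propto \exp(-f)$, the Fokker--Planck equation attached to the Langevin SDE can be written as
\begin{equation*}
    \partial_t p_t = \grad\cdot\left(\grad p_t + p_t\grad f\right) = \grad\cdot\left(p_t\grad \log\frac{p_t}{p}\right),
\end{equation*}
where the second equality uses $\grad\log p = -\grad f$.

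First I would differentiate $H(t)$ under the integral sign and integrate by parts:
\begin{equation*}
    \frac{\der}{\der t}H(t) = \int \partial_t p_t\cdot \log\frac{p_t}{p}\,\der\vx + \int \partial_t p_t\,\der\vx = -\int p_t\left\|\grad \log\frac{p_t}{p}\right\|^2\der\vx = -\FI{p_t}{p},
\end{equation*}
the relative Fisher information; the second integral vanishes because $p_t$ integrates to $1$ for all $t$, and the boundary terms from the integration by parts vanish under the decay of $p_t$ at infinity. Next I would invoke the log-Sobolev inequality for $p$ with constant $C_{\mathrm{LSI}}=1/\mu$: applying it to the test function $g=\sqrt{p_t/p}$, and using $\EE_p[g^2]=1$ together with $\EE_p\|\grad g\|^2=\tfrac14\FI{p_t}{p}$, one gets $H(t)\le \tfrac1{2\mu}\FI{p_t}{p}$, i.e. $\FI{p_t}{p}\ge 2\mu H(t)$. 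Substituting this into the dissipation identity yields the differential inequality $\frac{\der}{\der t}H(t)\le -2\mu H(t)$, and Gr\"onwall's lemma gives $H(t)\le e^{-2\mu t}H(0)$, which is the claim.

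The main obstacle is making the first step fully rigorous: the differentiation under the integral, the vanishing of the boundary terms, and the finiteness of $\FI{p_t}{p}$ require regularity and integrability of $p_t$ that are not assumed a priori on $p_0$. The standard fix is a smoothing/approximation argument — exploiting that the Langevin semigroup regularizes $p_t$ for $t>0$, or first proving the bound for a dense class of well-behaved initial distributions and passing to the limit, or working within the $\Gamma$-calculus formalism — exactly as in \cite{vempala2019rapid}; once that is in place, everything else is the elementary calculus above.
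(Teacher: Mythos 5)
Your proposal is correct and is exactly the entropy-dissipation argument (de Bruijn identity for the relative entropy plus the log-Sobolev inequality applied to $g=\sqrt{p_t/p}$, then Gr\"onwall) by which Theorem~4 of \cite{vempala2019rapid} is proved; the paper itself gives no independent proof and simply cites that reference. Your bookkeeping of the LSI constant ($C_{\mathrm{LSI}}=1/\mu$ so that $\FI{p_t}{p}\ge 2\mu\,\KL{p_t}{p}$) matches the convention used elsewhere in the paper, and your remark about the regularity issues in differentiating under the integral is the same caveat handled in the cited work.
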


\begin{lemma}
    \label{lem:init_error_bound}
    Suppose $p \propto \exp(-f)$ defined on $\R^d$ satisfies LSI with constant $\mu>0$ where $f$ is L-smooth, i.e.,
    \begin{equation*}
        \left\|\grad f(\vx^\prime)- \grad f(\vx)\right\|\le L \left\|\vx^\prime - \vx\right\|.
    \end{equation*}
    If $p_0$ is the standard Gaussian distribution defined on $\R^d$, then we have
    \begin{equation*}
        \KL{p_0}{p} \le \frac{(1+2L^2)d+2\left\|\grad f(\vzero)\right\|^2}{\mu}.
    \end{equation*}
\end{lemma}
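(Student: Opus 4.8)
The plan is to bound $\KL{p_0}{p}$ through the relative Fisher information, exploiting the log-Sobolev assumption on $p$. Recall that ``$p$ satisfies LSI with constant $\mu$'' — in the normalization consistent with the definition of LSI used in this paper and with Lemma~\ref{thm4_vempala2019rapid} (so that the Langevin decay rate is $\exp(-2\mu t)$, i.e. $C_{\mathrm{LSI}}(p)=1/\mu$) — is equivalent to the inequality $\KL{\nu}{p}\le\frac{1}{2\mu}\FI{\nu}{p}$ holding for every probability measure $\nu$, where $\FI{\nu}{p}=\E_{\rvx\sim\nu}\big\|\grad\log(\nu/p)(\rvx)\big\|^2$; this is obtained from the LSI by the standard substitution $g^2=\nu/p$. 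Applying it with $\nu=p_0=\mathcal{N}(\vzero,\mI)$ reduces the claim to the estimate $\FI{p_0}{p}\le(4L^2+2)d+4\left\|\grad f(\vzero)\right\|^2$.

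Next I would compute $\FI{p_0}{p}$ explicitly. Since $p_0=\mathcal{N}(\vzero,\mI)$ we have $\grad\log p_0(\vx)=-\vx$, and since $p\propto\exp(-f)$ we have $\grad\log p(\vx)=-\grad f(\vx)$, so $\FI{p_0}{p}=\E_{\rvx\sim p_0}\left\|\grad f(\rvx)-\rvx\right\|^2$. I then bound the integrand pointwise: using $\left\|\va-\vb\right\|^2\le 2\left\|\va\right\|^2+2\left\|\vb\right\|^2$ and the $L$-smoothness of $f$, which gives $\left\|\grad f(\vx)\right\|\le\left\|\grad f(\vx)-\grad f(\vzero)\right\|+\left\|\grad f(\vzero)\right\|\le L\left\|\vx\right\|+\left\|\grad f(\vzero)\right\|$, one gets $\left\|\grad f(\vx)-\vx\right\|^2\le 2\left\|\grad f(\vx)\right\|^2+2\left\|\vx\right\|^2\le(4L^2+2)\left\|\vx\right\|^2+4\left\|\grad f(\vzero)\right\|^2$.

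Finally, taking the expectation over $p_0$ and using $\E_{p_0}\left\|\rvx\right\|^2=d$ yields $\FI{p_0}{p}\le(4L^2+2)d+4\left\|\grad f(\vzero)\right\|^2$, and dividing by $2\mu$ gives exactly $\KL{p_0}{p}\le\frac{(1+2L^2)d+2\left\|\grad f(\vzero)\right\|^2}{\mu}$. There is no real obstacle: the argument is short and the constants come out precisely. The only points needing a bit of care are (i) fixing the normalization of ``LSI with constant $\mu$'' so the $\KL\le\frac1{2\mu}\FI$ bound uses the same $\mu$ as elsewhere in the paper, and (ii) noting that $\FI{p_0}{p}$ is finite, which holds because $L$-smoothness forces $\grad f$ to grow at most linearly while $p_0$ has finite second moment.
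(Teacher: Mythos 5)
Your proposal is correct and follows essentially the same route as the paper's own proof: apply the LSI in the form $\KL{p_0}{p}\le\frac{1}{2\mu}\FI{p_0}{p}$, identify $\grad\log(p_0/p)(\vx)=-\vx+\grad f(\vx)$, and bound the Fisher information via $\|\va-\vb\|^2\le 2\|\va\|^2+2\|\vb\|^2$ together with $L$-smoothness around $\vzero$ and $\E_{p_0}\|\rvx\|^2=d$. The constants match exactly, and your remarks on the LSI normalization and finiteness of the Fisher information are the right points of care.
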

\begin{proof}
    According to the definition of LSI, we have
    \begin{equation*}
        \begin{aligned}
            \KL{p_0}{p}\le & \frac{1}{2\mu}\int p_0(\vx)\left\|\grad \log \frac{p_0(\vx)}{p(\vx)}\right\|^2 \der \vx = \frac{1}{2\mu}\int p_0(\vx)\left\|-\vx + \grad f(\vx)\right\|^2 \der \vx\\
            \le & \mu^{-1}\cdot \left[\int p_0(\vx)\|\vx\|^2\der\vx + \int  p_0(\vx)\|\grad f(\vx)-\grad f(\vzero)+\grad f(\vzero)\|^2 \der \vx\right] \\
            \le & \mu^{-1}\cdot \left[(1+2L^2)\int p_0(\vx)\|\vx\|^2\der\vx + 2\left\|\grad f(\vzero)\right\|^2\right]\\
            = & \frac{(1+2L^2)d+2\left\|\grad f(\vzero)\right\|^2}{\mu}
        \end{aligned}
    \end{equation*}
    where the third inequality follows from the $L$-smoothness of $f_*$ and the last equation establishes since $\mathbb{E}_{p_0}[\|\vx\|^2]=d$ is for the standard Gaussian distribution $p_0$ in $\R^d$.
\end{proof}

\begin{lemma}[Variant of Lemma B.1 in~\cite{zou2021faster}]
    \label{lem:sc_to_dissp}
    Suppose $f\colon \R^d\rightarrow \R$ is a $m$-strongly convex function and satisfies $L$-smooth. Then, we have
    \begin{equation*}
        \grad f(\vx) \cdot \vx \ge \frac{m\left\|\vx\right\|^2}{2} - \frac{\|\grad f(\vzero)\|^2}{2m}
    \end{equation*}
    where $\vx_*$ is the global optimum of the function $f$.
\end{lemma}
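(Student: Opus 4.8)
The plan is to derive the bound from the $m$-strong convexity of $f$ alone; the $L$-smoothness hypothesis is not actually needed here, so I would not invoke it. The starting point is the standard first-order monotonicity inequality for gradients of an $m$-strongly convex function: for all $\vx,\vy$,
\begin{equation*}
    \left\langle \grad f(\vx) - \grad f(\vy), \vx - \vy \right\rangle \ge m\left\|\vx - \vy\right\|^2 .
\end{equation*}
Specializing to $\vy = \vzero$ gives $\left\langle \grad f(\vx), \vx\right\rangle \ge m\|\vx\|^2 + \left\langle \grad f(\vzero), \vx\right\rangle$.

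The second step is to control the cross term $\left\langle \grad f(\vzero), \vx\right\rangle$ from below. By Cauchy--Schwarz and Young's inequality (with the weight chosen to match the target constants),
\begin{equation*}
    \left\langle \grad f(\vzero), \vx\right\rangle \ge -\left\|\grad f(\vzero)\right\|\cdot \left\|\vx\right\| \ge -\frac{m}{2}\left\|\vx\right\|^2 - \frac{1}{2m}\left\|\grad f(\vzero)\right\|^2 .
\end{equation*}
Substituting this into the previous display yields $\left\langle \grad f(\vx), \vx\right\rangle \ge m\|\vx\|^2 - \tfrac{m}{2}\|\vx\|^2 - \tfrac{1}{2m}\|\grad f(\vzero)\|^2 = \tfrac{m}{2}\|\vx\|^2 - \tfrac{1}{2m}\|\grad f(\vzero)\|^2$, which is exactly the claimed inequality.

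There is no real obstacle here: the result is a one-line consequence of strong convexity plus a weighted Young's inequality, and the only point requiring the slightest care is picking the Young's-inequality weight ($ab \le \tfrac{m}{2}a^2 + \tfrac{1}{2m}b^2$) so that the coefficient of $\|\vx\|^2$ comes out to $m/2$ rather than something larger. If one prefers not to quote the gradient-monotonicity inequality directly, an equivalent route is to write the $m$-strong convexity lower bound $f(\vzero) \ge f(\vx) + \langle \grad f(\vx), \vzero - \vx\rangle + \tfrac{m}{2}\|\vx\|^2$ together with the upper bound $f(\vzero) \le f(\vx) - \langle \grad f(\vzero), \vx \rangle$ obtained from convexity applied at $\vzero$ — wait, that second inequality is not available without more care, so I would stick with the monotonicity formulation above, which is clean and self-contained.
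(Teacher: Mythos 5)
Your proof is correct and takes essentially the same route as the paper: the paper applies the first-order strong-convexity inequality at the pairs $(\vzero,\vx)$ and $(\vx,\vzero)$ and combines them (which is precisely the gradient-monotonicity inequality you quote directly), then bounds the cross term $\langle \grad f(\vzero),\vx\rangle$ with the same weighted Young's inequality. Your observation that $L$-smoothness is not needed also matches the paper, whose proof never invokes it.
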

\begin{proof}
    According to the definition of strongly convex, the function $f$ satisfies
    \begin{equation*}
        \begin{aligned}
            f(\vzero) - f(\vx) \ge \grad f(\vx)\cdot \left(\vzero - \vx\right) + \frac{m}{2}\cdot \left\|\vx\right\|^2\ \Leftrightarrow\ \grad f(\vx)\cdot \vx \ge f(\vx) - f(\vzero) + \frac{m}{2}\cdot \left\|\vx\right\|^2.
        \end{aligned}
    \end{equation*}
    Besides, we have
    \begin{equation*}
        f(\vx)-f(\vzero)\ge \grad f(\vzero)\cdot \vx + \frac{m}{2}\cdot \|\vx\|^2 \ge \frac{m}{2}\cdot \|\vx\|^2 - \frac{m}{2}\cdot \|\vx\|^2 - \frac{\|\grad f(\vzero)\|^2}{2m} = -\frac{\|\grad f(\vzero)\|^2}{2m}.
    \end{equation*}
    Combining the above two inequalities, the proof is completed.
\end{proof}

\begin{lemma}[Lemma A.1 in~\cite{zou2021faster}]
    \label{lem:lemmaa1_zou2021faster}
    Suppose a function $f$ satisfy 
    \begin{equation*}
        \grad f(\vx)\cdot \vx \ge \frac{m\|\vx\|^2}{2} - \frac{\|\grad f(\vzero)\|}{2m},
    \end{equation*}
    then we have
    \begin{equation*}
        f(\vx) \ge \frac{m}{8}\|\vx\|^2 + f(\vx_*) - \frac{\|\grad f(\vzero)\|^2}{4m}.
    \end{equation*}
\end{lemma}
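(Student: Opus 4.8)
The plan is to integrate the radial derivative of $f$ along the segment from the origin to $\vx$, but---crucially---only over a proper sub-segment, so as to avoid the logarithmic divergence that a naive integration from the origin would incur. Concretely, I would set $g(t)\coloneqq f(t\vx)$ for $t\in[0,1]$, so that $g'(t)=\grad f(t\vx)\cdot\vx$, and apply the hypothesis at the point $t\vx$: for $t\in(0,1]$,
\begin{equation*}
    g'(t) = \frac{1}{t}\,\grad f(t\vx)\cdot(t\vx) \ge \frac{1}{t}\left(\frac{m t^2\|\vx\|^2}{2} - \frac{\|\grad f(\vzero)\|^2}{2m}\right) = \frac{m t\|\vx\|^2}{2} - \frac{\|\grad f(\vzero)\|^2}{2mt}.
\end{equation*}
(The displayed hypothesis of the lemma is written with $\|\grad f(\vzero)\|$ rather than $\|\grad f(\vzero)\|^2$; I read it as $\|\grad f(\vzero)\|^2$, which is the form produced by Lemma~\ref{lem:sc_to_dissp} and the one consistent with the stated conclusion.)

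Next I would fix the cutoff $t_0 = 1/\sqrt{2}$ and write $f(\vx) = g(1) = g(t_0) + \int_{t_0}^1 g'(t)\,\der t$. Integrating the pointwise lower bound over $[t_0,1]$ gives
\begin{equation*}
    \int_{t_0}^1 g'(t)\,\der t \ge \frac{m\|\vx\|^2}{4}\left(1-t_0^2\right) - \frac{\|\grad f(\vzero)\|^2}{2m}\ln\frac{1}{t_0} = \frac{m\|\vx\|^2}{8} - \frac{\ln 2}{4m}\|\grad f(\vzero)\|^2,
\end{equation*}
where I used $1-t_0^2 = 1/2$ and $\ln(1/t_0) = \tfrac12\ln 2$. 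Since $f$ attains its minimum at $\vx_*$, the discarded piece satisfies $g(t_0) = f(t_0\vx) \ge f(\vx_*)$. Combining the two inequalities and using $\ln 2 < 1$ to relax the constant yields
\begin{equation*}
    f(\vx) \ge f(\vx_*) + \frac{m\|\vx\|^2}{8} - \frac{\ln 2}{4m}\|\grad f(\vzero)\|^2 \ge f(\vx_*) + \frac{m}{8}\|\vx\|^2 - \frac{1}{4m}\|\grad f(\vzero)\|^2,
\end{equation*}
which is exactly the claim. The degenerate case $\vx = \vzero$ needs no separate treatment, since the same computation applies (and in any case $f(\vzero) \ge f(\vx_*)$ trivially covers it).

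The only genuine subtlety---and the step I would flag as the main obstacle---is recognizing that one must \emph{not} integrate all the way down to $t=0$: the term $\|\grad f(\vzero)\|^2/(2mt)$ is not integrable near the origin, so the textbook ``fundamental theorem of calculus from $\vzero$'' argument fails outright. The fix is to truncate at a constant $t_0$ bounded away from $0$ and to pay for the discarded initial segment with the trivial bound $f(t_0\vx)\ge f(\vx_*)$. Once that is seen, everything else is routine elementary calculus: choosing $t_0$ so that $1-t_0^2=\tfrac12$, evaluating $\int_{t_0}^1(mt\|\vx\|^2/2 - \|\grad f(\vzero)\|^2/(2mt))\,\der t$, and checking $\ln 2 < 1$. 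Any $t_0\in[1/\sqrt2,1)$ in fact works; $t_0 = 1/\sqrt2$ is simply the cleanest choice that matches the target constants.
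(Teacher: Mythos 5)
Your proof is correct, and since the paper imports this lemma from \cite{zou2021faster} without reproducing a proof, there is nothing internal to compare against; your truncated radial-integration argument is the standard way to pass from a dissipativity condition to quadratic growth, and the constants check out ($1-t_0^2=\tfrac12$, $\ln(1/t_0)=\tfrac12\ln 2<\tfrac12$). You are also right to read the hypothesis as containing $\|\grad f(\vzero)\|^2$ rather than $\|\grad f(\vzero)\|$ — that is the form produced by Lemma~\ref{lem:sc_to_dissp}, which is how the lemma is invoked — and your observation that one must truncate the integration away from $t=0$ (paying with $f(t_0\vx)\ge f(\vx_*)$) is exactly the one non-routine step.
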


\begin{lemma} [Lemma 1 in~\cite{huang2023monte}]
\label{lem:lem1_huang2023monte}
Consider the Ornstein-Uhlenbeck forward process
\begin{equation*}
    \der \rvx_t = -\rvx_t \der t + \sqrt{2}\der \mB_t,
\end{equation*}
and denote the underlying distribution of the particle $\rvx_t$ as $p_t$.
Then, the score function can be rewritten as
\begin{equation}
    \begin{aligned}
        &\grad_{\vx} \ln p_t(\vx) =  \mathbb{E}_{\vx_0 \sim q_t(\cdot|\vx)}\frac{ e^{-t}\vx_0-\vx}{\left(1-e^{-2t}\right)},\\&q_{t}(\vx_0|\vx)  \propto \exp\left(-f_{*}(\vx_0)-\frac{\left\|\vx - e^{-t}\vx_0\right\|^2}{2\left(1-e^{-2t}\right)}\right).
    \end{aligned}
\end{equation}
\end{lemma}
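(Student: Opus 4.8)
The plan is to compute $\grad_{\vx}\ln p_t(\vx)$ directly by differentiating the marginalization identity for $p_t$, using the explicit Gaussian form of the OU transition kernel. First I would write $p_t$ as a mixture over the initial data point: since $\rvx_0\sim p_*\propto\exp(-f_*)$ and the forward process in SDE.~\eqref{sde:ou} has the transition kernel of Eq.~\eqref{def:forward_trans_ker} (taken between time $0$ and time $t$), namely
\begin{equation*}
    p_{t|0}(\vx|\vx_0) = \left(2\pi\left(1-e^{-2t}\right)\right)^{-d/2}\exp\left(-\frac{\left\|\vx - e^{-t}\vx_0\right\|^2}{2\left(1-e^{-2t}\right)}\right),
\end{equation*}
the law of $\rvx_t$ is $p_t(\vx) = \int p_*(\vx_0)\, p_{t|0}(\vx|\vx_0)\,\der\vx_0$.

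Next I would differentiate under the integral sign. Because
\begin{equation*}
    \grad_{\vx}\, p_{t|0}(\vx|\vx_0) = p_{t|0}(\vx|\vx_0)\cdot\frac{e^{-t}\vx_0 - \vx}{1-e^{-2t}},
\end{equation*}
this gives $\grad_{\vx} p_t(\vx) = \int p_*(\vx_0)\, p_{t|0}(\vx|\vx_0)\cdot\frac{e^{-t}\vx_0 - \vx}{1-e^{-2t}}\,\der\vx_0$. Dividing by $p_t(\vx)$ and setting $q_t(\vx_0|\vx) \coloneqq p_*(\vx_0)\, p_{t|0}(\vx|\vx_0)/p_t(\vx)$, which is exactly the Bayesian posterior density of $\rvx_0$ given $\rvx_t=\vx$ and is a genuine probability density in $\vx_0$, yields
\begin{equation*}
    \grad_{\vx}\ln p_t(\vx) = \mathbb{E}_{\vx_0\sim q_t(\cdot|\vx)}\left[\frac{e^{-t}\vx_0 - \vx}{1-e^{-2t}}\right].
\end{equation*}
Dropping the $\vx_0$-independent factors (the normalizer of $p_*$, the Gaussian prefactor, and $p_t(\vx)$) from $q_t$ leaves the stated normalized form $q_t(\vx_0|\vx)\propto\exp\!\left(-f_*(\vx_0) - \|\vx - e^{-t}\vx_0\|^2/(2(1-e^{-2t}))\right)$, which completes the identity; this is the familiar Tweedie-type representation of the diffused score.

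The only nontrivial point — and hence the one I would treat as the main obstacle, though it is mild — is justifying the interchange of $\grad_{\vx}$ and $\int\der\vx_0$. This follows from dominated convergence: for each fixed $t\in(0,T]$, both $p_{t|0}(\vx|\vx_0)$ and $\|\grad_{\vx} p_{t|0}(\vx|\vx_0)\|$ decay like Gaussians in $\vx_0$, so on any bounded neighborhood of a given $\vx$ they are dominated, uniformly in $\vx$, by a fixed function of $\vx_0$ that is integrable against $p_*$ once one uses that $p_*$ has finite second moment (Assumption~\ref{ass:second_moment}); the interchange, together with the continuity of $\vx\mapsto\grad_\vx p_t(\vx)$, then follows by standard arguments, and $p_t(\vx)>0$ everywhere for $t>0$ so the division is harmless.
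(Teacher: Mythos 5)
Your proof is correct and is the standard derivation (differentiate the mixture representation $p_t(\vx)=\int p_*(\vx_0)p_{t|0}(\vx|\vx_0)\,\der\vx_0$ under the integral sign and recognize the Bayesian posterior), which is exactly how this Tweedie-type identity is established in the cited reference; the paper itself imports the lemma from \cite{huang2023monte} without reproducing a proof. Your handling of the interchange of differentiation and integration is if anything more careful than necessary, since for fixed $t>0$ the integrand $p_{t|0}(\vx|\vx_0)\,\|e^{-t}\vx_0-\vx\|/(1-e^{-2t})$ is a Gaussian in $\vx_0$ times a linear factor and is therefore uniformly bounded on a neighborhood of $\vx$, so a constant dominating function suffices and the second-moment assumption is not needed.
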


\begin{lemma}[Lemma 11 in~\cite{vempala2019rapid}]
    \label{lem:lem11_vempala2019rapid}
    Assume $p\propto \exp(-f)$ and the energy function $f$ is $L$-smooth. 
    Then
    \begin{equation*}
        \E_{\rvx\sim p}\left[\left\|\grad f(\rvx)\right\|^2\right]\le Ld
    \end{equation*}
\end{lemma}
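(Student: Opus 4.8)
The plan is to reduce the claim to the integration-by-parts identity $\E_{\rvx\sim p}\big[\|\grad f(\rvx)\|^2\big]=\E_{\rvx\sim p}\big[\Delta f(\rvx)\big]$ and then bound the right-hand side pointwise using $L$-smoothness. Write $p=e^{-f}/Z$ with $Z=\int_{\R^d}e^{-f(\vx)}\,\der\vx<\infty$. First I would record the pointwise consequence of smoothness: since $f$ is $L$-smooth, $\grad f$ is $L$-Lipschitz, so (for twice-differentiable $f$) $-L\mI\preceq\grad^2 f(\vx)\preceq L\mI$ for every $\vx$; hence every eigenvalue of $\grad^2 f(\vx)$ lies in $[-L,L]$, and therefore $\Delta f(\vx)=\Tr\big(\grad^2 f(\vx)\big)\le Ld$ for all $\vx\in\R^d$.

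Next, the key computation. Using $\grad f(\vx)\,e^{-f(\vx)}=-\grad\big(e^{-f(\vx)}\big)$, I would write
\begin{equation*}
    Z\cdot\E_{\rvx\sim p}\left[\|\grad f(\rvx)\|^2\right]=\int_{\R^d}\big\langle \grad f(\vx),\grad f(\vx)\big\rangle e^{-f(\vx)}\,\der\vx=-\int_{\R^d}\big\langle \grad f(\vx),\grad\big(e^{-f(\vx)}\big)\big\rangle\,\der\vx,
\end{equation*}
and then apply the divergence theorem, with the flux at infinity vanishing because $e^{-f}$ is integrable and $\grad f$ grows at most linearly, to obtain
\begin{equation*}
    -\int_{\R^d}\big\langle \grad f(\vx),\grad\big(e^{-f(\vx)}\big)\big\rangle\,\der\vx=\int_{\R^d}\mathrm{div}\big(\grad f(\vx)\big)e^{-f(\vx)}\,\der\vx=\int_{\R^d}\Delta f(\vx)\,e^{-f(\vx)}\,\der\vx=Z\cdot\E_{\rvx\sim p}\left[\Delta f(\rvx)\right].
\end{equation*}
Combining the two displays gives $\E_{\rvx\sim p}\big[\|\grad f(\rvx)\|^2\big]=\E_{\rvx\sim p}\big[\Delta f(\rvx)\big]$, and the pointwise estimate $\Delta f\le Ld$ then finishes the proof.

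The main obstacle is making the integration by parts rigorous: a priori $\E_{\rvx\sim p}\big[\|\grad f(\rvx)\|^2\big]$ could be $+\infty$, so one must justify that the manipulation is legitimate and that no boundary term survives — which is awkward to do directly since the needed integrability is exactly what we are proving. The cleaner route I would take is to invoke the Langevin generator $\mathcal{L}g=\Delta g-\langle\grad f,\grad g\rangle$, whose invariant measure is $p$, so that $\int_{\R^d}\mathcal{L}g\,\der p=0$ for sufficiently regular $g$; taking $g=f$ (if necessary through a truncation or mollification of $f$, applying the identity to the regularized function, and passing to the limit with the uniform bound $\Delta f\le Ld$ in hand) reproduces $\int_{\R^d}\big(\Delta f-\|\grad f\|^2\big)\,\der p=0$. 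Once this identity is established the rest is the one-line eigenvalue bound of the first paragraph; the same limiting argument also covers the case where $f$ is merely $C^1$ with $L$-Lipschitz gradient, in which $\grad^2 f$ exists only almost everywhere.
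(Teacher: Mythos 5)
Your proposal is correct and is essentially the standard proof of this result: the paper itself does not prove the lemma but imports it from \cite{vempala2019rapid}, where the argument is exactly the integration-by-parts identity $\E_p[\|\grad f\|^2]=\E_p[\Delta f]$ followed by the pointwise trace bound $\Delta f=\Tr(\grad^2 f)\le Ld$. Your additional care about justifying the boundary term and the a priori integrability (via the generator identity or a truncation/mollification) is a legitimate refinement of the same route, not a different one.
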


\begin{lemma}[Lemma 10 in~\cite{chen2022sampling}]
    \label{lem:lem10_chen2022sampling}
    Suppose that Assumption~\ref{ass:lips_score}--\ref{ass:second_moment} hold. 
    Let $\{\rvx_t\}_{t\in[0,T]}$ denote the forward process, i.e., Eq.~\ref{sde:ou}, for all $t\ge 0$, 
        \begin{equation*}
            \E\left[\left\|\rvx\right\|^2\right]\le \max\left\{d, m_2^2\right\}.
        \end{equation*}
\end{lemma}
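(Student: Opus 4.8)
The plan is to use the explicit Gaussian transition kernel of the OU process, already recorded in Eq.~\eqref{def:forward_trans_ker}, to evaluate $\E[\|\rvx_t\|^2]$ in closed form and then observe that the resulting expression is a convex combination of $d$ and $m_2^2$, hence bounded by their maximum. Only Assumption~\ref{ass:second_moment} will actually be used.

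First I would write the solution of SDE~\eqref{sde:ou} in the form $\rvx_t = e^{-t}\rvx_0 + \sqrt{1-e^{-2t}}\,\xi$, where $\rvx_0\sim p_*$ and $\xi\sim\mathcal{N}(\vzero,\mI)$ is independent of $\rvx_0$; this is just the mean/covariance reading of the transition density $p_{t|0}(\cdot|\rvx_0)$ from Eq.~\eqref{def:forward_trans_ker}. Expanding the squared norm and taking expectations,
\[
\E\bigl[\|\rvx_t\|^2\bigr] = e^{-2t}\,\E\bigl[\|\rvx_0\|^2\bigr] + 2e^{-t}\sqrt{1-e^{-2t}}\,\bigl\langle \E[\rvx_0],\, \E[\xi]\bigr\rangle + (1-e^{-2t})\,\E\bigl[\|\xi\|^2\bigr].
\]
The cross term vanishes because $\E[\xi]=\vzero$ and $\rvx_0$ is independent of $\xi$; substituting $\E[\|\rvx_0\|^2]=m_2^2$ (Assumption~\ref{ass:second_moment}) and $\E[\|\xi\|^2]=d$ yields $\E[\|\rvx_t\|^2] = e^{-2t}m_2^2 + (1-e^{-2t})d$.

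Finally, since $e^{-2t}\in(0,1]$ for every $t\ge 0$, the right-hand side is a convex combination of $m_2^2$ and $d$, so it is bounded above by $\max\{d,m_2^2\}$, which is exactly the claim. As an alternative route that avoids writing down the solution, one can apply It\^o's formula to $\|\rvx_t\|^2$, take expectations to obtain the linear ODE $\tfrac{\der}{\der t}\E[\|\rvx_t\|^2] = -2\,\E[\|\rvx_t\|^2] + 2d$ with initial value $m_2^2$, and solve it; the solution is the same convex combination, with Grönwall/monotonicity giving the bound.

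There is essentially no obstacle here: the only points needing a word of care are the independence of the initial condition and the driving Brownian motion (so the cross term drops) and the observation that the closed-form expression is monotone between its two endpoints, which is what makes $\max\{d,m_2^2\}$—rather than, say, $d+m_2^2$—the correct bound.
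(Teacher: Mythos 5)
Your proof is correct: the paper itself states this lemma without proof (it is imported from \citet{chen2022sampling}), and your computation $\E[\|\rvx_t\|^2]=e^{-2t}m_2^2+(1-e^{-2t})d\le\max\{d,m_2^2\}$ via the explicit OU solution is exactly the standard argument used there. The independence of $\rvx_0$ and the Gaussian noise, which you flag, is indeed the only point needing mention, and both your closed-form route and the It\^o/ODE alternative handle it correctly.
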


\begin{lemma}
    \label{lem:lsi_var_bound}
    Suppose $q$ is a distribution which satisfies LSI with constant $\mu$, then its variance satisfies
    \begin{equation*}
        \int q(\vx) \left\|\vx - \mathbb{E}_{\tilde{q}}\left[\rvx\right]\right\|^2 \der \vx  \le \frac{d}{\mu}.
    \end{equation*}
\end{lemma}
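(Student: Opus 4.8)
The plan is to obtain the variance bound as a consequence of the Poincar\'e inequality, which is itself the linearization of the log-Sobolev inequality. Throughout I use the convention (consistent with Lemma~\ref{thm4_vempala2019rapid} and Lemma~\ref{lem:strongly_lsi} in this paper, under which an $m$-strongly log-concave law has $\mu=m$) that ``$q$ satisfies LSI with constant $\mu$'' means $\mathrm{Ent}_q(g^2)\le \tfrac{2}{\mu}\,\E_q\|\grad g\|^2$ for every smooth $g$ with $\E_q[g^2]<\infty$, so that $d/\mu$ is the expected scaling of the variance.

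First, I would record the standard fact that LSI with constant $\mu$ implies the Poincar\'e inequality $\Var_q(h)\le \tfrac{1}{\mu}\,\E_q\|\grad h\|^2$ for all smooth $h$. This follows by applying the LSI to $g=1+\varepsilon h$, expanding both sides to second order as $\varepsilon\to 0$: the entropy side is $2\varepsilon^2\Var_q(h)+o(\varepsilon^2)$ and the Dirichlet side is $\tfrac{2}{\mu}\varepsilon^2\,\E_q\|\grad h\|^2+o(\varepsilon^2)$, and dividing by $\varepsilon^2$ and letting $\varepsilon\to 0$ gives the claim. (Alternatively one may simply cite Rothaus here.)

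Next I would apply this Poincar\'e inequality to each coordinate function $h_i(\vx)=x_i$, $i=1,\dots,d$. Since $\grad h_i$ is the $i$-th standard basis vector, $\E_q\|\grad h_i\|^2=1$, so $\Var_q(x_i)=\E_q\big[(x_i-\E_q[x_i])^2\big]\le 1/\mu$. Summing over $i$ and using $\|\vx-\E_q[\rvx]\|^2=\sum_{i=1}^d (x_i-\E_q[x_i])^2$ yields $\int q(\vx)\,\|\vx-\E_q[\rvx]\|^2\,\der\vx=\sum_{i=1}^d \Var_q(x_i)\le d/\mu$, which is exactly the stated bound. There is no genuine obstacle in this argument; the only step that requires any care is keeping the normalization of the LSI constant straight so that the Poincar\'e constant is $1/\mu$ (and not $\mu$), matching the conventions used elsewhere in the paper.
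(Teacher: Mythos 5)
Your proof is correct and follows essentially the same route as the paper's: deduce the Poincar\'e inequality with constant $1/\mu$ from the LSI, apply it to each coordinate function, and sum over $i=1,\dots,d$. The only difference is that you additionally justify the LSI $\Rightarrow$ Poincar\'e implication via the standard linearization $g=1+\varepsilon h$, whereas the paper simply cites this fact; your care with the normalization convention for $\mu$ is also consistent with how the paper uses it in this lemma.
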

\begin{proof}
    It is known that LSI implies Poincar\'e inequality with the same constant, i.e., $\mu$, which means if for all smooth
    function $g\colon \R^d \rightarrow \R$,
    \begin{equation*}
        \mathrm{var}_{q}\left(g(\rvx)\right)\le \frac{1}{\mu}\mathbb{E}_{q}\left[\left\|\grad g(\rvx)\right\|^2\right].
    \end{equation*}
    In this condition, we suppose $\vb=\mathbb{E}_{q}[\rvx]$, and have the following equation
    \begin{equation*}
        \begin{aligned}
            &\int q(\vx) \left\|\vx - \mathbb{E}_{q}\left[\rvx\right]\right\|^2 \der \vx = \int q(\vx) \left\|\vx - \vb\right\|^2 \der \vx\\
            =& \int \sum_{i=1}^d q(\vx) \left(\vx_{i}-\vb_i\right)^2 \der \vx =\sum_{i=1}^d \int q(\vx) \left( \left<\vx, \ve_i\right> - \left<\vb, \ve_i\right> \right)^2 \der\vx\\
            =& \sum_{i=1}^d \int q(\vx)\left(\left<
            \vx,\ve_i\right> - \mathbb{E}_{q}\left[\left<\rvx, \ve_i\right>\right]\right)^2 \der \vx  =\sum_{i=1}^d \mathrm{var}_{q}\left(g_i(\rvx)\right)
        \end{aligned}
    \end{equation*}
    where $g_i(\vx)$ is defined as $g_i(\vx) \coloneqq \left<\vx, \ve_i\right>$
    and $\ve_i$ is a one-hot vector ( the $i$-th element of $\ve_i$ is $1$ others are $0$).
    Combining this equation and Poincar\'e inequality, for each $i$, we have
    \begin{equation*}
        \mathrm{var}_{q}\left(g_i(\rvx)\right) \le \frac{1}{\mu} \mathbb{E}_{q}\left[\left\|\ve_i\right\|^2\right]=\frac{1}{\mu}.
    \end{equation*}
    Hence, the proof is completed.
\end{proof}

\begin{lemma}[Variant of Lemma 10 in~\cite{cheng2018convergence}]
    \label{lem:strongly_lsi}
    Suppose $-\log p_*$ is $m$-strongly convex function, for any distribution with density function $p$, we have
    \begin{equation*}
        \KL{p}{p_*}\le \frac{1}{2m}\int p(\vx)\left\|\grad\log \frac{p(\vx)}{p_*(\vx)}\right\|^2\der\vx.
    \end{equation*}
    By choosing $p(\vx)=g^2(\vx)p_*(\vx)/\mathbb{E}_{p_*}\left[g^2(\rvx)\right]$ for the test function $g\colon \R^d\rightarrow \R$ and  $\mathbb{E}_{p_*}\left[g^2(\rvx)\right]<\infty$, we have
    \begin{equation*}
        \mathbb{E}_{p_*}\left[g^2\log g^2\right] - \mathbb{E}_{p_*}\left[g^2\right]\log \mathbb{E}_{p_*}\left[g^2\right]\le \frac{2}{m} \mathbb{E}_{p_*}\left[\left\|\grad g\right\|^2\right],
    \end{equation*}
    which implies $p_*$ satisfies $1/m$-log-Sobolev inequality.
\end{lemma}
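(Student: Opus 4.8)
The plan is to establish the first displayed inequality, $\KL{p}{p_*}\le \frac{1}{2m}\FI{p}{p_*}$ for an arbitrary density $p$ (where $\FI{p}{p_*}=\int p(\vx)\|\grad\log(p(\vx)/p_*(\vx))\|^2\der\vx$ is the relative Fisher information), and then to deduce the test-function (Gross) form by a substitution. The first inequality is precisely the log-Sobolev inequality for $p_*$ in its entropy--Fisher-information form with constant $1/m$, so the real content is the Bakry--\'Emery fact that an $m$-strongly log-concave measure satisfies LSI with constant $1/m$. One may simply invoke this as known (as in \cite{cheng2018convergence}); to keep things self-contained I would instead reprove it along the Langevin flow targeting $p_*$.

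Concretely, run the Langevin diffusion $\der\rvx_t=-\grad f_*(\rvx_t)\der t+\sqrt{2}\der\mB_t$ with $\rvx_0\sim p$ and $p_*\propto\exp(-f_*)$, and let $p_t$ be the law of $\rvx_t$. The Fokker--Planck equation $\partial_t p_t=\grad\cdot\big(p_t\,\grad\log(p_t/p_*)\big)$ together with integration by parts yields the de Bruijn / entropy-dissipation identity $\frac{\der}{\der t}\KL{p_t}{p_*}=-\FI{p_t}{p_*}$; since $f_*$ is smooth and strongly convex, $p_t\to p_*$ and $\KL{p_t}{p_*}\to 0$ as $t\to\infty$. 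Integrating from $0$ to $\infty$ gives $\KL{p}{p_*}=\int_0^\infty\FI{p_t}{p_*}\,\der t$, so it suffices to prove the geometric decay $\FI{p_t}{p_*}\le e^{-2mt}\FI{p}{p_*}$.

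This decay estimate is the crux. Writing $u_t=\log(p_t/p_*)$ and differentiating $\FI{p_t}{p_*}=\int\|\grad u_t\|^2\,p_t\,\der\vx$ in $t$, using the evolution equation for $u_t$ induced by the flow and a Bochner / $\Gamma_2$ integration-by-parts identity, one arrives at an identity of the form $\frac{\der}{\der t}\FI{p_t}{p_*}=-2\int\big(\|\grad^2 u_t\|_{\mathrm{HS}}^2+\langle\grad u_t,\grad^2 f_*(\cdot)\,\grad u_t\rangle\big)p_t\,\der\vx$; the hypothesis $\grad^2 f_*\succeq m\mI$ then bounds the right-hand side by $-2m\FI{p_t}{p_*}$, and Gr\"onwall gives the claimed decay. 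Combined with the previous step this produces $\KL{p}{p_*}\le\frac{1}{2m}\FI{p}{p_*}$. I expect this $\Gamma_2$ computation, together with the regularity/decay justifications needed to legitimize the differentiations and integrations by parts (via a standard mollification/truncation argument, or by first assuming $p$ has a smooth well-behaved density and then extending by approximation), to be the main obstacle; the rest is routine.

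Finally, for the second display substitute $p(\vx)=g^2(\vx)\,p_*(\vx)/\E_{p_*}[g^2(\rvx)]$ into the first inequality. A short computation gives $\KL{p}{p_*}=\big(\E_{p_*}[g^2\log g^2]-\E_{p_*}[g^2]\log\E_{p_*}[g^2]\big)/\E_{p_*}[g^2]$, and since $\grad\log(p/p_*)=\grad\log g^2=2\grad g/g$ we get $\FI{p}{p_*}=4\,\E_{p_*}[\|\grad g\|^2]/\E_{p_*}[g^2]$. Plugging these in and multiplying through by $\E_{p_*}[g^2]$ yields $\E_{p_*}[g^2\log g^2]-\E_{p_*}[g^2]\log\E_{p_*}[g^2]\le\frac{2}{m}\E_{p_*}[\|\grad g\|^2]$, i.e.\ $p_*$ satisfies the log-Sobolev inequality with $C_{\mathrm{LSI}}=1/m$, completing the proof.
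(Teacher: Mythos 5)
Your proposal is correct. The paper gives no proof of its own for this lemma: the first inequality is imported from the cited reference, and the second display is obtained by exactly the substitution $p = g^2 p_*/\E_{p_*}[g^2]$ that the lemma statement prescribes and that you carry out (your computations of $\KL{p}{p_*}$ and of the relative Fisher information are both right). Your self-contained Bakry--\'Emery/$\Gamma_2$ derivation of $\KL{p}{p_*}\le \frac{1}{2m}\FI{p}{p_*}$ --- de Bruijn's identity plus the exponential decay $\FI{p_t}{p_*}\le e^{-2mt}\FI{p}{p_*}$ under $\grad^2 f_*\succeq m\mI$ --- is the standard proof of the imported fact, modulo the regularity/mollification caveats you already flag, so there is nothing to object to.
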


\newpage
\end{document}